\begin{document}

\title{Estimating and Inferring the Maximum Degree of Stimulus-Locked Time-Varying Brain Connectivity Networks}
\author{Kean Ming Tan, Junwei Lu, Tong Zhang, Han Liu}
\maketitle

\begin{abstract}
Neuroscientists have enjoyed much success in understanding brain functions by constructing brain connectivity networks using data collected under highly controlled experimental settings.   
However, these experimental settings bear little resemblance to our real-life experience in day-to-day interactions with the surroundings.  
To address this issue, neuroscientists have been measuring brain activity under natural viewing experiments in which the subjects are given continuous stimuli, such as watching a movie or listening to a story.
The main challenge with this approach is that the measured signal consists of both the stimulus-induced signal, as well as intrinsic-neural and non-neuronal signals.
By exploiting the experimental design, we propose to estimate stimulus-locked brain network by treating non-stimulus-induced signals as nuisance parameters.
In many neuroscience applications, it is often important to identify brain regions that are connected to many other brain regions during cognitive process.  
We propose an inferential method to test whether the maximum degree of the estimated network is larger than a pre-specific number. 
We prove that the type I error can be controlled and that the  power increases to one asymptotically. 
 Simulation studies are conducted to assess the performance of our method.  
Finally, we analyze a functional magnetic resonance imaging dataset obtained under the Sherlock Holmes  movie stimuli.
\end{abstract}

\noindent Keywords: Gaussian multiplier bootstrap; Hypothesis testing; Inter-subject; Latent variables; Maximum degree; Subject specific effects.

\section{Introduction} 
\label{section:introduction}
In the past few decades, much effort has been put into understanding task-based brain connectivity networks.
For instance, in a typical visual mapping experiment, subjects are presented with a simple static visual stimulus and are asked to maintain fixation at the visual stimulus, while their brain activities are measured.
Under such highly controlled experimental settings, numerous studies have shown that there are substantial similarities across  brain connectivity networks constructed for different subjects \citep{press2001visual,hasson2003large}. 
However, such experimental settings bear little resemblance to our real-life experience in several aspects: natural viewing consists of a continuous stream of perceptual stimuli; subjects can freely move their eyes; there are interactions among viewing, context, and emotion \citep{hasson2004intersubject}. 
To address this issue, neuroscientists have started measuring brain activity under continuous natural stimuli, such as watching a movie or listening to a story \citep{hasson2004intersubject,simony2016dynamic,chen2017shared}.
The main scientific question is to understand the dynamics of the brain connectivity network that are specific to the continuous natural stimuli.  

In the neuroscience literature, a typical approach for constructing a brain connectivity network is to calculate a sample covariance matrix for each subject: the covariance matrix encodes marginal relationships for each pair of brain regions within each subject.    
More recently, graphical models have been used in modeling brain connectivity networks:  graphical models encode conditional dependence relationships between each pair of brain regions, given the others \citep{rubinov2010complex}.
A graph consists of $d$ nodes, each representing a random variable, as well as a set of edges joining pairs of nodes corresponding to conditionally dependent variables.  
There is a vast literature on learning the structure of static undirected graphical models, and we refer the reader to \citet{drton2017structure} for a detailed review.  


Under natural continuous stimuli, it is often of interest to estimate a dynamic brain connectivity network, i.e., a graph that changes over time.
A natural candidate for this purpose is the time-varying Gaussian graphical model (\citealp{zhou2010time,kolar2010estimating}). 
The time-varying Gaussian graphical model assumes
\begin{equation}             
\label{Eq:timevarying GGM1}
\bX (z) \mid Z = z \sim N_d\{\mathbf{0},\bSigma_X(z)\},           
\end{equation}
where $\bSigma_X (z)$ is the covariance matrix of $\bX(z)$ given $Z=z$, and $Z\in [0,1]$ has a continuous density.  The inverse covariance matrix $\{\bSigma_X (z)\}^{-1}$ encodes conditional dependence relationships between pairs of random variables at time $Z=z$: $\{\bSigma_X (z)\}^{-1}_{jk} = 0 $ if and only if the $j$th and $k$th variables are conditionally independent given the other variables at time $Z=z$.

In natural viewing experiments, the main goal is to construct a brain connectivity network that is locked to the processing of external stimuli, referred to as \emph{stimulus-locked network} \citep{simony2016dynamic,chen2017shared,regev2018propagation}. 
Constructing a stimulus-locked network can better characterize the dynamic changes of brain patterns across the continuous stimulus \citep{simony2016dynamic}.
The main challenge in constructing stimulus-locked network is the lack of highly controlled experiments that remove spontaneous and individual variations.
The measured blood-oxygen-level dependent (BOLD) signal consists of not only signal that is specific to the stimulus, but also intrinsic neural signal (random fluctuations) and non-neuronal signal (physiological noise) that are specific to each subject.
The intrinsic neural signal and non-neuronal signal can be interpreted as measurement error or latent variables that confound the stimuli-specific signal. We refer to non-stimulus-induced signals as subject specific effects throughout the manuscript. 
Thus, directly fitting  \eqref{Eq:timevarying GGM1} using the measured data will yield a time-varying graph that primarily reflects intrinsic BOLD fluctuations within each brain rather than BOLD fluctuations due to the natural continuous stimulus.

In this paper, we exploit the experimental design aspect of natural viewing experiments and propose to estimate a  dynamic stimulus-locked brain connectivity network by treating the intrinsic neural signal and non-neuronal signal as nuisance parameters.
Our proposal exploits the fact that the same stimulus will be given to multiple independent subjects, and that the intrinsic neural signal and non-neuronal signal for different subjects are independent. 
Thus, this motivates us to estimate a brain connectivity network across two brains rather than within each brain.  In fact, this approach has been considered in \citet{simony2016dynamic} and \citet{chen2017shared} where they estimated brain connectivity networks by calculating covariance for brain regions between two brains.
 
After estimating the stimulus-locked brain connectivity network, the next important question is to infer whether there are any regions of interest that are connected to many other regions of interest during cognitive process \citep{hagmann2008mapping}.  
These highly connected brain regions are referred to as \emph{hub nodes}, and the number of connections for each brain region is referred to as \emph{degree}.
 Identifying hub brain regions that are specific to the given natural continuous stimulus will lead to a better understanding of the cognitive processes in the brain, and may shed light on various cognitive disorders. 
In the existing literature, several authors have proposed statistical methodologies to estimate networks with hubs (see, for instance, \citealp{tan2014learning}).  
In this paper, we instead focus on developing a novel inferential framework to test the hypothesis whether there exists at least one time point such that the maximum degree of the graph is greater than $k$.

Our proposed inferential framework is motivated  by two major components: (1) the Gaussian multiplier bootstrap for approximating the distribution of supreme of empirical processes \citep{chernozhukov2013gaussian,chernozhukov2014gaussian}, and (2) the step-down method for multiple hypothesis testing problems \citep{romano2005exact}.  
In a concurrent work, \citet{neykov2016combinatorial} proposed a framework for testing general graph structure on a static graph.
In Appendix A, we will show that our proposed method can be extended to testing a large family of graph structures similar to that of \citet{neykov2016combinatorial}.


\section{Stimulus-Locked Time-Varying Brain Connectivity Networks} 
\label{section:method}
\subsection{A Statistical Model}
\label{section:stat model}
Let $\bX(z)$, $\bS(z)$, $\bE(z)$ be the observed data, stimulus-induced signal, and subject specific effects at time $Z=z$, respectively.  
Assume that $Z$ is a continuous random variable with a continuous density. 
For a given $Z=z$, we model the observed data as the summation of stimulus-induced signal and the subject specific effects:
\begin{equation}
\label{model}
\bX(z) = \bS(z)+ \bE(z), \quad \bS(z) \mid Z= z\sim N_d\{\mathbf{0},\bSigma(z)\},\quad \bE(z) \mid Z= z\sim N_d\{\mathbf{0},\Lb_X(z)\},
\end{equation}
where $\bSigma(z)$ is the covariance matrix of the stimulus-induced signal, and $\Lb_X(z)$ is the covariance matrix of the subject specific effects.
We assume that $\bS(z)$ and $\bE(z)$ are independent for all $z$.
Thus, estimating the stimulus-locked brain connectivity network amounts to estimating $\{\bSigma(z)\}^{-1}$.
Fitting the  model in \eqref{Eq:timevarying GGM1}  using  the observed data will yield an estimate of $\{\bSigma(z)+\Lb_X(z)\}^{-1}$, and thus,  \eqref{Eq:timevarying GGM1} fails to estimate the  stimulus-locked brain connectivity network $\{\bSigma(z)\}^{-1}$.



To address this issue, we exploit the experimental design aspect of natural viewing experiments.  In many studies, neuroscientists often measure brain activity for multiple subjects under the same continuous natural stimulus \citep{chen2017shared,simony2016dynamic}. 
Let $\bX(z)$ and $\bY(z)$ be measured data for two subjects at time point $Z=z$. 
Since the same natural stimulus is given to both subjects, this motivates the following statistical model:
\begin{equation}
\label{eq:two subjects}
\begin{split}
\bX(z) = \bS(z) + \bE_{X}(z), \qquad \bY(z) = \bS(z) + &\bE_{Y}(z),\qquad \bS (z)| Z= z \sim N_d\{\mathbf{0},\bSigma(z)\},\\
\bE_X(z)  |Z= z \sim N_d\{\mathbf{0},\Lb_X (z)\},  &~~~~  \bE_Y (z)| Z= z \sim N_d\{\mathbf{0},\Lb_Y(z)\},
\end{split}
\end{equation}
where $\bS(z)$ is the stimulus-induced signal, and $\bE_X(z)$ and $\bE_Y(z)$ are the subject specific effects at $Z=z$.  
Model~\eqref{eq:two subjects} motivates the calculation of \emph{inter-subject covariance} between two subjects rather than the within-subject covariance.  For a given time point $Z=z$, we have 
\[
\EE[\bX(z)\{\bY(z)\}^T  \mid Z= z] = \EE[\bS(z)\{\bS(z)\}^T \mid Z= z] + \EE[\bE_{X}(z)\{\bE_{Y}(z)\}^T \mid Z= z] = \bSigma(z).
\]
That is, we estimate $\bSigma(z)$ via the inter-subject covariance by treating $\Lb_X(z)$ and $\Lb_Y(z)$ as nuisance parameters.
In the neuroscience literature, several authors have been calculating inter-subject covariance matrix to estimate marginal dependencies among brain regions that are stimulus-locked \citep{chen2017shared,simony2016dynamic}. 
They have found that calculating the inter-subject covariance is able to better capture the stimulus-locked marginal relationships for pairs of brain regions.  

For simplicity, throughout the paper, we focus on two subjects.  When there are multiple subjects, we can split the subjects into two groups, and obtain an average of each group to estimate the stimulus-locked brain network.   
We also discuss a $U$-statistic type estimator for the case when there are multiple subjects in Appendix B.


\subsection{Inter-Subject Time-Varying Gaussian Graphical Models}
\label{section:Estimation}

We now propose inter-subject time-varying Gaussian graphical models for estimating stimulus-locked time-varying brain networks.
Let $(Z_1,\bX_1,\bY_1),\ldots,(Z_n,\bX_n,\bY_n)$ be $n$ independent realizations of the triplets $(Z,\bX,\bY)$.  Both subjects share the same $Z_1,\ldots,Z_n$ since they are given the  same continuous stimulus.
Let $K : \RR \rightarrow \RR$ be a symmetric kernel function.
To obtain an estimate for $\bSigma(z)$, we propose the inter-subject kernel smoothed covariance  estimator 
\begin{equation}
\label{Eq:estimator}
\hat{\bSigma} (z) = \frac{\sum_{i\in [n]}  K_h (Z_i-z) \bX_i \bY_i^T}{\sum_{i\in [n]} K_h (Z_i-z)},
\end{equation}
where $K_h (Z_i -z ) =  K\{(Z_i -z )/h\}/h$, $h>0$ is the bandwidth parameter, and $[n] = \{1,\ldots,n\}$.
For simplicity, we use the Epanechnikov kernel 
\begin{equation}
\label{Eq:kernel use}
K (u) = 0.75\cdot \left(1-u^2\right) \cdot \ind_{\{|u|\le 1\}},
\end{equation}
where $\ind_{\{|u|\le 1\}}$ is an indicator function that takes value one if $|u|\le 1$ and zero otherwise.
The choice of kernel is not essential as long as it satisfies regularity conditions in Section~\ref{subsection:estimation error}.

Let $\bTheta(z) = \{\bSigma(z)\}^{-1}$. Given the kernel smoothed inter-subject covariance estimator  in~(\ref{Eq:estimator}), there are multiple approaches to obtain an estimate of the inverse covariance matrix $\bTheta (z)$.
We consider the CLIME estimator proposed by 
\citet{cai2011constrained}.  
Let $\mathbf{e}_j$ be the $j$th canonical basis in $\RR^d$.
For a vector $\vb \in \RR^{d}$, let $\|\vb\|_1 = \sum_{j=1}^d |v_j|$ and let $\|\vb\|_{\infty} = \max_j |v_j|$.
For each $j \in [d]$, the CLIME estimator takes the  form
\begin{equation}
\label{Eq:clime}
\hat{\bTheta}_j(z) = \underset{\btheta \in \RR^d}{\argmin}  \; \|\btheta\|_1 \qquad \mathrm{subject\; to\;} \left\|\hat{\bSigma}(z) \cdot \btheta - \mathbf{e}_j\right\|_{\infty} \le \lambda,
\end{equation}
where $\lambda>0$ is a tuning parameter that controls the sparsity of $\hat{\bTheta}_j (z)$. We construct an estimator for the stimulus-locked brain network as $\hat{\bTheta}(z) = [    \{\hat{\bTheta}_1 (z)\}^T,\ldots,\{\hat{\bTheta}_d(z)\}^T ]$.

There are two tuning parameters in our proposed method: a bandwidth parameter $h$ that controls the smoothness of the estimated covariance matrix, and a tuning parameter $\lambda$ that controls the sparsity of the estimated network. 
The bandwidth parameter $h$ can be selected according to the scientific context. 
For instance, in many neuroscience applications that involve continuous natural stimuli, we select $h$ such that there are always at least 30\% of the time points that have non-zero kernel weights.
In the following, we propose a $L$-fold cross-validation type procedure to select $\lambda$.
We first partition the $n$ time points into $L$ folds. Let $C_{\ell}$ be an index set containing time points for the $\ell$th fold.
Let $\bTheta(z)^{(-\ell)}$ be the estimated inverse covariance matrix using data excluding the $\ell$th fold, and let $\bSigma(z)^{(\ell)}$ be the estimated kernel smoothed covariance estimated using data only from the $\ell$th fold.  We calculate the following quantity for various values of $\lambda$ :
\begin{equation}
\label{eq:cv}
\mathrm{cv}_\lambda= \frac{1}{L}\sum_{\ell=1}^L\sum_{i\in C_\ell} \|\hat{\bSigma}(z_i,\lambda)^{(\ell)} \hat{\bTheta}{(z_i,\lambda)}^{(-\ell)}-\Ib_d\|_{\max},
\end{equation}
where $\|\cdot\|_{\max}$ is the element-wise max norm for matrix. 
From performing extensive numerical studies, we find that picking $\lambda$ that minimizes the above quantity tend to be too conservative.
We instead propose to pick the smallest $\lambda$ with $\mathrm{cv}_{\lambda}$ smaller than the minimum plus two standard deviation.

\subsection{Inference on Maximum Degree}
\label{subsection:inference}
We consider testing the hypothesis:
\begin{equation}
\small
\label{max test}
\begin{split}
&H_0: \mathrm{\;for \; all\;} z\in [0,1], \mathrm{\;the \; maximum\; degree\; of\; the\; graph\; is \;not\; greater\; than\;} k,\\
&H_1: \mathrm{\;there \; exists\;a \;} z_0\in [0,1] \mathrm{\;such \; that\; the \; maximum\; degree\; of\; the\; graph\; is \; greater\; than\;}k.\\
\end{split}
\end{equation}
In the existing literature, many authors have proposed to test whether there is an edge between two nodes in a graph (see,  \citealp{neykov2015unified}, and the references therein).
Due to the $\ell_1$ penalty used to encourage a sparse graph, classical test statistics are no longer asymptotically normal.  
We employ the de-biased test statistic
\begin{equation}
\label{Eq:one-step}
\hat{\bTheta}^{\mathrm{de}}_{jk} (z) = \hat{\bTheta}_{jk} (z)
  - \frac{\left\{\hat{\bTheta}_j (z)\right\}^T \left\{ \hat{\bSigma}(z) \hat{\bTheta}_k (z)- \mathbf{e}_k \right\}     }{ \left\{ \hat{\bTheta}_j(z)\right\}^T \hat{\bSigma}_j(z)},
\end{equation}
where $\hat{\bTheta}_j(z)$ is the $j$th column of $\hat{\bTheta}(z)$.
The subtrahend in \eqref{Eq:one-step} is the bias introduced by imposing an $\ell_1$ penalty during the estimation procedure.

We use (\ref{Eq:one-step}) to construct a test statistic for testing the maximum degree of a time-varying graph. Let $G(z) = \{V,E(z)\}$ be an undirected graph, where $V= \{1,\ldots,d\}$ is a set of $d$ nodes and $E(z) \subseteq V\times V$ is a set of edges connecting pairs of nodes.
Let
\begin{equation}
\label{Eq:quantile}
T_E =  \underset{z\in [0,1]}{\sup} \;\underset{(j,k)\in E(z)}{\max} \; 
\sqrt{nh}\cdot  \left|\hat{\bTheta}^{\mathrm{de}}_{jk} (z) - \bTheta_{jk} (z)\right| \cdot \left\{\frac{1}{n}\sum_{i \in [n]} K_h(Z_i-z) \right\}.
\end{equation}
The edge set $E(z)$ is defined based on the hypothesis testing problem.   In the context of testing maximum degree of a time-varying graph as in \eqref{max test}, $E(z) = V\times V$, and therefore the maximum is taken over all possible edges between pairs of nodes.  
Throughout the manuscript, we will use the notation $E(z)$ to indicate some predefined known edge set. 
This general edge set will be different for testing different graph structures, and we refer the reader to Appendix A for details.

Since the test statistic~(\ref{Eq:quantile}) involves taking the supreme over $z$ and the maximum over all edges in $E(z)$, it is challenging to evaluate its asymptotic distribution.
To this end, we generalize the  Gaussian multiplier bootstrap proposed in \citet{chernozhukov2013gaussian} and \citet{chernozhukov2014gaussian} to approximate the distribution of the test statistic $T_E$.
Let $\xi_1,\ldots,\xi_n \stackrel{\mathrm{i.i.d.}}{\sim} N(0,1)$. We construct the bootstrap statistic as
\begin{equation}
\label{Eq:approx test}
T^B_E = \underset{z\in [0,1]}{\sup} \; \underset{(j,k) \in E(z) }{\max} \;\sqrt{nh}\cdot \left| \frac{ \sum_{i\in [n]} \left\{\hat{\bTheta}_j (z)\right\}^T K_h (Z_i-z) \left\{ \bX_i \bY_i^T\hat{\bTheta}_k (z)  - \mathbf{e}_k\right\} \xi_i/n}{ \left\{ \hat{\bTheta}_j(z)\right\}^T \hat{\bSigma}_j(z)}\right|.
\end{equation}
 We denote the conditional $(1-\alpha)$-quantile of $T^B_E$ given $\{(Z_i,\bX_i,\bY_i)\}_{i\in [n]}$ as 
\begin{equation}
\label{Eq:quantile estimate}
c(1-\alpha,E) = \inf \left( t\in \RR   \; | \;  P \left[ T^B_E \le t    \; |\;  \{(Z_i,\bX_i,\bY_i)\}_{i\in[n]} \right] \ge 1-\alpha     \right).
\end{equation}
 The quantity $c(1-\alpha,E)$ can be calculated numerically using Monte-Carlo.  
In Section~\ref{subsection:testing}, we show that the quantile of $T_E$ in (\ref{Eq:quantile}) can be estimated accurately by the conditional 	$(1-\alpha)$-quantile of the bootstrap statistic.

We now propose an inference framework for testing hypothesis problem of the form  (\ref{max test}). 
Our proposed method is motivated by the step-down method in \citet{romano2005exact} for multiple hypothesis tests. 
The details are summarized in Algorithm~\ref{al:stepdown}.  
Algorithm~\ref{al:stepdown} involves evaluating all values of $z\in [0,1]$.  
In practice, we implement the proposed method by discretizing values of $z\in[0,1]$ into a large number of time points.  
We note that there will be approximation error by taking the maximum over the discretized time points instead of the supremum of the continuous trajectory. The approximation error could be reduced to arbitrarily small if we increase the density of discretization.



\begin{algorithm}[ht]
\caption{\label{al:stepdown} Testing Maximum Degree of a Time-Varying Graph.}
\raggedright \textbf{Input:} type I error $\alpha$; pre-specified degree $k$; de-biased estimator $\hat{\bTheta}^{\mathrm{de}}(z)$ for $z\in [0,1]$.  
\begin{enumerate}
\item Compute the conditional  quantile 
\[
c(1-\alpha,E) =  \inf \left[ t\in \RR   \mid P(T^B_E)  \le t \mid \{(Z_i,\bX_i,\bY_i)\}_{i\in[n]}   \ge 1-\alpha     \right],
\]
where 
$T^B_E$ is the bootstrap statistic defined in (\ref{Eq:approx test}).
\item Construct the rejected edge set 
\[
\cR(z)= \left\{e\in E(z) \mid \sqrt{nh} \cdot |\hat{\bTheta}_e^d (z)| \cdot \sum_{i\in [n]} K_h(Z_i-z)/n > c (1-\alpha,E) \right\}.
\]

\item Compute $d_{\mathrm{rej}}$ as the maximum degree of the dynamic graph based on the rejected edge set.
\end{enumerate}

\raggedright  \textbf{Output:} Reject the null hypothesis if $d_{\mathrm{rej}}>k$.

\end{algorithm}

In Section~\ref{subsection:testing}, we will show that Algorithm~\ref{al:stepdown} is able to control the type I error at a pre-specified value $\alpha$. 
Moreover, the power of the proposed inferential method increases to one as we increase the number of time points $n$.
In fact, the proposed inferential method can be generalized to testing a wide variety of structures that satisfy the \emph{monotone graph property}. Some examples of monotone graph property are that the graph is connected, the graph has no more than $k$ connected components, the maximum degree of the graph is larger than $k$, the graph has no more than $k$ isolated nodes, and the graph contains a clique of size larger than $k$.  This generalization will be presented in  Appendix A.

\section{Simulation Studies} 
\label{section:simulation}
We perform numerical studies to evaluate the performance of our  proposal using the inter-subject covariance relative to the typical time-varying Gaussian graphical model using within-subject covariance.  
To this end, we define the true positive rate as the proportion of correctly identified non-zeros in the true inverse covariance matrix, and the false positive rate as the proportion of zeros that are incorrectly identified to be non-zeros. 
To evaluate our testing procedure, we calculate the type I error rate and power as the proportion of falsely rejected  $H_0$ and correctly rejected $H_0$, respectively, over a large number of data sets. 

To generate the data, we first construct the inverse covariance matrix $\bTheta(z)$ for $z=\{0,0.2,0.5\}$. 
At $z=0$, we set $(d-2)/4$ off-diagonal elements of $\bTheta(0)$ to equal 0.3 randomly with equal probability. At $z=0.2$, we set an additional $(d-2)/4$ off-diagonal elements of $\bTheta(0)$ to equal 0.3.  At $z=0.5$, we randomly select two columns of $\bTheta(0.2)$ and add $k+1$ edges to each of the two columns.  
This guarantees that the maximum degree of the graph is greater than $k$.
To ensure that the inverse covariance matrix is smooth, for $z \in [0,0.2]$, we construct $\bTheta(z)$ by taking linear interpolations between the elements of $\bTheta(0)$ and $\bTheta(0.2)$.  For $z \in [0.2,0.5]$, we construct $\bTheta(z)$ in a similar fashion based on $\bTheta(0.2)$ and $\bTheta(0.5)$.
The construction is illustrated in Figure~\ref{Fig:hub}.  

\begin{figure}[!htp]
\centering
\includegraphics[scale=0.33]{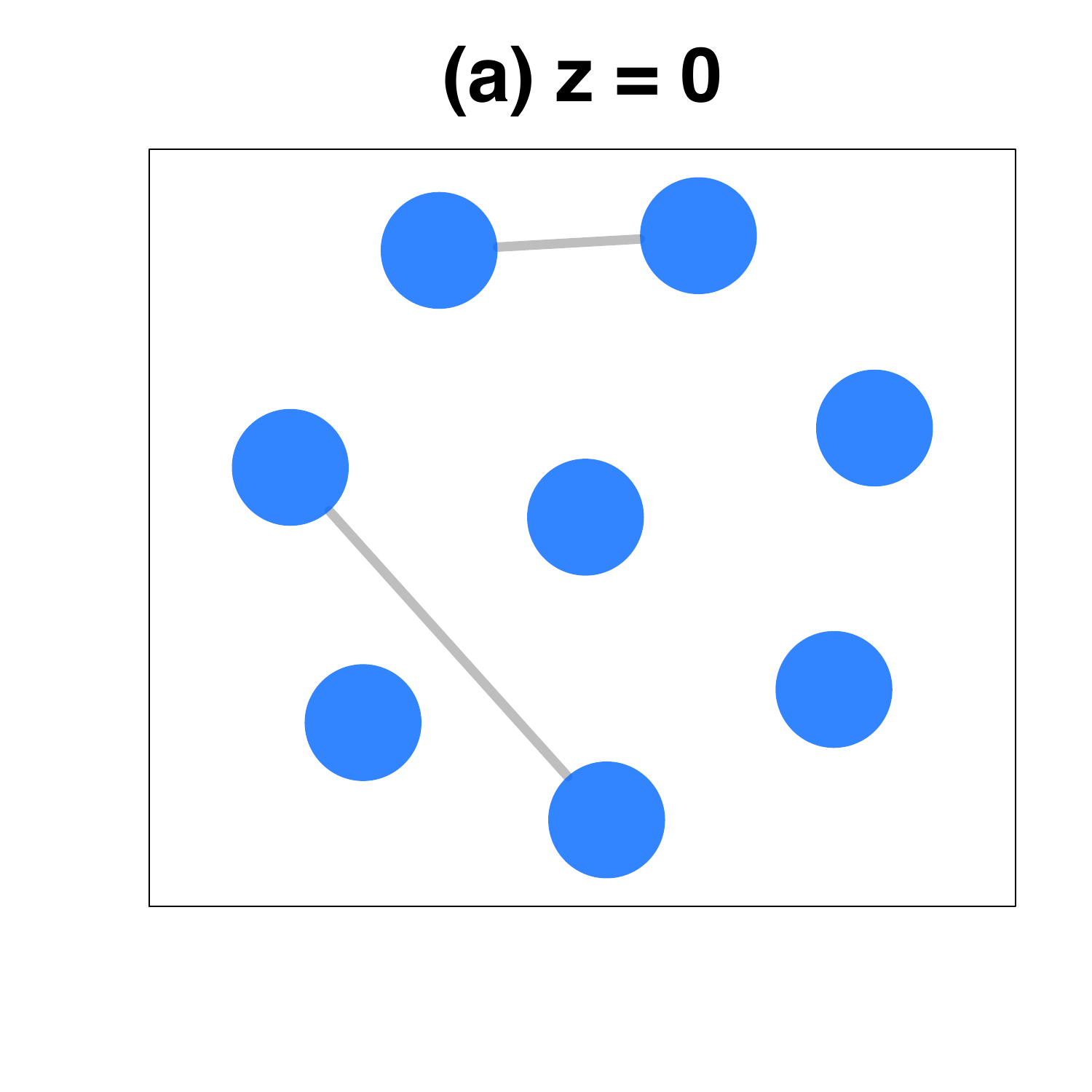}
\includegraphics[scale=0.33]{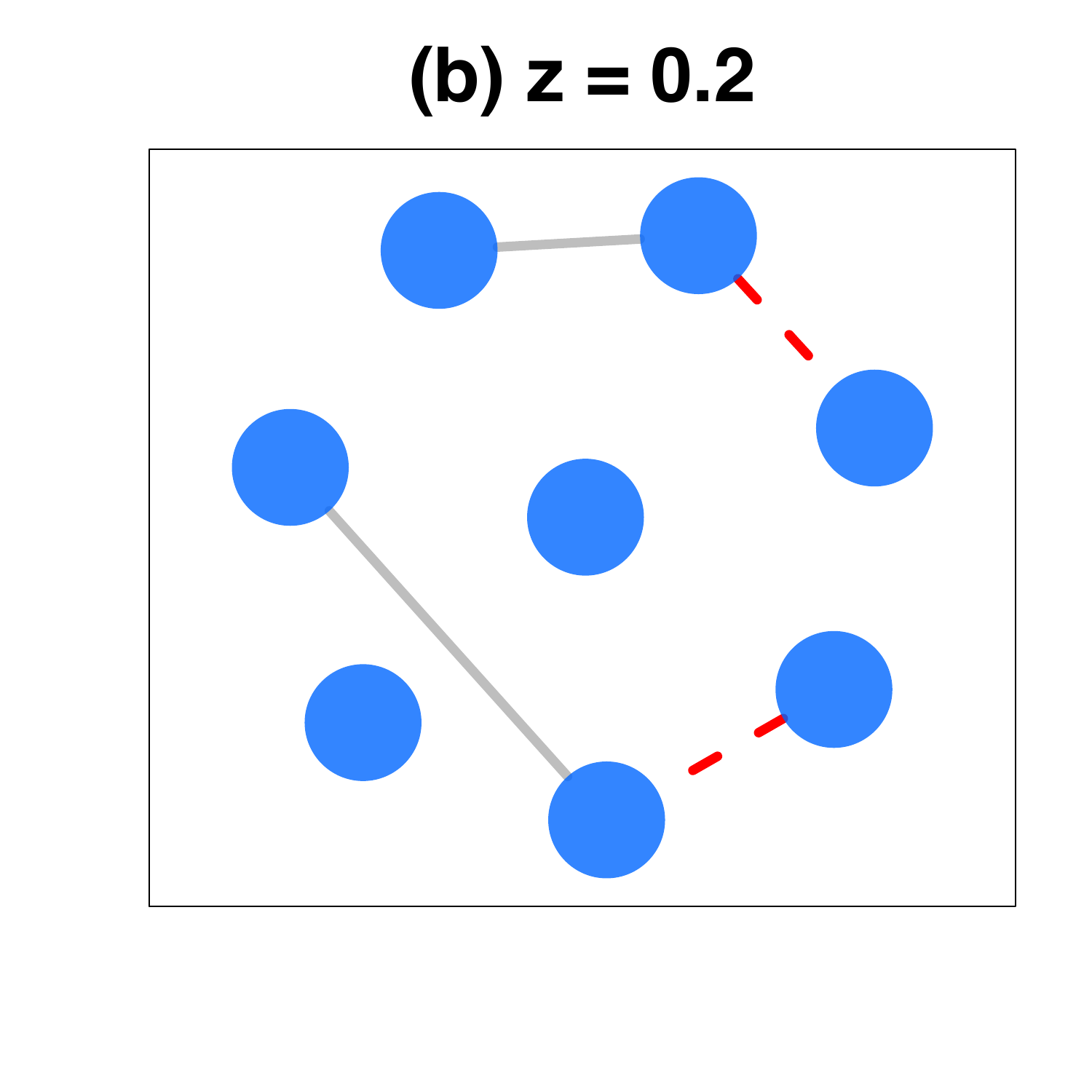}
\includegraphics[scale=0.33]{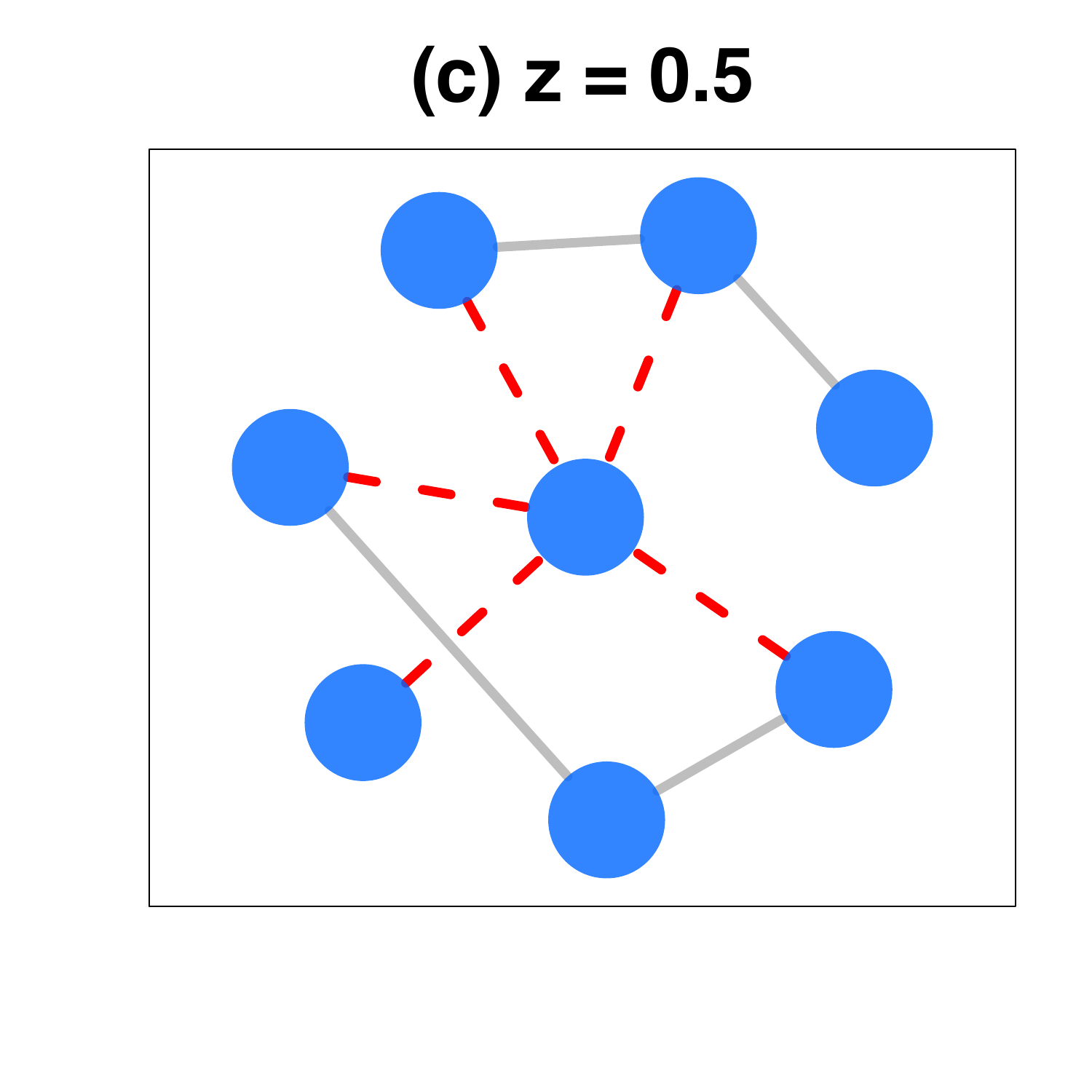}
\caption{\label{Fig:hub} (a): A graph corresponding to $\bTheta(0)$ with maximum degree no greater than  four. (b): A graph corresponding to $\bTheta(0.2)$ with maximum degree less than or equal to four.  The red dash edges are additional edges that are added to $\bTheta(0)$. (c): A graph corresponding to $\bTheta(0.5)$ with maximum degree larger than four.  The red dash edges are additional edges that are added to $\bTheta(0.2)$ such that the maximum degree of the graph is larger than four. } 
\end{figure}

To ensure that the inverse covariance matrix is positive definite, we set $\bTheta_{jj} (z) = |\Lambda_{\min} \{\bTheta(z)\}|+0.1$, where $\Lambda_{\min}\{\bTheta (z)\}$ is the minimum eigenvalue of $\bTheta(z)$.  We then rescale the matrix such that the diagonal elements of $\bTheta(z)$ equal one.  
The covariance $\bSigma(z)$ can be obtained by taking the inverse of $\bTheta(z)$ for each value of $z$.
Model~\eqref{eq:two subjects} involves the subject specific covariance matrix $\Lb_X (z)$ and $\Lb_Y(z)$. For simplicity, we assume that these covariance matrices stay constant over time.  We generate $\Lb_X$ by setting the diagonal elements to be one and the off-diagonal elements to be 0.3.  Then, we add random perturbations $\epsilon_k\epsilon_k^T$ to $\Lb_X$ for $k=1,\ldots,10$, where $\epsilon_k \sim N_d(\mathbf{0},\Ib_d)$.  The matrix $\Lb_Y$ is generated similarly.

To generate the data according to \eqref{eq:two subjects}, we first generate $Z_i \sim \mathrm{Unif}(0,1)$.
Given $Z_1,\ldots,Z_n$, we generate $\bS(Z_i) \mid Z=Z_i \sim N_d\{\mathbf{0},\bSigma(Z_i)\}$.  We then simulate $\bE_X(Z_i)\mid Z=Z_i \sim N_d(\mathbf{0},\Lb_X) $ and $\bE_Y(Z_i)\mid Z=Z_i \sim N_d(\mathbf{0},\Lb_Y) $.  Finally, for each value of $Z$, we generate 
\[
\bX(Z_i)= \bS(Z_i)+\bE_X(Z_i) \qquad \mathrm{and} \qquad \bY(Z_i)= \bS(Z_i)+\bE_Y(Z_i).
\]
Note that both $\bX(Z_i)$ and $\bY(Z_i)$ share the same generated $\bS(Z_i)$ since both subjects will be given the same natural continuous stimulus. 
In the following sections, we will assess the performance of our proposal relative to that of typical approach for time-varying Gaussian graphical models using the. within-subject covariance matrix as input.  
We then evaluate the proposed inferential procedure in Section~\ref{subsection:inference} by calculating its type I error and power.

\subsection{Estimation}
\label{sim:estimation}
To mimic the data application we consider, we generate the data with $n=945$, $d=172$, and $k=10$.
Given the data $(Z_1,\bX_1,\bY_1),\ldots,(Z_n,\bX_n,\bY_n)$, we estimate the covariance matrix at $Z=z$ using the inter-subject kernel smoothed covariance estimator as defined in (\ref{Eq:estimator}).
To obtain estimates of the inverse covariance matrix $\hat{\bTheta}(Z_1),\ldots,\hat{\bTheta}(Z_n)$,  we use the CLIME estimator as described in (\ref{Eq:clime}), implemented using the \verb=R= package \verb=clime=.
There are two tuning parameters $h$ and $\lambda$: we set $h=1.2 \cdot n^{-1/5}$ and vary the tuning parameter $\lambda$ to obtain the ROC curve in Figure~\ref{fig:roc}.  The smoothing parameter $h$ is selected such that there are always at least 30\% of the time points that have non-zero kernel weights.
We compare our proposal to time-varying Gaussian graphical models with the kernel smoothed within-subject covariance matrix.
The true and false positive rates, averaged over 100 data sets, are in Figure~\ref{fig:roc}.

\begin{figure}
\centering
\includegraphics[scale=0.28]{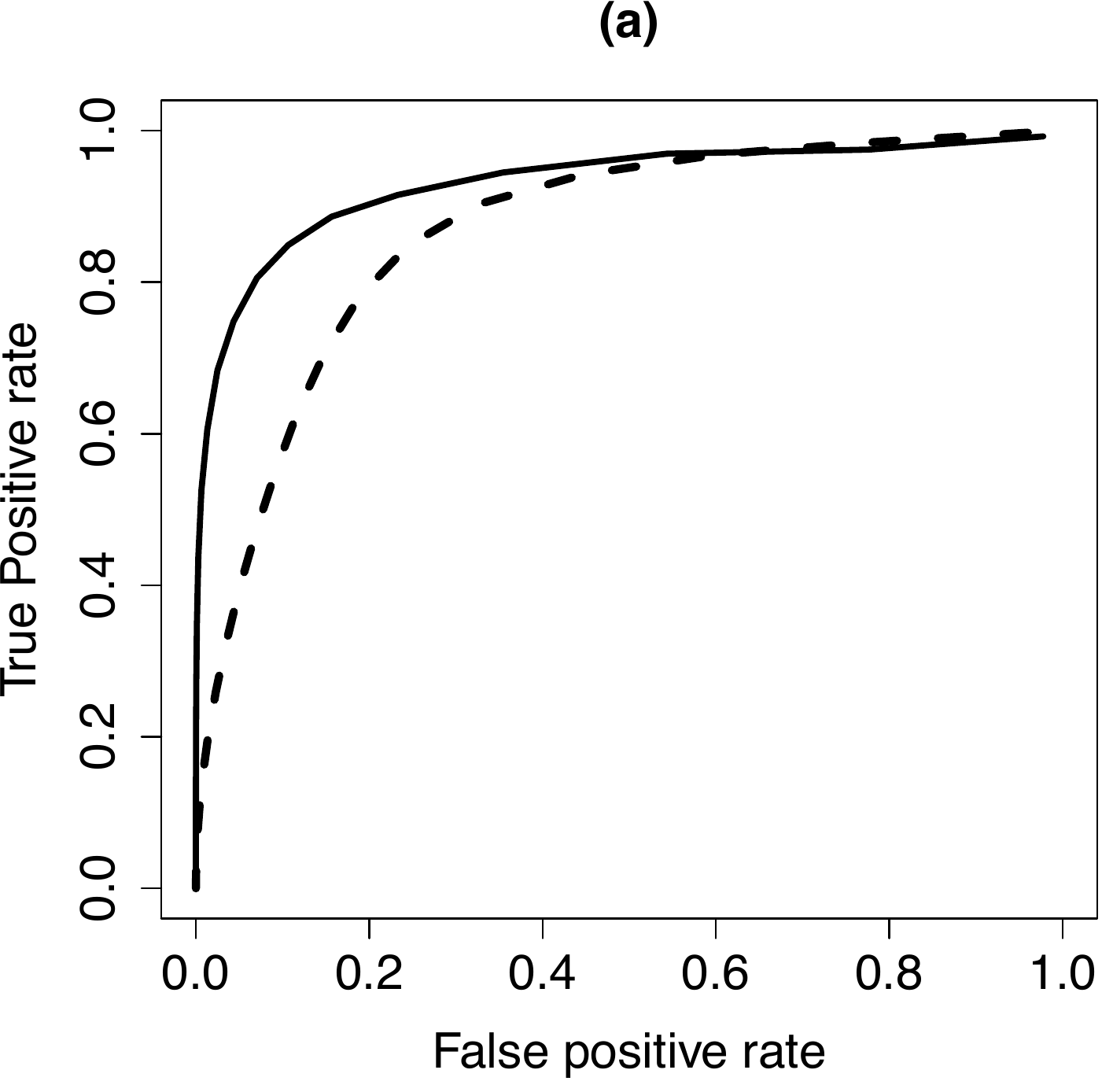}
\includegraphics[scale=0.28]{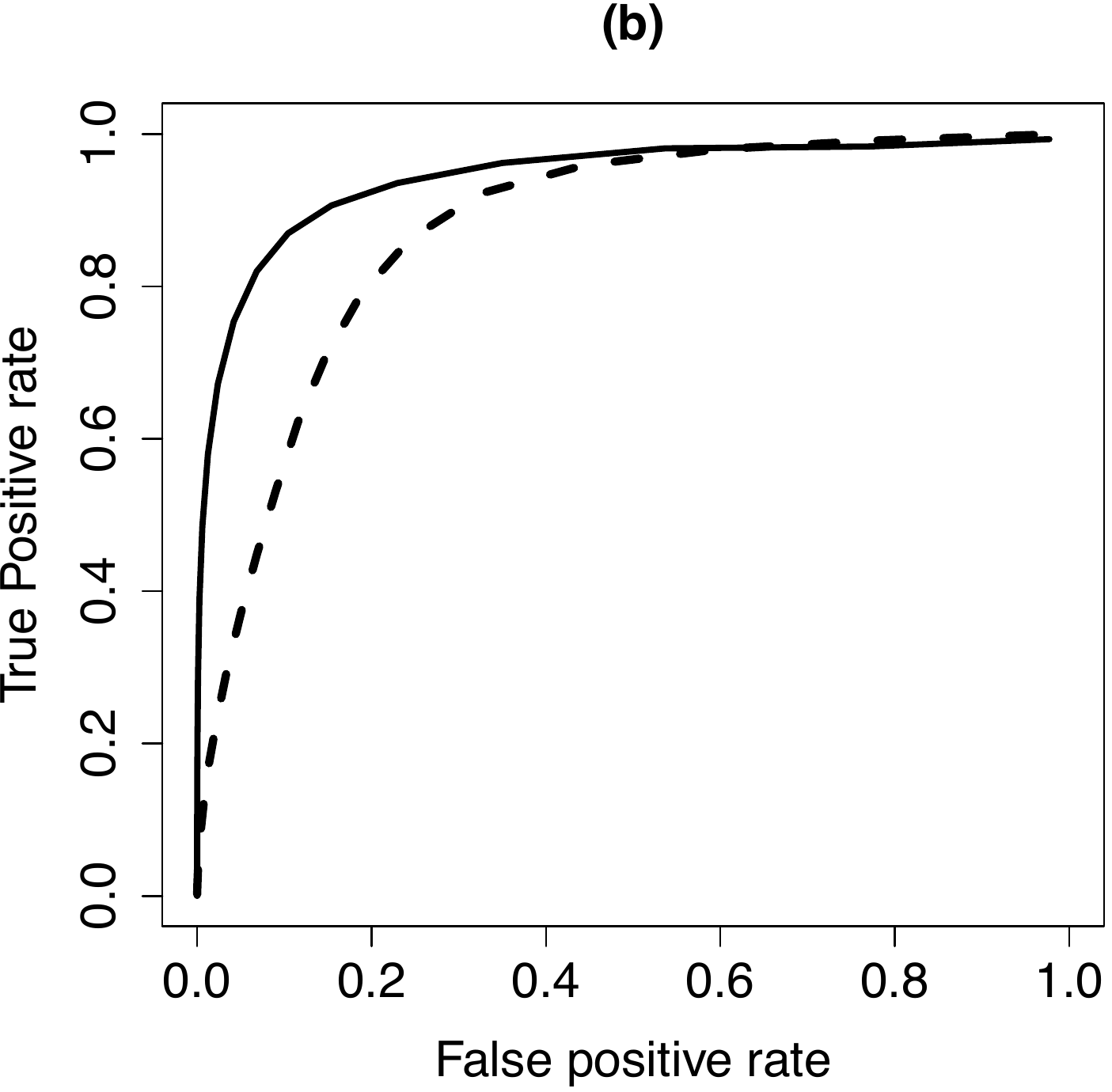}
\includegraphics[scale=0.28]{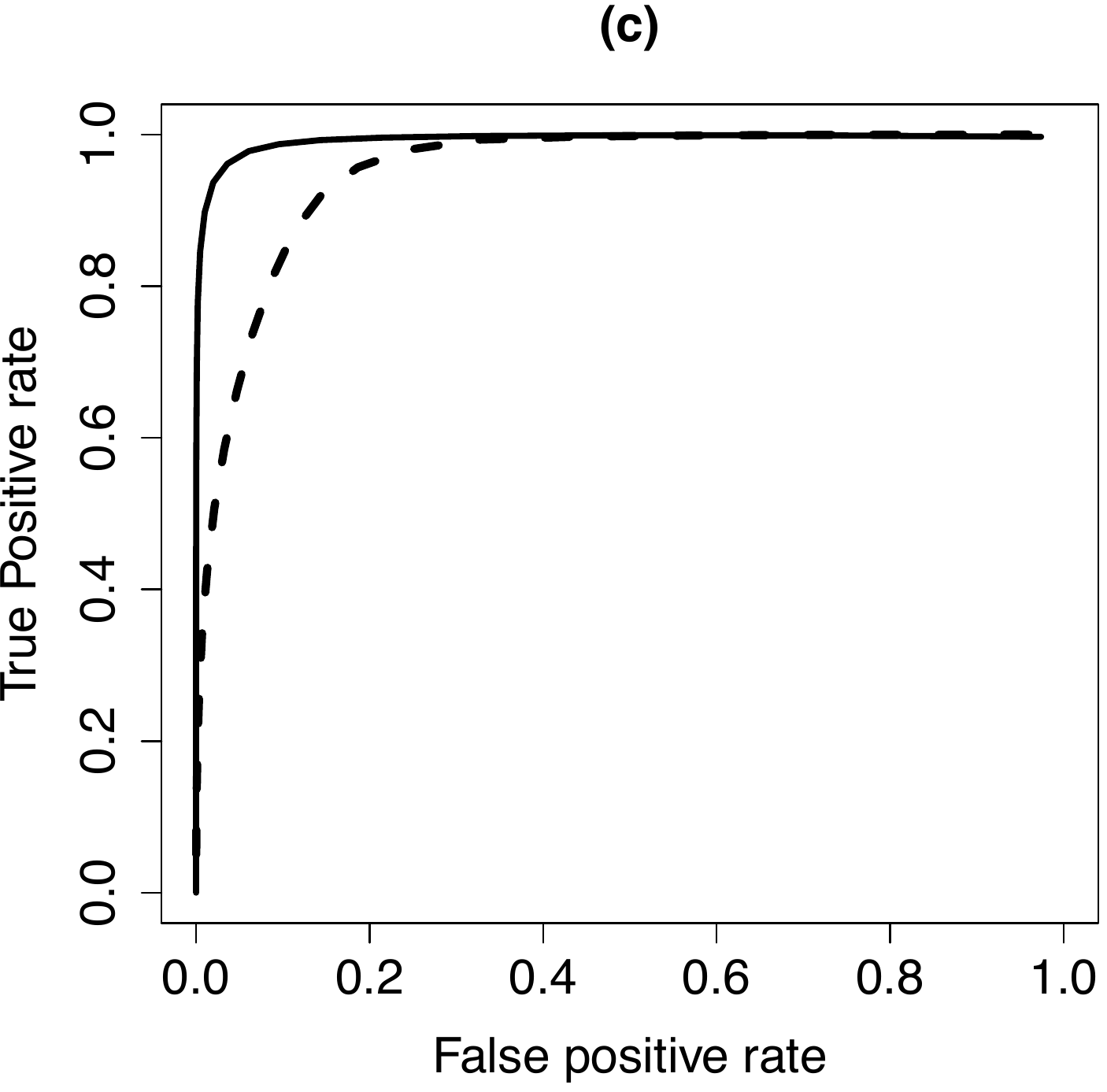}
\caption{\label{fig:roc}  The true and false positive rates for the numerical study with $n= 952$, $d = 172$, and $k = 10$. Panels (a), (b), and (c) correspond to $Z=\{0.25,0.50, 0.75\}$, respectively. The two curves represent our proposal (black solid line) and within-subject time-varying Gaussian graphical model (black dash), respectively.}
\end{figure}

From Figure~\ref{fig:roc}, we see that our proposed method outperforms the typical approach for time-varying Gaussian graphical models by calculating the within-subject covariance matrix. 
This is because the typical approach is not estimating the parameter of interest, as discussed in Section~\ref{section:Estimation}. 
Our proposed method treats the subject specific effects as nuisance parameters and is able to estimate the stimulus-locked graph accurately.

\subsection{Testing the Maximum Degree of a Time-Varying Graph}
\label{subsec:hub}
In this section, we evaluate the proposed inferential method in Algorithm~\ref{al:stepdown} by calculating its type I error and power.
 In all of our simulation studies, we consider $d=50$ and $B=500$ bootstrap samples, across a range of samples $n$. 
 Similarly, we select the smoothing parameter to be $h=1.2\cdot n^{-1/5}$.  The tuning parameter $\lambda$ is then selected using the cross-validation criterion  defined in \eqref{eq:cv}.
The tuning parameter $\lambda = 0.9 \cdot [h^2 + \sqrt{\{\log (d/h)\}/(nh)}]$ is selected for one of the simulated data set.  For computational purposes, we use this value of tuning parameter across all replications.

We construct the test statistic $T_E$ and the Gaussian multiplier bootstrap statistic $T^B_E$ as defined in (\ref{Eq:quantile}) and (\ref{Eq:approx test}), respectively.   Both the statistics $T_E$ and $T^B_E$ involve evaluating the supreme over $z\in [0,1]$.  In our simulation studies, we approximate the supreme by taking the maximum of the statistics over 50 evenly spaced grid $z\in [z_{\min},z_{\max}]$, where $z_{\min} = \min \left\{ Z_i\right\}_{i\in [n]}$ and  $z_{\max} = \max  \left\{ Z_i\right\}_{i\in [n]}$.

Our testing procedure tests the hypothesis
\begin{equation*}
\begin{split}
&H_0: \mathrm{\;for \; all\;} z\in [z_{\min},z_{\max}], \mathrm{\;the \; maximum\; degree\; of\; the\; graph\; is \;no\; greater\; than\;} k,\\
&H_1: \mathrm{\;there \; exists\;a \;} z_0\in [z_{\min},z_{\max}] \mathrm{\;such \; that\; the \; maximum\; degree\; of\; the\; graph\; is \; greater\; than\;}k.\\
\end{split}
\end{equation*}
For power analysis, we construct $\bTheta(z)$ according to Figure~\ref{Fig:hub} by randomly selecting two columns of $\bTheta(0.2)$ and adding $k+1$ edges to each of the two columns.  This ensure that the maximum degree of the graph is greater than $k$.
To evaluate the type I error under $H_0$, instead of adding $k+1$ edges to the two columns, we instead add sufficient edges such that the maximum degree of the graph is no greater than $k$.  For the purpose of illustrating the type I error and power in the finite sample setting, we increase the signal-to-noise ratio of the data by reducing the effect of the nuisance parameters in the data generating mechanism described in Section~\ref{section:simulation}.  
The type I error and power for $k=\{5,6\}$, averaged over $500$ data sets, are reported in Table~\ref{Table:inf2}. We see that the type I error is controlled and that the power increases to one as we increase the number of time points $n$.

\begin{table}[htp]
\begin{center}
\caption{The type I error and power for testing the maximum degree of the graph  at the 0.05 significance level are calculated as the proportion of falsely rejected and correctly rejected null hypotheses, respectively, over 500 data sets. Simulation results with $d=50$ and $k=\{5,6\}$, over a range of $n$ are shown.}
\begin{tabular}{cc   c cc cc}
\hline
\hline
&&$n=400$ & $n=600$&$n=800$&$n=1000$&$n=1500$ \\ \hline
$k$=5 &Type I error & 0.014 &  0.024  & 0.030 & 0.034  & 0.028  \\
&Power & 0.068 &  0.182  & 0.690  & 0.976  & 1  \\ \hline
$k$=6 &Type I error &  0.032&0.040   & 0.034 & 0.028  & 0.018  \\
&Power & 0.050  & 0.142   &  0.446 &   0.898&1  \\ \hline\hline
\end{tabular}
\label{Table:inf2}
\end{center}
\end{table}

\section{Sherlock Holmes Data} 
\label{section:real data}
We analyze a brain imaging data set studied in \citet{chen2017shared}.  This data set consists of fMRI measurements of 17 subjects while watching audio-visual movie stimuli in an fMRI scanner.  
More specifically,  the subjects were asked to watch a 23-minute segment of BBC television series \emph{Sherlock}, taken from the beginning of the first episode of the series.  The fMRI measurements were taken every 1.5 seconds of the movie, yielding $n=945$ brain images for each subject.  To understand the dynamics of the brain connectivity network under natural continuous stimuli, we partition the movie into 26 scenes \citep{chen2017shared}. 
The data were pre-processed for slice time correction, motion correction, linear detrending, high-pass filtering, and coregistration  to a template brain \citep{chen2017shared}. Furthermore, for each subject, we attempt to mitigate issues caused by non-neuronal signal sources by regressing out the average white matter signal.  

There are measurements for 271,633 voxels in this data set.  For interpretation purposes, 
we reduce the dimension from 271,633 voxels to $d=172$ regions of interest (ROIs) as described in \citet{baldassano2015parcellating}.    
We map the $n=945$ brain images taken across the 23 minutes into the interval $[0,1]$ chronologically. We then standardize each of the 172 ROIs to have mean zero and standard deviation one.

We first estimate the stimulus-locked time-varying brain connectivity network. To this end, we construct the inter-subject kernel smoothed covariance matrix $\hat{\bSigma}(z)$ as defined in (\ref{Eq:estimator}).  
Since there are 17 subjects, we randomly split the 17 subjects into two groups, and use the averaged data to construct \eqref{Eq:estimator}.
Note that we could also construct a brain connectivity network for each pair of subjects separately.
We then obtain estimates of the inverse covariance matrix using the CLIME estimator as in (\ref{Eq:clime}).
We set the smoothing parameter $h= 1.2\cdot n^{-1/5}$ so that at least 30\% of the kernel weights are non-zero across all time points $Z$. 
For the sparsity tuning parameter, our theoretical results suggest picking $\lambda = C \cdot \{h^2 +\sqrt{\log (d/h)/nh}\}$ to guarantee a consistent estimator. 
We select the constant $C$ by considering a sequence of numbers using a $5$-fold cross-validation procedure described in \eqref{eq:cv}, and this yields  $\lambda = 1.4 \cdot \{h^2+\sqrt{\log (d/h)/(nh)}\}$.
Heatmaps of the estimated stimulus-locked brain connectivity networks for three different scenes in Sherlock are in Figure~\ref{fig:heatmap}.

 \begin{figure}[!htp]
\begin{center}
\includegraphics[scale=0.5]{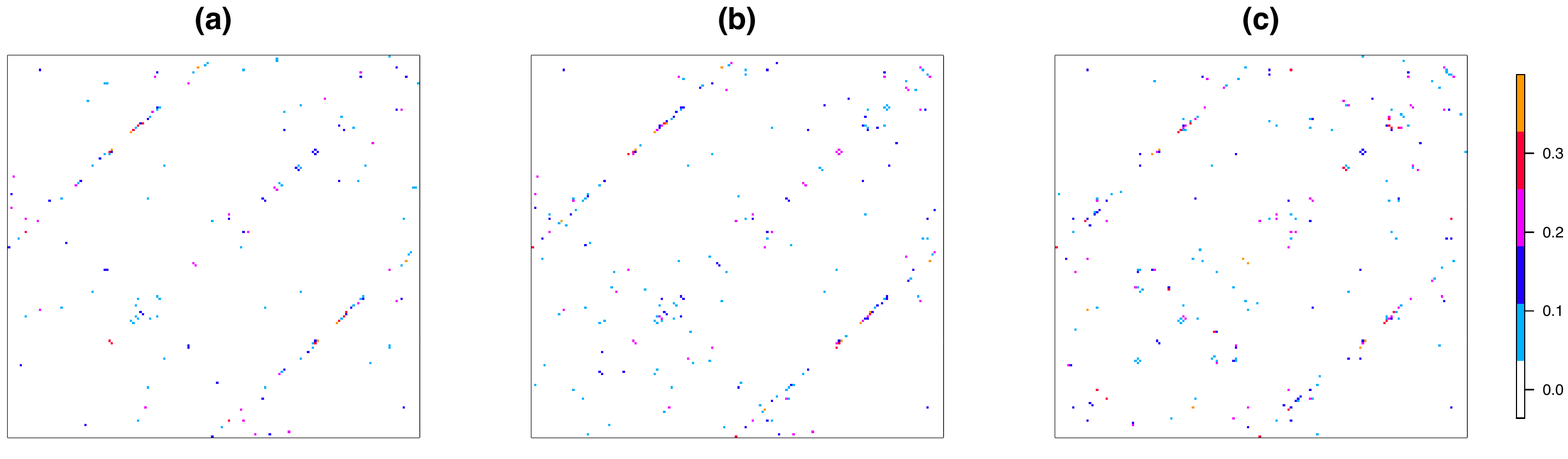}
 \end{center}
  \caption{Heatmaps of the estimated stimulus-locked brain connectivity network for three different scenes in Sherlock.  (a) Watson psychiatrist scene; (b) Park run in scene; and (c) Watson joins in scene.  Colored elements in the heatmaps correspond to edges in the estiamted brain network.}
  \label{fig:heatmap}
\end{figure}

From Figure~\ref{fig:heatmap}, we see that there are quite a number of connections between brain regions that remain the same across different scenes in the movie.  
It is also evident that the graph structure changes across different scenes. 
We see that most brain regions are very sparsely connected, with the exception of a few ROIs. 
This raises the question of identifying whether there are hub ROIs that are connected to many other ROIs under audio-visual stimuli.

To answer this question, we perform a hypothesis test to test whether there are hub nodes that are connected to many other nodes in the graph across the 26 scenes.  
If there are such hub nodes, which ROIs do they correspond to. 
  More formally, 
we test the hypothesis
\begin{equation*}
\begin{split}
&H_0: \mathrm{\;for \; all\;} z\in [0,1], \mathrm{\;the \; maximum\; degree\; of\; the\; graph\; is \;no\; greater\; than\;} 15,\\
&H_1: \mathrm{\;there \; exists\;a \;} z_0\in [0,1] \mathrm{\;such \; that\; the \; maximum\; degree\; of\; the\; graph\; is \; greater\; than \;}15.
\end{split}
\end{equation*}
The number 15 is chosen since we are interested in testing whether there is any brain region that is connected to more than 10\% of the total number of brain regions.
We apply Algorithm~\ref{al:stepdown} with 26 values of $z$ corresponding to the middle of the 26 scenes.  
Figure~\ref{Fig:holmes max degree} shows the ROIs that have more than 12 rejected edges across the 26 scenes based on Algorithm~\ref{al:stepdown}.  
Since the maximum degree of the rejected nodes in some scenes are larger than 15, we reject the null hypothesis that the maximum degree of the graph is no greater than 15.  
In Figure~\ref{Fig:holmes connectivity}, we plot the sagittal snapshot of the brain connectivity network, visualizing the rejected edges from Algorithm~\ref{al:stepdown} and the identified  hubs ROIs.

\begin{figure}[!htp]
\centering
\includegraphics[scale=0.45]{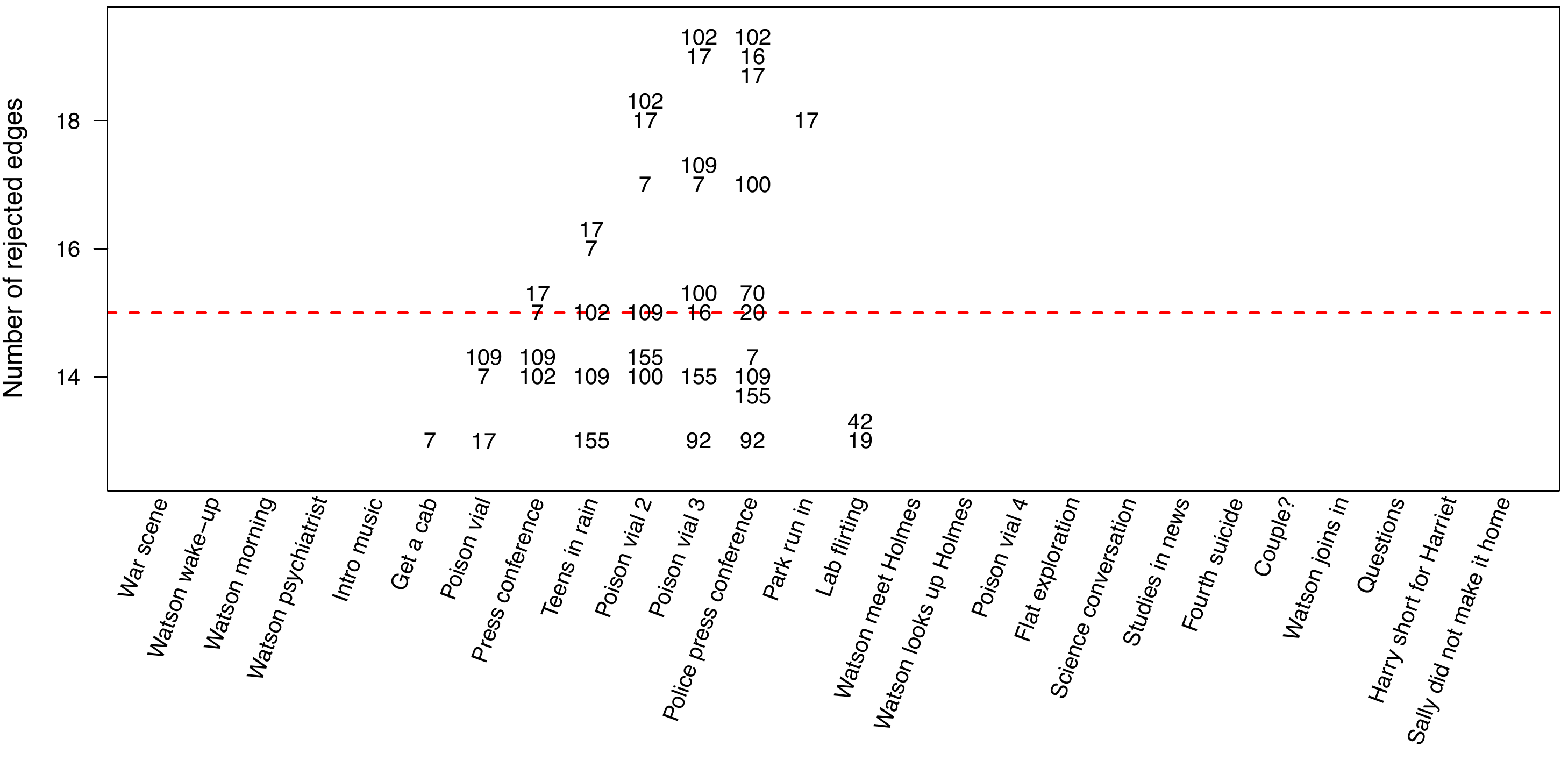}
\caption{\label{Fig:holmes max degree} The $x$-axis displays the 26 scenes in the movie and the $y$-axis displays the number of rejected edges from Algorithm~\ref{al:stepdown}.  The numbers correspond to the regions of interest (ROIs) in the brain. The ROIs correspond to frontal pole (7, 155), temporal fusiform cortex (16, 100), lingual gyrus  (17), cingulate gyrus (19), cingulate gyrus (20), temporal pole (42), paracingulate gyrus (70), precuneus cortex (102), and postcentral gyrus (109).} 
\end{figure}

\begin{figure}[!htp]
\centering
\subfigure[Teens in rain scene]{\includegraphics[scale=0.2]{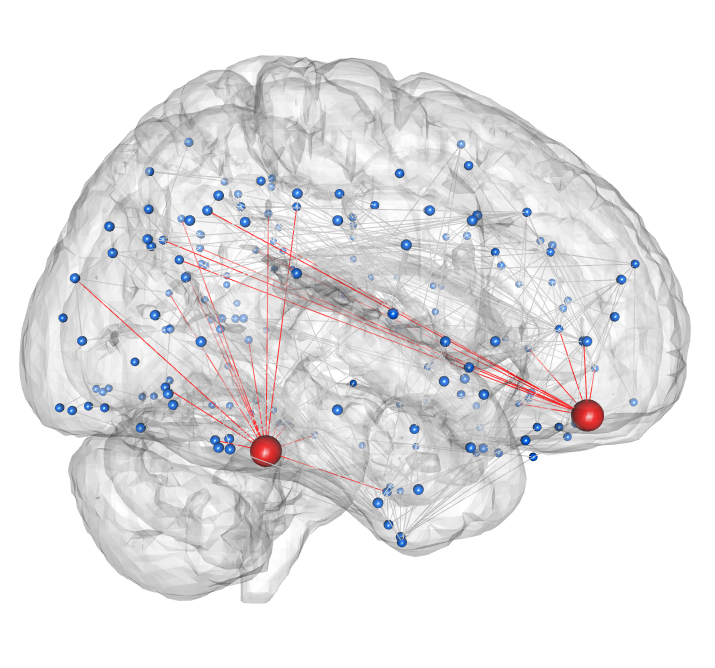}}\;\;
\subfigure[Police press conference scene]{\includegraphics[scale=0.2]{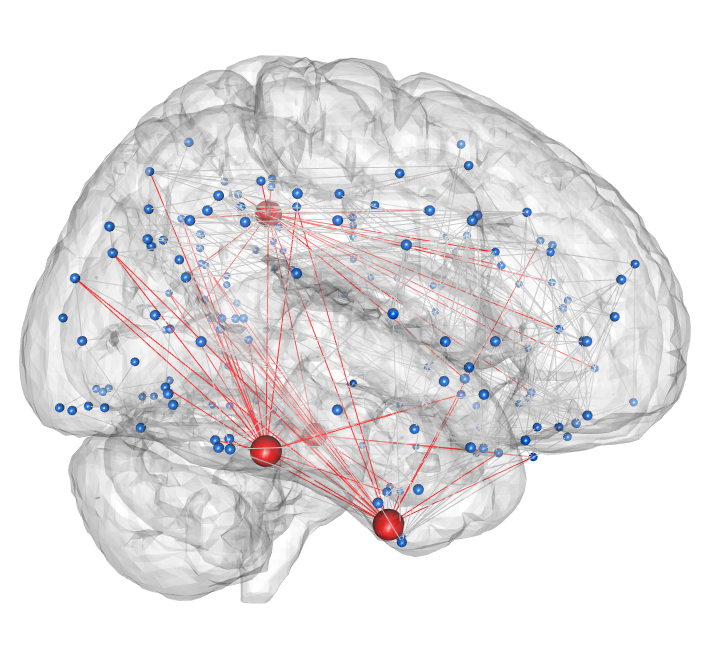}}\;\;
\subfigure[Lab flirting scene]{\includegraphics[scale=0.2]{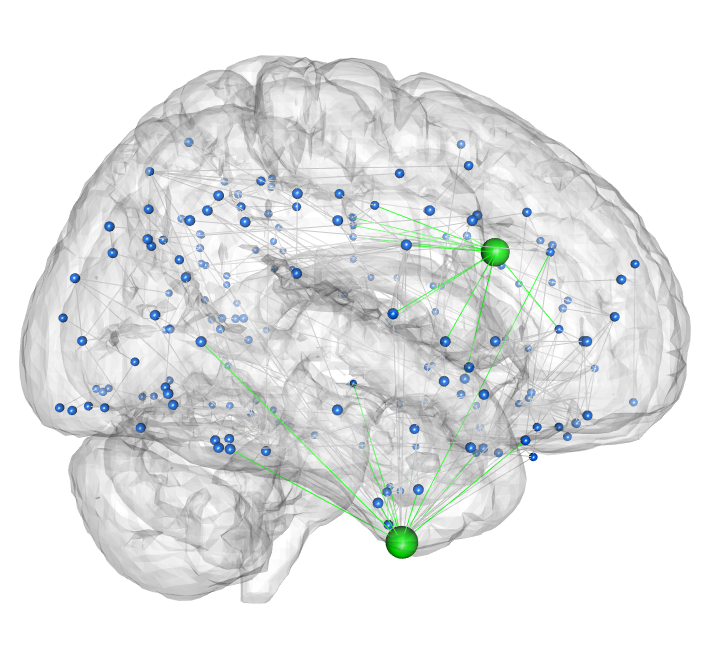}}
\caption{\label{Fig:holmes connectivity} Sagittal snapshots of the rejected edges based on Algorithm~\ref{al:stepdown}. Panels (a)-(c) contain the snapshots for the ``teens in rain", ``police press conference", and ``lab flirting" scenes, respectively.  The red nodes and red edges are regions of interest that have more than 15 rejected edges.  The grey edges are rejected edges from nodes that have no greater than  15 rejected edges. For (c), the green nodes and edges are regions of interest that have more than 12 rejected edges.} 
\end{figure}

From Figure~\ref{Fig:holmes max degree}, we see that the rejected hub nodes (nodes that have more than 15 rejected edges) correspond to the frontal pole (7),  temporal fusiform cortex (16, 100), lingual gyrus (17), and precuneus (102) regions of the brain.
Many studies have suggested that the frontal pole plays significant roles in higher order cognitive operations such as decision making and moral reasoning (among others, \citealp{okuda2003thinking}). 
The fusiform cortex is linked to face and body recognition (see, \citealp{iaria2008contribution}, and the references therein).  
In addition, the lingual gyrus is known for its involvement in processing of visual information about parts of human faces \citep{mccarthy1999electrophysiological}. 
  Thus, it is not surprising that both of these ROIs have more than 15 rejected edges since the brain images are collected while the subjects are exposed to an audio-visual movie stimulus.  

  Compared to the lingual gyrus, temporal fusiform cortex, and the frontal pole, the precuneus is the least well-understood brain literature in the current literature.   We see from Figure~\ref{Fig:holmes max degree} that the precuneus is the most connected ROI across many scenes.  This is supported by the observation in \citet{hagmann2008mapping} where the precuneus serves as  a hub region that is connected to many other parts of the brain.
    In recent years, \citet{lerner2011topographic} and \citet{ames2015contextual} conducted experiments where subjects were asked to listen to a story under an fMRI scanner.  Their results suggest that the precuneus represents high-level concepts in the story, integrating feature information arriving from many different ROIs of the brain.        
Interestingly, we find that 
the precuneus has the highest number of rejected edges during the first half of the movie and that the number of rejected edges decreases significantly during the second half of the movie. 
Our results correspond well to the findings of \citet{lerner2011topographic} and \citet{ames2015contextual} in which the precuneus is active when the subjects comprehend the story.  
However, it also raises an interesting scientific question for future study: is the precuneus active only when the subjects are trying to comprehend the story, that is, once the story is understood, the precuneus is less active.

\section{Theoretical Results} 
\label{section:theory}
We establish uniform rates of convergence for the proposed estimators, and show that the testing procedure in Algorithm~\ref{al:stepdown} is a uniformly valid test.  We study the asymptotic regime in which $n$, $d$, and $s$ are allowed to increase.   In the context of the Sherlock Holmes data set, $n$ is the total number of brain images obtained under the continuous stimulus, $d$ is the number of brain regions, and $s$ is the maximum number of connections for each brain region in the true stimulus-locked  brain connectivity network. The current theoretical results assume that $Z$ is a random variable with continuous density.  Our theoretical results can be easily generalized to the case when $\{Z_i\}_{i\in [n]}$ are fixed.

\subsection{Theoretical Results on Parameter Estimation} 
\label{subsection:estimation error}
Our proposed estimator involves a kernel function $K(\cdot)$: we require $K(\cdot)$ to be symmetric, bounded, unimodal, and compactly supported.  
More formally, for $l=1,2,3,4$, 
\begin{equation}
\label{prop:kernel}
\int K(u) du =1, \quad \int u K(u) du =0, \quad \int u^l K(u) du <\infty, \quad \int K^l(u) du <\infty.
\end{equation}
In addition, we require the total variation of $K(\cdot)$ to be bounded, i.e., $\|K\|_{\mathrm{TV}} < \infty$, where $\|K\|_{\mathrm{TV}} = \int |\dot{K}|$.
In other words, we require the kernel function to be a smooth function.
The Epanechnikov kernel we consider in \eqref{Eq:kernel use} for analyzing Sherlock data satisfies \eqref{prop:kernel}.
A unimodal kernel function is extremely plausible in our setting: for instance, to estimate brain network in the ``police press conference scene", we expect the brain images within that scene to play a larger role than brain images that are far away from the scene. 
One practical limitation of the conditions on the kernel function is the symmetric kernel condition.  
When we are estimating a stimulus-locked brain network for a particular time point, the ideal case is to weight the previous images more heavily than the  future brain images. 
The scientific reasoning is that there may be some time lag for information processing. 
In order to capture this effect, a more carefully designed kernel function is needed and it is out of the scope of this paper.


Next, we impose regularity conditions on the marginal density $f_Z (\cdot)$.
\begin{assumption}
\label{ass:marginal}
There exists a constant $\underline{f}_Z$ such that $\inf_{z\in [0,1]} \; f_Z (z)\ge \underline{f}_Z>0.$  Furthermore, $f_Z$ is twice continuously differentiable and that there exists a constant $\bar{f}_Z < \infty$  such that 
$ \max \;  \{ \|f_Z\|_{\infty},\|\dot{f}_Z\|_{\infty},\|\ddot{f}_Z\|_{\infty}\} \le \bar{f}_Z$.
\end{assumption} 

Next, we impose smoothness assumptions on the inter-subject covariance matrix $\bSigma(\cdot)$. 
Our theoretical results hold for any positive definite subject-specific covariance matrices $\Lb_X(z)$ and $\Lb_Y(z)$, since these matrices are treated as nuisance parameters.

\begin{assumption}
\label{ass:covariance}
There exists a constant $M_{\sigma}$  such that 
\[
\underset{z\in [0,1]}{\sup}\;  \underset{j,k \in [d]}{\max}\; \max  \left\{  
|\bSigma_{jk}(z)| , |\dot{\bSigma}_{jk}(z)| , |\ddot{\bSigma}_{jk}(z)|
 \right\}\le M_{\sigma}.
\]
\end{assumption} 
\noindent In other words, we assume that the inter-subject covariance matrices are smooth and do not change too rapidly in neighboring time points.  This assumption clearly holds in a dynamic brain network where we expect the brain network to change smoothly over time.
Assumptions~\ref{ass:marginal} and \ref{ass:covariance} on $f(z)$ and $\bSigma(z)$ are standard assumptions in the nonparametric statistics literature
 (see, for instance, Chapter 2 of \citealp{pagan1999nonparametric}).

The following theorem establishes the uniform rates of convergence for $\hat{\bSigma}(z)$.
\begin{theorem}
\label{theorem:estimation error}
Assume that $h = o(1)$ and that $\log^2 (d/h) / (nh) = o(1)$.
  Under Assumptions~\ref{ass:marginal}-\ref{ass:covariance}, we have 
  \[
  \underset{z \in [0,1]}{\sup}  \left\|\hat{\bSigma}(z)-\bSigma(z)\right\|_{\max}     =  \cO_P \left\{ h^2 + \sqrt{\frac{\log (d/h)}{nh}}\right\}.
  \]
\end{theorem}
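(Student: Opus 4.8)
The plan is to write the estimator as a ratio and linearize it. Set $\hat g(z) = n^{-1}\sum_{i\in[n]} K_h(Z_i-z)$ and, for each pair $(j,k)$, $\hat r_{jk}(z) = n^{-1}\sum_{i\in[n]} K_h(Z_i-z) X_{ij} Y_{ik}$, so that $\hat{\bSigma}_{jk}(z) = \hat r_{jk}(z)/\hat g(z)$. Let $g(z) = f_Z(z)$ and $r_{jk}(z) = f_Z(z)\bSigma_{jk}(z)$, and note the exact identity $\hat{\bSigma}_{jk}(z)-\bSigma_{jk}(z) = \{g(z)(\hat r_{jk}(z)-r_{jk}(z)) - r_{jk}(z)(\hat g(z)-g(z))\}/\{\hat g(z)g(z)\}$. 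It therefore suffices to (i) show $\inf_{z\in[0,1]}\hat g(z)$ is bounded away from zero with probability tending to one, and (ii) establish the uniform rates $\sup_z\max_{j,k}|\hat r_{jk}(z)-r_{jk}(z)| = \cO_P\{h^2+\sqrt{\log(d/h)/(nh)}\}$ and $\sup_z|\hat g(z)-g(z)| = \cO_P\{h^2+\sqrt{\log(d/h)/(nh)}\}$; the conclusion then follows by substituting into the ratio decomposition and using $\|g\|_\infty\le\bar f_Z$, $\|r_{jk}\|_\infty\le\bar f_Z M_\sigma$, and $g(z)=f_Z(z)\ge\underline f_Z>0$ from Assumptions~\ref{ass:marginal}--\ref{ass:covariance}.

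For the bias I would split $\hat r_{jk}(z)-r_{jk}(z) = \{\hat r_{jk}(z)-\EE\hat r_{jk}(z)\} + \{\EE\hat r_{jk}(z)-r_{jk}(z)\}$. Because the inter-subject conditional mean satisfies $\EE[X_j Y_k\mid Z=u]=\bSigma_{jk}(u)$, a change of variables gives $\EE\hat r_{jk}(z) = \int K(v)f_Z(z+hv)\bSigma_{jk}(z+hv)\,dv$; a second-order Taylor expansion together with $\int K(v)\,dv=1$, $\int vK(v)\,dv=0$, $\int v^2 K(v)\,dv<\infty$, and the uniform bounds on $f_Z,\dot f_Z,\ddot f_Z$ and $\bSigma_{jk},\dot{\bSigma}_{jk},\ddot{\bSigma}_{jk}$ yields $\sup_{z}\max_{j,k}|\EE\hat r_{jk}(z)-r_{jk}(z)|=\cO(h^2)$ uniformly, and the identical argument with $\bSigma_{jk}$ replaced by the constant $1$ gives $\sup_z|\EE\hat g(z)-g(z)|=\cO(h^2)$.

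The main obstacle is the stochastic term, which must be controlled uniformly over the continuum $z\in[0,1]$ and simultaneously over all $d^2$ entries. I would treat it by a truncation-plus-maximal-inequality argument. Since $X_{ij}$ and $Y_{ik}$ are Gaussian, the products $X_{ij}Y_{ik}$ are sub-exponential, so after truncating at a level of order $\log(nd)$ the truncated summands $K_h(Z_i-z)X_{ij}Y_{ik}$ are bounded by $\cO\{h^{-1}\log(nd)\}$ and have variance $\cO(h^{-1})$, because $\int K_h(u-z)^2\,du = h^{-1}\int K^2(v)\,dv<\infty$; the discarded tail is negligible by a crude moment bound and a union bound. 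For the bounded truncated part, I would apply a Bernstein-type maximal inequality over the function class $\{u\mapsto K_h(u-z):z\in[0,1]\}$, whose $L^2$ covering number is polynomial in $1/h$ precisely because of the bounded-total-variation assumption $\|K\|_{\mathrm{TV}}<\infty$ (equivalently, discretize $[0,1]$ on a grid of polynomial-in-$(n,1/h)$ cardinality and bound the oscillation of $z\mapsto\hat r_{jk}(z)$ between grid points using that $z\mapsto K_h(Z_i-z)$ has total variation $\cO(h^{-1})$), followed by a union bound over the $d^2$ pairs; this produces the rate $\sqrt{\log(d/h)/(nh)}$. The same argument with $X_{ij}Y_{ik}$ replaced by $1$ controls $\sup_z|\hat g(z)-\EE\hat g(z)|$, and combined with the $\cO(h^2)$ bias and the bandwidth condition $\log^2(d/h)/(nh)=o(1)$ (which forces the whole error to be $o(1)$) this gives part (i). Plugging the bias and stochastic bounds into the ratio decomposition then completes the proof.
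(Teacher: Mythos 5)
Your proposal is correct, and it differs from the paper's proof in a meaningful way in how the ratio structure of $\hat{\bSigma}_{jk}(z)=\hat r_{jk}(z)/\hat g(z)$ is handled. The paper first expands $\{\PP_n(w_z)\}^{-1}$ around its mean via a Neumann-type series $(1+x)^{-1}=1-x+\cO(x^2)$ (Lemma~\ref{lemma:expected sigma}), then bounds the resulting terms; this requires separately verifying that the perturbation $|\GG_n(w_z)/\{\sqrt n\,\EE\PP_n(w_z)\}|<1$ and carrying along the $\cO(x^2)$ remainder. You instead use the exact algebraic identity $\hat{\bSigma}_{jk}-\bSigma_{jk}=\{g(\hat r_{jk}-r_{jk})-r_{jk}(\hat g-g)\}/\{\hat g\,g\}$, which sidesteps any expansion of the reciprocal and reduces the problem cleanly to (a) a lower bound on $\inf_z\hat g(z)$, and (b) uniform rates for the simple kernel averages $\hat r_{jk}$ and $\hat g$. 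This is a modest simplification that avoids the auxiliary lemma and its side conditions. For the remaining ingredients, the two proofs coincide in substance: your Taylor-expansion bias bound is precisely the paper's Lemma~\ref{lemma:bias}, and your truncation-plus-covering-number control of the stochastic term is the same idea as the paper's conditioning on $\max_{i,j}(|X_{ij}|,|Y_{ij}|)\lesssim\sqrt{\log d}$ combined with Talagrand's inequality (Lemma~\ref{lemma:ep1}), the Dudley/chaining bound via covering numbers (Lemma~\ref{lemma:ep2}), and the bounded-total-variation covering-number bound for translates of $K$ (Lemma~\ref{lemma:coveringnumber3}); your "discretize plus bound oscillation" remark is a hands-on restatement of the same chaining argument. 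The only thing to be careful about in writing your version out is that the truncation level and the resulting envelope ($\asymp h^{-1}\log d$) must be tracked through the maximal inequality so that the condition $\log^2(d/h)/(nh)=o(1)$ absorbs the higher-order terms, exactly as the paper does in Lemma~\ref{lemma:empirical process g}.
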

\noindent Theorem~\ref{theorem:estimation error} guarantees that our estimator always converges to the population parameter under the max norm, if the smoothing parameter $h$ goes to zero asymptotically.  For instance, this will satisfy if $h = C \cdot n^{-1/5}$ for some constant $C>0$.  The quantity ${\sup}_{{z \in [0,1]}}  \|\hat{\bSigma}(z)-\bSigma(z)\|_{\max}$  can be upper bounded by the summation of two terms: ${\sup}_{{z \in [0,1]}}  \|\EE[\hat{\bSigma}(z)]-\bSigma(z)\|_{\max}$ and 
${\sup}_{{z \in [0,1]}}  \|\hat{\bSigma}(z)-\EE[\hat{\bSigma}(z)]\|_{\max}$, which are known as the bias and variance terms, respectively, in the kernel smoothing literature
(see, for instance, Chapter 2 of \citealp{pagan1999nonparametric}). 
The term  $h^2$ on the upper bound corresponds to the bias term and the term $\sqrt{\log(d/h)/(nh)}$ corresponds to the variance term.

Next, we establish theoretical results for $\hat{\bTheta}(z)$.
Recall that the stimulus-locked brain connectivity network is encoded by the support of the inverse covariance matrix  $\bTheta (z)$: $\bTheta_{jk}(z) = 0$ if and only if the $j$th and $k$th brain regions are conditionally independent given all of the other brain regions.
We consider the class of inverse covariance matrices:
\begin{equation}
\label{Eq:parameter space}
\cU_{s,M} = \left\{  \bTheta \in \RR^{d\times d} \mid \bTheta \succ 0,\;
\|\bTheta\|_2\le \rho, \;\underset{j\in [d]}{\max} \; \|\bTheta_j \|_0\le s,\; \underset{j\in [d]}{\max}\; \|\bTheta_j\|_1 \le M 
 \right\}.
\end{equation}
Here, $\|\bTheta\|_2$ is the largest singular value of $\bTheta$ and $\|\bTheta_j\|_0$ is the number of non-zeros in $\bTheta_j$. 

Brain connectivity networks are usually densely connected due to the intrinsic-neural and non-neuronal signals, such as background processing.
Instead of assuming an overall sparse brain network, we assume  that the stimulus-locked brain network $\bTheta(z)$ is sparse, and allow the intrinsic brain network unrelated to the stimulus to be dense.
The sparsity assumption on stimulus-locked brain network is plausible in this setting since it characterizes brain activities that are specific to the stimulus.  For instance, we may believe that only certain brain regions are active under cognitive process.
The other conditions are satisfied since $\bTheta(z)$ is the inverse of a positive definite covariance matrix.

Given Theorem~\ref{theorem:estimation error}, the following corollary establishes the uniform rates of convergence for $\hat{\bTheta}(z)$ using the CLIME estimator as defined in  (\ref{Eq:clime}).  
It follows directly from the proof of Theorem 6 in \citet{cai2011constrained}.

\begin{corollary}
\label{theorem:inverse estimation}
Assume that $\bTheta (z)  \in \cU_{s,M}$ for all $z\in [0,1]$.  Let $\lambda \ge C\cdot \{h^2 + \sqrt{\log (d/h)/(nh)}\} $ for $C>0$.  Under the same conditions in Theorem~\ref{theorem:estimation error}, 
\begin{equation}
\label{Eq:inverse1}
\underset{z\in [0,1]}{\sup} \; \left\|   \hat{\bTheta}(z)-\bTheta (z)       \right\|_{\max} =\cO_P \left\{h^2 + \sqrt{\frac{\log (d/h)}{nh}}\right\};
\end{equation}
\begin{equation}
\label{Eq:inverse2}
\underset{z\in [0,1]}{\sup} \; \underset{j\in [d]}{\max}\; \left\|   \hat{\bTheta}_j(z)-\bTheta_j (z)       \right\|_{1} = \cO_P \left[  s\cdot  \left\{h^2 + \sqrt{\frac{\log (d/h)}{nh}}\right\}\right];
\end{equation}
\begin{equation}
\label{Eq:inverse3}
\underset{z\in [0,1]}{\sup}\; \underset{j \in [d]}{\max}  \; \left\|   \left\{\hat{\bTheta}_j(z)\right\}^T \hat{\bSigma}(z)-\mathbf{e}_j       \right\|_{\infty} =\cO_P  \left\{h^2 + \sqrt{\frac{\log (d/h)}{nh}}\right\}.
\end{equation}
\end{corollary}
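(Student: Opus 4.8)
The plan is to read Corollary~\ref{theorem:inverse estimation} as a uniform-in-$z$ repackaging of the deterministic CLIME error analysis of \citet{cai2011constrained}, with all of the randomness funnelled through Theorem~\ref{theorem:estimation error}. Write $r_n := h^2 + \sqrt{\log(d/h)/(nh)}$ and consider the event $\mathcal{A} := \{\sup_{z\in[0,1]}\|\hat{\bSigma}(z)-\bSigma(z)\|_{\max} \le \lambda/M\}$. Since $\lambda \ge C r_n$ and Theorem~\ref{theorem:estimation error} gives $\sup_{z\in[0,1]}\|\hat{\bSigma}(z)-\bSigma(z)\|_{\max} = \cO_P(r_n)$, the event $\mathcal{A}$ has probability tending to one provided $C$ is large enough relative to $M$ and the $\cO_P$ constant (this is the sense in which the hypothesis ``$\lambda \ge C r_n$ for $C>0$'' should be read). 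Everything below is carried out on $\mathcal{A}$, and each bound produced there depends on $z$ only through $\|\hat{\bSigma}(z)-\bSigma(z)\|_{\max}$, which is uniformly controlled, and through the uniform constants $s,M$ defining $\cU_{s,M}$; hence every rate obtained automatically survives taking $\sup_{z\in[0,1]}$.

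First I would establish feasibility of the truth: on $\mathcal{A}$, for every $j\in[d]$ and $z\in[0,1]$,
\[
\|\hat{\bSigma}(z)\bTheta_j(z)-\mathbf{e}_j\|_\infty = \|\{\hat{\bSigma}(z)-\bSigma(z)\}\bTheta_j(z)\|_\infty \le \|\hat{\bSigma}(z)-\bSigma(z)\|_{\max}\,\|\bTheta_j(z)\|_1 \le (\lambda/M)\cdot M = \lambda,
\]
so $\bTheta_j(z)$ is feasible for \eqref{Eq:clime}, which forces $\|\hat{\bTheta}_j(z)\|_1 \le \|\bTheta_j(z)\|_1 \le M$ by optimality. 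Because $\hat{\bTheta}_j(z)$ itself satisfies the constraint, $\|\hat{\bSigma}(z)\hat{\bTheta}_j(z)-\mathbf{e}_j\|_\infty \le \lambda$; \eqref{Eq:inverse3} then follows after passing from $\hat{\bSigma}(z)\hat{\bTheta}_j(z)$ to $\{\hat{\bTheta}_j(z)\}^T\hat{\bSigma}(z)$, a switch whose cost is $\|\hat{\bSigma}(z)-\hat{\bSigma}(z)^T\|_{\max}\|\hat{\bTheta}_j(z)\|_1 \le 2M\sup_{z}\|\hat{\bSigma}(z)-\bSigma(z)\|_{\max} = \cO_P(r_n)$, since the inter-subject product $\hat{\bSigma}(z)$ need not be symmetric.

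For the max-norm bound \eqref{Eq:inverse1} I would use that $\hat{\bTheta}_j(z)$ and $\bTheta_j(z)$ both satisfy the CLIME constraint, so $\|\hat{\bSigma}(z)\{\hat{\bTheta}_j(z)-\bTheta_j(z)\}\|_\infty \le 2\lambda$; replacing $\hat{\bSigma}(z)$ by $\bSigma(z)$ costs $\|\hat{\bSigma}(z)-\bSigma(z)\|_{\max}\|\hat{\bTheta}_j(z)-\bTheta_j(z)\|_1 \le (\lambda/M)(2M)=2\lambda$, and left-multiplying by $\bTheta(z)=\{\bSigma(z)\}^{-1}$, whose matrix $\ell_\infty$-operator norm is $\max_j\|\bTheta_j(z)\|_1 \le M$, gives $\|\hat{\bTheta}_j(z)-\bTheta_j(z)\|_\infty \le 4M\lambda = \cO_P(r_n)$; taking $\max_j\sup_z$ yields \eqref{Eq:inverse1}. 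For the $\ell_1$ bound \eqref{Eq:inverse2} I would split $\hat{\bTheta}_j(z)-\bTheta_j(z)$ over the support $S_j$ of $\bTheta_j(z)$ (with $|S_j|\le s$) and its complement: on $S_j$ the $\ell_1$-norm is at most $s\|\hat{\bTheta}_j(z)-\bTheta_j(z)\|_\infty$, and the optimality inequality $\|\hat{\bTheta}_j(z)\|_1 \le \|\bTheta_j(z)\|_1$ gives the usual CLIME cone bound $\|\{\hat{\bTheta}_j(z)\}_{S_j^c}\|_1 \le \|\{\hat{\bTheta}_j(z)-\bTheta_j(z)\}_{S_j}\|_1$, so $\|\hat{\bTheta}_j(z)-\bTheta_j(z)\|_1 \le 2s\|\hat{\bTheta}_j(z)-\bTheta_j(z)\|_\infty = \cO_P(s r_n)$.

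There is no genuinely hard step here: the corollary is precisely the deterministic argument of \citet{cai2011constrained} re-run on the uniform-error event $\mathcal{A}$ supplied by Theorem~\ref{theorem:estimation error}. The only points that need care are (i) the asymmetry of $\hat{\bSigma}(z)$, absorbed above by bounding $\|\hat{\bSigma}(z)-\hat{\bSigma}(z)^T\|_{\max} \le 2\sup_z\|\hat{\bSigma}(z)-\bSigma(z)\|_{\max}$ (equivalently, one could symmetrize $\hat{\bSigma}(z)$ before solving \eqref{Eq:clime}, which changes none of the rates), and (ii) taking the constant $C$ in $\lambda \ge C r_n$ large enough that $P(\mathcal{A})\to 1$, which is exactly what upgrades the pointwise CLIME bounds into bounds uniform over $z\in[0,1]$.
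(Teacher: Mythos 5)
Your proposal is correct and follows exactly the route the paper takes: the paper simply states that the corollary ``follows directly from the proof of Theorem 6 in Cai, Liu and Luo (2011),'' and your argument is precisely that CLIME error analysis re-run on the high-probability event where $\sup_{z}\|\hat{\bSigma}(z)-\bSigma(z)\|_{\max}\le \lambda/M$ supplied by Theorem~\ref{theorem:estimation error}, with the feasibility of the truth, the triangle inequality on the constraint, the left-multiplication by $\bTheta(z)=\{\bSigma(z)\}^{-1}$ via its $\ell_\infty$-operator norm $\le M$, and the usual cone argument for the $\ell_1$ bound. The one place where you go beyond what the paper's citation literally covers is the observation that the inter-subject kernel estimator $\hat{\bSigma}(z)$ in \eqref{Eq:estimator} is not symmetric (unlike the sample covariance in the original CLIME setting), so that $\|\hat{\bSigma}(z)\hat{\bTheta}_j(z)-\mathbf{e}_j\|_{\infty}\le\lambda$ and the quantity $\|\{\hat{\bTheta}_j(z)\}^T\hat{\bSigma}(z)-\mathbf{e}_j\|_{\infty}$ in \eqref{Eq:inverse3} differ; your bound $\|\hat{\bSigma}(z)-\hat{\bSigma}(z)^T\|_{\max}\le 2\sup_z\|\hat{\bSigma}(z)-\bSigma(z)\|_{\max}$ (using $\bSigma(z)=\bSigma(z)^T$) cleanly absorbs the discrepancy at cost $\cO_P(r_n)$, which is a genuine detail the paper glosses over and is worth having spelled out.
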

\noindent In the real data analysis, Corollary~\ref{theorem:inverse estimation} is helpful in terms of selecting the sparsity tuning parameter $\lambda$: it motivates a sparsity tuning parameter of the form $\lambda \ge C\cdot \{h^2 + \sqrt{\log (d/h)/(nh)}\}$ to guarantee statistically consistent estimated stimulus-locked brain network. 
To select the constant $C$, we consider a sequence of number and select the appropriate $C$ using a data-driven cross-validation procedure in \eqref{eq:cv}.

\subsection{Theoretical Results on Topological Inference} 
\label{subsection:testing}
In this section, we first  show that the distribution of the test statistic $T_E$ can be approximated by the conditional $(1-\alpha)$-quantile of the bootstrap statistic $T^B_E$.
  Next, we show that the proposed testing method in Algorithm~\ref{al:stepdown} is valid in the sense that the type I error can be controlled at a pre-specified level $\alpha$.

Recall from (\ref{Eq:quantile estimate}) the definition of $c(1-\alpha,E)$. 
The following theorem shows that the Gaussian multiplier bootstrap is valid for approximating the quantile of the test statistic $T_E$ in (\ref{Eq:quantile}).
Our results are based on the series of work on Gaussian approximation on multiplier bootstrap in high dimensions (see, e.g., \citealp{chernozhukov2013gaussian,chernozhukov2014gaussian}).   We see from \eqref{Eq:quantile} that $T_E$ involves taking the supremum over $z\in [0,1]$ and a dynamic edge set $E(z)$. 
Due to the dynamic edge set $E(z)$, existing theoretical results for the  Gaussian multiplier bootstrap methods cannot be directly applied. We construct a novel Gaussian approximation result for the supreme of empirical processes of  $T_E$ by carefully characterizing the capacity of the dynamic edge set $E(z)$. 

\begin{theorem}
\label{theorem:gaussian multiplier bootstrap}
Assume that  
$\sqrt{nh^5} + s\cdot \sqrt{nh^9} =o (1)$.
In addition, assume that 
$  s\sqrt{\log^4 (d/h)/(nh^2)}+ \log^{22}(s) \cdot \log^8(d/h)/(nh)  = o(1)$. 
Under the same conditions in Corollary~\ref{theorem:inverse estimation}, we have 
\[
\underset{n\rightarrow \infty}{\lim} \; \underset{\bTheta (\cdot) \in \cU_{s,M}}{\sup}
\; P_{\bTheta (\cdot)} \left\{ T_E \ge c(1-\alpha,E)      \right\} \le \alpha.
\]
\end{theorem}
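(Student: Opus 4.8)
The plan is to reduce $T_E$ to the supremum of a normalized empirical process with \emph{population} weights, establish a Gaussian approximation for that supremum over the continuous time index, show that the conditional law of the multiplier bootstrap statistic $T^B_E$ approximates the same Gaussian supremum, and finally invoke anti-concentration of Gaussian suprema to turn these distributional approximations into the desired quantile comparison, uniformly over $\bTheta(\cdot)\in\cU_{s,M}$.

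First I would linearize the de-biased estimator \eqref{Eq:one-step}. Writing $\Delta_k(z) = \hat{\bTheta}_k(z) - \bTheta_k(z)$ and $D(z) = \hat{\bSigma}(z) - \bSigma(z)$, and using $\bTheta_j(z)^T\bSigma(z) = \mathbf{e}_j^T$, the identity $\hat{\bTheta}_{jk}(z) - \bTheta_{jk}(z) = \mathbf{e}_j^T\Delta_k(z)$, and the CLIME feasibility bound \eqref{Eq:inverse3}, a term-by-term expansion gives
\[
\hat{\bTheta}^{\mathrm{de}}_{jk}(z) - \bTheta_{jk}(z) = -\bTheta_j(z)^T\{\hat{\bSigma}(z)-\bSigma(z)\}\bTheta_k(z) + R_{jk}(z),
\]
where $R_{jk}(z)$ collects the quadratic cross terms $\Delta_j(z)^T D(z)\bTheta_k(z)$, $\Delta_j(z)^T\bSigma(z)\Delta_k(z)$, $\hat{\bTheta}_j(z)^T D(z)\Delta_k(z)$ and the denominator discrepancy $\hat{\bTheta}_j(z)^T\hat{\bSigma}_j(z)-1$. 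By Corollary~\ref{theorem:inverse estimation}, with $r_n = h^2 + \sqrt{\log(d/h)/(nh)}$ one has $\sup_z\max_j\|\Delta_j(z)\|_1 = \cO_P(s r_n)$, $\sup_z\|D(z)\|_{\max} = \cO_P(r_n)$, $\|\bTheta_j(z)\|_1\le M$, so $\sqrt{nh}\,\sup_z\max_{j,k}|R_{jk}(z)| = \cO_P(\sqrt{nh}\,s r_n^2)$, which is negligible after the supremum/maximum precisely under the scaling assumptions $s\sqrt{nh^9}=o(1)$ and $s\sqrt{\log^4(d/h)/(nh^2)}=o(1)$ (these control $s\sqrt{nh^9}$ and $s\log(d/h)/\sqrt{nh}$, respectively, with the extra logarithmic slack needed for the anti-concentration step). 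Since $\hat{\bSigma}(z)\{n^{-1}\sum_i K_h(Z_i-z)\} = n^{-1}\sum_i K_h(Z_i-z)\bX_i\bY_i^T$, after multiplying by the kernel weight the leading term equals $\bar W_n(z,j,k)$, the centered version of
\[
W_n(z,j,k) = \sqrt{\tfrac{h}{n}}\sum_{i=1}^n K_h(Z_i-z)\,\bTheta_j(z)^T\{\bX_i\bY_i^T\bTheta_k(z)-\mathbf{e}_k\};
\]
because $\EE[\bX_i\bY_i^T\mid Z_i]=\bSigma(Z_i)$ and $\bTheta_j(z)^T\bSigma(z)\bTheta_k(z) = \bTheta_{jk}(z)$, the non-centered part is a deterministic bias of order $\sqrt{nh}\,h^2 = \sqrt{nh^5}=o(1)$ by Assumption~\ref{ass:covariance}. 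Hence $T_E = \sup_z\max_{(j,k)\in E(z)}|\bar W_n(z,j,k)| + o_P(1)$.

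Next I would apply the Gaussian approximation for suprema of empirical processes of Chernozhukov, Chetverikov and Kato to $\bar W_n$. The relevant function class is $\{(x,y,\zeta)\mapsto K_h(\zeta-z)\,\bTheta_j(z)^T(xy^T\bTheta_k(z)-\mathbf{e}_k):z\in[0,1],(j,k)\in E(z)\}$. Since $K$ has bounded total variation, the translates $\zeta\mapsto K_h(\zeta-z)$ form a VC-type class; tensoring with the at most $d^2$ bilinear functionals indexed by $(j,k)$ keeps the class VC-type, with envelope of polynomial-in-$1/h$ size and uniform entropy scaling like $\log(d/h)$ (here I would first discretize $z$ on a fine grid, bounding the discretization error via the modulus of continuity of $z\mapsto K_h(\cdot-z)$ and $z\mapsto\bTheta_j(z),\bTheta_k(z)$, then control the effective dimension). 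The sub-exponential tails of $(\bTheta_j^Tx)(\bTheta_k^Ty)$, whose parameters are bounded by $\|\bTheta_j\|_2\le\rho$, together with the truncation step of the CCK argument, are what generate the powers $\log^{22}(s)$ and $\log^8(d/h)$ in the rate condition $\log^{22}(s)\log^8(d/h)/(nh)=o(1)$. This yields a centered Gaussian process $\widetilde W$ with covariance equal to the limiting covariance of $\bar W_n$ such that the Kolmogorov distance between the law of $\sup_z\max_{(j,k)\in E(z)}|\bar W_n(z,j,k)|$ and that of $\sup_z\max_{(j,k)\in E(z)}|\widetilde W(z,j,k)|$ tends to $0$, uniformly over $\bTheta(\cdot)\in\cU_{s,M}$ because all entropy and moment bounds depend only on $s,M,\rho$ and the kernel/density constants. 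On the bootstrap side, conditionally on $\{(Z_i,\bX_i,\bY_i)\}$ the statistic $T^B_E$, after the same linearization (now replacing $\hat{\bTheta}_j,\hat{\bTheta}_k,\hat{\bSigma}_j$ by $\bTheta_j,\bTheta_k,\bSigma_j$ at cost $o_P(1)$ by Corollary~\ref{theorem:inverse estimation}), is the supremum of a Gaussian process whose conditional covariance is the empirical analogue of that of $\bar W_n$; a uniform law of large numbers over the same VC-type class shows this empirical covariance converges in sup-norm, so by the CCK Gaussian comparison inequality the conditional law of $T^B_E$ is also Kolmogorov-close to the law of $\sup|\widetilde W|$. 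Combining the two approximations with the anti-concentration bound for Gaussian suprema (so that small Kolmogorov perturbations translate to small quantile perturbations) gives $P_{\bTheta(\cdot)}\{T_E\ge c(1-\alpha,E)\}\le\alpha+o(1)$ with $o(1)$ uniform over $\cU_{s,M}$, which is the claim; the argument uses nothing about $E(z)$ beyond the fact that $\bigcup_z E(z)\subseteq V\times V$ is fixed, and it also yields type I error control of Algorithm~\ref{al:stepdown}, since on the event $\{T_E<c(1-\alpha,E)\}$ no null (non-)edge can enter $\cR(z)$, so $d_{\mathrm{rej}}$ does not exceed the true maximum degree.

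I expect the main obstacle to be the Gaussian approximation of $\sup_z\max_{(j,k)}|\bar W_n(z,j,k)|$: off-the-shelf high-dimensional CLT and multiplier bootstrap results are stated for a finite index, so one must discretize the continuous time index, quantify the discretization error through the total variation / smoothness of $z\mapsto K_h(\cdot-z)$ and of the CLIME columns, bound the resulting effective dimension, and then track how the kernel bandwidth $h$ enters the CCK rate so as to recover exactly the stated conditions, all while carrying the bias term $\sqrt{nh^5}$ and the linearization remainder $\sqrt{nh^9}\,s$ simultaneously. Making every estimate uniform over $\cU_{s,M}$ rather than along a fixed parameter sequence is a secondary but pervasive point, handled by observing that the constants in Corollary~\ref{theorem:inverse estimation} and in the entropy bounds depend only on $(s,M,\rho)$ and the fixed kernel and density constants.
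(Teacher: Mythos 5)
Your proposal follows essentially the same architecture as the paper's proof: linearize $T_E$ to a centered kernel-weighted empirical process (the paper's chain $T\to T_0\to T_{00}$), approximate its supremum by a Gaussian process via the Chernozhukov--Chetverikov--Kato machinery, show the conditional law of the bootstrap statistic approximates the same Gaussian supremum ($T^B\to T_0^B\to T_{00}^B\to W$), and close with Gaussian anti-concentration to compare quantiles. One small point: the paper avoids the explicit grid-discretization you anticipate by directly invoking the continuous-index versions of the CCK approximation theorems (Theorems~A.1 and~A.2 of \citealp{chernozhukov2014anti}), computing VC-type covering numbers for the kernel-indexed function classes in place of a grid argument, but this is a technical detail rather than a different route.
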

Some of the scaling conditions are standard conditions in nonparametric estimation
 \citep{tsybakov09introduction}. 
The most notable scaling conditions are
$s\sqrt{\log^4 (d/h)/(nh^2)}=o(1)$ and $ \log^{22}(s) \cdot \log^8(d/h)/(nh) = o(1)$: these conditions arise from Gaussian approximation on multiplier bootstrap \citep{chernozhukov2013gaussian}.
These scaling conditions will hold asymptotically as long as the number of brain images $n$ is much larger than the maximum degree in the graph $s$.  
This corresponds well with the real data analysis where we expect only certain ROIs are active during information processing.

Recall the hypothesis testing problem in \eqref{max test}.
We now show that the type I error of the proposed inferential method for testing the maximum degree of a time-varying graph can be controlled at a pre-specified level $\alpha$.
\begin{theorem}
\label{theorem:algorithm}
Assume that the same conditions in Theorem~\ref{theorem:gaussian multiplier bootstrap} hold.  Under the null hypothesis in \eqref{max test}, we have 
\[
\underset{n\rightarrow \infty}{\lim} P_{\mathrm{null}}(\mathrm{Algorithm~\ref{al:stepdown}~rejects~the~null~hypothesis}) \le \alpha.
\]
\end{theorem}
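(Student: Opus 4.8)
The plan is to reduce the type-I-error control of Algorithm~\ref{al:stepdown} to the bootstrap validity already established in Theorem~\ref{theorem:gaussian multiplier bootstrap}, but applied to a smaller, parameter-dependent edge set rather than to the full set $E(z)=V\times V$. Write $\hat w(z) = \frac1n\sum_{i\in[n]}K_h(Z_i-z)$ for the kernel weight and define the (deterministic, though unknown) set of non-edges
\[
E_0(z) = \{(j,k)\in V\times V : \bTheta_{jk}(z)=0\} \subseteq E(z).
\]
The first step is a purely deterministic pigeonhole argument showing that, under the null in \eqref{max test},
\[
\{d_{\mathrm{rej}}>k\} \subseteq \{T_{E_0} > c(1-\alpha,E)\}.
\]
Indeed, if $d_{\mathrm{rej}}>k$ then there exist a time $z_0$ and a node $j_0$ with at least $k+1$ incident edges in $\cR(z_0)$; since the null forces the true degree of $j_0$ at $z_0$ to be at most $k$, at least one such rejected edge $(j_0,k_0)$ must be a non-edge, so $(j_0,k_0)\in E_0(z_0)$ and $\bTheta_{j_0k_0}(z_0)=0$. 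Membership in $\cR(z_0)$ gives $\sqrt{nh}\,|\hat{\bTheta}^{\mathrm{de}}_{j_0k_0}(z_0)|\,\hat w(z_0) > c(1-\alpha,E)$, and since $\bTheta_{j_0k_0}(z_0)=0$ this equals $\sqrt{nh}\,|\hat{\bTheta}^{\mathrm{de}}_{j_0k_0}(z_0)-\bTheta_{j_0k_0}(z_0)|\,\hat w(z_0) \le T_{E_0}$, which is the claimed inclusion.

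Next I would compare the two critical values. Because $E_0(z)\subseteq E(z)$ for every $z$, the bootstrap statistic in \eqref{Eq:approx test} satisfies $T^B_{E_0}\le T^B_E$ for every realization of the multipliers $\xi_1,\dots,\xi_n$; hence the conditional quantiles are ordered, $c(1-\alpha,E_0)\le c(1-\alpha,E)$. Combining this with the event inclusion above,
\[
P_{\mathrm{null}}(d_{\mathrm{rej}}>k) \le P_{\mathrm{null}}\{T_{E_0}>c(1-\alpha,E)\} \le P_{\mathrm{null}}\{T_{E_0}\ge c(1-\alpha,E_0)\}.
\]
The last probability is controlled by applying Theorem~\ref{theorem:gaussian multiplier bootstrap} with the edge set $E_0$ in place of $E$: under $H_0$ we have $\bTheta(\cdot)\in\cU_{s,M}$, and once $\bTheta(\cdot)$ is fixed $E_0(\cdot)$ is a deterministic edge set, so the Gaussian approximation and anti-concentration arguments behind Theorem~\ref{theorem:gaussian multiplier bootstrap} carry over verbatim — restricting from $E(z)$ to $E_0(z)$ only shrinks the index class whose capacity is controlled there, so every entropy and scaling bound continues to hold uniformly over $\cU_{s,M}$. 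Therefore $\limsup_{n\to\infty}\sup_{\bTheta(\cdot)\in\cU_{s,M}}P_{\bTheta(\cdot)}\{T_{E_0}\ge c(1-\alpha,E_0)\}\le\alpha$, and the theorem follows.

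The only genuinely delicate point is this last one: $E_0$ depends on the unknown truth, so Theorem~\ref{theorem:gaussian multiplier bootstrap} must be invoked in its uniform-over-$\cU_{s,M}$ form with a data-independent but parameter-dependent edge set, not merely for a single fixed $E$. This is exactly where the careful characterization of the capacity of the dynamic edge set $E(z)$ used to prove Theorem~\ref{theorem:gaussian multiplier bootstrap} pays off, since that bound is monotone in the edge set and uniform in the parameter, so no new estimates are needed. The remaining ingredients — the pigeonhole reduction and the monotonicity of the bootstrap quantiles in the edge set — are elementary.
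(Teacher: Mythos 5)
Your proof is correct and takes a genuinely more elementary route than the paper's. The paper proves Theorem~\ref{theorem:algorithm} as a corollary of the general Theorem~\ref{theorem:algorithm2} for the skip-down method (Algorithm~\ref{al:stepdown2}) applied to the max-degree monotone property: its argument runs through Lemma~\ref{lemma:algorithm1} (at least one rejected edge lies in the critical edge set $\cC\{E^*(z_0),\cP\}$) and Lemma~\ref{lemma:algorithm2} (the critical set of the true graph is contained in the critical set at the iteration of the first critical rejection, giving the quantile comparison $c\{1-\alpha,\cC(E_{l-1},\cP)\}\ge c\{1-\alpha,\cC(E^*,\cP)\}$), before invoking Theorem~\ref{theorem:gaussian multiplier bootstrap} with the parameter-dependent edge set $\cC(E^*,\cP)$. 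You instead bypass the critical-edge-set machinery entirely with a direct pigeonhole argument specific to the max-degree property: a node with $\ge k+1$ rejected incident edges but true degree $\le k$ must have a rejected non-edge, so a rejection event embeds into $\{T_{E_0}>c(1-\alpha,E)\}$ with $E_0(z)=\{(j,k):\bTheta_{jk}(z)=0\}$, and the quantile comparison $c(1-\alpha,E_0)\le c(1-\alpha,E)$ is immediate from $E_0\subseteq V\times V$ rather than requiring Lemma~\ref{lemma:algorithm2}. You also correctly flag the one delicate point both proofs share—applying Theorem~\ref{theorem:gaussian multiplier bootstrap} with an edge set that depends on the truth $\bTheta(\cdot)$—and your justification (the entropy and capacity bounds are monotone in the index set and uniform over $\cU_{s,M}$) matches what makes the paper's invocation with $\cC(E^*,\cP)$ legitimate. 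What the paper's route buys is generality (one proof covering all monotone properties and the multi-step Algorithm~\ref{al:stepdown2}); what yours buys is a short, self-contained proof of the special case that is the one actually stated as Theorem~\ref{theorem:algorithm}, needing only the elementary inclusion $E_0\subseteq E$ and pathwise monotonicity of $T^B$ in the edge set.
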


To study the power analysis of the proposed method, we define the signal strength of a precision matrix $\bTheta$ as
\begin{equation}\label{eq:strength}
   \text{Sig}_{\mathrm{deg}}(\bTheta):= \max_{{E' \subseteq E(\bTheta), \text{Deg}(E) > k }} \min_{e \in E'} |\bTheta_e|,
\end{equation}
where $\text{Deg}(E)$ is the maximum degree of graph $G=(V,E)$.
Under the alternative hypothesis in \eqref{max test}, $\mathrm{there\; exists \;a\;} z_0\in [0,1] \mathrm{\;such\; that \; }$ the maximum degree of the graph is greater than $k$.  We define the parameter space under the alternative:
\begin{equation}
\label{eq:G1}
   \cG_{1}(\theta) = \Big[ \bTheta(\cdot) \in\cU_{s,M} \,\Big|\,  \text{Sig}_{\mathrm{deg}}\{\bTheta(z_0)\}\ge \theta\text{ for some $z_0 \in [0,1]$}\Big].
\end{equation}
The following theorem presents the power analysis of Algorithm~\ref{al:stepdown}.
\begin{theorem}
\label{theorem:algorithm-power}
Assume that the same conditions in Theorem~\ref{theorem:gaussian multiplier bootstrap} hold and select the smoothing parameter such that $h =o(n^{-1/5})$.  Under the alternative hypothesis in~\eqref{max test} and the assumption that $\theta \ge C \sqrt{\log (d/h)/nh}$, where $C$ is a fixed large constant, we have
  \begin{equation}\label{eq:power}
 \lim_{n \rightarrow \infty}   \inf_{\bTheta \in \cG_1(\theta)}\PP_{\bTheta}(\mathrm{Algorithm~\ref{al:stepdown}~rejects~the~null~hypothesis}) = 1,
\end{equation}
 for any fixed $\alpha \in (0,1)$.
\end{theorem}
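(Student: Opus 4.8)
The plan is to show that, under the alternative, Algorithm~\ref{al:stepdown} with high probability produces a rejected edge set $\cR(\cdot)$ whose maximum degree exceeds $k$. Fix $\bTheta(\cdot)\in\cG_1(\theta)$, so there is a time point $z_0$ and an edge set $E'\subseteq E(\bTheta(z_0))$ with $\mathrm{Deg}(E')>k$ and $\min_{e\in E'}|\bTheta_e(z_0)|\ge\theta$. It suffices to show that with probability tending to one, every edge $e=(j,k)\in E'$ lies in $\cR(z_0)$, i.e.
\[
\sqrt{nh}\cdot|\hat\bTheta^{\mathrm{de}}_{e}(z_0)|\cdot\tfrac1n\textstyle\sum_{i\in[n]}K_h(Z_i-z_0)>c(1-\alpha,E).
\]
The argument splits into (i) a lower bound on the left-hand side and (ii) an upper bound on $c(1-\alpha,E)$.

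For (i), first note the kernel weight $\frac1n\sum_i K_h(Z_i-z_0)$ concentrates around $f_Z(z_0)\ge\underline f_Z>0$ by standard kernel-smoothing arguments (as in the proof of Theorem~\ref{theorem:estimation error}), so it is bounded below by a positive constant with high probability. Next, by a triangle inequality, $|\hat\bTheta^{\mathrm{de}}_e(z_0)|\ge|\bTheta_e(z_0)|-|\hat\bTheta^{\mathrm{de}}_e(z_0)-\bTheta_e(z_0)|\ge\theta-|\hat\bTheta^{\mathrm{de}}_e(z_0)-\bTheta_e(z_0)|$. The de-biased estimation error is controlled uniformly: the analysis underlying Theorem~\ref{theorem:gaussian multiplier bootstrap} already shows $\sup_{z}\max_{(j,k)}\sqrt{nh}\,|\hat\bTheta^{\mathrm{de}}_{jk}(z)-\bTheta_{jk}(z)|\cdot\{\frac1n\sum_i K_h(Z_i-z)\}$ is of the same order as the supremum of the Gaussian process it is approximated by, namely $\cO_P(\sqrt{\log(d/h)})$ after rescaling, plus a negligible bias term of order $\sqrt{nh}\,\cdot(h^2+s\sqrt{nh^9})$ which vanishes under $h=o(n^{-1/5})$ together with the scaling conditions of Theorem~\ref{theorem:gaussian multiplier bootstrap}. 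Hence $\sup_z\max_{(j,k)}|\hat\bTheta^{\mathrm{de}}_{jk}(z)-\bTheta_{jk}(z)|=\cO_P(\sqrt{\log(d/h)/(nh)})$, so for $\theta\ge C\sqrt{\log(d/h)/(nh)}$ with $C$ large we get $|\hat\bTheta^{\mathrm{de}}_e(z_0)|\ge\tfrac12\theta$ with probability tending to one, uniformly over $e\in E'$.

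For (ii), the critical value $c(1-\alpha,E)$ is the conditional $(1-\alpha)$-quantile of $T^B_E$, which by the Gaussian approximation of Theorem~\ref{theorem:gaussian multiplier bootstrap} and anti-concentration of Gaussian maxima is, with high probability, of order $\cO_P(\sqrt{\log(|E|/h\cdot(\text{complexity of }E(\cdot)))})=\cO_P(\sqrt{\log(d/h)})$ — the logarithmic factor absorbs the $d^2$ edges and the effective number of ``distinct'' edge sets as $z$ varies (this is precisely the capacity bound on the dynamic edge set constructed in the proof of Theorem~\ref{theorem:gaussian multiplier bootstrap}). Combining (i) and (ii): the left-hand side of the rejection criterion is at least $\sqrt{nh}\cdot\tfrac12\theta\cdot(\underline f_Z/2)\gtrsim\sqrt{nh}\cdot\theta$, which for $\theta\ge C\sqrt{\log(d/h)/(nh)}$ with $C$ chosen larger than the constant in the $c(1-\alpha,E)$ bound exceeds $c(1-\alpha,E)$ with probability tending to one. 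Therefore $E'\subseteq\cR(z_0)$ w.h.p., so $d_{\mathrm{rej}}\ge\mathrm{Deg}(E')>k$, and the algorithm rejects; since the bounds are uniform over $\cG_1(\theta)$, taking $n\to\infty$ gives \eqref{eq:power}.

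The main obstacle is making step (ii) rigorous and uniform: one must show the random critical value $c(1-\alpha,E)$ does not exceed $c_1\sqrt{\log(d/h)}$ with probability $\to1$, which requires controlling the conditional quantile of the bootstrap supremum through the same maximal-inequality/metric-entropy bound on the dynamic edge-set process that powers Theorem~\ref{theorem:gaussian multiplier bootstrap}, and then choosing the constant $C$ in the signal-strength condition strictly larger than that bound's constant so the separation in (i)–(ii) is genuine. The de-biasing error control in (i) is also delicate because it must hold uniformly over all $d^2$ edges and all $z$, but this is inherited directly from the proof of Theorem~\ref{theorem:gaussian multiplier bootstrap}; the genuinely new bookkeeping is tracking constants carefully enough that the large-$C$ assumption closes the gap.
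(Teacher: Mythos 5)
Your proposal takes essentially the same route as the paper's proof (given there for the more general Theorem~\ref{theorem:algorithm-power2}, which specializes to this statement). Both arguments fix a witnessing time $z_0$ and a subgraph $E'$ with $\mathrm{Deg}(E')>k$ and minimum signal $\ge\theta$, lower-bound the test statistic on each signal edge by combining a kernel-weight concentration argument, a triangle inequality with the true $\bTheta_e(z_0)$, and a uniform control of $\sup_z\max_e|\hat\bTheta^{\mathrm{de}}_e(z)-\bTheta_e(z)|$ drawn from Corollary~\ref{theorem:inverse estimation}, and then upper-bound the bootstrap critical value $c(1-\alpha,E)$ by $C\sqrt{\log(d/h)}$ so that the signal-strength condition $\theta\ge C\sqrt{\log(d/h)/(nh)}$ closes the gap. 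The paper wraps this in events $\cE_1,\cE_2,\cE_3$ and, because it argues for the general Algorithm~\ref{al:stepdown2}, has to pass through the critical edge set $\cC(\emptyset,\cP)$ and a minimal subgraph $E_0''\subseteq E_0'$; your version avoids that machinery since Algorithm~\ref{al:stepdown} for maximum degree tests all edges in $V\times V$ in a single pass, which is a genuine simplification appropriate to this special case. Your discussion of the bias term ($\sqrt{nh}(h^2 + s\sqrt{nh^9})$ vanishing under $h=o(n^{-1/5})$) is slightly more explicit than the paper's one-line appeal to Corollary~\ref{theorem:inverse estimation}, and both treatments of the critical-value upper bound are somewhat terse and would benefit from explicitly invoking the Dudley/Borell bounds used in the proof of Theorem~\ref{theorem:gaussian multiplier bootstrap}; but the two proofs share the same decomposition and the same key lemmas, so no new idea is missing.
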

\noindent The signal strength condition defined in \eqref{eq:strength} is weaker than the typical minimal signal strength condition required on testing a single edge on a conditional independent graph, $\min_{e \in E(\bTheta)} |\bTheta_e|$. 
The condition in \eqref{eq:strength} requires only that there exists a subgraph whose maximum degree is larger than $k$ and the minimal signal strength on that subgraph is above certain level.  
In our real data analysis, this requires only the edges for brain regions that are highly connected to many other brain regions to be strong, which is plausible since these regions should have high brain activity.

\section{Discussion} 
\label{section:discussion}
We consider estimating stimulus-locked brain connectivity networks from data obtained under natural continuous stimuli.
Due to lack of highly controlled experiments that remove all spontaneous and individual variations, the measured brain signal consists of not only stimulus-induced signal, but also intrinsic neural signal and non-neuronal signals that are subjects specific.
Typical approach for estimating time-varying Gaussian graphical models will fail to estimate the stimulus-locked brain connectivity network accurately due to the presence of subject specific effects.
By exploiting the experimental design aspect of the problem, we propose a simple approach to estimating stimulus-locked brain connectivity network.
In particular, rather than calculating within-subject smoothed covariance matrix as in the typical approach for modeling time-varying Gaussian graphical models, we propose to construct the inter-subject smoothed covariance matrix instead, treating the subject specific effects as nuisance parameters.
  
To answer the scientific question on whether there are any brain regions that are connected to many other brain regions during the given stimulus, we propose an inferential method for testing the maximum degree of a stimulus-locked time-varying graph.
In our analysis, we found that several interesting brain regions such as the fusiform cortex, lingual gyrus, and precuneus are highly connected.
From the neuroscience literature, these brain regions are mainly responsible for 
high order cognitive operations, face and body recognition, and serve as control region that integrates information from other brain regions.
We have also extended the proposed inferential framework to testing various topological graph structures.  These are detailed in Appendix A.

The practical limitation of our proposed method is on the Gaussian assumption on the data. While we focus on the time-varying Gaussian graphical model in this paper, our framework can be extended to other types of time-varying graphical models such as the time-varying discrete graphical model or the time-varying nonparanormal graphical model \citep{kolar2010estimating,lu2015post2}.
Another limitation is the independence assumption on the data across time points.  All of our theoretical results can be generalized to the case when the data across time points are correlated, and we leave such generalization for future work.

\section*{Acknowledgement}
We thank Hasson's lab at the Princeton Neuroscience Institute for providing us the fMRI data set under audio-visual movie stimulus.  We thank Janice Chen for very helpful conversations on preprocessing the fMRI data set and interpreting the results of our analysis.

\newpage
\appendix



\section{Inference on Topological Structure of Time-Varying Graph} 
\label{section:inference}
In this section, we generalize Algorithm~\ref{al:stepdown} in the main manuscript to testing various graph structures that satisfy the \emph{monotone graph property}.
In \ref{subsection:notation on graph}, we briefly introduce some concepts on graph theory.  These include the notion of isomorphism, graph property, monotone graph property, and critical edge set.  
In \ref{subsection:test statistics}, we provide a test statistic and an estimate of the quantile of the proposed test statistic using the Gaussian multiplier bootstrap.  We then develop an algorithm to test the dynamic topological structure of a time-varying graph which satisfies the monotone graph property.

\subsection{Graph Theory}
\label{subsection:notation on graph}
Let $G = (V,E)$ be an undirected graph where $V= \{1,\ldots,d\}$ is a set of nodes and $E \subseteq V\times V$ is a set of edges connecting pairs of nodes. 
Let $\cG$ be the set of all graphs with the same number of nodes.  For any two graphs $G= (V,E)$ and $G'= (V,E')$, we write $G\subseteq G'$
if $G$ is a subgraph of $G'$, that is, if $E\subseteq E'$.
We start with introducing some concepts on graph theory (see, for instance, Chapter 4 of \citealp{lovasz2012large}).
\begin{definition}
\label{def:isomorphic}
Two graphs $G= (V,E)$ and $G'=(V,E')$ are said to be isomorphic if 
there exists permutations $\pi : V \rightarrow V$ such that 
 $(j,k)\in E$ if and only if $\{\pi(j),\pi(k)\} \in E'$.
\end{definition} 
\noindent 
 The notion of isomorphism is used in the graph theory literature to quantify whether two graphs have the same topological structure, up to any permutation of the vertices (see Chapter 1.2 of \citealp{bondy1976graph}).  We provide two concrete examples on the notion of isomorphism  in Figure~\ref{Fig:isomorphism}.  
 
\begin{figure}[htp]
\centering
\includegraphics[scale=0.325]{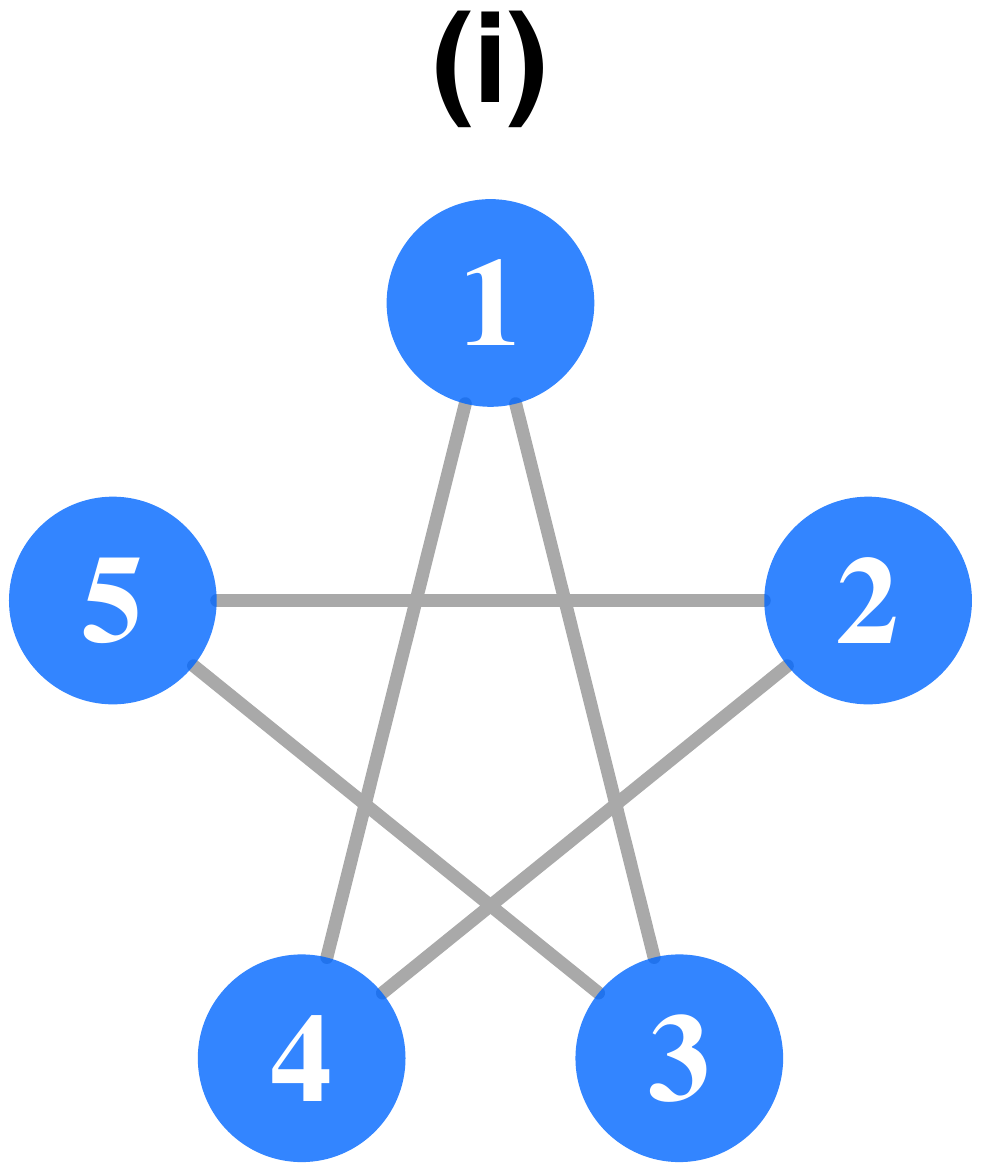}\qquad
\includegraphics[scale=0.325]{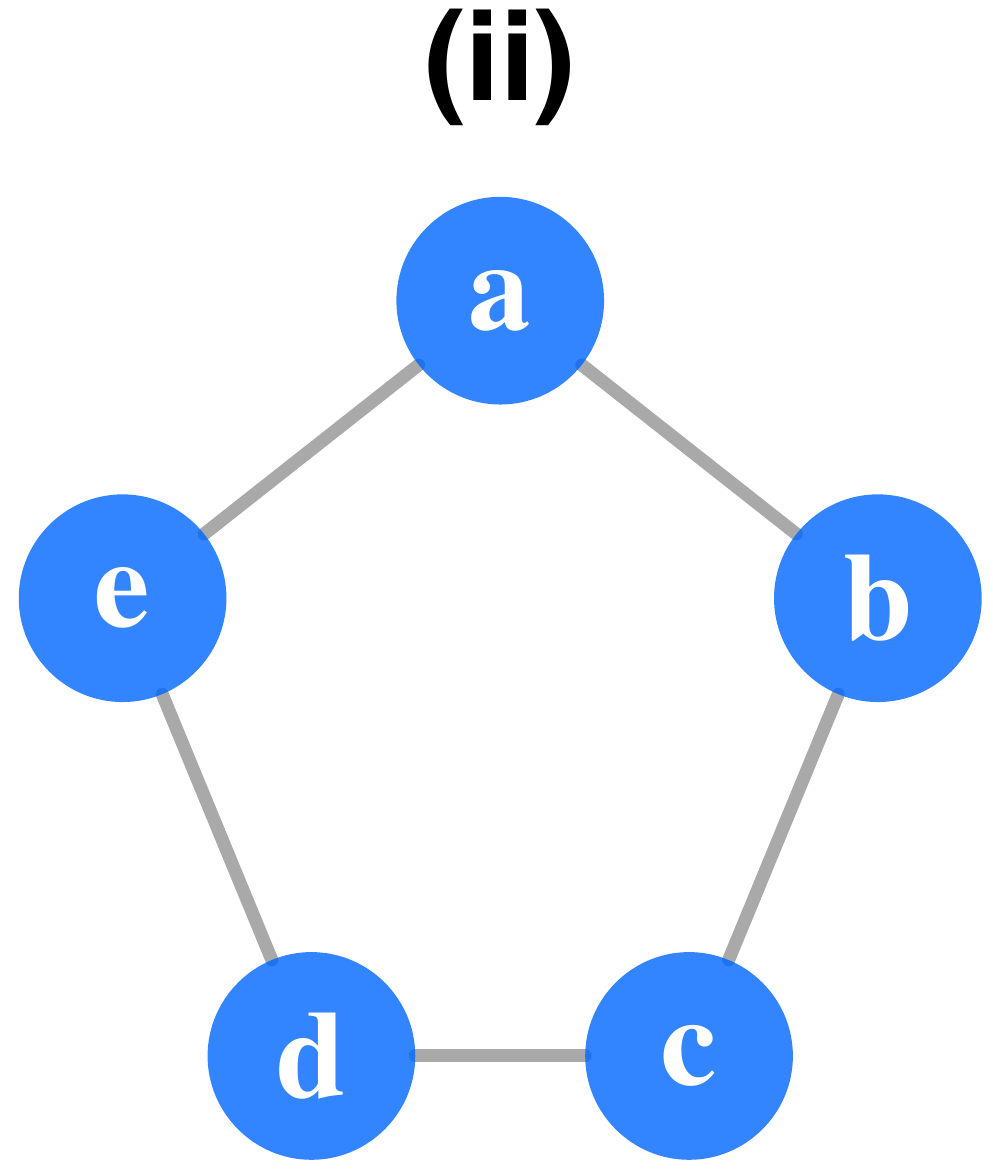}\qquad \qquad   
\includegraphics[scale=0.325]{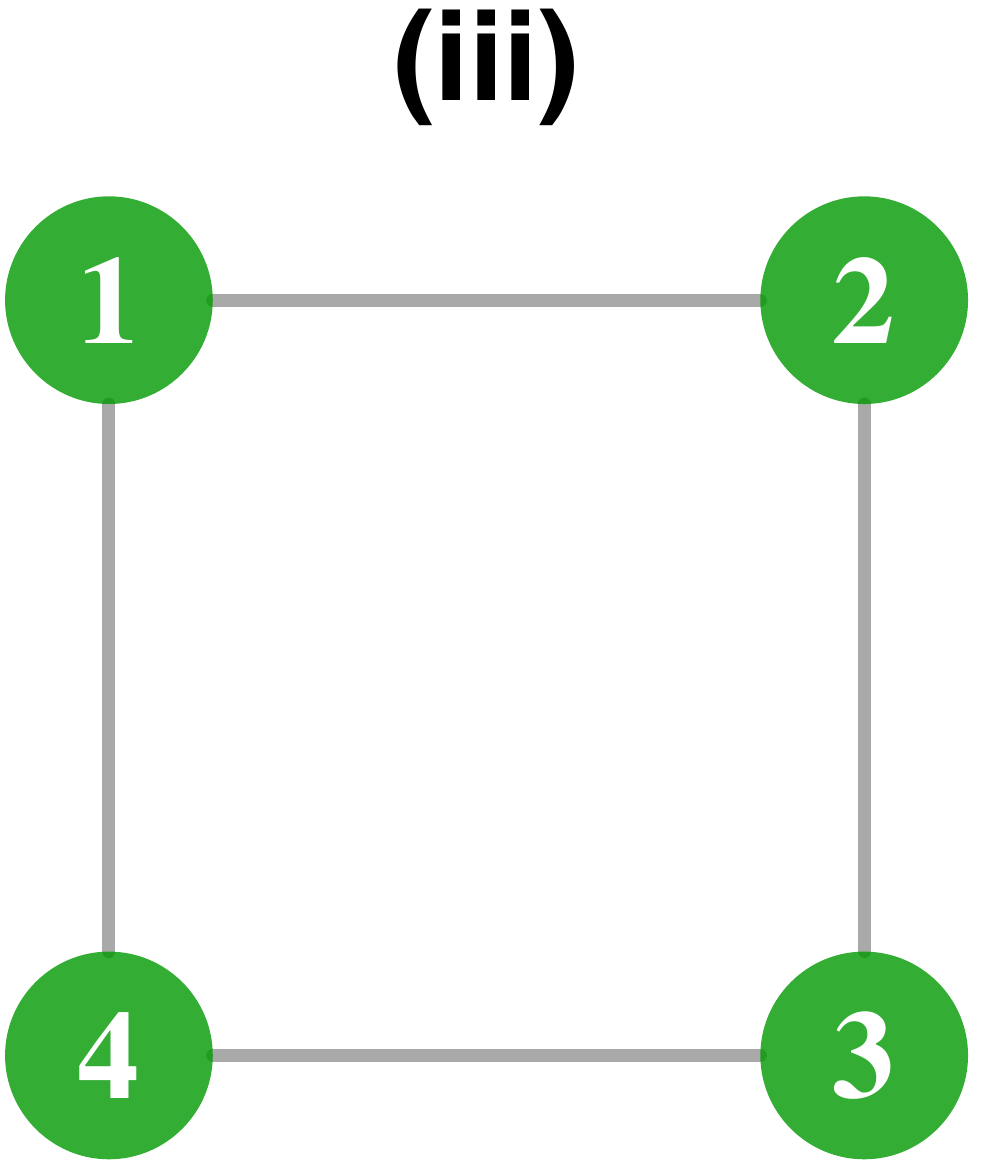}\qquad
\includegraphics[scale=0.325]{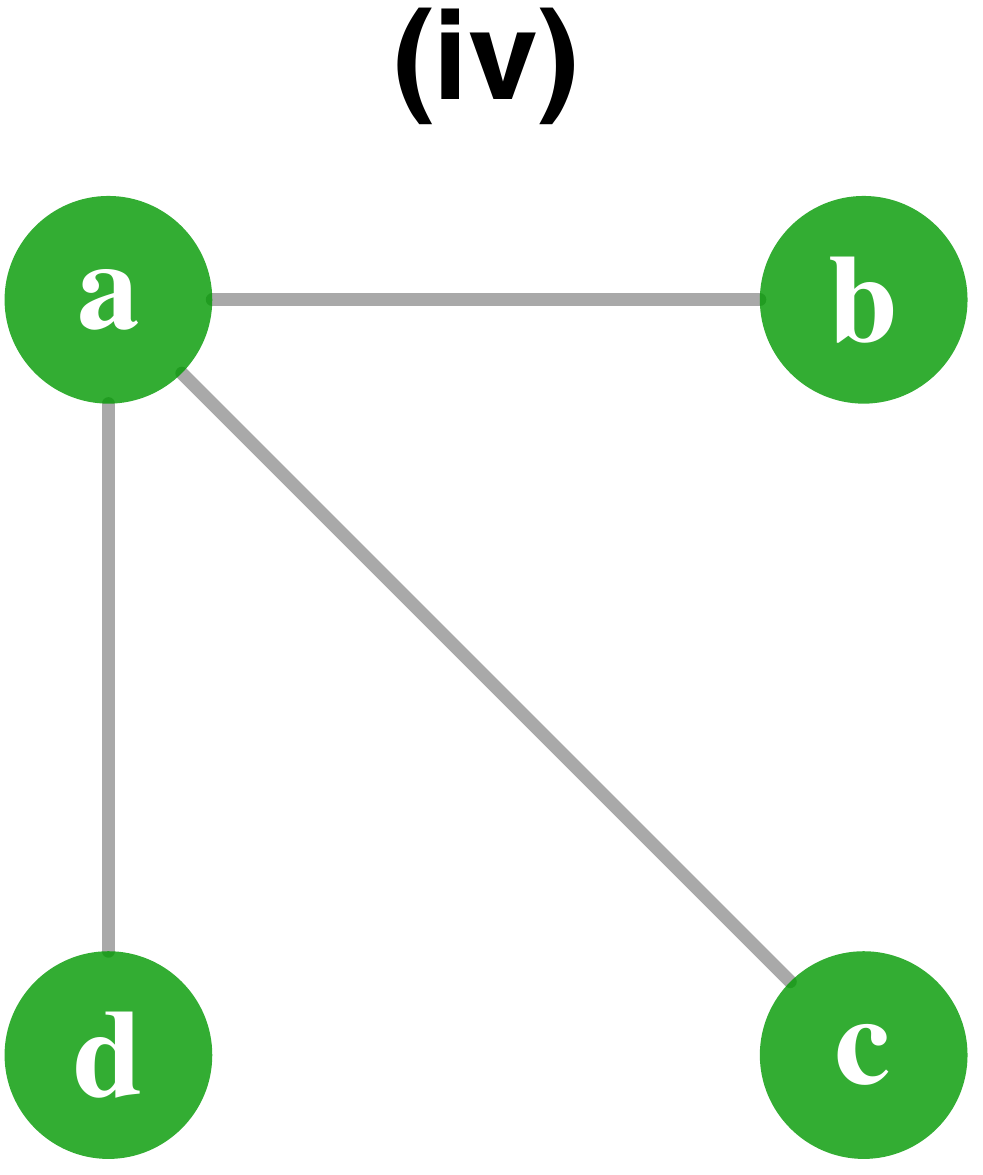}
\caption{Graphs (i) and (ii) are isomorphic.  Graphs (iii) and (iv) are not isomorphic.  }
\label{Fig:isomorphism}
\end{figure}

Next, we introduce the notion of graph property. A graph property is a property of graphs that depends only on the structure of the graphs, that is, a graph property is invariant under permutation of vertices.
A formal definition is given as follows.
\begin{definition}
\label{def:property}
For two graphs $G$ and $G'$ that are isomorphic,  a graph property is a function $\cP : \cG \rightarrow\{0,1\}$ such that $\cP(G)=\cP(G')$.
A graph $G$ satisfies the graph property $\cP$ if $\cP(G)=1$. 
\end{definition}
Some examples of graph property are that the graph is connected, the graph has no more than $k$ connected components,  
the maximum degree of the graph is larger than $k$, the graph has no more than $k$ isolated nodes, the graph contains a clique of size larger than $k$, and the graph contains a triangle.
For instance, the two graphs in Figures~\ref{Fig:isomorphism}(i) and~\ref{Fig:isomorphism}(ii) are isomorphic and satisfy the graph property of being connected.

\begin{definition}
\label{def:monotone}
For two graphs $G\subseteq G'$, a graph property $\cP$ is monotone if $\cP(G)=1$ implies that $\cP(G')=1$.
\end{definition}
In other words, we say that a graph property is monotone if the graph property is preserved under the addition of new edges.  
Many graph property that are of interest such as those given in the paragraph immediately after Definition~\ref{def:property} are monotone.  In Figure~\ref{Fig:monotone}, we present several examples of graph property that are monotone by showing that adding additional edges to the graph does not change the graph property. For instance, we see from Figure~\ref{Fig:monotone}(a) that the existing graph with gray edges are connected.  Adding the red edges to the existing graph, the graph remains connected and therefore the graph property is monotone.  
Another example is the graph with maximum degree at least three as in Figure~\ref{Fig:monotone}(c). We see that 
adding the red dash edges to the graph preserves the graph property of having maximum degree at least three.

\begin{figure}[!htp]
\centering
\includegraphics[scale=0.28]{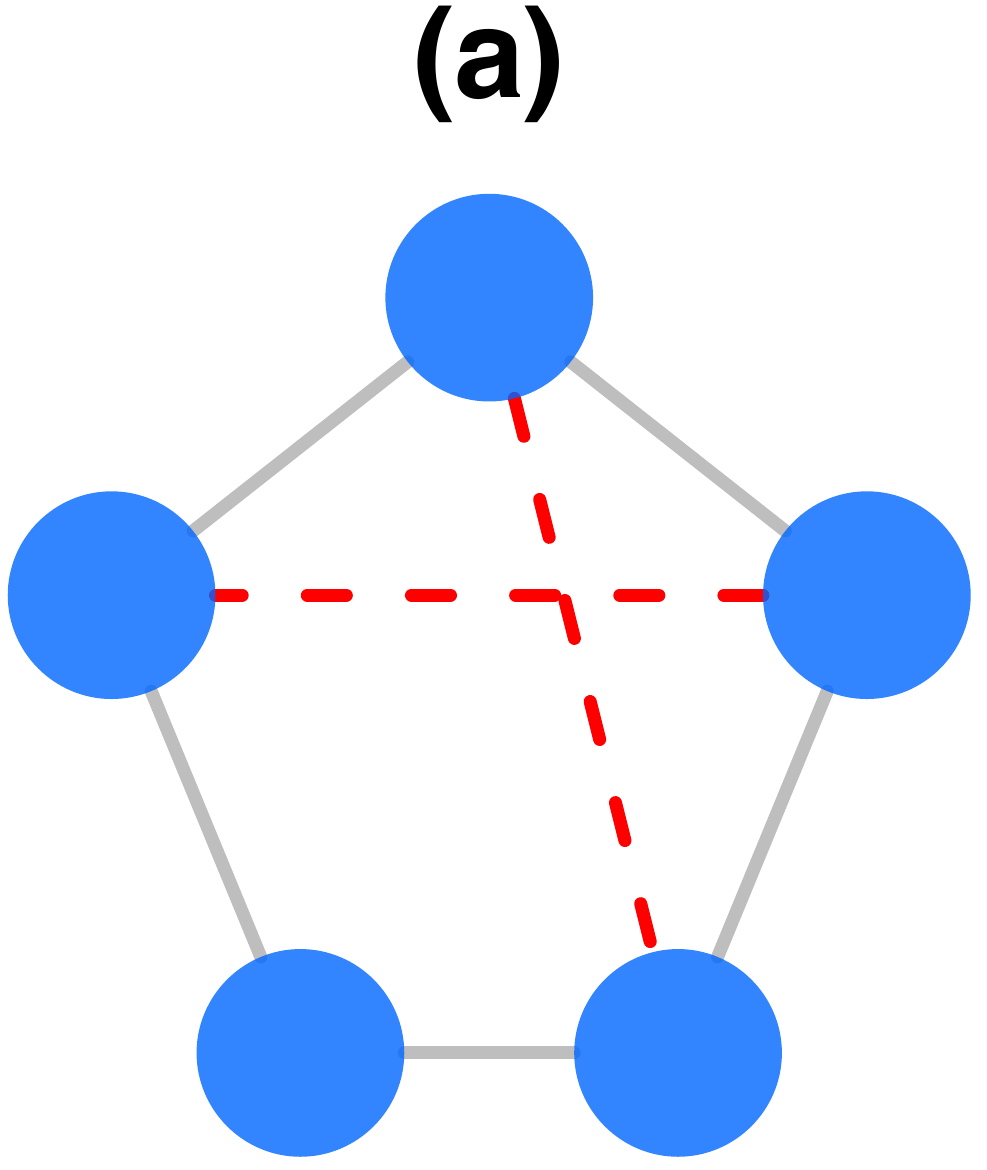}\quad \qquad
\includegraphics[scale=0.28]{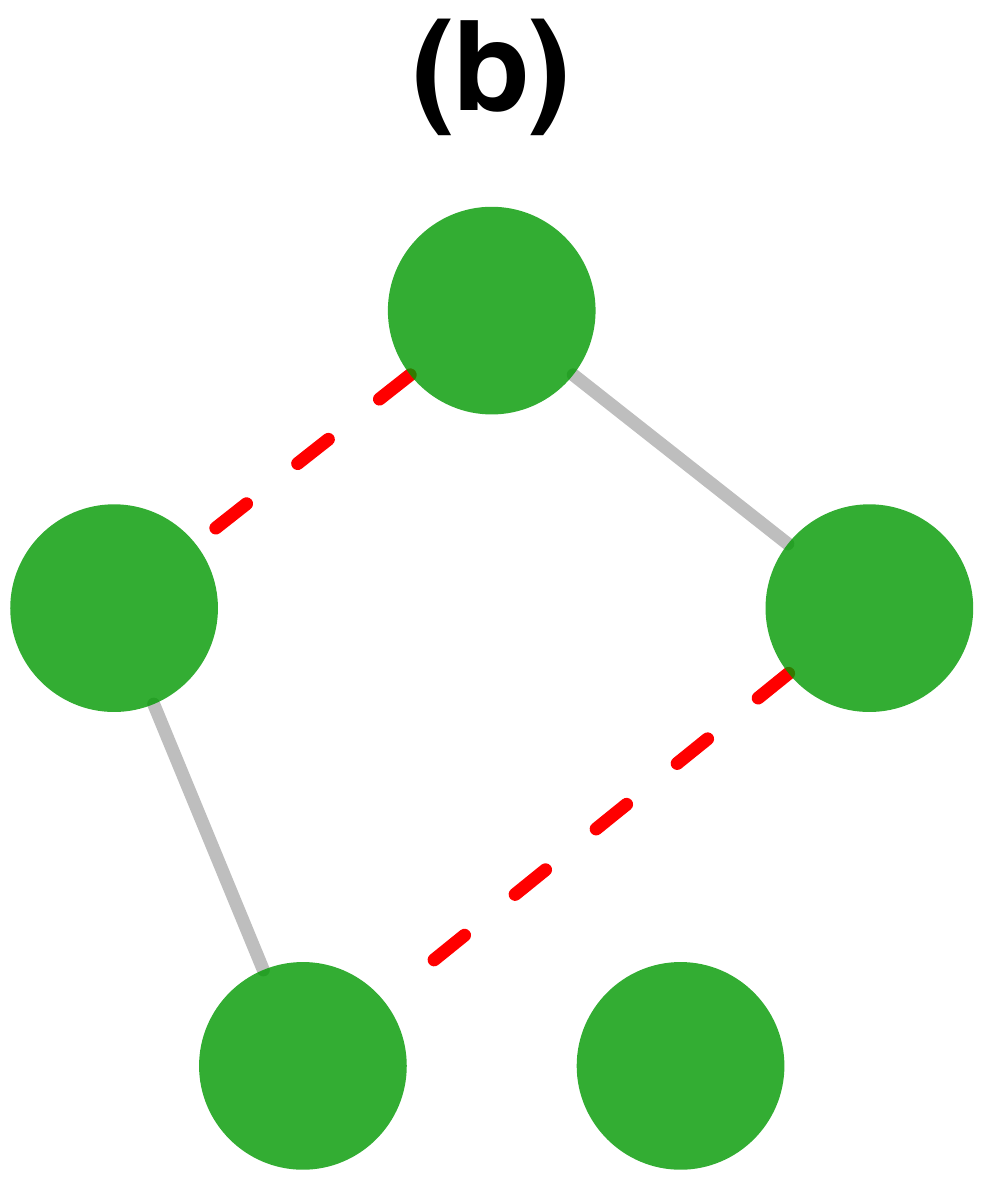}\quad \qquad
\includegraphics[scale=0.28]{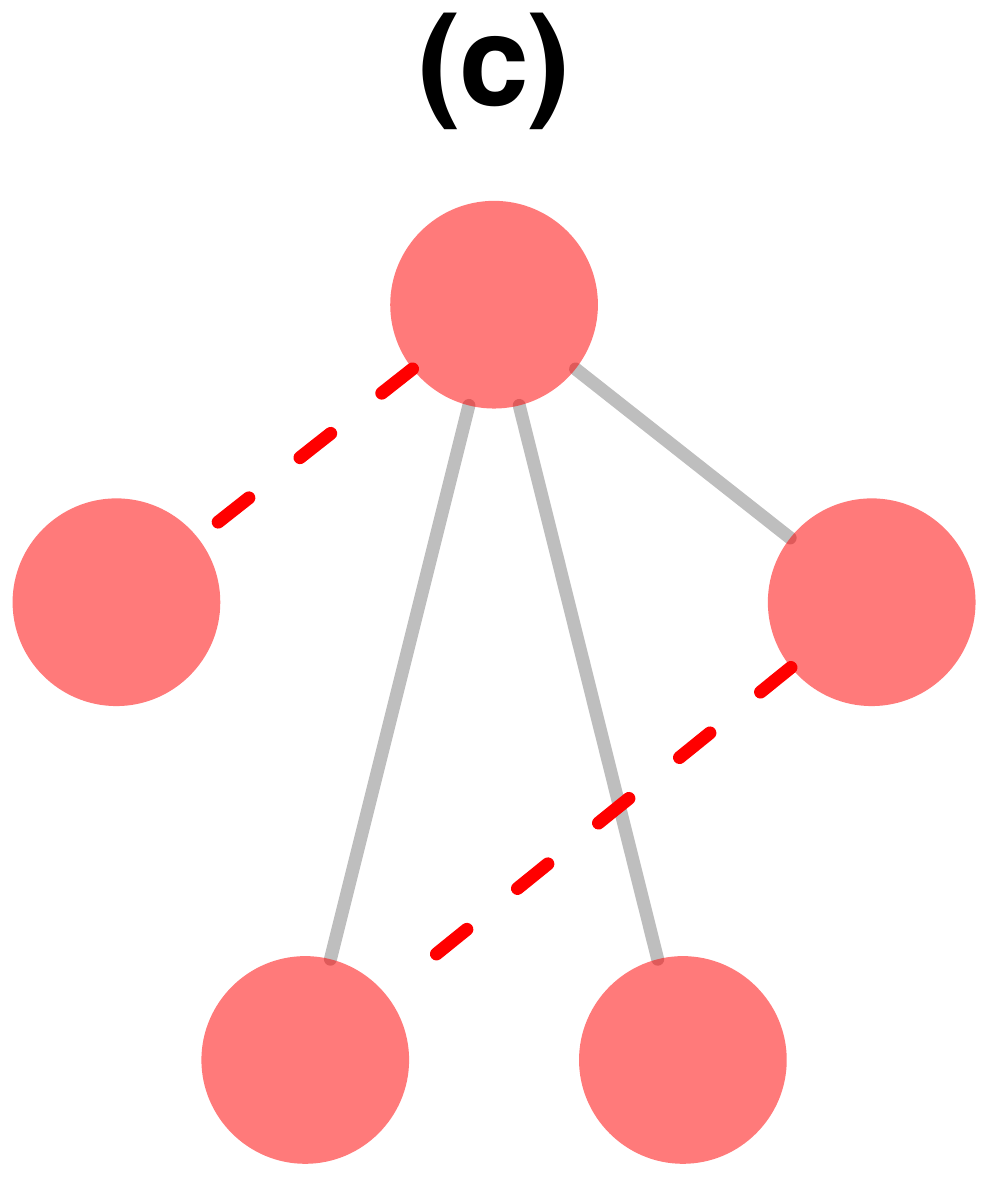}\quad \qquad
\includegraphics[scale=0.28]{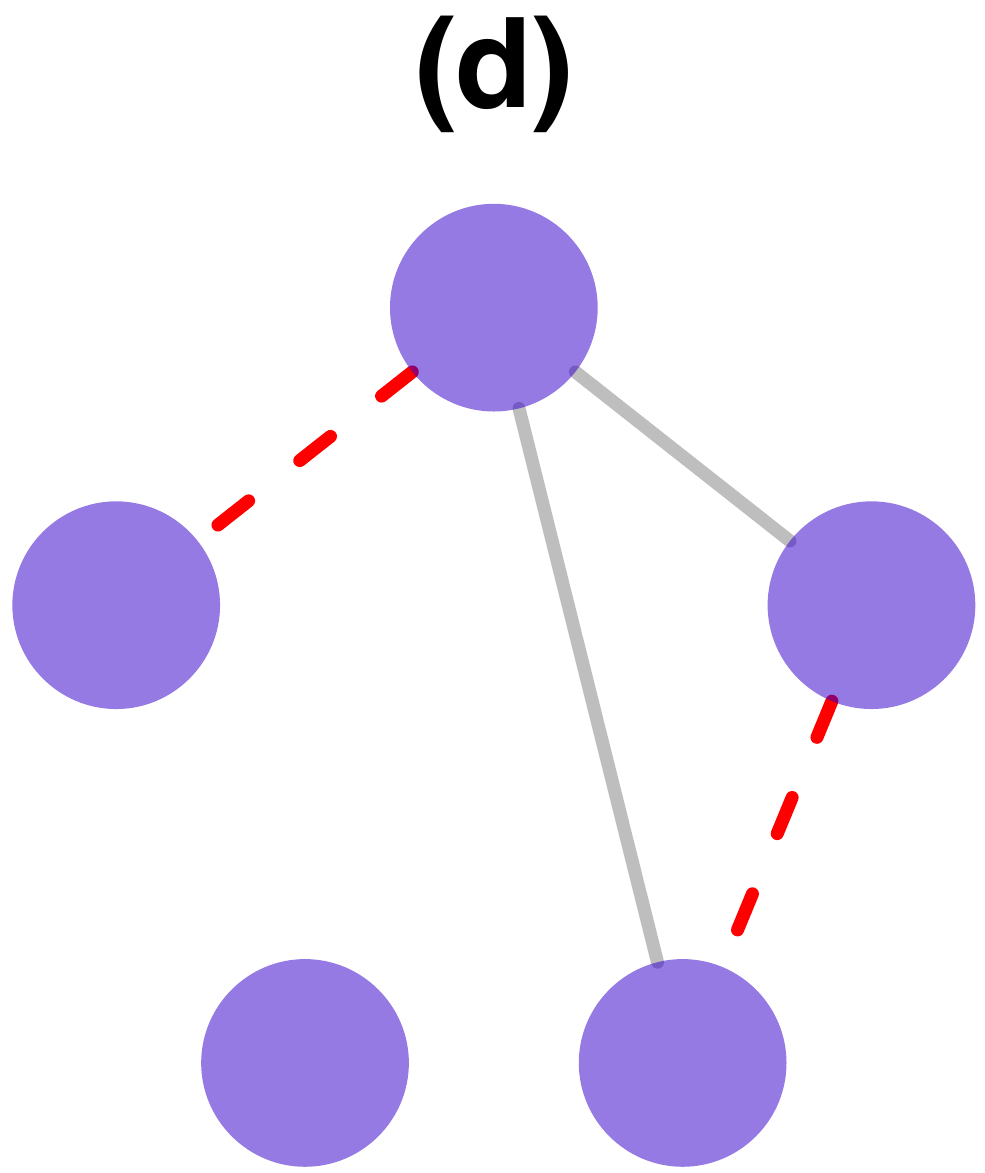}\quad \qquad
\caption{Some examples on graph property that are monotone.  The gray edges are the original edges and the red dash edges are additional edges added to the existing graph.  (a) Graph that is connected.  (b) Graph that has no more than three connected components. (c) Graph with maximum degree at least three.  (d) Graph with no more than two isolated nodes.  Adding the red dash edges to the existing graphs does not change the graph property.  }
\label{Fig:monotone}
\end{figure}

For a given graph $G=(V,E)$, we define the class of edge sets satisfying the graph property $\cP$ as
\begin{equation}
\label{Eq:class of edge sets}
\mathscr{P}= \{      E\subseteq V\times V \mid \cP(G)=1\}.
\end{equation}
Finally, we introduce the notion of critical edge set in the following definition.
\begin{definition}
\label{def:critical}
 Given any edge set $E \subseteq V \times V$, we define the critical edge set of $E$ for a given monotone graph property $\cP$ as
\begin{equation}\label{eq:critical}
 \cC(E, \cP) = \{e \mid e \not\in E, \;   \mathrm{there\; exists\;} E' \supseteq E \mathrm{ \;such\; that\; } E' \in \mathscr{P} \text{ and }   E' \backslash \{e\}  \notin \mathscr{P} \}.
\end{equation}
\end{definition}
\noindent For a given monotone graph property $\cP$, the critical edge set is the set of edges that will change the graph property of the graph once added to the existing graph.  We provide two examples in  Figure~\ref{Fig:crit edges}. 
Suppose that $\cP$ is the graph property of being  connected.  In Figure~\ref{Fig:crit edges}(a), we see that the graph is not connected, and thus $\cP(G)=0$.  Adding any of the red dash edges in Figure~\ref{Fig:crit edges}(b) changes $\cP(G)=0$ to $\cP(G)=1$. 

\begin{figure}[!htp]
\centering
\includegraphics[scale=0.315]{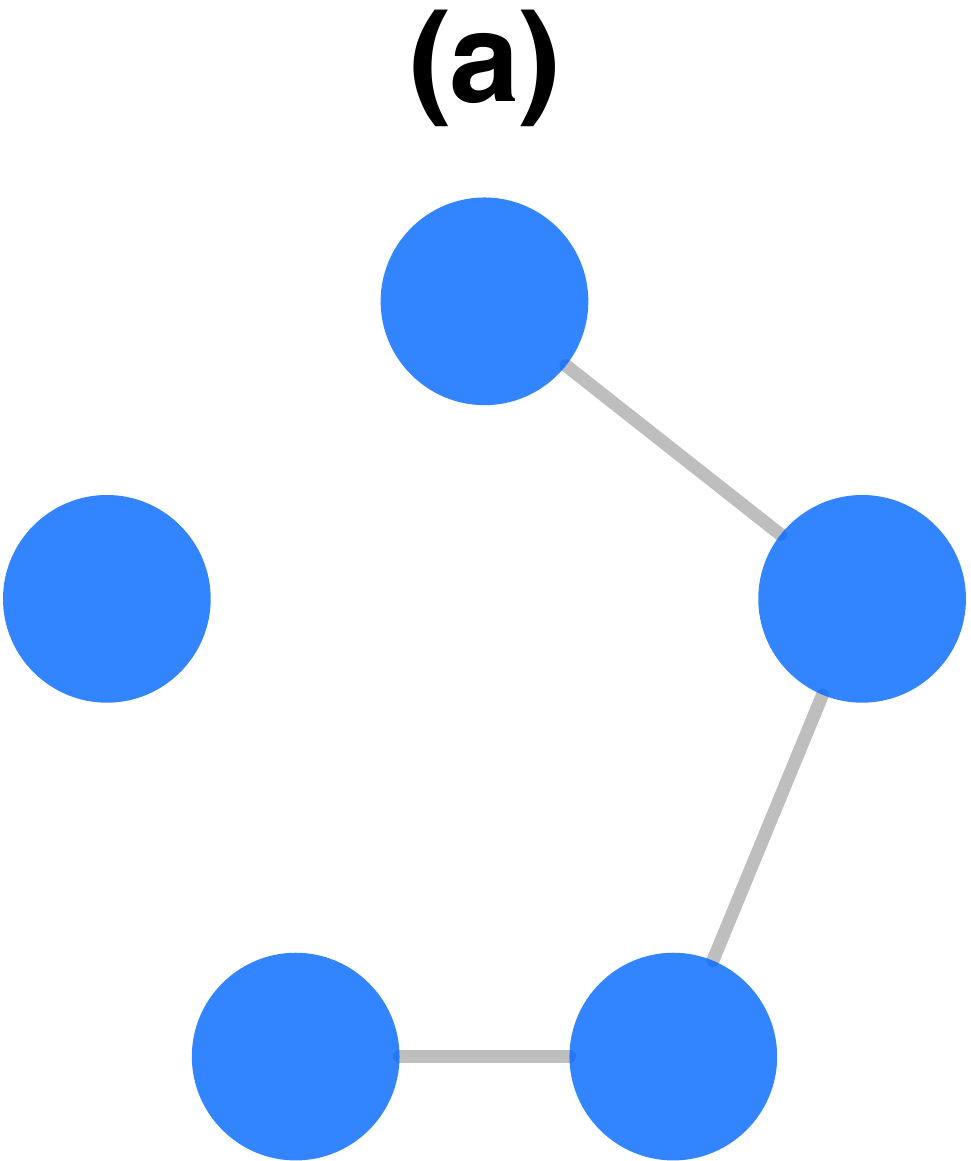}\quad \qquad
\includegraphics[scale=0.315]{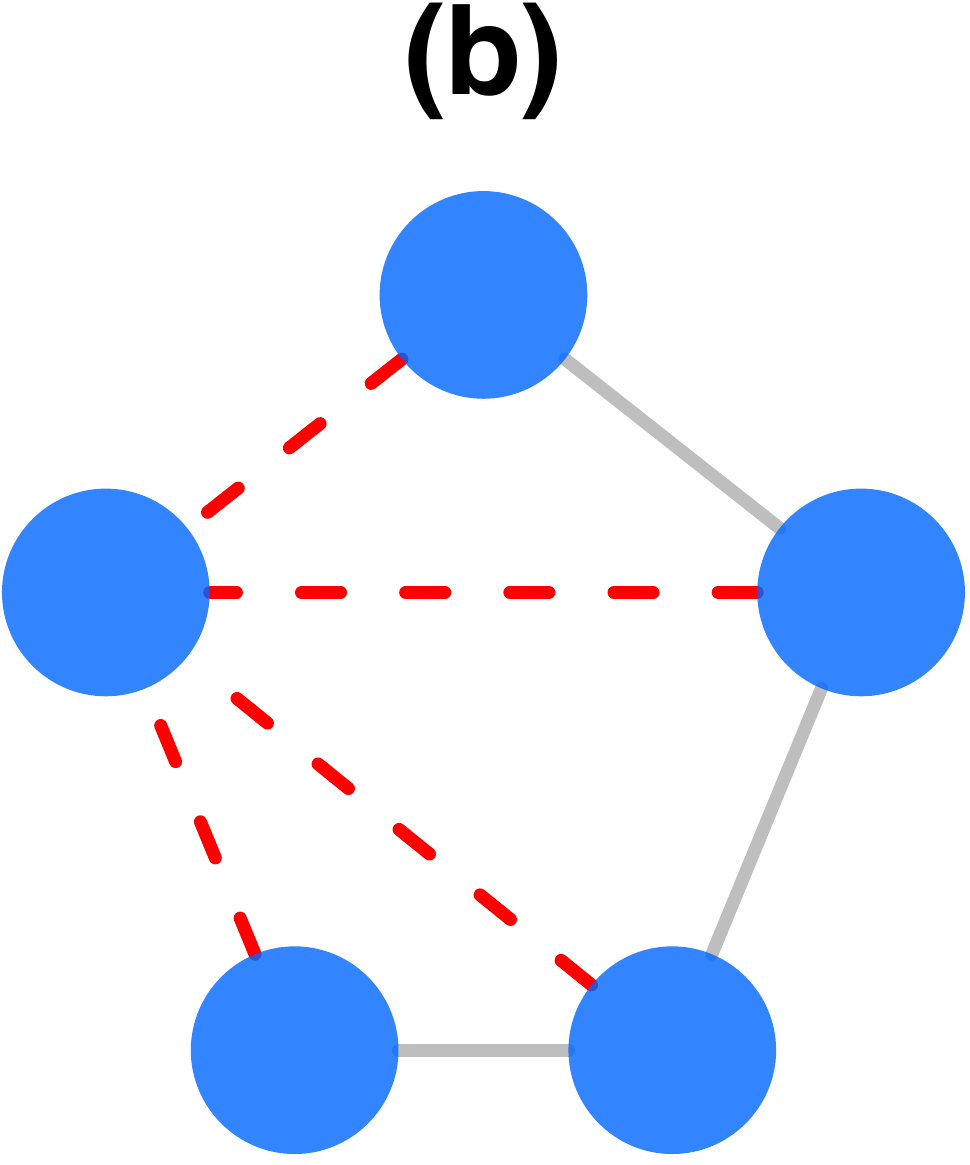}\quad \qquad
\includegraphics[scale=0.315]{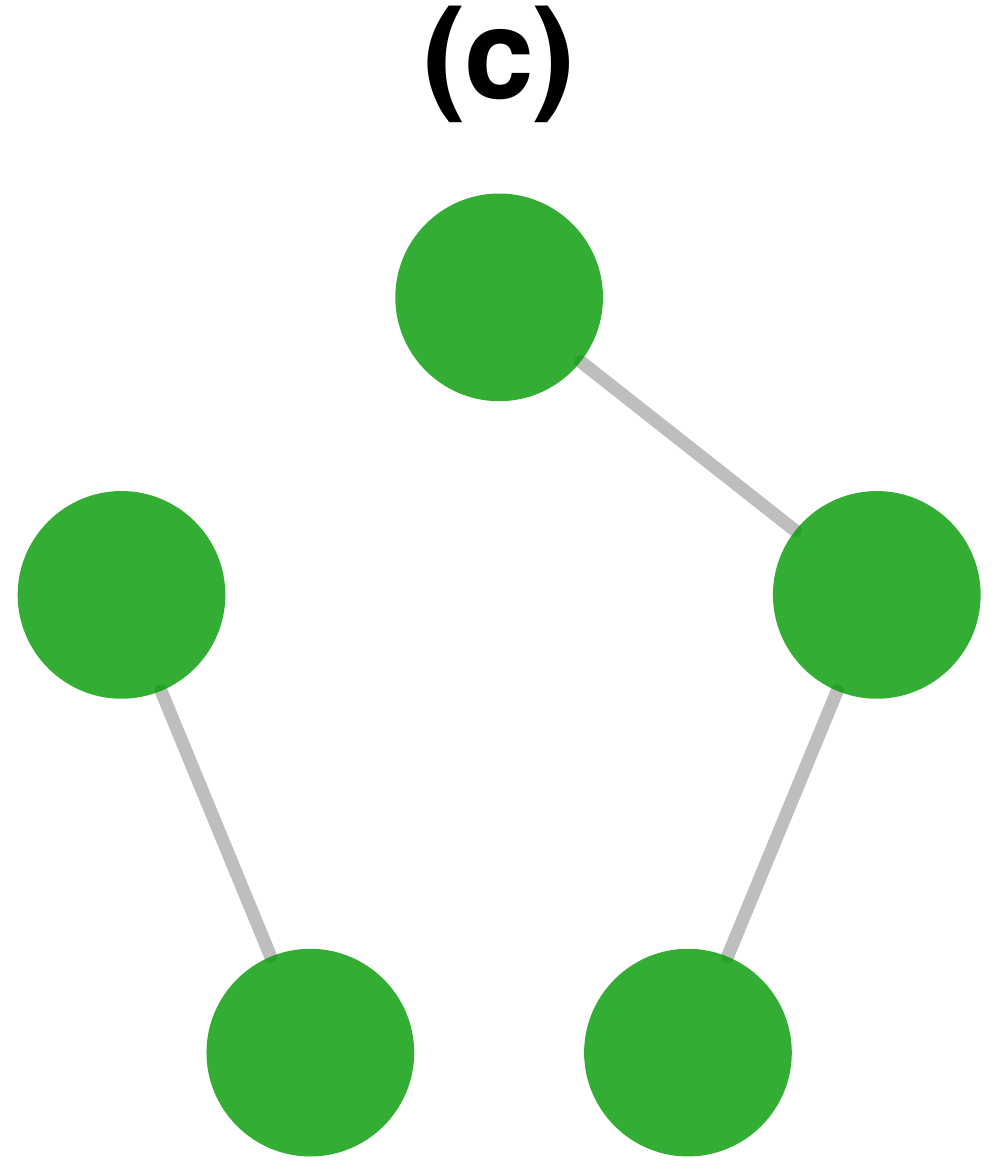}\quad \qquad
\includegraphics[scale=0.315]{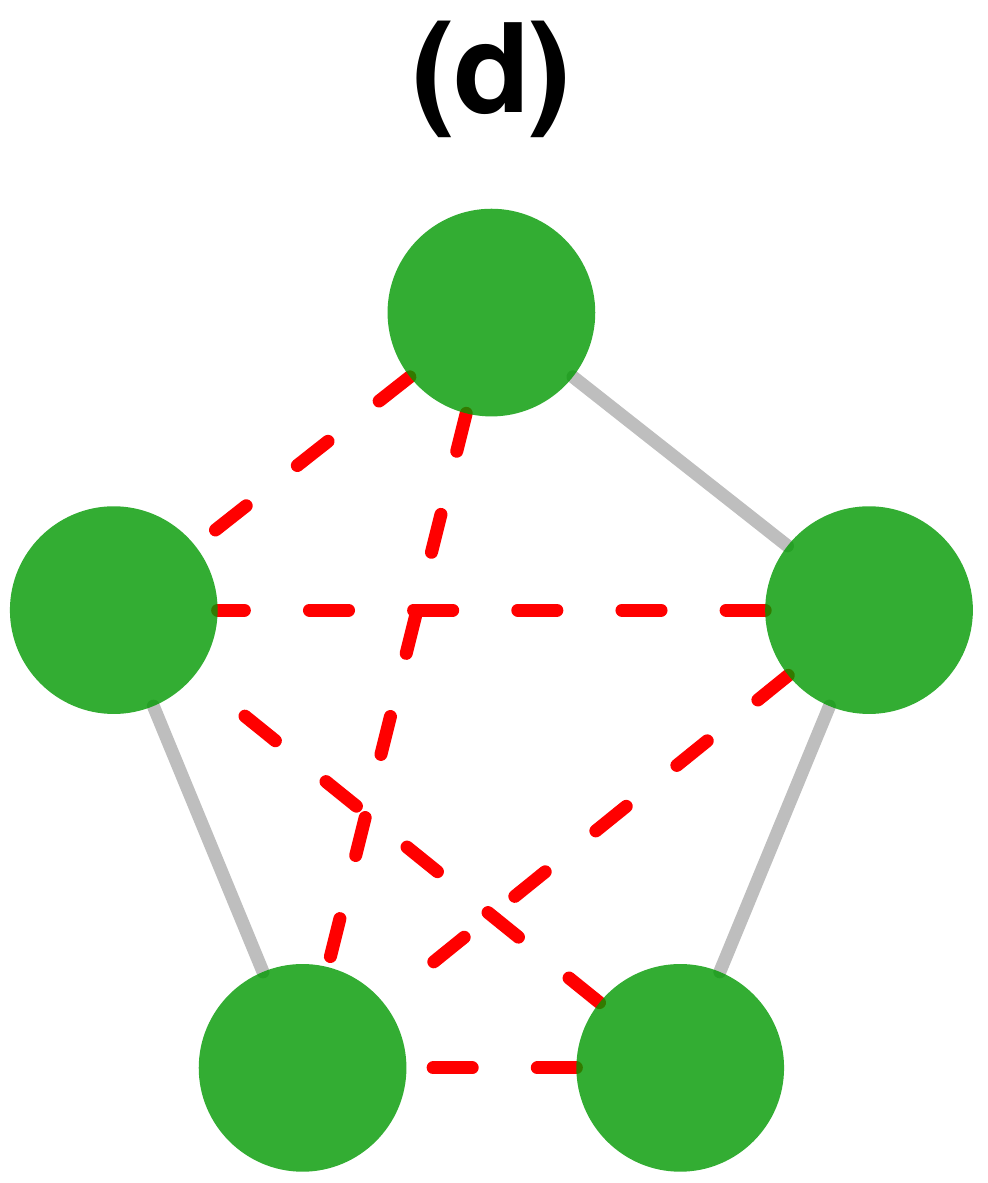}\quad \qquad
\caption{Let $\cP$ be the graph property of being connected. Gray edges are the original edges of a graph $G$ and the red dash edges are the critical edges that will change the graph property from $\cP(G)=0$ to $\cP(G)=1$.  (a) The graph satisfies $\cP(G)=0$.  (b) The graph property changes from $\cP(G)=0$ to  $\cP(G)=1$ if some red dash edges are added to the graph.  (c) The graph satisfies $\cP(G) = 0$.  (d) 
The graph property changes from $\cP(G)=0$ to $\cP(G)=1$ if some red dash edges are added to the existing graph.  }
\label{Fig:crit edges} 
\end{figure}

\subsection{An Algorithm for Topological Inference}
\label{subsection:test statistics}

Throughout the rest of the paper, we denote $G(z) = \{V,E(z)\}$ as the graph at  $Z=z$.  
We consider hypothesis testing problem of the  form
\begin{equation}
\label{Eq:hypothesis}
\begin{split}
&H_0 : \cP\{G(z)\} =0 \mathrm{\;for\; all\;} z\in [0,1] \\
&H_1: \mathrm{there\; exists \;a\;} z_0\in [0,1] \mathrm{\;such\; that \; }\cP\{G(z_0)\}=1,
\end{split}
\end{equation}
where $G(\cdot)$ is the true underlying graph and $\cP$ is a given monotone graph property as defined in Definition~\ref{def:monotone}. 
We provide two concrete examples of the hypothesis testing problem in (\ref{Eq:hypothesis}).
\begin{example}
\label{example:connectivity}
Number of connected components:
 \begin{equation*}
\begin{split}
&H_0: \mathrm{\;for \; all\;} z\in [0,1], \mathrm{\;the \; number\; of\; connected\; components \; is \; greater \; than\;} k,\\
&H_1: \mathrm{\;there \; exists\;a \;} z_0\in [0,1] \mathrm{\;such \; that\; the \; number\; of\; connected\; components\; is \; not\; greater \; than\;} k.\\
\end{split}
\end{equation*}
\end{example}
\begin{example}
\label{example:max degree}
Maximum degree of the graph:
\begin{equation*}
\begin{split}
&H_0: \mathrm{\;for \; all\;} z\in [0,1], \mathrm{\;the \; maximum\; degree\; of\; the\; graph\; is \;not\; greater\; than\;} k,\\
&H_1: \mathrm{\;there \; exists\;a \;} z_0\in [0,1] \mathrm{\;such \; that\; the \; maximum\; degree\; of\; the\; graph\; is \; greater\; than\;}k.\\
\end{split}
\end{equation*}
\end{example}

We now propose an algorithm to test the topological structure of a time-varying graph.  The proposed algorithm is very general and is able to test the hypothesis problem of the form in \eqref{Eq:hypothesis}. Our proposed algorithm is motivated by the step-down algorithm in \citet{romano2005exact} for testing multiple hypothesis simultaneously.
 The main crux of our algorithm is as follows. 
By Definition~\ref{def:critical},
 the critical edge set $\cC\{E_{t-1}(z),\cP\}$ contains edges that may change the graph property from $\cP\{G(z)\}=0$ to $\cP\{G(z)\}=1$.  Thus, at the $t$-th iteration of the proposed algorithm, it suffices to test whether the edges on the critical edge set $\cC\{E_{t-1}(z),\cP\}$ are rejected.
 Let $E_t (z) = E_{t-1}(z)\; \cup \; \cR(z)$, where $\cR(z)$ is the rejected edge set from the critical edge set $\cC\{E_{t-1}(z),\cP\}$.
 Since $\cP$ is a monotone graph property, 
 if there exists a $z_0\in [0,1]$ such that $E_t (z_0) \in \mathscr{P}$, we directly reject the null hypothesis $H_0 : \cP\{G(z)\}=0$ for all $z$.  This is due to the definition of monotone graph property that adding more edges does not change the graph property. 
 If $E_t(z_0)\notin \mathscr{P}$, we repeat this process until the null hypothesis is rejected or no more edges in the critical edge set are rejected.
We summarize the procedure in Algorithm~\ref{al:stepdown2}.  

\begin{algorithm}[ht]
\caption{\label{al:stepdown2} Dynamic skip-down method.}
\textbf{Input:} A monotone graph property $\cP$; $\hat{\bTheta}^\mathrm{de}(z)$ for $z\in [0,1]$.  \\
\textbf{Initialize:} $t=1$; $E_0 (z) = \emptyset$ for $z\in [0,1]$.  \\
\textbf{Repeat:} 
\begin{enumerate}
\item Compute the critical edge set $\cC\{E_{t-1}(z),\cP\}$ for $z\in[0,1]$ and the conditional  quantile $c\{1-\alpha,\cC(E_{t-1},\cP)\} =  \inf \left(t\in \RR   \mid P[T^B_{\cC(E_{t-1},\cP)}  \le t \mid \{(\bX_i,\bY_i,Z_i)\}_{i\in[n]}]   \ge 1-\alpha     \right)$,
where 
$T^B_{\cC(E_{t-1},\cP)}$ is the bootstrap statistic defined in (\ref{Eq:approx test}) with the maximum taken over the edge set $\cC\{E_{t-1}(z),\cP\}$.

\item Construct the rejected edge set 
\[
\cR(z)= \left[e\in\cC\{E_{t-1}(z),\cP\} \mid \sqrt{nh} \cdot |\hat{\bTheta}_e^\mathrm{de} (z)| \cdot \sum_{i\in [n]} K_h(Z_i-z)/n > c \{1-\alpha,\cC(E_{t-1},\cP)\} \right].
\]

\item Update the rejected edge set $E_t (z) \gets E_{t-1}(z) \cup \cR(z)$ for $z\in [0,1]$.
\item $t \gets t+1$.
\end{enumerate}
\textbf{Until:} There exists a $z_0 \in [0,1]$ such that $E_t(z_0)\in\mathscr{P}$, or $E_t(z)=E_{t-1}(z)$ for $z\in [0,1]$.

\textbf{Output:} $\psi_{\alpha}=1$ if there exists a $z_0\in [0,1]$ such that $E_t(z_0) \in \mathscr{P}$ and $\psi_{\alpha}=0$ otherwise.
\end{algorithm}

Finally, we generalize the theoretical results in Theorems~\ref{theorem:algorithm} and~\ref{theorem:algorithm-power} to the general testing procedure in Algorithm~\ref{al:stepdown2}. 
Given a monotone graph property $\cP$, let 
\[
\cG_0  = ( \bTheta (\cdot) \in \cU_{s,M}  \mid \cP[G\{\bTheta (z)\} ]= 0 \mathrm{\; for \; all\;} z\in [0,1]).
\]
We now show that  the type I error of the proposed inferential method in  Algorithm~\ref{al:stepdown2}
can be controlled at a pre-specified level $\alpha$.
\begin{theorem}
\label{theorem:algorithm2}
Under the same conditions in Theorem~\ref{theorem:gaussian multiplier bootstrap}, we have 
\[
\underset{n\rightarrow \infty}{\lim} \underset{\bTheta (\cdot) \in \cG_0}{\sup} P_{\bTheta (\cdot)} \left( \psi_{\alpha}=1 \right) \le \alpha.
\]
\end{theorem}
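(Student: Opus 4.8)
The plan is to reduce Theorem~\ref{theorem:algorithm2} to Theorem~\ref{theorem:gaussian multiplier bootstrap} by controlling the \emph{entire} run of Algorithm~\ref{al:stepdown2} against a single, deterministic reference edge set. Write $E^*(z)=\{(j,k):\bTheta_{jk}(z)\neq 0\}$ for the true support at $z$ and set $E^{\circ}(z):=\cC\{E^*(z),\cP\}$. Because every edge of a critical edge set lies outside its base set, each $(j,k)\in E^{\circ}(z)$ is a genuine null edge with $\bTheta_{jk}(z)=0$; hence, for such edges, the quantity $\sqrt{nh}\,|\hat{\bTheta}^{\mathrm{de}}_{jk}(z)|\cdot\{n^{-1}\sum_i K_h(Z_i-z)\}$ that Algorithm~\ref{al:stepdown2} thresholds coincides with the centered quantity entering $T_{E^{\circ}}$ in \eqref{Eq:quantile}. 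Since the Gaussian approximation behind Theorem~\ref{theorem:gaussian multiplier bootstrap} depends on the edge set only through its capacity, it applies verbatim to the (parameter-dependent but deterministic) set $E^{\circ}$, uniformly over $\cU_{s,M}\supseteq\cG_0$, so that
\[
\limsup_{n\to\infty}\ \sup_{\bTheta(\cdot)\in\cG_0}\ P_{\bTheta(\cdot)}\big\{T_{E^{\circ}}\ge c(1-\alpha,E^{\circ})\big\}\le\alpha .
\]
It then suffices to show that, under $H_0$ (i.e.\ $\bTheta(\cdot)\in\cG_0$), on the event $\mathcal{A}:=\{T_{E^{\circ}}<c(1-\alpha,E^{\circ})\}$ the algorithm outputs $\psi_\alpha=0$.

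I would prove this by induction on the iteration counter $t$, carrying the invariant that the accumulated rejected set never captures a critical edge of the true graph, i.e.\ $E_t(z)\cap E^{\circ}(z)=\emptyset$ for all $z\in[0,1]$; the base case $E_0(z)=\emptyset$ is immediate. The inductive step rests on two lemmas. The first is a combinatorial fact about critical edge sets: for a monotone property $\cP$, if $E^*(z)\notin\mathscr{P}$ and $E\cap\cC\{E^*(z),\cP\}=\emptyset$, then $\cC\{E^*(z),\cP\}\subseteq\cC\{E,\cP\}$. To see it, take a critical edge $e$ of $E^*(z)$ with witness $G\supseteq E^*(z)$, $G\in\mathscr{P}$, $G\setminus\{e\}\notin\mathscr{P}$, put $F=G\cup E$, and check $F\setminus\{e\}\notin\mathscr{P}$: otherwise a minimal $E''\subseteq E$ with $(G\setminus\{e\})\cup E''\in\mathscr{P}$ would, by monotonicity, contain an edge of $E\cap\cC\{E^*(z),\cP\}$, a contradiction. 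The second lemma is the monotonicity of the bootstrap statistic and quantile in the edge set: $E\subseteq E'$ pointwise implies $T^B_E\le T^B_{E'}$ pathwise and hence $c(1-\alpha,E)\le c(1-\alpha,E')$. Applying the combinatorial lemma with $E=E_{t-1}(z)$ (using the invariant at step $t-1$) gives $E^{\circ}(z)\subseteq\cC\{E_{t-1}(z),\cP\}$ for every $z$, hence $c\{1-\alpha,\cC(E_{t-1},\cP)\}\ge c(1-\alpha,E^{\circ})$. On $\mathcal{A}$ every $e\in E^{\circ}(z)$ has statistic strictly below $c(1-\alpha,E^{\circ})\le c\{1-\alpha,\cC(E_{t-1},\cP)\}$, so no edge of $E^{\circ}(z)$ enters $\cR(z)$, and the invariant passes to step $t$.

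To close the argument, suppose toward a contradiction that the algorithm halts with $E_t(z_0)\in\mathscr{P}$ for some $z_0$ while $\cP[G\{\bTheta(z_0)\}]=0$, i.e.\ $E^*(z_0)\notin\mathscr{P}$. By monotonicity $E^*(z_0)\cup E_t(z_0)\in\mathscr{P}$, so there is a minimal $\cR'\subseteq E_t(z_0)\setminus E^*(z_0)$ with $E^*(z_0)\cup\cR'\in\mathscr{P}$; minimality forces each $e\in\cR'$ into $\cC\{E^*(z_0),\cP\}=E^{\circ}(z_0)$, contradicting $E_t(z_0)\cap E^{\circ}(z_0)=\emptyset$. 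Hence $\psi_\alpha=0$ on $\mathcal{A}$ under $H_0$, so $P_{\bTheta(\cdot)}(\psi_\alpha=1)\le P_{\bTheta(\cdot)}\{T_{E^{\circ}}\ge c(1-\alpha,E^{\circ})\}$; taking $\limsup_n$ and $\sup$ over $\cG_0$ and using the display above yields the theorem. Theorem~\ref{theorem:algorithm} is the special case in which $\cP$ is the property that the maximum degree exceeds $k$, for which the same reduction applies with $E^{\circ}(z)$ enlargeable to $V\times V$ and the step-down collapsing to the single round of Algorithm~\ref{al:stepdown}.

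The step I expect to be the main obstacle is the combinatorial lemma on critical edge sets — equivalently, the assertion that the per-iteration critical value never drops below the fixed reference value $c(1-\alpha,E^{\circ})$. This is delicate because the rejected sets $E_t(z)$ are data-dependent and may pick up ``irrelevant'' null edges; the monotonicity of $\cC(\cdot,\cP)$ in its first argument holds only in the refined, invariant-conditioned form above, and tying this cleanly to the monotone-property structure is the crux. A secondary point requiring care is confirming that the capacity estimates in the proof of Theorem~\ref{theorem:gaussian multiplier bootstrap} are uniform enough to cover the family $\{E^{\circ}_{\bTheta}\}_{\bTheta(\cdot)\in\cG_0}$ of parameter-dependent reference edge sets.
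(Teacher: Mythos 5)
Your proposal is correct and follows essentially the same route as the paper: you identify the same reference edge set $E^{\circ}(z)=\cC\{E^*(z),\cP\}$ consisting of genuine nulls, invoke Theorem~\ref{theorem:gaussian multiplier bootstrap} on it, and use the same two structural facts — that any rejection forces a critical edge of $E^*$ into the rejected set (the paper's Lemma~\ref{lemma:algorithm1}), and that $\cC\{E^*(z),\cP\}\subseteq\cC\{E_{t}(z),\cP\}$ so long as no critical edge has yet been captured (the paper's Lemma~\ref{lemma:algorithm2}), together with monotonicity of $c(1-\alpha,\cdot)$ in the edge set. The only cosmetic difference is that you phrase the reduction as an inductive invariant on the event $\{T_{E^{\circ}}<c(1-\alpha,E^{\circ})\}$, whereas the paper locates the first iteration $l$ at which a critical edge is rejected and argues once from there; these are contrapositive organizations of the identical argument.
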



In order to study the power analysis for testing graph structure that satisfies the monotone graph property, we define  signal strength of a precision matrix $\bTheta$ as
\begin{equation}\label{eq:strength}
   \text{Sig}(\bTheta):= \max_{{E' \subseteq E(\bTheta),  \cP(E') =1}} \min_{e \in E'} |\bTheta_e|. 
\end{equation}
Under  $H_1:  \mathrm{there\; exists \;a\;} z_0\in [0,1] \mathrm{\;such\; that \; }\cP\{G(z_0)\}=1$, we define the parameter space 
\begin{equation}\label{eq:G1}
   \cG_1(\theta; \cP) = \Big( \bTheta(\cdot) \in\cU_{s,M} \,\Big|\,  \cP[G\{\bTheta(z_0)\}] = 1 \text{ and } \text{Sig}\{\bTheta(z_0)\}\ge \theta\text{ for some $z_0 \in [0,1]$}\Big).
\end{equation}
Again, we emphasize that the signal strength defined in \eqref{eq:strength} is weaker than the typical minimal signal strength for testing a single edge in a graph $\min_{e \in E(\bTheta)} |\bTheta_e|$. $\text{Sig}(\bTheta)$ only requires that there exists a subgraph satisfying the property of interest such that the minimal signal strength on that subgraph is above certain level. 
For example, for $\cP(G) = 1$ if and only if $G$ is connected,  it suffices for $\bTheta$ belongs to $\cG_1(\theta; \cP)$ if the minimal signal strength on a spanning tree is larger than $\theta$. The following theorem presents the power analysis of our test.

\begin{theorem}
\label{theorem:algorithm-power2}
Assume that the same conditions in Theorem~\ref{theorem:gaussian multiplier bootstrap} hold and select the smoothing parameter $h = o(1/n^{-1/5})$. Assume that $\theta \ge C \sqrt{\log (dn)}/n^{2/5}$ for some sufficiently large constant $C$.  Under the alternative hypothesis $H_1: \cP(G) = 1$ in \eqref{Eq:hypothesis}, we have
  \begin{equation}\label{eq:power}
 \lim_{n \rightarrow \infty}   \inf_{\bTheta \in \cG_1(\theta; \cP)}\PP_{\bTheta}(\psi_{\alpha} = 1) = 1
\end{equation}
for any fixed $\alpha \in (0,1)$.
\end{theorem}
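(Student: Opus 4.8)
The plan is to show that, under any $\bTheta(\cdot) \in \cG_1(\theta;\cP)$, Algorithm~\ref{al:stepdown2} stops with $\psi_\alpha = 1$ with probability tending to one. By definition of $\cG_1(\theta;\cP)$, there is a time point $z_0$ and an edge set $E^\star \subseteq E\{\bTheta(z_0)\}$ with $\cP(E^\star) = 1$ and $\min_{e \in E^\star} |\bTheta_e(z_0)| \ge \theta$. It suffices to prove that, with high probability, every edge of $E^\star$ is eventually placed in the running set $E_t(z_0)$; once $E_t(z_0) \supseteq E^\star$ we have $E_t(z_0) \in \mathscr{P}$ by monotonicity of $\cP$, so the algorithm outputs $\psi_\alpha = 1$. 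The argument proceeds by induction on the iteration counter $t$: I will show that as long as $E^\star \not\subseteq E_{t-1}(z_0)$, at least one new edge of $E^\star$ lies in the critical edge set $\cC\{E_{t-1}(z_0),\cP\}$ and is rejected in step~2, so that $|E_t(z_0) \cap E^\star| > |E_{t-1}(z_0) \cap E^\star|$. Since $E^\star$ is finite (at most $d^2$ edges), the algorithm absorbs all of $E^\star$ within finitely many iterations.

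The two things to verify at each iteration are: (i) an unselected edge $e \in E^\star \setminus E_{t-1}(z_0)$ belongs to $\cC\{E_{t-1}(z_0),\cP\}$, and (ii) that edge is rejected, i.e. $\sqrt{nh}\,|\hat{\bTheta}^{\mathrm{de}}_e(z_0)| \cdot \sum_{i} K_h(Z_i - z_0)/n > c\{1-\alpha,\cC(E_{t-1},\cP)\}$. For (i): since $\cP(E^\star) = 1$ but $\cP\{E_{t-1}(z_0)\} = 0$ (the algorithm has not yet stopped), removing edges of $E^\star$ from the monotone-closure argument shows there is some $e \in E^\star \setminus E_{t-1}(z_0)$ that is critical — this is essentially the combinatorial fact that a monotone property attained by $E^\star$ but not by a subset forces a critical edge along any ``filling-in'' path; this mirrors the reasoning already used to justify Algorithm~\ref{al:stepdown2} in the text. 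For (ii) I will combine three ingredients: first, the deterministic lower bound $|\bTheta_e(z_0)| \ge \theta$; second, the estimation accuracy from Corollary~\ref{theorem:inverse estimation} together with the bias analysis behind Theorem~\ref{theorem:gaussian multiplier bootstrap}, which gives $\sup_z \max_{j,k} |\hat{\bTheta}^{\mathrm{de}}_{jk}(z) - \bTheta_{jk}(z)| = \cO_P\{h^2 + \sqrt{\log(d/h)/(nh)}\}$ and, under $h = o(n^{-1/5})$, the bias $h^2$ is negligible relative to $\sqrt{\log(d/h)/(nh)}$; and third, the fact that $\sum_i K_h(Z_i-z_0)/n$ concentrates around $f_Z(z_0) \ge \underline f_Z > 0$ by Assumption~\ref{ass:marginal}. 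Putting these together, the left-hand side of the rejection rule is, with probability $\to 1$, at least $\sqrt{nh}\,(\theta - o_P(\sqrt{\log(d/h)/(nh)}))\cdot(\underline f_Z/2)$, which under $\theta \ge C\sqrt{\log(d/h)/(nh)} = C\sqrt{\log(dn)}/n^{2/5}$ (up to constants and the $h=n^{-1/5}$ scaling) diverges like $C'\sqrt{\log(d/h)}$. Meanwhile the quantile $c\{1-\alpha,\cC(E_{t-1},\cP)\}$ is $O_P(\sqrt{\log(d/h)})$ because $T^B$ is a supremum over at most $|[0,1]\text{-grid}| \cdot d^2$ Gaussian-type coordinates with bounded variance — this order bound falls out of the Gaussian approximation machinery invoked for Theorem~\ref{theorem:gaussian multiplier bootstrap} (maximal inequality for the bootstrap process). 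Choosing the constant $C$ large enough makes the left-hand side exceed the quantile with probability $\to 1$, uniformly over $\bTheta \in \cG_1(\theta;\cP)$.

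I would then assemble the pieces with a union bound over the at most $d^2$ iterations (the counter $t$ is deterministically bounded by $|E^\star| \le d^2$), so the total failure probability is $d^2$ times an $o(1)$ term; since the per-iteration failure probability decays faster than any polynomial in $d$ under the scaling $\log^{22}(s)\log^8(d/h)/(nh) = o(1)$ (which forces $\log d$ to grow slower than any power of $n$), this union bound is still $o(1)$. Uniformity over $\cG_1(\theta;\cP)$ is inherited from the uniformity already present in Corollary~\ref{theorem:inverse estimation} and Theorem~\ref{theorem:gaussian multiplier bootstrap}, since all the $O_P$ and $o_P$ statements there are uniform over $\cU_{s,M}$.

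The main obstacle I anticipate is step (ii), specifically pinning down that the bootstrap quantile $c\{1-\alpha,\cC(E_{t-1},\cP)\}$ is genuinely $O_P(\sqrt{\log(d/h)})$ and not larger — this requires a maximal inequality for the Gaussian multiplier process that is uniform over all the (data-dependent, shrinking) critical edge sets $\cC\{E_{t-1}(z),\cP\}$ and over the grid of $z$ values, and controlling the effective covering number of the index set indexed by $z$ via the total-variation bound on the kernel. A secondary subtlety is that $\cC\{E_{t-1}(z_0),\cP\}$ is itself random (it depends on which edges were rejected in earlier iterations), so the rejection event in iteration $t$ must be handled conditionally on the history; I would address this by noting the argument only ever needs a high-probability lower bound on $|\hat{\bTheta}^{\mathrm{de}}_e(z_0)|$ for the fixed edge set $E^\star$ and a high-probability upper bound on the supremum defining the quantile over the full edge set $V \times V$ (which dominates any subset), both of which hold on a single good event that does not depend on the algorithm's trajectory.
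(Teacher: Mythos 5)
Your proof is correct in its analytical ingredients, and your ``single good event'' observation at the end is exactly the right move, but you over-engineer the combinatorics. The paper's proof never runs an induction over $t$ at all: it observes that if $E_0'$ is the witnessing subgraph in $\cG_1(\theta;\cP)$, one can pass to a \emph{minimal} $E_0'' \subseteq E_0'$ with $\cP(E_0'') = 1$ (delete edges until the property would break); by Definition~\ref{def:critical}, every edge of $E_0''$ is then critical for $\emptyset$, i.e. $E_0'' \subseteq \cC(\emptyset,\cP)$, so $\cP\{E_0' \cap \cC(\emptyset,\cP)\} = 1$ by monotonicity. Consequently the paper only needs to show that every edge of $E_0'$ is rejected in the \emph{first} iteration, using precisely your three ingredients (the deterministic lower bound $|\bTheta_e(z_0)|\ge\theta$, the uniform estimation error $\cO_P\{h^2+\sqrt{\log(d/h)/(nh)}\}$ from Corollary~\ref{theorem:inverse estimation} plus kernel concentration, and the quantile bound $c\{1-\alpha,\cC(\emptyset,\cP)\}\le c(1-\alpha,V\times V)\le C\sqrt{\log(d/h)}$), formally via a triangle-inequality split into events $\cE_2$ (signal large) and $\cE_3$ (error small). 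This completely avoids the two complications you flag: you no longer need to argue that each iteration absorbs a new edge of $E^\star$, so there is no trajectory-dependence issue and no union bound over $d^2$ iterations, and you don't need the ``filling-in path'' combinatorial lemma at each step. Your induction route does go through with the workaround you describe (all edges of $E^\star$ exceed the single threshold $c(1-\alpha,V\times V)$ on one good event, and that event is trajectory-independent), but once you've noticed this you should also notice the minimal-subgraph observation that collapses the whole argument into a one-shot rejection; the induction is a red herring. One small thing to tighten: in your step (ii) you need $\min_{e\in E^\star}|\bTheta_e(z_0)|\ge\theta$, but your combinatorial step (i) only certifies that \emph{some} edge of $E^\star\setminus E_{t-1}$ is critical --- so you do need the uniform bound over all of $E^\star$ (which you have, from $\text{Sig}$), not just a single distinguished edge, and it's worth stating that explicitly.
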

Thus, we have shown in Theorem~\ref{theorem:algorithm-power2} that the power of the proposed inferential method increases to one asymptotically.

\section{A $U$-Statistic Type Estimator}
\label{appendix:general}
The main manuscript primarily concerns the case when there are two subjects.  In this section, we present a $U$-statistic type inter-subject covariance to accommodate the case when there are more than two subjects.
First, we note that the same natural stimuli is given to all subjects. This motivates the following statistical model for each $Z=z$:
\[
\bX^{(\ell)} = \boldsymbol{S} + \bE^{(\ell)},~~\boldsymbol{S} | Z =z \sim N_d\{\mathbf{0},\bSigma(z)\},~~\bE^{(\ell)}| Z=z \sim N_d\{\mathbf{0},\bL^{(\ell)}(z)\},
\]
where $\bX^{(\ell)}$, $\bE^{(\ell)}$, and $\bL^{(\ell)}(z)$ are the data, subject specific effect, and the covariance matrix for the subject specific effect for the $\ell$th subject, respectively.
Suppose that there $N$ subjects.  Then, the following $U$-statistic type inter-subject covariance matrix can be constructed to estimate $\bSigma(z)$:
\begin{equation}
\label{Eq:estimator}
\hat{\mathbf{\Sigma}}_U (z) =  \frac{1}{{N\choose 2}}\sum_{1\le \ell<\ell'\le N}\left[ \frac{\sum_{i\in [n]}  K_h (Z_i-z) \bX_i^{(\ell)} \{\bX_i^{(\ell')}\}^T}{\sum_{i\in [n]} K_h (Z_i-z)}\right].
\end{equation}
We leave the theoretical analysis of the above estimator for future work.

\section{Preliminaries}
\label{appendix:notation}
\noindent 
In this section, we define some notation that will be used throughout the Appendix. 
Let $[n]$ denote the set $\{1,\ldots,n \}$ and let $[d]$ denote the set $\{1,\ldots, d\}$.  For two scalars $a,b$, we define $a\vee b = \max(a,b)$.  
We denote the $\ell_q$-norm for the vector $\mathbf{v}$ as $\|\mathbf{v}\|_q = (\sum_{j\in [d]} |v_j|^q)^{1/q}$ for $1\le q < \infty$.  In addition, we let $\mathrm{supp}(\mathbf{v}) = \{j: v_j \ne 0\}$, $\|\mathbf{v}\|_0 = |\mathrm{supp}(\mathbf{v})|$,  and $\|\mathbf{v}\|_{\infty} ={\max}_{j\in [d]}\; |v_j|$, where $|\mathrm{supp} (\mathbf{v})|$ is the number of non-zero elements in $\mathbf{v}$.  
For a matrix $\Ab\in \mathbb{R}^{n_1\times n_2}$, we denote the $j$th column as $\Ab_j$.
We denote the Frobenius norm of $\Ab$ by $\|\Ab\|_F^2 = \sum_{i\in [n_1]} \sum_{j\in [n_2]} A_{ij}^2$, the max norm $\|\Ab\|_{\max} = \max_{i\in [n_1], j\in [n_2]}  |A_{ij}|$, and the operator norm $\|\Ab\|_2  = \sup_{\|\vb\|_2 =1}\; \|\Ab \vb \|_2$.
Given a function $f$, let $\dot{f}$ and $\ddot{f}$ be the first and second-order derivatives, respectively.  For $1\le p <\infty$, let $\|f\|_p = (\int f^{p})^{1/p}$ denote the $L_p$ norm of $f$ and let $\|f\|_{\infty} = \sup_x |f(x)|$.
The total variation of $f$ is defined as $\|f\|_{\mathrm{TV}} = \int |\dot{f}|$.
We use the Landau symbol $a_n= \mathcal{O} (b_n)$ to indicate the existence of a constant $C>0$ such that $a_n \le C \cdot b_n$ for two sequences $a_n$ and $b_n$. We write $a_n = o(b_n)$ if ${\lim}_{n\rightarrow \infty}\; a_n/b_n \rightarrow 0$. 
Let $C, C_1, C_2, \ldots$ be generic constants whose values may vary from line to line.

Let 
\begin{equation}
\label{EqA:empirical process}
\PP_n (f)  = \frac{1}{n} \sum_{i\in [n]} f(X_i) \qquad \mathrm{and} \qquad \GG_n (f) = \sqrt{n}\cdot  [ \PP_n (f) - \EE\{f(X_i)\}    ].
\end{equation}
For notational convenience, for fixed $j,k\in [d]$, let 
\begin{equation}
\label{EqA:gjk}
g_{z,jk} (Z_i,X_{ij},Y_{ik}) = K_h (Z_i-z) X_{ij}Y_{ik}, \qquad  \qquad w_z (Z_i) = K_h (Z_i-z),
\end{equation}
\begin{equation}
\label{EqA:qzjk}
q_{z,jk} (Z_i,X_{ij},Y_{ik}) = g_{z,jk}(Z_i,X_{ij},Y_{ik}) - \EE\{g_{z,jk}(Z,X_{j},Y_{k})\},
\end{equation}
and let 
\begin{equation}
\label{EqA:kz}
k_z (Z_i) = w_z(Z_i) - \EE\{w_z(Z)\}.
\end{equation}
Recall from \ref{section:theory} that $K(\cdot)$ can be any symmetric kernel function that satisfies \eqref{prop:kernel} and that $K_h (Z_i -z) = K \{ (Z_i-z)/h\}/h$. 
By the definition of $\hat{\bSigma} (z)$ in  (\ref{Eq:estimator}), we have 
\begin{equation}
\label{EqA:estimator}
\hat{\bSigma}_{jk} (z) = \frac{\sum_{i\in [n]} g_{z,jk} (Z_i,X_{ij},Y_{ik})}{\sum_{i\in [n]} w_z (Z_i)} = \frac{\PP_n (g_{z,jk})}{ \PP_n (w_z)}.
\end{equation}

In addition,  let
\begin{equation}
\label{EqA:jzjk1}
J_{z,jk}^{(1)} (Z_i,\bX_{i},\bY_{i}) = \sqrt{h} \cdot \left\{\bTheta_j (z)\right\}^T  \cdot \left[ K_h (Z_i-z) \bX_{i}\bY_{i}^T -   \EE\left\{K_h (Z-z) \bX\bY^T  \right\}    \right] \cdot \bTheta_k(z),
\end{equation}
\begin{equation}
\begin{split}
\label{EqA:jzjk2}
J_{z,jk}^{(2)} (Z_i) &= \sqrt{h} \cdot \left\{\bTheta_j (z)\right\}^T  \cdot \left[ K_h (Z_i-z) -   \EE\left\{K_h (Z-z)  \right\}    \right] \cdot \bSigma (z)\cdot \bTheta_k(z),
\end{split}
\end{equation}
\begin{equation}
\label{EqA:jzjk}
\begin{split}
J_{z,jk} (Z_i,\bX_i,\bY_{i}) = J_{z,jk}^{(1)} (Z_i,\bX_{i},\bY_{i})  -J_{z,jk}^{(2)} (Z_i),
\end{split}
\end{equation}
and let
\begin{equation}
\label{EqA:Wzjk}
W_{z,jk} (Z_i,X_{ij},Y_{ik}) = \sqrt{h} \cdot \left\{K_h (Z_i-z)X_{ij}Y_{ik}- K_h(Z_i-z)\bSigma_{jk}(z)   \right\}.
\end{equation}

For two functions $f$ and $g$, we define its convolution as 
\begin{equation}
\label{EqA:convolution}
(f*g)(x) = \int f(x-z)g(z) dz.
\end{equation}
In our proofs, we will use the following property of the derivative of a convolution
\begin{equation}
\label{EqA:convolution derivative}
\frac{\partial}{\partial x}(f*g) =  \frac{\partial f}{\partial x}*g.
\end{equation}
Finally, our proofs use the following inequality 
\begin{equation}
\label{EqA:covering number dudley}
\int_0^{b_1} \sqrt{  \log (b_2/\epsilon)   }d\epsilon \le \sqrt{b_1} \cdot \sqrt{\int_0^{b_1} \log (b_2/\epsilon) d  \epsilon}= b_1 \cdot \sqrt{1+\log (b_2/b_1) },
\end{equation}
where the first inequality holds by an application of  Jensen's inequality.

\section{Proof of Results in \ref{subsection:estimation error}}
\label{appendix:estimation error}
In this section, we establish the uniform rate of convergence for $\hat{\bSigma} (z) $ and $\hat{\bTheta} (z)$ over $z\in [0,1]$. 
To prove Theorem~\ref{theorem:estimation error}, we first observe that
\begin{equation}
\label{appendix:estimation error1}
\underset{z\in [0,1]}{\sup} \left\|\hat{\bSigma}(z) -\bSigma (z)\right\|_{\max} \le  \underset{z\in [0,1]}{\sup} \; \underset{j,k\in [d]}{\max}\; \left| \hat{\bSigma}_{jk}(z) - \EE\left\{\hat{\bSigma}_{jk}(z)\right\} \right|   + \underset{z\in [0,1]}{\sup} \;\underset{j,k\in [d]}{\max} \; \left|\EE\left\{\hat{\bSigma}_{jk}(z)\right\}  - \bSigma_{jk}(z)\right|.
\end{equation}
The first term is known as the variance term and the second term is known as the bias term in the kernel smoothing literature (see, for instance, Chapter 2 of \citealp{pagan1999nonparametric}). 
Both the variance and bias terms involve evaluating the quantity $\EE\{\hat{\bSigma}_{jk}(z)\}$.
From~(\ref{EqA:estimator}), we see that $\hat{\bSigma}_{jk}(z)$ involves the 
quotient of two averages and it is not straightforward to evaluate its expectation.  The following lemma quantifies $\EE\{\hat{\bSigma}_{jk}(z)\}$ in terms of the expectations of its numerator and its denominator.

\begin{lemma}
\label{lemma:expected sigma}
Under the following conditions 
\begin{equation}
\label{lemma:expected sigma1}
\left| \frac{\GG_n (w_z)}{\sqrt{n} \cdot  \EE\left\{ \PP_n(w_z)\right\}}   \right| < 1 \qquad \mathrm{and} \qquad \EE \left\{ \PP_n (w_z)  \right\}\ne 0,
\end{equation}
we have
\begin{equation}
\label{lemma:expected sigma2}
\EE \left\{ \hat{\bSigma}_{jk}(z)   \right\}  = \frac{\EE  \left\{  \PP_n (g_{z,jk})  \right\}   }{\EE  \left\{  \PP_n (w_z)  \right\}  } + 
\frac{1}{n}\cO\left[\EE \Big\{ \GG_n (w_z) \cdot \GG_n (g_{z,jk})     \Big\}  + \EE \left\{     \GG_n^2 (g_{z,jk})    \right\}   \right].
\end{equation}
\end{lemma}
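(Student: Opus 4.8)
The plan is to expand the ratio $\hat{\bSigma}_{jk}(z)=\PP_n(g_{z,jk})/\PP_n(w_z)$ about the ratio of the expectations of its numerator and denominator, take expectations, and verify that the first-order terms vanish while the second-order remainder contributes only at order $1/n$. Concretely, I would write $\PP_n(g_{z,jk})=\EE\{\PP_n(g_{z,jk})\}+n^{-1/2}\GG_n(g_{z,jk})$ and $\PP_n(w_z)=\EE\{\PP_n(w_z)\}+n^{-1/2}\GG_n(w_z)$ as in \eqref{EqA:empirical process}, and set $t=\GG_n(w_z)/\{\sqrt{n}\,\EE\{\PP_n(w_z)\}\}$, so that $\PP_n(w_z)=\EE\{\PP_n(w_z)\}(1+t)$. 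The second condition in \eqref{lemma:expected sigma1} legitimises this factorisation, and the first condition $|t|<1$ both guarantees $\PP_n(w_z)>0$ (so $\hat{\bSigma}_{jk}(z)$ is well defined) and licenses the elementary identity $\tfrac{1}{1+t}=1-t+\tfrac{t^2}{1+t}$.

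Multiplying out, one obtains
\[
\hat{\bSigma}_{jk}(z)
=\frac{\EE\{\PP_n(g_{z,jk})\}}{\EE\{\PP_n(w_z)\}}
+\frac{\GG_n(g_{z,jk})}{\sqrt{n}\,\EE\{\PP_n(w_z)\}}
-\frac{\EE\{\PP_n(g_{z,jk})\}}{\EE\{\PP_n(w_z)\}}\,t
+r_n,
\]
where $r_n$ collects the two genuinely quadratic pieces $-\GG_n(g_{z,jk})\,t/\{\sqrt{n}\,\EE\{\PP_n(w_z)\}\}$ and $\{\PP_n(g_{z,jk})/\EE\{\PP_n(w_z)\}\}\cdot t^2/(1+t)$. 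Taking expectations, the two middle terms disappear because $\EE\{\GG_n(g_{z,jk})\}=\EE\{\GG_n(w_z)\}=0$, which produces the leading term $\EE\{\PP_n(g_{z,jk})\}/\EE\{\PP_n(w_z)\}$; it remains to bound $\EE\{r_n\}$. The first piece equals $-\GG_n(g_{z,jk})\GG_n(w_z)/\{n\,\EE\{\PP_n(w_z)\}^2\}$, already of the advertised form after dividing out the deterministic factor $\EE\{\PP_n(w_z)\}^2$, which by Assumption~\ref{ass:marginal} is bounded away from zero. For the second piece I would use $\tfrac{t^2}{1+t}=(\PP_n(w_z)-\EE\{\PP_n(w_z)\})^2/\{\EE\{\PP_n(w_z)\}\,\PP_n(w_z)\}$, so that this piece equals $\PP_n(g_{z,jk})\,\GG_n^2(w_z)/\{n\,\EE\{\PP_n(w_z)\}^2\,\PP_n(w_z)\}$; bounding $\PP_n(g_{z,jk})/\PP_n(w_z)=\hat{\bSigma}_{jk}(z)$ (a convex combination of the $X_{ij}Y_{ik}$, since the kernel weights are nonnegative) and noting that $\EE\{\GG_n^2(w_z)\}$ is of the same order as $\EE\{\GG_n^2(g_{z,jk})\}$ under Assumptions~\ref{ass:marginal}--\ref{ass:covariance}, together with a Cauchy--Schwarz step on the cross term, yields $\EE\{r_n\}=\tfrac1n\cO[\EE\{\GG_n(w_z)\GG_n(g_{z,jk})\}+\EE\{\GG_n^2(g_{z,jk})\}]$.

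The main obstacle is the random denominator $\PP_n(w_z)$ that persists inside $r_n$: the expansion is only valid, and the remainder only controllable, on an event where $\PP_n(w_z)$ is bounded away from zero, which is exactly what the hypotheses in \eqref{lemma:expected sigma1} are there to supply (used jointly with $\EE\{\PP_n(w_z)\}\gtrsim \underline{f}_Z>0$ from Assumption~\ref{ass:marginal}). The delicate bookkeeping is to carry the factor $1/\PP_n(w_z)$ through the expectation rather than settling for a high-probability statement, and to verify that the $\GG_n^2(w_z)$ contribution can be folded into the $\GG_n^2(g_{z,jk})$ term of the bound; everything else is routine algebra of the Neumann-type expansion above.
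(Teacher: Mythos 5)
Your proposal follows the same route as the paper: rewrite $\hat{\bSigma}_{jk}(z)$ as a ratio of a centred numerator over $\EE\{\PP_n(w_z)\}(1+t)$, expand $1/(1+t)$ to second order, take expectations, and let $\EE\{\GG_n(\cdot)\}=0$ kill the linear terms. The paper uses the looser $(1+x)^{-1}=1-x+\cO(x^2)$ while you carry the exact remainder $t^2/(1+t)$, but that is the only real difference in the decomposition; the algebra and the resulting leading term are identical.

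Two remarks on your treatment of the remainder. First, the ``delicate bookkeeping'' you flag is genuine and, in fact, is something the paper's own proof skates over: for $(1+x)^{-1}=1-x+\cO(x^2)$ to hold with a bounded implicit constant one needs $1+x$ bounded away from zero, which $|x|<1$ alone does not give. The paper implicitly relies on the later verification (in the proof of Theorem~\ref{theorem:estimation error}) that $|t|\le C\sqrt{\log(d/h)/(nh)}=o(1)$, so in effect the $\cO$ is controlled; you are right that the lemma as isolated under just \eqref{lemma:expected sigma1} is a bit lax, and your explicit $t^2/(1+t)$ version makes this visible. Second, your suggested bound on the residual via ``$\PP_n(g_{z,jk})/\PP_n(w_z)$ is a convex combination of the $X_{ij}Y_{ik}$'' is not a pointwise bound, since the $X_{ij}Y_{ik}$ are unbounded; that step would need an integrability argument rather than a sup bound. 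You also correctly notice that after expansion the natural remainder carries $\GG_n^2(w_z)$, not $\GG_n^2(g_{z,jk})$; the paper writes $\GG_n^2(g_{z,jk})$ in \eqref{lemma:expected sigma2}, which only matches up to order (both are $\cO(1/h)$ in expectation by Lemma~\ref{lemma:bias}), so ``folding'' them together is an order-level rather than exact statement. None of these concerns affects the conclusion as used downstream, and your plan would carry through under the same implicit restriction the paper uses.
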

We note that (\ref{lemma:expected sigma2}) only holds under the two conditions in (\ref{lemma:expected sigma1}).  In the proof of Theorem~\ref{theorem:estimation error}, we will show that the two conditions in (\ref{lemma:expected sigma1}) hold for $n$ sufficiently large. 
To obtain upper bounds for the bias and variance terms in (\ref{appendix:estimation error1}), we  use the following intermediate lemmas.

\begin{lemma}
\label{lemma:bias}
Assume that $h=o(1)$. Under Assumptions~\ref{ass:marginal}-\ref{ass:covariance}, we have
 \begin{equation}
 \label{lemma:bias1}
 \underset{z\in [0,1]}{\sup} \;\underset{j,k\in [d]}{\max}\; \Big|  \EE \{\PP_n (g_{z,jk})\}   - f_Z(z) \bSigma_{jk}(z)  \Big| = \cO (h^2),
 \end{equation}
 \begin{equation}
 \label{lemma:bias2}
  \underset{z\in [0,1]}{\sup} \;\Big|  \EE \{\PP_n (w_z)\}  - f_Z(z) \Big| = \cO (h^2),
 \end{equation}
  \begin{equation}
 \label{lemma:bias3}
  \underset{z\in [0,1]}{\sup} \;\underset{j,k\in [d]}{\max}\;\frac{1}{n} \left| \EE \Big\{    \GG_n (g_{z,jk})\cdot   \GG_n (w_z)  \Big\} \right|= \cO\left(\frac{1}{nh}\right),
 \end{equation}
 and 
  \begin{equation}
 \label{lemma:bias4}
  \underset{z\in [0,1]}{\sup} \; \frac{1}{n}\EE \Big\{ \GG_n^2 (w_z) \Big\}= \cO\left(\frac{1}{nh}\right).
 \end{equation}
  \end{lemma}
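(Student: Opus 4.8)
\textbf{Proof plan for Lemma~\ref{lemma:bias}.}

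The plan is to treat the four displays separately, since \eqref{lemma:bias1} and \eqref{lemma:bias2} are bias-type (second-moment expansion) statements while \eqref{lemma:bias3} and \eqref{lemma:bias4} are variance-type bounds. For \eqref{lemma:bias2}, I would start from $\EE\{\PP_n(w_z)\} = \EE\{K_h(Z-z)\} = \int K_h(u-z) f_Z(u)\,du = \int K(v) f_Z(z+hv)\,dv$ after the substitution $v = (u-z)/h$. A Taylor expansion of $f_Z$ to second order around $z$, using that $\int K = 1$, $\int v K(v)\,dv = 0$ (from \eqref{prop:kernel}), and $\int v^2 K(v)\,dv < \infty$, together with the uniform bound $\|\ddot f_Z\|_\infty \le \bar f_Z$ from Assumption~\ref{ass:marginal}, gives $|\EE\{\PP_n(w_z)\} - f_Z(z)| \le \tfrac12 h^2 \bar f_Z \int v^2 K(v)\,dv = \cO(h^2)$, uniformly in $z$ since the bound does not depend on $z$. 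The compact support of $K$ and $h = o(1)$ ensure all the integrals are over a fixed bounded region and the expansion is legitimate. For \eqref{lemma:bias1} the argument is essentially the same but with the integrand $f_Z(u)\,\EE[X_j Y_k \mid Z = u]$; the key point is to identify $\EE[X_j Y_k \mid Z = u] = \bSigma_{jk}(u)$ (from the inter-subject covariance identity established after model \eqref{eq:two subjects}), and then Taylor-expand the product $f_Z(u)\bSigma_{jk}(u)$ to second order, now invoking the uniform smoothness bound $M_\sigma$ on $\bSigma_{jk}, \dot\bSigma_{jk}, \ddot\bSigma_{jk}$ from Assumption~\ref{ass:covariance} in addition to the bounds on $f_Z$. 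The $\max_{j,k}$ causes no trouble because $M_\sigma$ and $\bar f_Z$ are uniform in $j,k$.

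For \eqref{lemma:bias4}, I would write $\tfrac1n \EE\{\GG_n^2(w_z)\} = \mathrm{Var}\{w_z(Z)\} \le \EE\{w_z(Z)^2\} = \EE\{K_h(Z-z)^2\} = h^{-1}\int K^2(v) f_Z(z+hv)\,dv \le h^{-1}\bar f_Z \int K^2(v)\,dv = \cO(1/h)$, where I used $\int K^2 < \infty$ from \eqref{prop:kernel}; dividing once more (the $1/n$ is already extracted in the statement, so actually $\tfrac1n\EE\{\GG_n^2(w_z)\}$ equals the variance over $n$) — more carefully, $\GG_n^2(w_z)$ has expectation $n\,\mathrm{Var}\{w_z(Z)\}$ by i.i.d.-ness, so $\tfrac1n\EE\{\GG_n^2(w_z)\} = \mathrm{Var}\{w_z(Z)\} = \cO(1/h)$, and the claimed $\cO(1/(nh))$ — wait, I should double check: the statement has $\tfrac1n\EE\{\GG_n^2(w_z)\}$ on the left and $\cO(1/(nh))$ on the right, but $\tfrac1n\EE\{\GG_n^2(w_z)\} = \mathrm{Var}\{w_z\} = \cO(1/h)$ without the extra $1/n$; I would reconcile this by noting the lemma as used downstream needs the bound with the $1/n$ already incorporated into $\GG_n$'s normalization, or that a further factor $1/n$ appears from how this term enters \eqref{lemma:expected sigma2} — in any case I would carefully track the normalization of $\GG_n$ from \eqref{EqA:empirical process} and present the computation of $\mathrm{Var}\{K_h(Z-z)\} = \cO(1/h)$ uniformly in $z$ as the substantive content. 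For \eqref{lemma:bias3}, by Cauchy--Schwarz $|\EE\{\GG_n(g_{z,jk})\GG_n(w_z)\}| \le \sqrt{\EE\{\GG_n^2(g_{z,jk})\}}\cdot\sqrt{\EE\{\GG_n^2(w_z)\}}$, so it suffices to bound $\tfrac1n\EE\{\GG_n^2(g_{z,jk})\} = \mathrm{Var}\{g_{z,jk}(Z,X_j,Y_k)\} \le \EE\{K_h(Z-z)^2 X_j^2 Y_k^2\}$; since $X$ and $Y$ are jointly Gaussian given $Z$ with covariance entries bounded (the stimulus covariance $\bSigma$ bounded by $M_\sigma$, and $\Lb_X, \Lb_Y$ positive definite, though boundedness of those would need to be assumed or absorbed into constants), $\EE[X_j^2 Y_k^2 \mid Z = u]$ is bounded uniformly in $u, j, k$, giving $\mathrm{Var}\{g_{z,jk}\} \le C h^{-1}\int K^2(v) f_Z(z+hv)\,dv = \cO(1/h)$ uniformly, and combining the two $\cO(1/h)$ bounds with the stray $1/n$ in the statement yields $\cO(1/(nh))$.

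The main obstacle I anticipate is \textbf{handling the fourth moments / the boundedness of $\EE[X_j^2 Y_k^2 \mid Z]$ uniformly in $(j,k)$ and over $z$} when bounding the variance of $g_{z,jk}$: one must control $\EE[X_{j}^2 Y_{k}^2 \mid Z=u]$, and although this is a Gaussian fourth moment expressible via Isserlis' theorem in terms of the entries of $\mathrm{Cov}\{(X_j, Y_k) \mid Z=u\}$, those entries involve $\bSigma_{jj}(u) + \Lambda_{X,jj}(u)$, $\bSigma_{kk}(u) + \Lambda_{Y,kk}(u)$, and $\bSigma_{jk}(u)$, so I need the diagonal entries of the nuisance covariances to be uniformly bounded — this is not literally in Assumptions~\ref{ass:marginal}--\ref{ass:covariance} but is a mild implicit regularity condition (positive definiteness plus the standardization used in the applications), which I would state explicitly or fold into the generic constant $C$. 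The rest is routine change-of-variables and Taylor expansion, made uniform in $z$ by the compact support of $K$ and in $(j,k)$ by the uniform bounds $M_\sigma$ and $\bar f_Z$.
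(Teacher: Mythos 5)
Your handling of \eqref{lemma:bias1} and \eqref{lemma:bias2} matches the paper's proof: change of variable, second-order Taylor expansion of $\bSigma_{jk}$ and $f_Z$, and the moment conditions \eqref{prop:kernel} plus the uniform bounds in Assumptions~\ref{ass:marginal}--\ref{ass:covariance} to get $\cO(h^2)$ uniformly in $z,j,k$.

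For \eqref{lemma:bias3} you take a genuinely different route. The paper expands the covariance $\frac1n\EE\{\GG_n(g_{z,jk})\GG_n(w_z)\} = \EE\{\PP_n(g_{z,jk})\PP_n(w_z)\} - \EE\{\PP_n(g_{z,jk})\}\EE\{\PP_n(w_z)\}$ directly, isolates the diagonal ($i=i'$) term $\frac1n\EE\{K_h^2(Z-z)\bSigma_{jk}(Z)\}$, and shows the off-diagonal terms cancel to within $\cO(1/n)$. This only touches the bounded second conditional moment $\EE(X_jY_k\mid Z)=\bSigma_{jk}(Z)$. Your Cauchy--Schwarz bound instead needs $\mathrm{Var}(g_{z,jk})\lesssim \EE\{K_h^2(Z-z)X_j^2Y_k^2\}$, which is exactly where the fourth-conditional-moment boundedness you flag enters. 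So your route works but is less economical in what it uses at this point; the paper's direct expansion sidesteps that concern entirely for this lemma (the fourth-moment bound is invoked later, in the variance lemma, not here).

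For \eqref{lemma:bias4} you have a normalization slip that you notice but do not resolve. The claim ``$\GG_n^2(w_z)$ has expectation $n\,\mathrm{Var}\{w_z(Z)\}$ by i.i.d.-ness'' is wrong: since $\GG_n(f)=n^{-1/2}\sum_{i\in[n]}\{f(X_i)-\EE f\}$ by \eqref{EqA:empirical process}, i.i.d.-ness gives $\EE\{\GG_n^2(f)\}=\mathrm{Var}\{f(X)\}$ with no factor of $n$. Hence $\frac1n\EE\{\GG_n^2(w_z)\}=\frac1n\mathrm{Var}\{w_z(Z)\}=\cO(1/(nh))$, exactly as the lemma states. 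There is no discrepancy to reconcile against downstream usage, and the tentative hand-waving about the factor $1/n$ ``appearing from how this term enters \eqref{lemma:expected sigma2}'' should be deleted; the $1/n$ is already present once the identity is written correctly. Your substantive computation $\mathrm{Var}\{K_h(Z-z)\}=\cO(1/h)$ uniformly in $z$ is right, and that is all that is needed once the prefactor is tracked correctly.
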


\begin{lemma}
\label{lemma:variance}
Assume that $h=o(1)$ and $\log^2 (d/h)/(nh) = o(1)$. Under Assumptions~\ref{ass:marginal}-\ref{ass:covariance}, there exists a universal constant $C>0$ such that 
 \begin{equation}
 \label{lemma:variance1}
\underset{z\in [0,1]}{\sup}\; \underset{j,k\in [d]}{\max} \;  \Big| \GG_n(w_z) \vee \GG_n (g_{z,jk}) \Big|  \le C \cdot  \sqrt{\frac{\log (d/h)}{h}},
 \end{equation}
 with probability at least $1-3/d$.
\end{lemma}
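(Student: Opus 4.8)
\textbf{Proof proposal for Lemma~\ref{lemma:variance}.}

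The plan is to control the two suprema by a standard empirical-process argument: symmetrization, a bound on the function-class envelope and its variance, and Dudley's entropy integral applied to the maximal inequality of Talagrand/van der Vaart--Wellner, followed by a union bound over the $O(d^2)$ choices of $(j,k)$. First I would reduce to a single, fixed pair $(j,k)$ and a single function class. Observe that for fixed $z$, $\GG_n(w_z) = \sqrt{n}\{\PP_n(w_z) - \EE[w_z]\}$ and similarly for $g_{z,jk}$, so the quantities $\sup_{z\in[0,1]}|\GG_n(w_z)|$ and $\sup_{z\in[0,1]}|\GG_n(g_{z,jk})|$ are suprema of empirical processes indexed by the classes
\[
\cW = \{ z\mapsto K_h(Z_i-z) : z\in[0,1]\}, \qquad
\cG_{jk} = \{ z\mapsto K_h(Z_i-z)X_{ij}Y_{ik} : z\in[0,1]\}.
\]
Because $K$ is of bounded variation with compact support (Assumption on the kernel, \eqref{prop:kernel} and $\|K\|_{\mathrm{TV}}<\infty$), the class $\{u\mapsto K_h(u-z):z\in[0,1]\}$ is a VC-subgraph class of a fixed index independent of $h$, and consequently has polynomial uniform covering numbers $N(\epsilon\|F\|_{Q,2}, \cW, L_2(Q)) \le (A/\epsilon)^{v}$ for universal constants $A, v$. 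The envelope of $\cW$ is $F_W = \|K\|_\infty/h$, and since each $w_z$ is supported on an interval of width $2h$, we have $\EE[w_z^2] \le \|K\|_\infty^2 \bar f_Z \cdot (2h)/h^2 = O(1/h)$, so the ``weak variance'' of the class is $\sigma_W^2 = O(1/h)$.

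The key step is then the maximal inequality: for a VC-type class with envelope $F$, weak variance $\sigma^2 \le \|F\|_\infty^2$, and $n\sigma^2 \gtrsim \|F\|_\infty^2 \log(1/\text{(ratio)})$, one has (van der Vaart--Wellner, or Chernozhukov--Chetverikov--Kato, ``Gaussian approximation…'', Corollary 2.2)
\[
\EE\Big[\sup_{z}|\GG_n(w_z)|\Big] \;\lesssim\; \sigma_W\sqrt{\log(\|F_W\|_\infty/\sigma_W)} + \frac{\|F_W\|_\infty}{\sqrt n}\log(\|F_W\|_\infty/\sigma_W).
\]
Plugging in $\sigma_W^2 = O(1/h)$, $\|F_W\|_\infty = O(1/h)$ gives $\EE[\sup_z|\GG_n(w_z)|] = O(\sqrt{\log(1/h)/h})$ once $\log^2(1/h)/(nh)=o(1)$, which is implied by the hypothesis $\log^2(d/h)/(nh)=o(1)$. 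For $\cG_{jk}$ the envelope is $F_{jk} = (\|K\|_\infty/h)\cdot|X_{ij}Y_{ik}|$, which is not bounded; here I would first truncate the sub-exponential variable $X_{ij}Y_{ik}$ at level $\asymp \log(d/h)$ (its tails are sub-exponential because $X_{ij},Y_{ik}$ are Gaussian, as $\bX(z)=\bS(z)+\bE_X(z)$ is Gaussian with uniformly bounded variance by Assumption~\ref{ass:covariance}), absorb the negligible truncation remainder, and then apply the same maximal inequality with $\sigma_{jk}^2 = O(\log(d/h)/h)$ (the extra $\log$ coming from truncation, or alternatively from a direct moment bound) and effective envelope $O(\log(d/h)/h)$, yielding $\EE[\sup_z|\GG_n(g_{z,jk})|] = O(\sqrt{\log(d/h)/h})$ under the same scaling. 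Finally I would convert expectation bounds to high-probability bounds via Talagrand's concentration inequality for the supremum of an empirical process (bounded differences after truncation), then take a union bound over the $d^2$ pairs $(j,k)$ together with the single class $\cW$; choosing the concentration deviation of order $\sqrt{\log(d/h)/h}$ and absorbing constants makes the total failure probability at most $3/d$.

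The main obstacle I anticipate is the unboundedness of the envelope for $\cG_{jk}$: the naive envelope $\|K\|_\infty|X_{ij}Y_{ik}|/h$ has only sub-exponential tails, so the clean VC maximal inequality does not apply directly and one must either truncate carefully (tracking that the truncation level $\log(d/h)$ produces exactly the $\sqrt{\log(d/h)/h}$ rate and no worse) or invoke a version of the maximal inequality for classes with $L_p$-envelopes. A secondary technical point is verifying the VC-subgraph property uniformly in the bandwidth $h$ --- this follows because dilation and translation of a single fixed BV function do not inflate the VC index --- and making sure the covering-number constants $A,v$ are genuinely independent of $h, n, d$ so that the union bound over $d^2$ pairs only contributes the stated $\log d$ factor inside the square root and a $1/d$-type probability bound.
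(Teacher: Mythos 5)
Your plan is structurally the same as the paper's proof: bound the sup-norm envelope, the weak variance, and the uniform covering number of the relevant function classes, feed these into a VC-type maximal inequality to control expectations, and then upgrade to a high-probability bound via Talagrand's concentration inequality. The paper's Lemma~\ref{lemma:variance} is proved by splitting into the two sub-lemmas~\ref{lemma:empirical process w} and~\ref{lemma:empirical process g}, each of which executes exactly this program using the concentration bound of Lemma~\ref{lemma:ep1} and the expectation bound of Lemma~\ref{lemma:ep2}.

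Where you deviate, the deviations are mostly harmless but deserve comment. First, for the covering numbers you invoke a VC-subgraph property of translated/dilated BV kernels, whereas the paper cites the Gin\'e--Nickl bound (Lemma~\ref{lemma:coveringnumber3}) for BV functions, plus a Lipschitz argument (Lemma~\ref{lemma:coveringnumber1}) for the $z$-dependent centering term $\EE\{g_{z,jk}(Z,X_j,Y_k)\}$. You omit the latter: the classes you would actually need to cover are the centered classes $\cQ=\{g_{z,jk}-\EE g_{z,jk}\}$ and $\cK=\{w_z-\EE w_z\}$, and the expectations $\EE g_{z,jk}$, $\EE w_z$ are functions of $z$. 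The paper handles this by showing these are Lipschitz in $z$ with constant $\cO(1/h)$ and using Lemma~\ref{lemma:coveringnumber2} to combine covering numbers. You should say something here; it is not difficult but it is a real step.

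Second, for the unbounded envelope you propose truncating $X_{ij}Y_{ik}$ at level $\log(d/h)$. The paper instead conditions on the event $\cA=\{\max_{i,j}\max(|X_{ij}|,|Y_{ij}|)\le M_X\sqrt{\log d}\}$, which has probability $\ge 1-1/d$ and yields the envelope $\eta\lesssim\log d/h$. Functionally these are the same idea, but your remark that the \emph{variance} picks up an extra $\log$ factor from the truncation (``$\sigma_{jk}^2=O(\log(d/h)/h)$'') is not right and not needed: the paper shows directly from Gaussianity that $\EE(X_j^2Y_k^2\mid Z)\le\kappa<\infty$, so $\Var(q_{z,jk})\lesssim 1/h$ with no extra logarithm. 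The $\log d$ appears only in the envelope and in the covering number (via $d^2$ and the $\log d$ in the envelope), not in the weak variance. This matters for tracking the scaling conditions precisely.

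Third, rather than applying Talagrand $d^2+1$ times and union-bounding, the paper treats the class $\cQ=\{q_{z,jk}:z\in[0,1],\,j,k\in[d]\}$ as a single class whose covering number carries a $d^2$ prefactor. Either bookkeeping leads to the stated rate $\sqrt{\log(d/h)/h}$ under the hypothesis $\log^2(d/h)/(nh)=o(1)$, but the paper's is cleaner because the concentration step only needs one application of Lemma~\ref{lemma:ep1} with $t=\sqrt{\log d/n}$, and the conditioning event $\cA$ and the two sub-lemmas together give failure probability $3/d$ by construction. Your per-pair route would require $t\asymp\sqrt{\log d/n}$ per pair to give $d^{-3}$ per pair, which works but makes the dependence on the scaling condition slightly less transparent.

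In summary: the approach is correct and matches the paper's blueprint; the two gaps worth tightening are (i) the covering number of the \emph{centered} class, which needs the Lipschitz argument for $z\mapsto\EE g_{z,jk}$, and (ii) the incorrect claim that truncation inflates the weak variance --- it does not, and the cleaner route is the Gaussian fourth-moment bound.
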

The proofs of Lemmas~\ref{lemma:expected sigma}-\ref{lemma:variance} are deferred to Sections~\ref{lemma:expected sigma}-\ref{lemma:variance}, respectively.
We  now provide a proof of Theorem~\ref{theorem:estimation error}.

\subsection{Proof of Theorem~\ref{theorem:estimation error}}
\label{appendix:theorem:estimation error}
Recall from (\ref{appendix:estimation error1}) that 
\begin{equation*}
\begin{split}
\underset{z\in [0,1]}{\sup} \left\|\hat{\bSigma}(z) -\bSigma (z)\right\|_{\max} &\le  \underset{z\in [0,1]}{\sup} \; \underset{j,k\in [d]}{\max}\; \left| \hat{\bSigma}_{jk}(z) - \EE\left\{\hat{\bSigma}_{jk}(z)\right\} \right|   + \underset{z\in [0,1]}{\sup} \;\underset{j,k\in [d]}{\max} \; \left|\EE\left\{\hat{\bSigma}_{jk}(z)\right\}  - \bSigma_{jk}(z)\right|\\
&= I_1 + I_2.
\end{split}
\end{equation*}
It suffices to obtain upper bounds for $I_1$ and $I_2$. 

We first verify that the two conditions in (\ref{lemma:expected sigma1}) hold. By Lemma~\ref{lemma:bias}, we have 
\[
\Big| \EE \{\PP_n (w_z) \} \Big|= \cO(h^2) + f_Z(z)\ge \underline{f}_Z (z)  > 0,
\]
where the last inequality follows from Assumption~\ref{ass:marginal}.  
Moreover, 
\begin{equation*}
\begin{split}
\left| \frac{\GG_n (w_z)}{\sqrt{n} \cdot  \EE\left\{ \PP_n(w_z)\right\}}   \right|   
&\le C\cdot   \frac{1}{\sqrt{n}} |\GG_n (w_z)| \cdot \frac{1}{f_Z(z)+\cO(h^2)}\\
&\le C_1 \cdot \sqrt{\frac{\log (d/h) }{nh}} \cdot \frac{1}{f_Z(z)+\cO(h^2)}\\
&< 1,
\end{split}
\end{equation*}
for sufficiently large $n$, where the first inequality is obtained by an application of Lemma~{\ref{lemma:bias}}, the second inequality is obtained by an application of Lemma~\ref{lemma:variance}, and the last inequality is obtained by the scaling assumptions $h= o(1)$ and $\log(d/h)/(nh)=o(1)$.\\

\textbf{Upper bound for $I_1$:} By (\ref{proof:lemma:expected sigma2}) in the proof of Lemma~\ref{lemma:expected sigma}, we have 
\[
\hat{\bSigma}_{jk}(z)=  \frac{  \GG_n (g_{z,jk}) }{\sqrt{n}    \EE \left\{   \PP_n (w_z)  \right\}    }+\frac{ \EE \{\PP_n (g_{z,jk})\} }{    \EE \left\{   \PP_n (w_z)  \right\}    } - 
\frac{  \GG_n (w_z)    \EE\{\PP_n (g_{z,jk})\}   }{\sqrt{n}   \EE^2 \{\PP_n (w_z)\}  }+ \frac{1}{n} \cO \left[   \Big\{ \GG_n (w_z)  \GG_n (g_{j,zk})    \Big\}  + \GG_n^2 (g_{z,jk})     \right].
\]
Thus, by Lemma~\ref{lemma:expected sigma}, we have 
\begin{equation}
\label{Eq:upper bound for I1}
\begin{split}
I_1 &=  \underset{z\in [0,1]}{\sup} \;\underset{j,k\in [d]}{\max}\; \left| 
\underbrace{\frac{  \GG_n (g_{z,jk}) }{\sqrt{n}\cdot    \EE \left\{   \PP_n (w_z)  \right\}    }}_{I_{11}} - 
\underbrace{\frac{  \GG_n (w_z)  \cdot  \EE\{\PP_n (g_{z,jk})\}   }{\sqrt{n} \cdot  \EE^2 \{\PP_n (w_z)\} }}_{I_{12}}
+  I_{13}\right|\\ 
&\le  \underset{z\in [0,1]}{\sup} \;\underset{j,k\in [d]}{\max}\;   \left\{ |I_{11}|+|I_{12}|+|I_{13}|\right\},
\end{split}
\end{equation}
where $I_{13} =  \cO [   \{ \GG_n (w_z)  \GG_n (g_{j,zk})    \}  + \GG_n^2 (g_{z,jk})    + \EE \{ \GG_n (w_z)\cdot  \GG_n (g_{j,zk})\}  +\EE \{\GG_n^2 (g_{z,jk})\}      ]/n$.

We now provide upper bounds for $I_{11}$, $I_{12}$, and $I_{13}$.
By an application of Lemmas~\ref{lemma:bias} and \ref{lemma:variance}, we obtain
\begin{equation}
\label{Eq:upper bound for I11}
 \underset{z\in [0,1]}{\sup} \;\underset{j,k\in [d]}{\max}\; |I_{11}| \le 
n^{-1/2} \cdot  \underset{z\in [0,1]}{\sup} \;\underset{j,k\in [d]}{\max}\; \left|\frac{\GG_n(g_{z,jk})}{f_Z(z) + \cO(h^2)} \right| \le C\cdot \sqrt{\frac{\log(d/h)}{nh}}.
\end{equation}
Similarly, we have 
\begin{equation}
\label{Eq:upper bound for I12}
 \underset{z\in [0,1]}{\sup} \;\underset{j,k\in [d]}{\max}\; |I_{12}| \le 
n^{-1/2} \cdot  \underset{z\in [0,1]}{\sup} \;\underset{j,k\in [d]}{\max}\; \left|\frac{\GG_n(g_{z,jk})   \{f_Z(z)\bSigma_{jk}(z) + \cO(h^2)\}  }{\{f_Z(z) + \cO(h^2)\}^2} \right| \le C\cdot \sqrt{\frac{\log(d/h)}{nh}}.
\end{equation}
For $I_{13}$, we have 
\begin{equation}
\begin{split}
\label{Eq:upper bound for I13}
 \underset{z\in [0,1]}{\sup} \;\underset{j,k\in [d]}{\max}\; |I_{13}| &\le   \underset{z\in [0,1]}{\sup} \;\underset{j,k\in [d]}{\max}\;  \left| \frac{1}{n} \cO \left[   \Big\{ \GG_n (w_z)\cdot  \GG_n (g_{z,jk})    \Big\}   + \GG_n^2 (g_{z,jk})   \right]\right|  + \cO \left( \frac{1}{nh}\right)\\
 &\le C\cdot  {\frac{\log (d/h)}{nh}} + \cO \left(\frac{1}{nh}\right)\\
 &\le C \cdot {\frac{\log (d/h)}{nh}},
 \end{split}
 \end{equation}
where the first and second inequalities follow from Lemmas~\ref{lemma:bias} and~\ref{lemma:variance}, respectively.
Combining (\ref{Eq:upper bound for I11}), (\ref{Eq:upper bound for I12}), and (\ref{Eq:upper bound for I13}), we have 
\begin{equation}
\label{Eq:upper bound for I1combine}
I_1 \le C\cdot \sqrt{\frac{\log (d/h)}{nh}},
\end{equation}
with probability at least $1-3/d$.\\

\textbf{Upper bound for $I_2$:} By Lemmas~\ref{lemma:expected sigma} and~\ref{lemma:bias}, we have 
\begin{equation}
\label{Eq:upper bound for I2}
\begin{split}
I_2 &=   \underset{z\in [0,1]}{\sup} \;\underset{j,k\in [d]}{\max}\; \left| 
\frac{\EE  \left\{  \PP_n (g_{z,jk})  \right\}   }{\EE  \left\{  \PP_n (w_z)  \right\}  } - \bSigma_{jk}(z) +\frac{1}{n}\cO\left[\EE \Big\{ \GG_n (w_z) \cdot \GG_n (g_{z,jk})     \Big\}  + \EE \left\{     \GG_n^2 (g_{z,jk})    \right\}   \right]
\right|\\
&\le \underset{z\in [0,1]}{\sup} \;\underset{j,k\in [d]}{\max}\;  \left| 
\frac{f_Z (z) \bSigma_{jk}(z) + \cO(h^2)  }{f_Z(z) + \cO(h^2)  } - \bSigma_{jk}(z) +\frac{1}{n}\cO\left[\EE \Big\{ \GG_n (w_z) \cdot \GG_n (g_{z,jk})     \Big\}  + \EE \left\{     \GG_n^2 (g_{z,jk})    \right\}   \right]
\right|\\
&=  \underset{z\in [0,1]}{\sup} \;\underset{j,k\in [d]}{\max}\;  \left| 
\frac{f_Z (z) \bSigma_{jk}(z) + \cO(h^2)  }{f_Z(z) + \cO(h^2)  } - \bSigma_{jk}(z) +\cO \left( \frac{1}{nh}\right)
\right|\\
&=  \underset{z\in [0,1]}{\sup} \;\underset{j,k\in [d]}{\max}\;  \left| 
\frac{\cO(h^2) \bSigma_{jk}(z)   }{f_Z(z) + \cO(h^2)  } +\cO \left( \frac{1}{nh}\right)
\right|\\
&\le C\cdot  \left( h^2 + \frac{1}{nh}\right),
\end{split}
\end{equation}
where the first inequality follows from (\ref{lemma:bias1}) and (\ref{lemma:bias2}), the second equality follows from (\ref{lemma:bias3}) and (\ref{lemma:bias4}), and the last inequality follows from the assumption that $h=o(1)$.

Combining the upper bounds (\ref{Eq:upper bound for I1combine}) and (\ref{Eq:upper bound for I2}), we obtain 
\[
\underset{z\in [0,1]}{\sup} \left\|\hat{\bSigma}(z) -\bSigma (z)\right\|_{\max}  \le C \cdot \left\{  h^2 + \sqrt{\frac{\log (d/h)}{nh}}   \right\}
\]
with probability at least $1-3/d$.

\section{Proof of Technical Lemmas in Appendix~\ref{appendix:estimation error}}
\label{appendix:technical1}
In this section, we provide the proofs of Lemmas~\ref{lemma:expected sigma}-\ref{lemma:variance}.

\subsection{Proof of Lemma~\ref{lemma:expected sigma}}
\label{proof:lemma:expected sigma}
The proof of the lemma uses the following fact
\begin{equation}
\label{proof:lemma:expected sigma1}
(1+x)^{-1} = 1-x + \cO(x^2) \qquad \mathrm{for \; any\;} |x|<1.
\end{equation}
From (\ref{EqA:estimator}), we have
\begin{equation*}
\begin{split}
\hat{\bSigma}_{jk} (z)  &= \frac{\PP_n (g_{z,jk})}{ \PP_n (w_z)}\\
&= \frac{ \PP_n (g_{z,jk}) -\EE \left\{   \PP_n (g_{z,jk}) \right\} +\EE \left\{   \PP_n (g_{z,jk})  \right\}}{\EE \left\{   \PP_n (w_z)  \right\}}   \cdot \left[ \frac{\EE \left\{   \PP_n (w_z)  \right\}}{\PP_n (w_z)}  \right]\\
&= \frac{ n^{-1/2}\cdot \GG_n (g_{z,jk})  +\EE \left\{\PP_n (g_{z,jk})  \right\}}{\EE \left\{   \PP_n (w_z)  \right\}}   \cdot \left[ 1+ \frac{\PP_n (w_z)- \EE \left[ \PP_n (w_z)\right]}{\EE \left[   \PP_n (w_z)  \right]}  \right]^{-1}\\
&= \frac{ n^{-1/2}\cdot \GG_n (g_{z,jk})  +\EE \left[   \PP_n (g_{z,jk})  \right]}{\EE \left\{   \PP_n (w_z)  \right\}}   \cdot \left[ 1+ \frac{\GG_n (w_z)}{\sqrt{n} \cdot \EE \left\{   \PP_n (w_z)  \right\}}  \right]^{-1}.
\end{split}
\end{equation*}
Under the conditions (\ref{lemma:expected sigma1}) and by applying~(\ref{proof:lemma:expected sigma1}), we have 
\begin{equation}
\label{proof:lemma:expected sigma2}
\begin{split}
\hat{\bSigma}_{jk} (z)  &= \frac{ n^{-1/2}\cdot \GG_n (g_{z,jk})  +\EE \left\{   \PP_n (g_{z,jk})  \right\}}{\EE \left\{   \PP_n (w_z)  \right\}}   \cdot \left( 1- \frac{\GG_n (w_z)}{\sqrt{n} \cdot \EE \left\{   \PP_n (w_z)  \right\}} + \cO \left[  \frac{\GG_n^2 (w_z)}{n \cdot \EE^2 \left\{   \PP_n (w_z)  \right\}}\right]  \right)\\
&=  \frac{  \GG_n (g_{z,jk}) }{\sqrt{n}    \EE \left\{   \PP_n (w_z)  \right\}    }+\frac{ \EE \{\PP_n (g_{z,jk})\} }{    \EE \left\{   \PP_n (w_z)  \right\}    } - 
\frac{  \GG_n (w_z)    \EE\{\PP_n (g_{z,jk})\}   }{\sqrt{n}   \EE^2 \{\PP_n (w_z)\}  }+ \frac{1}{n} \cO \left[   \Big\{ \GG_n (w_z)  \GG_n (g_{z,jk})    \Big\}   + \GG_n^2 (g_{z,jk})     \right].
\end{split}
\end{equation}

Note that $\EE\{\GG_n (f)\} = 0 $ by the definition of $\GG_n (f)$ in (\ref{EqA:empirical process}).
Taking expectation on both sides of~(\ref{proof:lemma:expected sigma2}), we obtain
\[
\EE \left\{ \hat{\bSigma}_{jk}(z)   \right\}  = \frac{\EE  \left\{  \PP_n (g_{z,jk})  \right\}   }{\EE  \left\{  \PP_n (w_z)  \right\}  } + 
\frac{1}{n}\cO\left[\EE \Big\{ \GG_n (w_z) \cdot \GG_n (g_{z,jk})     \Big\}  + \EE \left\{     \GG_n^2 (g_{z,jk})    \right\}   \right],
\]
as desired.

\subsection{Proof of Lemma~\ref{lemma:bias}}
\label{proof:lemma:bias}
To prove Lemma~\ref{lemma:bias}, we write the expectation as an integral and apply Taylor expansion to the density function and the covariance function.  We will show that the higher-order terms of the Taylor expansion can be bounded by $\cO(h^2)$.
 We start by proving (\ref{lemma:bias1}).

\textbf{Proof of (\ref{lemma:bias1}):}
Recall from (\ref{EqA:gjk}) the definition of $g_{z,jk}(Z_i,X_{ij},Y_{ik}) = K_h (Z_i-z) X_{ij}Y_{ik}$. Thus, we have
\begin{equation}
\label{proof:lemma:bias1}
\begin{split}
\EE \{\PP_n (g_{z,jk})\} &= \EE \left\{  \frac{1}{h} K\left( \frac{Z-z}{h}\right) X_j Y_k  \right\}\\
&=\EE \left\{  \frac{1}{h} K\left( \frac{Z-z}{h}\right) \EE(X_j Y_k\mid Z)  \right\}\\
&=\EE \left\{  \frac{1}{h} K\left( \frac{Z-z}{h}\right) \EE(S_j S_k\mid Z)  \right\}\\
&=\EE \left\{  \frac{1}{h} K\left( \frac{Z-z}{h}\right) \bSigma_{jk}(Z) \right\}\\
&= \int  \frac{1}{h} K\left( \frac{Z-z}{h}\right) \bSigma_{jk}(Z)f_Z (Z) dZ\\
&= \int   K(u) \bSigma_{jk}(uh+z)f_Z (uh+z) du,
\end{split}
\end{equation}
where the third equality hold using the fact that the subject-specific effects are independent between two subjects, and the last equality holds by a change of variable, $u = (Z-z)/h$.  Applying Taylor expansions to $\bSigma_{jk}(uh+z)$ and $f_Z(uh+z)$, we have 
 \begin{equation}
 \label{proof:lemma:bias2}
\bSigma_{jk} (u+zh) =\bSigma_{jk} (z)+uh \cdot \dot{\bSigma}_{jk} (z)   + u^2h^2 \cdot \ddot{\bSigma}_{jk}(z')
\end{equation} 
and 
 \begin{equation}
 \label{proof:lemma:bias3}
f_Z (u+zh) =f_Z(z)+uh \cdot \dot{f}_Z (z)   + u^2h^2 \cdot \ddot{f}_Z(z''),
\end{equation} 
 where $z'$ and $z''$ are between $z$ and $uh+z$.
Substituting (\ref{proof:lemma:bias2}) and (\ref{proof:lemma:bias3}) into the last expression of (\ref{proof:lemma:bias1}), we have 
  \begin{equation}
 \label{proof:lemma:bias4}
  \begin{split}
 \int   K(u) \left\{\bSigma_{jk} (z)+uh \cdot \dot{\bSigma}_{jk} (z)   + u^2h^2 \cdot \ddot{\bSigma}_{jk}(z')\right\} \cdot \left\{f_Z(z)+uh \cdot \dot{f}_Z (z)   + u^2h^2 \cdot \ddot{f}_Z(z'')\right\} du.
 \end{split}
 \end{equation} 
 By \eqref{prop:kernel}, we have $\int u K(u)du =0$ and $\int u^l K(u) du < \infty$ for $l=1,2,3,4$.
By Assumptions~\ref{ass:marginal}  and~\ref{ass:covariance}, we have
\begin{equation}
\label{proof:lemma:bias5}
\begin{split}
&h^2 \int u^2 K(u) \ddot{\bSigma}_{jk} (z') f_Z (z) du \le h^2 C M_{\sigma} \bar{f}_Z = \cO(h^2),\\
&h^2 \int u^2 K(u) \dot{\bSigma}_{jk} (z) \dot{f}_Z (z) du \le h^2 C M_{\sigma} \bar{f}_Z = \cO(h^2),\\
&h^2 \int u^2 K(u) \bSigma_{jk} (z) \ddot{f}_Z (z'') du \le h^2 C M_{\sigma} \bar{f}_Z = \cO(h^2).
\end{split}
\end{equation}
Substituting (\ref{proof:lemma:bias5}) into (\ref{proof:lemma:bias4}) and 
 bounding the other higher-order terms by $\cO(h^2)$, we obtain 
\[
\EE\{\PP_n (g_{z,jk})\} = \bSigma_{jk} (z) f_Z(z) + \cO(h^2),
\]
for all $z\in [0,1]$ and $j,k\in [d]$.  This implies that
\[
\underset{z\in [0,1]}{\sup} \; \underset{j,k\in[d]}{\max}\; | \EE\{\PP_n (g_{z,jk})\} - \bSigma_{jk} (z) f_Z(z) | = \cO(h^2).
\]
The proof of (\ref{lemma:bias2}) follows from the  same set of argument.\\

\textbf{Proof of (\ref{lemma:bias3}):} Recall from (\ref{EqA:gjk}) the definition of $w_z (Z_i)= K_h (Z_i-z)$.  Thus, we have
\begin{equation}
\label{proof:lemma:bias6}
\begin{split}
&\frac{1}{n} \EE \Big\{  \GG_n (g_{z,jk})\cdot  \GG_n(w_z)  \Big\}\\
&= \EE \Big\{ \PP_n (g_{z,jk})\cdot  \PP_n(w_z) \Big\} - \EE \{\PP_n (g_{z,jk})\}\cdot \EE \{\PP_n (w_z)\}\\
&= \EE \left[\left\{\frac{1}{n}\sum_{i\in [n]}K_h (Z_i-z) X_{ij}Y_{ik}  \right\}    \cdot \left\{ \frac{1}{n}\sum_{i\in [n]} K_h(Z_i-z) \right\} \right] - \EE \{\PP_n (g_{z,jk})\}\cdot \EE \{\PP_n (w_z)\}\\
&= \frac{1}{n} \EE \left\{  K^2_h(Z-z)  S_{j}S_k\right\} + \frac{1}{n^2} 
\EE\left\{   \sum_{i\in [n] }\sum_{i' \ne i} K_h (Z_i-z) K_h(Z_{i'}-z) X_{ij}Y_{ik}  \right\} - \EE \{\PP_n (g_{z,jk})\}\cdot \EE \{\PP_n (w_z)\}\\
&= \frac{1}{n} \EE \left\{  K^2_h(Z-z)  \bSigma_{jk}(Z) \right\} + \frac{n-1}{n} 
\left[ \EE\left\{ K_h(Z-z)\right\}   \cdot \EE \left\{ K_h (Z-z) \bSigma_{jk}(Z)   \right\}\right] -\EE \{\PP_n (g_{z,jk})\} \EE \{\PP_n (w_z)\}\\
&= \underbrace{\frac{1}{n} \EE \left\{  K^2_h(Z-z)  \bSigma_{jk}(Z) \right\}}_{I_1}  -\underbrace{\frac{1}{n}\EE \{\PP_n (g_{z,jk})\} \EE \{\PP_n (w_z)\}}_{I_2},
\end{split}
\end{equation}
where the second to the last equality follows from the fact that $Z_i$ and $Z_{i'}$ are independent. 

We now obtain an upper bound for $I_1$. By \eqref{prop:kernel} and Assumptions~\ref{ass:marginal}-\ref{ass:covariance}, we have 
\begin{equation}
\label{proof:lemma:bias7}
I_1 = \frac{1}{nh}  \int \frac{1}{h} K^2\left(\frac{Z-z}{h}\right) \bSigma_{jk} (Z) f_Z(Z) dZ 
\le \frac{1}{nh} \cdot M_{\sigma} \cdot \bar{f}_Z \int \frac{1}{h} K^2\left(\frac{Z-z}{h}\right)dZ
= \cO \left( \frac{1}{nh}\right),
\end{equation}
where the last equality holds by a change of variable. 
Moreover, by (\ref{lemma:bias1}) and (\ref{lemma:bias2}), we have 
\begin{equation}
\label{proof:lemma:bias8}
I_2 = \frac{1}{n} \left\{f_Z(z)\bSigma_{jk}(z) + \cO(h^2)\right\} \cdot \{f_Z(z)+\cO(h^2)\} = \cO \left( \frac{1}{n} \right).
\end{equation}
Substituting  (\ref{proof:lemma:bias7}) and  (\ref{proof:lemma:bias8}) into  (\ref{proof:lemma:bias6}), and taking the supreme over $z\in [0,1]$ and $j,k\in [d]$ on both sides of the equation, we obtain
\[
\underset{z\in [0,1]}{\sup} \;\underset{j,k\in [d]}{\max} \; \left| \frac{1}{n} \EE \Big\{  \GG_n (g_{z,jk})\cdot  \GG_n(w_z)  \Big\} \right| = \cO\left(\frac{1}{nh}\right) + \cO \left( \frac{1}{n} \right) =  \cO\left(\frac{1}{nh}\right), 
\]
where the last equality holds by the scaling assumption of $h= o(1)$.
The proof of (\ref{lemma:bias4}) follows from the  same set of argument.

\subsection{Proof of Lemma~\ref{lemma:variance}}
\label{proof:lemma:variance}
The proof of Lemma~\ref{lemma:variance} involves obtaining upper bounds for the supreme of the empirical processes  
$\GG_n(w_z)$ and $\GG_n (g_{z,jk})$.    
To this end, we apply the Talagrand's inequality in Lemma~\ref{lemma:ep1}. 
Let $\cF$ be a function class.  
In order to apply Talagrand's inequality, we need to evaluate the quantities $\eta$ and $\tau^2$ such that 
\[
\underset{f\in\cF}{\sup} \; \|f\|_{\infty} \le \eta \qquad \mathrm{and}\qquad \underset{f\in\cF}{\sup} \; \mathrm{Var} (f(X)) \le \tau^2.
\]
Talagrand's inequality in Lemma~\ref{lemma:ep1} provides an upper bound   for the supreme of an empirical process in terms of its expectation. 
By Lemma~\ref{lemma:ep2}, the expectation can then be upper bounded as a function of the covering number of the function class $\cF$, denoted as $N\{\cF,L_2(Q),\epsilon\}$.  
The following lemmas provide upper bounds for the supreme of the empirical processes $\GG_n(w_z)$ and $\GG_n (g_{z,jk})$, respectively.
The proofs are deferred to Sections~\ref{proof:lemma:empirical process w} and~\ref{proof:lemma:empirical process g}, respectively.
\begin{lemma}
\label{lemma:empirical process w}
Assume that $h= o(1)$ and $\log (d/h)/(nh) =o(1)$. Under Assumptions~\ref{ass:marginal}-\ref{ass:covariance}, for sufficiently large $n$, there exists a universal constant $C>0$ such that 
\begin{equation}
\label{lemma:empirical process w1}
\underset{z\in [0,1]}{\sup} \; \left|  \GG_n (w_z)   \right| \le C\cdot \sqrt{\frac{\log (d/h)}{h}},
\end{equation}
with probability at least $1-1/d$.
\end{lemma}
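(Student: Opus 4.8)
The plan is to treat $\sup_{z\in[0,1]}|\GG_n(w_z)|$ as the supremum of an empirical process over the kernel–translate class $\cF = \{\,w_z = K_h(\,\cdot\,-z): z\in[0,1]\,\}$, and to control it by combining a Dudley-type maximal inequality for its expectation (Lemma~\ref{lemma:ep2}) with Talagrand's concentration inequality (Lemma~\ref{lemma:ep1}); the real work is in verifying the hypotheses of these two lemmas for $\cF$. First I would record the two scalar envelopes. Since $K$ is bounded and compactly supported, $\sup_{z\in[0,1]}\|w_z\|_\infty \le \|K\|_\infty/h =: \eta = \cO(1/h)$. For the variance, a change of variable gives $\EE[w_z^2(Z)] = h^{-1}\int K^2(u)\,f_Z(z+uh)\,du$, so by $\int K^2<\infty$ in \eqref{prop:kernel} and $\|f_Z\|_\infty \le \bar{f}_Z$ from Assumption~\ref{ass:marginal} we get $\sup_{z\in[0,1]}\mathrm{Var}\{w_z(Z)\} \le \sup_{z\in[0,1]}\EE[w_z^2(Z)] \le \bar{f}_Z\|K\|_2^2/h =: \tau^2 = \cO(1/h)$.

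The key structural step is to bound the uniform $L_2$ covering number of $\cF$. Because $\|K\|_{\mathrm{TV}}<\infty$, the kernel $K$ is a difference of two bounded monotone functions, so the collection of its translates $\{K(\,\cdot\,-z): z\in\RR\}$ is a VC-type class; rescaling by $h^{-1}$ and keeping track of the bandwidth, one obtains $\sup_Q N\{\cF, L_2(Q), \epsilon\|\bar{F}\|_{L_2(Q)}\} \le (A/\epsilon)^{\nu}$ for absolute constants $A,\nu$, where $\bar{F}\equiv \|K\|_\infty/h$ is the (constant) envelope of $\cF$. Feeding this polynomial entropy bound into Lemma~\ref{lemma:ep2} — taking the truncation level at $\tau/\|\bar{F}\|_{L_2(P)}\asymp \sqrt h$ and using the entropy-integral estimate \eqref{EqA:covering number dudley} — yields
\[
\EE\Big[\sup_{z\in[0,1]}|\GG_n(w_z)|\Big] \le C\left\{\sqrt{\frac{\log(1/h)}{h}} + \frac{\log(1/h)}{h\sqrt n}\right\}.
\]
Under $h=o(1)$ and $\log(d/h)/(nh)=o(1)$ the second term is $o$ of the first, and since $\log(1/h)\le\log(d/h)$ this is $\cO\{\sqrt{\log(d/h)/h}\}$.

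Finally I would apply Talagrand's inequality (Lemma~\ref{lemma:ep1}) with the constants $\eta$ and $\tau^2$ above and deviation parameter $t=\log d$: on an event of probability at least $1-1/d$,
\[
\sup_{z\in[0,1]}|\GG_n(w_z)| \le \EE\Big[\sup_{z\in[0,1]}|\GG_n(w_z)|\Big] + C_1\sqrt{\tau^2\log d} + C_1\,\frac{\eta\log d}{\sqrt n}.
\]
Here $\sqrt{\tau^2\log d}\asymp\sqrt{\log d/h}\le\sqrt{\log(d/h)/h}$ (as $h<1$), and $\eta\log d/\sqrt n\asymp \log d/(h\sqrt n)$ is again $o\{\sqrt{\log(d/h)/h}\}$ by $\log(d/h)/(nh)=o(1)$; combining with the expectation bound gives the claimed inequality for $n$ large enough. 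The step I expect to be the main obstacle is the covering-number bound: one must argue carefully that the translate class is VC-type and, crucially, track the bandwidth $h$ so that the resulting entropy contributes only a factor $\log(1/h)$ inside the square root rather than a power of $1/h$. This is exactly where the bounded-total-variation hypothesis on $K$ is used, and it is what keeps the variance term — not the metric entropy — the dominant contribution to the rate.
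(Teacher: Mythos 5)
Your proof follows essentially the same path as the paper's: envelope and variance of order $1/h$, the Gin\'e--Nickl bounded-variation argument (the paper's Lemma~\ref{lemma:coveringnumber3}) for the VC-type covering number of the kernel translates, a Dudley-type maximal inequality for the expectation, and Talagrand's inequality for the high-probability bound. One small slip: you name the deviation parameter ``$t=\log d$,'' but the inequality you actually write down corresponds to $t=\sqrt{\log d/n}$ in Lemma~\ref{lemma:ep1} (which is the correct choice, and the one the paper uses to get $\exp(-nt^2)=1/d$); otherwise the argument is sound.
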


\begin{lemma}
\label{lemma:empirical process g}
Assume that $h= o(1)$ and $\log^2 (d/h)/(nh) =o(1)$. Under Assumptions~\ref{ass:marginal}-\ref{ass:covariance}, for sufficiently large $n$, there exists a universal constant $C>0$ such that 
\begin{equation}
\label{lemma:empirical process g1}
\underset{z\in [0,1]}{\sup} \;  \underset{j,k\in[d]}{\max} \; \left|  \GG_n (g_{z,jk})   \right| \le C\cdot \sqrt{\frac{\log (d/h)}{h}},
\end{equation}
with probability at least $1-2/d$.
\end{lemma}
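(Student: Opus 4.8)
The plan is to follow the same route as the proof of Lemma~\ref{lemma:empirical process w} --- Talagrand's concentration inequality (Lemma~\ref{lemma:ep1}) applied after controlling the expected supremum by a covering-number/chaining estimate (Lemma~\ref{lemma:ep2}) --- with one additional ingredient, a truncation step, needed because the summand now carries the \emph{unbounded} Gaussian product $X_{ij}Y_{ik}$ rather than the bounded kernel weight $w_z$.

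\textbf{Truncation.} Since $X_{ij}\mid Z$ and $Y_{ik}\mid Z$ are Gaussian with uniformly bounded variances, the product $X_{ij}Y_{ik}$ is sub-exponential: $\PP(|X_{ij}Y_{ik}|>t)\le 2e^{-ct}$ for some universal $c>0$ and all $t>0$. Choosing a truncation level $\tau_n\asymp\log(nd)$ --- of the same order as $\log(d/h)$ in the relevant scaling regime --- and taking a union bound over the $nd^2$ indices, the event $\mathcal{E}=\{\max_{i,j,k}|X_{ij}Y_{ik}|\le\tau_n\}$ has probability at least $1-1/d$. On $\mathcal{E}$ the process $\GG_n(g_{z,jk})$ agrees with $\GG_n(\tilde g_{z,jk})$, where $\tilde g_{z,jk}(Z_i,X_{ij},Y_{ik})=K_h(Z_i-z)X_{ij}Y_{ik}\ind_{\{|X_{ij}Y_{ik}|\le\tau_n\}}$, up to the deterministic recentering $\sqrt{n}\,\EE[K_h(Z-z)X_jY_k\ind_{\{|X_jY_k|>\tau_n\}}]$, which Cauchy--Schwarz and the sub-exponential tail make super-polynomially small and hence negligible. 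So it suffices to bound $\sup_{z}\max_{j,k}|\GG_n(\tilde g_{z,jk})|$.

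\textbf{Envelope, variance, and expected supremum.} For the truncated class $\widetilde{\cF}=\{\tilde g_{z,jk}:z\in[0,1],\,j,k\in[d]\}$ I would record the envelope $\eta:=\|K\|_\infty\tau_n/h\asymp\log(nd)/h$ (since $|\tilde g_{z,jk}|\le K_h(\cdot)\tau_n$) and the variance proxy $\tau^2:=\sup_{z,j,k}\mathrm{Var}(\tilde g_{z,jk})\le\sup_{z,j,k}\EE[K_h^2(Z-z)\,\EE(X_j^2Y_k^2\mid Z)]=\cO(1/h)$, using that $\EE(X_j^2Y_k^2\mid Z)$ is bounded (Gaussianity, Assumption~\ref{ass:covariance}, and boundedness of the subject-specific variances) and that $\EE[K_h^2(Z-z)]=\cO(1/h)$ by the substitution $u=(Z-z)/h$ and Assumption~\ref{ass:marginal}. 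Because $\|K\|_{\mathrm{TV}}<\infty$, for each fixed $(j,k)$ the translate family $\{K_h(\cdot-z):z\in[0,1]\}$ is of bounded variation, hence VC-subgraph type, so $N(\widetilde{\cF},L_2(Q),\epsilon\eta)\le d^2(A/\epsilon)^v$ for universal $A,v$, and the entropy integral appearing in Lemma~\ref{lemma:ep2} --- bounded via inequality~\eqref{EqA:covering number dudley} --- is $\cO(\sqrt{\log d})$. Feeding this into Lemma~\ref{lemma:ep2} gives
\[
\EE\Big[\sup_{z}\max_{j,k}|\GG_n(\tilde g_{z,jk})|\Big]\;\lesssim\;\sqrt{\tau^2\log d}\;+\;\frac{\eta\log d}{\sqrt n}\;\lesssim\;\sqrt{\frac{\log(d/h)}{h}},
\]
where the second term is absorbed using $\log(nd)=\cO(\log(d/h))$ and the scaling assumption $\log^2(d/h)/(nh)=o(1)$.

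\textbf{Concentration and main obstacle.} Applying Talagrand's inequality (Lemma~\ref{lemma:ep1}) with these $\eta$, $\tau^2$ and the expectation bound above yields, with probability at least $1-1/d$, that $\sup_{z}\max_{j,k}|\GG_n(\tilde g_{z,jk})|$ exceeds its mean by at most $C\big(\sqrt{\tau^2\log d}+\eta\log d/\sqrt n\big)=\cO(\sqrt{\log(d/h)/h})$, again by the scaling condition. Intersecting with $\mathcal{E}$ (probability $\ge 1-1/d$) gives the claimed bound with probability at least $1-2/d$; the two discarded bad events --- truncation failing, and Talagrand's concentration failing --- are precisely why the statement has $1-2/d$ rather than the $1-1/d$ of Lemma~\ref{lemma:empirical process w}. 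The hard part will be the truncation bookkeeping: picking $\tau_n$ large enough that $\mathcal{E}$ holds with the required probability yet small enough that the inflated envelope $\eta\asymp\tau_n/h$ does not degrade the rate --- this is exactly where the strengthened scaling $\log^2(d/h)/(nh)=o(1)$ (versus $\log(d/h)/(nh)=o(1)$ for the bounded $w_z$) is consumed, entering both the chaining remainder and the sub-exponential term of Talagrand's inequality; a secondary point is the uniform bound on $\EE[X_{ij}^2Y_{ik}^2\mid Z]$ and the bounded-variation covering estimate for $\{K_h(\cdot-z)\}$.
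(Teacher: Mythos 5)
Your proposal follows the paper's proof essentially step for step: condition on a high-probability event bounding the Gaussian factors (the paper truncates each coordinate at $M_X\sqrt{\log d}$ rather than the product at $\tau_n\asymp\log(nd)$, but this is cosmetic), obtain an envelope $\asymp\log(\cdot)/h$ and a variance proxy $\asymp 1/h$ via the boundedness of $\EE(X_j^2Y_k^2\mid Z)$, bound the covering number through the bounded-variation/translate-class argument for $K_h(\cdot-z)$, then apply Lemma~\ref{lemma:ep2} for the expected supremum and Talagrand's inequality (Lemma~\ref{lemma:ep1}) for concentration, with the $1-2/d$ arising from the union of the truncation and concentration bad events exactly as you say. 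Your explicit handling of the recentering bias introduced by truncation is a detail the paper's proof glosses over (it restricts to the good event and computes the envelope there without addressing the unconditional expectation inside $\GG_n$), but this refinement does not change the argument.
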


Applying Lemmas~\ref{lemma:empirical process w} and \ref{lemma:empirical process g}, we obtain 
\begin{equation*}
\begin{split}
\underset{z\in [0,1]}{\sup}\; \underset{j,k\in [d]}{\max} \;  \Big| \GG_n(w_z) \vee \GG_n (g_{z,jk}) \Big|  
&\le  \underset{z\in [0,1]}{\sup} \; \left|  \GG_n (w_z)   \right|    +\underset{z\in [0,1]}{\sup} \;  \underset{j,k\in[d]}{\max} \; \left|  \GG_n (g_{z,jk})   \right| \\
&\le C \cdot  \sqrt{\frac{\log (d/h)}{h}},
\end{split}
\end{equation*}
with probability at least $1-3/d$, as desired.

\subsubsection{Proof of Lemma~\ref{lemma:empirical process w} }
\label{proof:lemma:empirical process w}
The proof of Lemma~\ref{lemma:empirical process w} uses the set of arguments as detailed in the beginning of \ref{proof:lemma:variance}.  
Recall from (\ref{EqA:gjk}) and (\ref{EqA:kz}) the definition of $w_z (Z_i) = K_h (Z_i-z)$ and $k_z (Z_i) = w_z(Z_i) - \EE\{w_z(Z)\}$, respectively.
We consider the class of function 
\begin{equation}
\label{proof:lemma:empirical process w1}
\cK = \left\{ k_z \mid  z\in [0,1]  \right\}.
\end{equation}

First, note that
\begin{equation}
\label{proof:lemma:empirical process w2}
\begin{split}
\underset{z\in [0,1]}{\sup}\; \|k_z\|_{\infty} 
&= \underset{z\in [0,1]}{\sup}\;  \|   w_z(Z_i) - \EE\{w_z(Z)\}   \|_{\infty}\\
&\le \frac{1}{h} \|K\|_{\infty} + \bar{f}_Z + \cO(h^2)\\
&\le \frac{2}{h} \|K\|_{\infty},
\end{split}
\end{equation}
where  the first inequality holds by \eqref{prop:kernel} and Lemma~\ref{lemma:bias}, and the last inequality holds by the scaling assumption $h=o(1)$ for sufficiently large $n$.

Next, we obtain an upper bound for the variance of $k_z(Z_i)$.   Note that 
\begin{equation*}
\begin{split}
\underset{z\in[0,1]}{\sup}\; \mathrm{Var}\{k_z(Z)\} &=\underset{z\in[0,1]}{\sup}\; \EE \left(  \left[ w_z(Z) - \EE\{w_z(Z)\}    \right]^2    \right)\\
&\le \underbrace{\underset{z\in[0,1]}{\sup}\;  2 \EE\{w_z^2(Z)\}}_{I_1}      + \underbrace{\underset{z\in[0,1]}{\sup}\; 2 \EE^2\{w_z(Z)\}}_{I_2},
\end{split}
\end{equation*}
where we apply the inequality $(x-y)^2 \le 2x^2+2y^2$ for two scalars $x,y$.
By Lemma~\ref{lemma:bias}, we have $I_2 \le 2\{\bar{f}_Z +\cO(h^2)\}^2$.  Also, by a change of variable and second-order Taylor expansion on the marginal density $f_Z (\cdot)$, we have 
\begin{equation}
\label{proof:lemma:empirical process w2-5}
\begin{split}
I_1 &= 2\underset{z\in[0,1]}{\sup}\; \int \frac{1}{h^2} K^2\left( \frac{Z-z}{h}\right) f_Z(Z) dZ\\
 &= 2\underset{z\in[0,1]}{\sup}\;\frac{1}{h} \int K^2(u) f_Z(uh+z) du\\
 &=2\underset{z\in[0,1]}{\sup}\;\frac{1}{h} \int K^2(u) \left\{ f_Z(z) + uh \dot{f}_Z(z) + u^2h^2 \ddot{f}_Z(z') \right\}du \quad \mathrm{for\;} z'\in (z,u+zh) \\ 
 &\le \frac{2}{h} \bar{f}_Z \|K\|_2^2 + \cO(1)+ \cO(h).
\end{split}
\end{equation}
  Thus, for sufficiently large $n$ and the assumption that $h=o(1)$, we have 
 \begin{equation}
 \label{proof:lemma:empirical process w3}
\underset{z\in[0,1]}{\sup}\; \mathrm{Var}\{k_z(Z)\}  \le \frac{3}{h} \cdot \bar{f}_Z\cdot  \|K\|_2^2.
 \end{equation}
 By Lemma~\ref{lemma:coveringnumberkz}, the covering number for the function class $\cK$ satisfies
\begin{equation}
\label{proof:lemma:empirical process w11}
\underset{Q}{\sup}\;N\{\cK,L_2(Q),\epsilon\} \le \left(\frac{4 \cdot \|K\|_{\mathrm{TV}} \cdot C_K^{4/5} \cdot \bar{f}_Z^{1/5}}{h\epsilon}\right)^{5}.
\end{equation}

We are now ready  to obtain an upper bound for the supreme of the empirical process,  $\sup_{z\in [0,1]}  \left|  \GG_n (w_z)   \right|$.
By Lemma~\ref{lemma:ep2} with $A = 2 \cdot \|K\|_{\mathrm{TV}} \cdot C_K^{4/5}\cdot  \bar{f}_Z^{1/5}/\|K\|_{\infty}$, $\|F\|_{L_2(\PP_n)} = 2\cdot \|K\|_\infty/h$, $V=5$, $\sigma^2_P = 3 \cdot \bar{f}_Z \cdot \|K\|_2^2/h$, for sufficiently large $n$,  we obtain 
\begin{equation}
\label{proof:lemma:empirical process w12}
\begin{split}
\EE \left\{ \underset{z\in [0,1]}{\sup} \; \frac{1}{\sqrt{n}} \cdot \left|\GG_n (w_z) \right|\right\} &= \EE \left( \underset{z\in [0,1]}{\sup} \;\frac{1}{n} \left|\sum_{i\in [n]} [ w_z(Z_i) - \EE\{w_z (Z)\}] \right| \right) \\
&\le  C \cdot \left\{ \sqrt{\frac{\log (1/h)}{nh}}  + \frac{\log (1/h)}{n} \right\}\\
&\le  C\cdot  \sqrt{\frac{\log (1/h)}{nh}},
\end{split}
\end{equation}
where $C>0$ is some sufficiently large constant.
By Lemma~\ref{lemma:ep1} with $\tau^2 = 3 \bar{f}_Z\cdot \|K\|_2^2/h$, $\eta= 2\cdot \|K\|_{\infty}/h$, $\EE[Y] \le C\cdot \sqrt{\log (1/h)/(nh)}$, and picking $t = \sqrt{\log (d)/n}$, for sufficiently large $n$, we have 
\begin{equation*}
\begin{split}
 \underset{z\in [0,1]}{\sup} \; \frac{1}{\sqrt{n}}\cdot \left|\GG_n (w_z) \right| &= 
  \underset{z\in [0,1]}{\sup} \; \frac{1}{n} \left|  \sum_{i\in [n]} (w_z(Z_i) - \EE\{w_z(Z)\} \right|
\\
 &\le C\cdot \left( \sqrt{\frac{\log (1/h)}{nh}} +  \sqrt{\frac{\log (d)}{nh}} \cdot \sqrt{ 1 + \sqrt{\frac{\log (1/h)}{nh}}} + \frac{\log (d)}{nh}\right)\\
 &\le C \cdot  \sqrt{\frac{\log (d/h)}{nh}},
\end{split}
\end{equation*}
with probability $1-1/d$, where the last expression holds by the assumption that $\log (d/h)/(nh) = o(1)$ and $h=o(1)$.  Multiplying both sides of the above equation by $\sqrt{n}$ completes the proof of Lemma~\ref{lemma:empirical process w}.

\subsubsection{Proof of Lemma~\ref{lemma:empirical process g} }
\label{proof:lemma:empirical process g}
The proof of Lemma~\ref{lemma:empirical process g} uses the set of arguments as detailed in the beginning of \ref{proof:lemma:variance}.  
For convenience, we prove Lemma~\ref{lemma:empirical process g} by conditioning on the event 
\begin{equation}
\label{proof:lemma:empirical process g0}
\cA = \left\{\underset{i\in [n]}{\max} \; \underset{j\in [d]}{\max} \; \max(|X_{ij}|,|Y_{ij}|) \le M_X \cdot \sqrt{\log d}  \right\}.
\end{equation}
Since $X_{ij}$ and $Y_{ij}$ conditioned on $Z$ are Gaussian random variables, the event $\cA$ occurs with probability at least $1-1/d$ for sufficiently large constant $M_X>0$.  

Recall from (\ref{EqA:gjk}) and (\ref{EqA:qzjk}) the definition of $g_{z,jk} (Z_i,X_{ij},Y_{ik}) = K_h (Z_i-z) X_{ij}Y_{ik}$ and  $q_{z,jk} (Z_i,X_{ij},Y_{ik}) = g_{z,jk}(Z_i,X_{ij},Y_{ik}) - \EE\{g_{z,jk}(Z,X_{j},Y_{k})\}$, respectively.
We consider the function class
\begin{equation}
\label{proof:lemma:empirical process g1}
\cQ = \left\{ q_{z,jk} \mid  z\in [0,1], j,k\in [d]  \right\}.
\end{equation}
We first obtain an upper bound for the function class
\begin{equation}
\label{proof:lemma:empirical process g2}
\begin{split}
\underset{z\in [0,1]}{\sup}\; \underset{j,k\in [d]}{\max}\; \|q_{z,jk}\|_{\infty} 
&= \underset{z\in [0,1]}{\sup}\; \underset{j,k\in [d]}{\max}\;  \|    g_{z,jk}(Z_i,X_{ij},Y_{ik}) - \EE\{g_{z,jk}(Z,X_{j},Y_{k})\}  \|_{\infty}\\
&\le \underset{z\in [0,1]}{\sup}\; \underset{j,k\in [d]}{\max}\;  \|    g_{z,jk}(Z_i,X_{ij},Y_{ik})\|_{\infty} +  \underset{z\in [0,1]}{\sup}\; \underset{j,k\in [d]}{\max}\;  \|\EE\{g_{z,jk}(Z,X_{j},Y_{k})\}  \|_{\infty}\\
&\le \underset{z\in [0,1]}{\sup}\; \underset{j,k\in [d]}{\max}\;  \|       
K_h (Z_i-z) X_{ij}Y_{ik}
\|_{\infty} + \bar{f}_Z \cdot M_{\sigma} + \cO(h^2)\\
&\le  \frac{1}{h} \cdot M^2_X \cdot \|K\|_{\infty} \cdot \log d       
 + \bar{f}_Z \cdot M_{\sigma} + \cO(h^2)\\
&\le \frac{2}{h}\cdot M^2_X \cdot  \|K\|_{\infty}\cdot \log d,
\end{split}
\end{equation}
where  the second inequality holds by Assumptions~\ref{ass:marginal}-\ref{ass:covariance} and Lemma~\ref{lemma:bias}, the third inequality holds by \eqref{prop:kernel} and by conditioning on the event $\cA$, and the last inequality holds by the scaling assumption $h=o(1)$ for sufficiently large $n$.

Next, we obtain an upper bound for the variance of $q_{z,jk}(Z_i,X_{ij},Y_{ik})$.   Note that 
\begin{equation*}
\begin{split}
\underset{z\in [0,1]}{\sup}\; \underset{j,k\in [d]}{\max}\;  \mathrm{Var}\{q_{z,jk}(Z,X_{j},Y_{k})\} &=\underset{z\in [0,1]}{\sup}\; \underset{j,k\in [d]}{\max}\; \EE \left[  \left(g_{z,jk}(Z,X_j,Y_k) - \EE\{g_{z,jk}(Z,X_j,Y_k)\}    \right)^2    \right]\\
&\le \underbrace{\underset{z\in [0,1]}{\sup}\; \underset{j,k\in [d]}{\max}\;   2 \EE\left\{g_{z,jk}^2(Z,X_j,Y_k)\right\}}_{I_1}      + \underbrace{\underset{z\in [0,1]}{\sup}\; \underset{j,k\in [d]}{\max}\;  2 \EE^2\{g_{z,jk}(Z,X_j,Y_k)\}}_{I_2},
\end{split}
\end{equation*}
where we apply the inequality $(x-y)^2 \le 2x^2+2y^2$ for two scalars $x,y$.
By Lemma~\ref{lemma:bias}, we have $I_2 \le 2\left\{\bar{f}_Z\cdot  M_\sigma +\cO(h^2)\right\}^2$.  Also, by a change of variable and second-order Taylor expansion on the marginal density $f_Z (\cdot)$ as in (\ref{proof:lemma:empirical process w2-5}), we have 
\begin{equation*}
\begin{split}
I_1 &= 2\underset{z\in[0,1]}{\sup}\; \underset{j,k\in[d]}{\max}\;  
\EE \left\{K_h^2 (Z-z) \cdot \EE\left(X_{j}^2Y_k^2 \mid  Z\right)    \right\}
  \\
  &\le  2 \kappa \underset{z\in[0,1]}{\sup}\; \EE \left\{K_h^2 (Z-z)   \right\}
 \\ 
  &\le \frac{2 \kappa}{h} \cdot \bar{f}_Z \cdot \|K\|_2^2 + \cO(1)+ \cO(h),
\end{split}
\end{equation*}
where the first inequality follows from the fact that $ |\EE(X_{j}^2Y_k^2 \mid Z)| \le \kappa$ for some $\kappa < \infty$ since these are Gaussian random variables, and the second inequality follows from (\ref{proof:lemma:empirical process w2-5}).
  Thus, for sufficiently large $n$ and the assumption that $h=o(1)$, we have 
 \begin{equation}
 \label{proof:lemma:empirical process g3}
\underset{z\in[0,1]}{\sup}\; \underset{j,k\in[d]}{\max}\;  \mathrm{Var}\{q_{z,j,k}(Z,X_{j},Y_{k})\}  \le \frac{3\kappa}{h} \cdot \bar{f}_Z \cdot \|K\|_2^2.
 \end{equation}
By Lemma~\ref{lemma:coveringnumbergzjk}, the covering number for the function class $\cQ$ satisfies 
\begin{equation}
\label{proof:lemma:empirical process g10}
\underset{Q}{\sup}\;N\{\cQ,L_2(Q),\epsilon\} \le \left(\frac{4 \|K\|_{\mathrm{TV}} \cdot C_K^{4/5} \cdot \bar{f}_Z^{1/5}\cdot M_\sigma^{1/5} \cdot d^{1/10} \cdot M^{2/5}_X\cdot  \log^{2/5} d}{h\epsilon}\right)^{5}.
\end{equation}

We now obtain an upper bound for the supreme of the empirical process,  $\underset{z\in [0,1]}{\sup} \; \underset{j,k\in [d]}{\max} \; \left|  \GG_n (g_{z,jk})   \right|$.
By Lemma~\ref{lemma:ep2} with $A = 2 \cdot \|K\|_{\mathrm{TV}}\cdot  C_K^{4/5} \cdot\bar{f}_Z^{1/5} \cdot M_{\sigma}^{1/5}\cdot d^{1/10}/ \|K\|_{\infty} $, $\|F\|_{L_2(\PP_n)} =2\cdot \|K\|_\infty \cdot M^2_X \cdot \log d/h$, $V=5$, $\sigma^2_P = (3\kappa /h) \cdot \bar{f}_Z \cdot \|K\|_2^2$, for sufficiently large $n$,  we obtain 
\begin{equation}
\label{proof:lemma:empirical process g11}
\begin{split}
\EE \left\{ \underset{z\in [0,1]}{\sup} \;   \underset{j,k\in [d]}{\max} \;  \frac{1}{\sqrt{n}} \cdot \left|\GG_n (g_{z,jk})\right| \right\}
&=\EE \left( \underset{z\in [0,1]}{\sup} \;   \underset{j,k\in [d]}{\max} \;  \frac{1}{n} \cdot \left|\sum_{i\in [n]} [g_{z,jk}(Z_i,X_{ij},Y_{ik}) - \EE\{g_{z,jk} (Z,X_j,Y_k)\}]\right| \right) \\
&\le C \cdot \left\{  \sqrt{\frac{\log (d/h)}{nh}} + \frac{\log (d/h)}{n}    \right\}\\
&\le  C\cdot \sqrt{\frac{ \log (d/h)}{nh}},
\end{split}
\end{equation}
where the last inequality holds by the assumption $\log (d/h)/nh = o(1)$.
By Lemma~\ref{lemma:ep1} with $\tau^2 = 3\cdot\kappa\cdot \bar{f}_Z\cdot \|K\|_2^2/h$, $\eta= 2\cdot \|K\|_{\infty}\cdot M^2_X\cdot  \log d/h$, $\EE[Y] \le C\cdot \sqrt{\log (d/h)/(nh)}$, and picking $t = \sqrt{\log d/n}$, for sufficiently large $n$, we have 
\begin{equation*}
\begin{split}
 \underset{z\in [0,1]}{\sup} \; \underset{j,k\in [d]}{\max} \; \frac{1}{\sqrt{n}}\cdot\left|\GG_n (g_{z,jk}) \right| &=  \underset{z\in [0,1]}{\sup} \; \underset{j,k\in [d]}{\max} \; \frac{1}{n} \cdot \left|\sum_{i\in[n]} [g_{z,jk} (Z_i,X_{ij},Y_{ik})- \EE\{g_{z,jk}(Z,X_j,Y_k)\}] \right| \\
 &\le 
 C\cdot \left\{ \sqrt{\frac{\log (d/h)}{nh}} +  \sqrt{\frac{\log d}{nh}} \cdot \sqrt{1+ \log d\cdot \sqrt{\frac{\log (d/h)}{nh}}}+ \frac{\log^2 d}{nh}\right\}\\
 &\le C \cdot  \sqrt{\frac{\log (d/h)}{nh}},
\end{split}
\end{equation*}
with probability at least $1-2/d$.  The second inequality holds by the assumption that $\log^2 (d/h)/(nh) =o(1)$. 
Multiplying both sides of the equation by $\sqrt{n}$, we completed the proof of Lemma~\ref{lemma:empirical process g}.

\section{Proof of Theorem~\ref{theorem:gaussian multiplier bootstrap}}
\label{appendix:proof:theorem:gaussian multiplier bootstrap}
In this section, we provide the proof of Theorem \ref{theorem:gaussian multiplier bootstrap}.  To prove Theorem~\ref{theorem:gaussian multiplier bootstrap}, we use a similar set of arguments in the series of work on Gaussian multiplier bootstrap of the supreme of empirical process (see, for instance, \citealp{chernozhukov2013gaussian,chernozhukov2014anti,chernozhukov2014gaussian}).
Recall from (\ref{Eq:quantile}) and (\ref{Eq:approx test}) that 
\begin{equation}
\label{proof:theorem:gaussian multiplier bootstrap T}
T_E=  \underset{z\in [0,1]}{\sup} \;\underset{(j,k)\in E(z)}{\max} \; 
\sqrt{nh}\cdot  \left|\hat{\bTheta}_{jk}^\mathrm{de} (z) - \bTheta_{jk} (z)\right| \cdot \PP_n (w_z)
\end{equation}
and
\begin{equation}
\label{proof:theorem:gaussian multiplier bootstrap TB}
T^B_E =  \underset{z\in [0,1]}{\sup} \; \underset{(j,k) \in E(z) }{\max} \;\sqrt{nh}\cdot \left| \frac{ \sum_{i\in [n]} \left\{\hat{\bTheta}_j (z)\right\}^T K_h (Z_i-z) \left\{ \bX_i \bY_i^T\hat{\bTheta}_k (z)  - \mathbf{e}_k\right\} \xi_i/n}{ \left\{ \hat{\bTheta}_j(z)\right\}^T \hat{\bSigma}_j(z)}\right|,
\end{equation}
respectively, where $\xi_i \sim N(0,1)$. 
Note that for notational convenience, we drop the subscript $E$ from $T_E$ and $T_E^B$ throughout the proof.

We aim to show that $T^B$ is a good approximation of $T$.  However, $T$ and $T^B$ are not exact averages. To apply the results in \citealp{chernozhukov2014anti},   
we define four intermediate processes:
\begin{equation}
\label{proof:theorem:gaussian multiplier bootstrap1}
T_0 = \underset{z\in [0,1]}{\sup} \; \underset{(j,k)\in E(z)}{\max} \;
\sqrt{nh}\cdot \left| \sum_{i\in [n]}  \left\{ \bTheta_j (z) \right\}^T
 K_h (Z_i-z) \left\{ \bX_i \bY_i^T  - \bSigma (z) \right\} \bTheta_k (z)/n\right|;
\end{equation}
\begin{equation}
\begin{split}
\label{proof:theorem:gaussian multiplier bootstrap2}
T_{00}&= \underset{z\in [0,1]}{\sup} \; \underset{(j,k)\in E(z)}{\max} \;
\sqrt{nh} \cdot \bigg|   \sum_{i\in [n]}  \left\{ \bTheta_j (z) \right\}^T
 K_h (Z_i-z) \left\{ \bX_i \bY_i^T  - \bSigma (z) \right\} \bTheta_k (z)/n\\
 &\qquad \qquad \qquad \qquad -    \left\{ \bTheta_j (z) \right\}^T \bigg(
\Big[ \EE\{K_h (Z-z)\bX \bY^T\} - \EE\{K_h(Z-z)\} \bSigma (z)   \Big]  \bigg) \bTheta_k (z) /n\bigg|;
\end{split}
\end{equation}

\begin{equation}
\label{proof:theorem:gaussian multiplier bootstrap3}
T_0^B = \underset{z\in [0,1]}{\sup} \; \underset{(j,k)\in E(z)}{\max} \;
\sqrt{nh} \cdot \left| \sum_{i\in [n]} \left[ \left\{ \bTheta_j (z) \right\}^T
 K_h (Z_i-z) \left\{ \bX_i \bY_i^T  - \bSigma (z) \right\} \bTheta_k (z) \right]\xi _i/n\right|,
 \end{equation}

\begin{equation}
\begin{split}
\label{proof:theorem:gaussian multiplier bootstrap4}
T_{00}^B &=   \underset{z\in [0,1]}{\sup} \; \underset{(j,k)\in E(z)}{\max} \;
\sqrt{nh} \cdot \bigg|   \sum_{i\in [n]}  \Big\{\left\{ \bTheta_j (z) \right\}^T
 K_h (Z_i-z) \left\{ \bX_i \bY_i^T  - \bSigma (z) \right\} \bTheta_k (z)/n\\
 &\qquad \qquad \qquad \qquad -    \left\{ \bTheta_j (z) \right\}^T \Big( 
\Big[ \EE\{K_h (Z-z)\bX \bY^T\} - \EE\{K_h(Z-z)\} \bSigma (z)   \Big]  \Big) \bTheta_k (z)  \Big\} \cdot \xi_i/n\bigg|;
 \end{split}
\end{equation}
where $\xi_i \stackrel{\mathrm{i.i.d.}}{\sim} N(0,1)$.

To prove Theorem~\ref{theorem:gaussian multiplier bootstrap}, we show that $T_{00}$ is a good approximation of $T$ and that $T_{00}^B$ is a good approximation of $T^B$.
We then show that there exists a Gaussian process $W$ such that 
both $T_{00}^B$ and $T_{00}$ can be accurately approximated by $W$.  
This is done by applications of Theorems A.1 and A.2 in \citet{chernozhukov2014anti}.  The following summarizes the chain of empirical and Gaussian processes that we are going to study
\[
T \longleftrightarrow T_0 \longleftrightarrow T_{00} \longleftrightarrow W \longleftrightarrow T_{00}^B \longleftrightarrow T_{0}^B \longleftrightarrow T^B.
\]

The following lemma provides an approximation error between the statistic $T$ and the intermediate empirical process $T_{00}$.
\begin{lemma}
\label{lemma:T0andT}
Assume that $h^2 + \sqrt{\log(d/h)/nh} = o(1)$. Under Assumptions~\ref{ass:marginal}-\ref{ass:covariance}, for sufficiently large $n$, there exists a universal constant $C>0$ such that 
\[
 |T-T_{00}| \le C \cdot \left\{\sqrt{nh^5} + s\cdot \sqrt{nh^9} + \frac{s \cdot \log (d/h)}{\sqrt{nh}} +\cdot  s\cdot h^2\cdot \sqrt{\log (d/h)} \right\},
\]
with probability at least $1-1/d$.
\end{lemma}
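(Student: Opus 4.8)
The plan is to insert the two intermediate quantities and bound $|T-T_{00}|\le |T-T_0|+|T_0-T_{00}|$, treating $T_0-T_{00}$ as a pure kernel‑smoothing bias term and $T-T_0$ as the linearization error of the de‑biased statistic. Throughout I would work on the intersection of the high‑probability events (of probability at least $1-c/d$) on which the conclusions of Theorem~\ref{theorem:estimation error} and Corollary~\ref{theorem:inverse estimation} hold, and abbreviate $r_n:=h^2+\sqrt{\log(d/h)/(nh)}$; thus on this event $\sup_{z\in[0,1]}\|\hat{\bSigma}(z)-\bSigma(z)\|_{\max}=\cO(r_n)$, $\sup_z\|\hat{\bTheta}(z)-\bTheta(z)\|_{\max}=\cO(r_n)$, $\sup_z\max_j\|\hat{\bTheta}_j(z)-\bTheta_j(z)\|_1=\cO(sr_n)$, and from \eqref{Eq:inverse3} also $\sup_z\max_j\|\{\hat{\bTheta}_j(z)\}^T\hat{\bSigma}(z)-\mathbf{e}_j\|_\infty=\cO(r_n)$; moreover $\sup_z\PP_n(w_z)=\cO(1)$, since $\EE\{\PP_n(w_z)\}=f_Z(z)+\cO(h^2)$ by Lemma~\ref{lemma:bias} and $\sup_z n^{-1/2}|\GG_n(w_z)|=o(1)$ by Lemma~\ref{lemma:variance}. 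I will also use repeatedly the elementary fact that $\big|\sup_z\max_{(j,k)}|a_{jk}(z)|-\sup_z\max_{(j,k)}|b_{jk}(z)|\big|\le\sup_z\max_{(j,k)}|a_{jk}(z)-b_{jk}(z)|$, so that it suffices to compare summands.

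For $|T_0-T_{00}|$: since $T_{00}$ is $T_0$ recentered by the expectation of its per‑observation summand, the inequality above gives $|T_0-T_{00}|\le\sqrt{nh}\,\sup_z\max_{(j,k)}\big|\{\bTheta_j(z)\}^T\big(\EE\{K_h(Z-z)\bX\bY^T\}-\EE\{K_h(Z-z)\}\bSigma(z)\big)\bTheta_k(z)\big|$. Because $\bE_X(z)$ and $\bE_Y(z)$ are independent, $\EE[\bX\bY^T\mid Z]=\bSigma(Z)$, so the middle matrix equals $\EE\{K_h(Z-z)[\bSigma(Z)-\bSigma(z)]\}$; expanding $\bSigma(\cdot)$ and $f_Z(\cdot)$ to second order about $z$ and using the moment conditions \eqref{prop:kernel} --- exactly the computation in the proof of Lemma~\ref{lemma:bias} --- shows this matrix has max‑norm $\cO(h^2)$ uniformly in $z$. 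Bounding the quadratic form by $\|\bTheta_j(z)\|_1\cdot\cO(h^2)\cdot\|\bTheta_k(z)\|_1\le M^2\,\cO(h^2)$ and multiplying by $\sqrt{nh}$ yields $|T_0-T_{00}|=\cO(\sqrt{nh^5})$.

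For $|T-T_0|$: write $\boldsymbol{\Xi}:=\hat{\bSigma}(z)-\bSigma(z)$, $\boldsymbol{\Delta}_j:=\hat{\bTheta}_j(z)-\bTheta_j(z)$, and $D_j:=\{\hat{\bTheta}_j(z)\}^T\hat{\bSigma}_j(z)$. Using $\bSigma(z)\bTheta_k(z)=\mathbf{e}_k$ and $\{\bTheta_j(z)\}^T\bSigma(z)=\mathbf{e}_j^T$, a direct expansion of \eqref{Eq:one-step} gives
\[
\hat{\bTheta}^{\mathrm{de}}_{jk}(z)-\bTheta_{jk}(z)=-\{\bTheta_j(z)\}^T\boldsymbol{\Xi}\,\bTheta_k(z)+\mathrm{Rem}_{jk}(z),
\]
where (arguments $z$ suppressed) $\mathrm{Rem}_{jk}=(1-D_j^{-1})\{\bTheta_j\}^T\boldsymbol{\Xi}\bTheta_k+(\hat{\bTheta}_{jk}-\bTheta_{jk})(1-D_j^{-1})-D_j^{-1}R_{jk}$ and $R_{jk}=\{\bTheta_j\}^T\boldsymbol{\Xi}\boldsymbol{\Delta}_k+\boldsymbol{\Delta}_j^T\bSigma\boldsymbol{\Delta}_k+\boldsymbol{\Delta}_j^T\boldsymbol{\Xi}\bTheta_k+\boldsymbol{\Delta}_j^T\boldsymbol{\Xi}\boldsymbol{\Delta}_k$. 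On the other hand, rewriting $\hat{\bSigma}(z)=\{\PP_n(w_z)\}^{-1}\,n^{-1}\sum_{i\in[n]}K_h(Z_i-z)\bX_i\bY_i^T$ shows that the inner sum defining $T_0$ equals $\PP_n(w_z)\,\{\bTheta_j(z)\}^T\boldsymbol{\Xi}\,\bTheta_k(z)$, so the summand of $T_0$ is $\sqrt{nh}\,\PP_n(w_z)\,|\{\bTheta_j(z)\}^T\boldsymbol{\Xi}\,\bTheta_k(z)|$; hence $|T-T_0|\le\sqrt{nh}\,\sup_z\max_{(j,k)}\PP_n(w_z)\,|\mathrm{Rem}_{jk}(z)|$. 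Now \eqref{Eq:inverse3} gives $|D_j-1|=\cO(r_n)$, so $D_j^{-1}=\cO(1)$ and $|1-D_j^{-1}|=\cO(r_n)$; together with $|\{\bTheta_j\}^T\boldsymbol{\Xi}\bTheta_k|\le\|\bTheta_j\|_1\|\boldsymbol{\Xi}\|_{\max}\|\bTheta_k\|_1=\cO(r_n)$ and $|\hat{\bTheta}_{jk}-\bTheta_{jk}|=\cO(r_n)$, the first two terms of $\mathrm{Rem}_{jk}$ are $\cO(r_n^2)$. For $R_{jk}$ the crucial step is to control $\|\bSigma\boldsymbol{\Delta}_k\|_\infty$ sharply: since $\bSigma\boldsymbol{\Delta}_k=\bSigma\hat{\bTheta}_k-\mathbf{e}_k=-\boldsymbol{\Xi}\hat{\bTheta}_k+(\hat{\bSigma}\hat{\bTheta}_k-\mathbf{e}_k)$, we get $\|\bSigma\boldsymbol{\Delta}_k\|_\infty\le\|\boldsymbol{\Xi}\|_{\max}\|\hat{\bTheta}_k\|_1+\|\hat{\bSigma}\hat{\bTheta}_k-\mathbf{e}_k\|_\infty=\cO(r_n)$ by \eqref{Eq:inverse3} and $\|\hat{\bTheta}_k\|_1=\cO(1)$; combined with $\|\boldsymbol{\Delta}_j\|_1=\cO(sr_n)$ and $\|\boldsymbol{\Xi}\|_{\max}=\cO(r_n)$, each of the three leading terms of $R_{jk}$ is $\cO(sr_n^2)$ and the cubic term $\boldsymbol{\Delta}_j^T\boldsymbol{\Xi}\boldsymbol{\Delta}_k$ is of smaller order, so $|\mathrm{Rem}_{jk}(z)|=\cO(sr_n^2)$ uniformly. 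Hence $|T-T_0|=\cO(s\sqrt{nh}\,r_n^2)$, and since $r_n^2=h^4+2h^2\sqrt{\log(d/h)/(nh)}+\log(d/h)/(nh)$ this is $\cO\!\big(s\sqrt{nh^9}+sh^2\sqrt{\log(d/h)}+s\log(d/h)/\sqrt{nh}\big)$; adding the bound $\cO(\sqrt{nh^5})$ for $|T_0-T_{00}|$ gives the claim.

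The main obstacle I expect is the bookkeeping in the last paragraph: one has to peel off the single ``oracle'' term $-\{\bTheta_j\}^T\boldsymbol{\Xi}\bTheta_k$ from the de‑biasing expansion and then check that every surviving term carries at least two ``small'' factors, so that it is $\cO(sr_n^2)$ and not merely $\cO(sr_n)$. The delicate point is $\boldsymbol{\Delta}_j^T\bSigma\boldsymbol{\Delta}_k$: bounding it by $\|\boldsymbol{\Delta}_j\|_1\|\bSigma\|_{\max}\|\boldsymbol{\Delta}_k\|_1$ would give only $\cO(s^2r_n^2)$, and the sharp bound forces one through the CLIME feasibility/consistency statement \eqref{Eq:inverse3} via $\bSigma\boldsymbol{\Delta}_k=\bSigma\hat{\bTheta}_k-\mathbf{e}_k$. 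The remaining points --- that $\hat{\bSigma}$ is a ratio estimator, which is why the common factor $\PP_n(w_z)$ (uniformly $\cO(1)$) appears in both $T$ and $T_0$, and that $\|\hat{\bTheta}_k\|_1=\cO(1)$ relies on $sr_n=o(1)$, which is available under the scaling of Theorem~\ref{theorem:gaussian multiplier bootstrap} --- are routine once flagged.
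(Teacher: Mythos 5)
Your decomposition $|T-T_{00}|\le|T-T_0|+|T_0-T_{00}|$ and your treatment of $|T_0-T_{00}|$ as a bias term of order $\sqrt{nh^5}$ agree with the paper. The handling of $|T-T_0|$ is where the two arguments genuinely diverge, and both are correct. The paper's proof rewrites the de-biased summand by introducing the mixed vector $\tilde{\bTheta}_k(z)=(\hat{\bTheta}_{1k},\ldots,\hat{\bTheta}_{(j-1)k},\bTheta_{jk},\hat{\bTheta}_{(j+1)k},\ldots,\hat{\bTheta}_{dk})^T$, so that $\hat{\bTheta}^{\mathrm{de}}_{jk}-\bTheta_{jk}=-D_j^{-1}\{\hat{\bTheta}_j\}^T(\hat{\bSigma}\tilde{\bTheta}_k-\mathbf{e}_k)$; it then applies $|x/(1+\delta)-y|\le 2|y||\delta|+2|x-y|$ with $x=\{\hat{\bTheta}_j\}^T(\hat{\bSigma}\tilde{\bTheta}_k-\mathbf{e}_k)$, $\delta=D_j-1$, $y=\{\bTheta_j\}^T\boldsymbol{\Xi}\bTheta_k$, and splits $x-y$ into $\{\hat{\bTheta}_j\}^T\hat{\bSigma}(\tilde{\bTheta}_k-\bTheta_k)$ and $\boldsymbol{\Delta}_j^T\boldsymbol{\Xi}\bTheta_k$; the former is controlled by $\|\{\hat{\bTheta}_j\}^T\hat{\bSigma}_{-j}\|_\infty=\cO(r_n)$ directly from \eqref{Eq:inverse3}. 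You instead expand $\hat{\bTheta}^{\mathrm{de}}_{jk}-\bTheta_{jk}$ fully around $-\{\bTheta_j\}^T\boldsymbol{\Xi}\bTheta_k$ and control the remainder term by term, with the key step being the identity $\bSigma\boldsymbol{\Delta}_k=-\boldsymbol{\Xi}\hat{\bTheta}_k+(\hat{\bSigma}\hat{\bTheta}_k-\mathbf{e}_k)$ that turns the naive $\cO(s^2r_n^2)$ bound for $\boldsymbol{\Delta}_j^T\bSigma\boldsymbol{\Delta}_k$ into the sharp $\cO(sr_n^2)$. The two routes are of course algebraically equivalent and both lean on the same CLIME feasibility bound, but your version is more transparent about the structure of the remainder at the cost of (i) producing an explicit cubic term $\boldsymbol{\Delta}_j^T\boldsymbol{\Xi}\boldsymbol{\Delta}_k$, which forces the extra standing hypothesis $sr_n=\cO(1)$ (the paper's $\tilde{\bTheta}_k$ trick absorbs this term into $I_{21}$ and therefore does not need $sr_n=\cO(1)$ for this lemma in isolation, although it holds under the scaling of Theorem~\ref{theorem:gaussian multiplier bootstrap}, as you correctly flag), and (ii) one small citation imprecision: the bound $\|\hat{\bSigma}\hat{\bTheta}_k-\mathbf{e}_k\|_\infty=\cO(r_n)$ is the CLIME constraint itself, not literally \eqref{Eq:inverse3}, which states $\|\{\hat{\bTheta}_j\}^T\hat{\bSigma}-\mathbf{e}_j\|_\infty=\cO(r_n)$; since $\hat{\bSigma}(z)$ is not exactly symmetric, the two differ by $\|(\hat{\bSigma}^T-\hat{\bSigma})\hat{\bTheta}_k\|_\infty=\cO(r_n)$, so the conclusion is unaffected but the reference should be tightened.
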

\begin{proof}
The proof is deferred to \ref{proof:lemma:T0andT}.
\end{proof}

We now apply Theorems A.1 and A.2 in \citet{chernozhukov2014anti} to show that there exists a Gaussian process $W$ such that the quantities $|T_{00}-W|$ and $|T_{00}^B-W|$ can be controlled, respectively. 
The results are stated in the following lemmas.
\begin{lemma}
\label{lemma:T00andW}
Assume that $\log^{6} s \cdot \log^4 (d/h)/(nh) = o(1)$.  Under Assumptions~\ref{ass:marginal}-\ref{ass:covariance}, for sufficiently large $n$, there exists universal constants $C,C'>0$ such that 
\[
P\left[|T_{00}-W| \ge C\cdot   \left\{\frac{\log^{6} (s) \cdot \log^4 (d/h)}{nh}\right\}^{1/8} \right]  \le C'\cdot  \left\{\frac{\log^{6} (s) \cdot \log^4 (d/h)}{nh}\right\}^{1/8}.
\]
\end{lemma}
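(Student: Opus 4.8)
The plan is to recognize $T_{00}$ as the supremum norm of a recentered empirical process indexed by a \emph{deterministic} function class and then invoke the Gaussian approximation for suprema of empirical processes (Theorem~A.1 in \citet{chernozhukov2014anti}). Concretely, for $z\in[0,1]$ and $(j,k)\in E(z)$ let $J_{z,jk}$ be the centered function defined in \eqref{EqA:jzjk}, and set $\cF=\{J_{z,jk}: z\in[0,1],\,(j,k)\in E(z)\}$. Inspecting \eqref{proof:theorem:gaussian multiplier bootstrap2}, the term subtracted inside $T_{00}$ is exactly $\EE[\,\sqrt h\,\{\bTheta_j(z)\}^T K_h(Z-z)\{\bX\bY^T-\bSigma(z)\}\bTheta_k(z)\,]$, so $\EE[J_{z,jk}]=0$ and
\[
T_{00}=\underset{z\in[0,1]}{\sup}\;\underset{(j,k)\in E(z)}{\max}\;\Big|\tfrac{1}{\sqrt n}\sum_{i\in[n]}J_{z,jk}(Z_i,\bX_i,\bY_i)\Big|=\|\GG_n\|_{\cF},
\]
the supremum norm of the empirical process $f\mapsto n^{-1/2}\sum_i f(Z_i,\bX_i,\bY_i)$ over $\cF$. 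The Gaussian process $W$ in the statement is then taken to be (a copy of) $\sup_{f\in\cF}|W_P(f)|$, where $W_P$ is the tight centered Gaussian process indexed by $\cF$ with covariance $\EE[W_P(f)W_P(f')]=\mathrm{Cov}(f,f')$, and the claim is a coupling bound between $T_{00}$ and $W$.

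To apply \citet{chernozhukov2014anti} I would verify three ingredients for $\cF$. First, the \emph{envelope}: writing $\{\bTheta_j(z)\}^T(\bX_i\bY_i^T)\bTheta_k(z)=(\{\bTheta_j(z)\}^T\bX_i)(\{\bTheta_k(z)\}^T\bY_i)$ exhibits the bilinear form as a product of two conditionally Gaussian, hence sub-Gaussian, variables; their $\psi_2$-norms are controlled by $\bTheta_{jj}(z)\le\rho$ together with $\|\bTheta_j(z)\|_1^2\,\|\Lb_X(z)\|_{\max}\le M^2\|\Lb_X\|_{\max}$ (a uniform bound over the $s$ active coordinates of $\bTheta_j(z)$ costs an extra $\log s$), so the product is sub-exponential and, multiplied by $|\sqrt h\,K_h|\le\|K\|_\infty/\sqrt h$, the class admits an envelope $F$ with $\|F\|_{\psi_1}=\cO(\log(s)/\sqrt h)$; equivalently, on the truncation event $\cA$ of \eqref{proof:lemma:empirical process g0} one may take a bounded envelope of order $\log(ds)/\sqrt h$. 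Second, the \emph{variance}: $\sup_{f\in\cF}\mathrm{Var}(f)=\cO(1)$, since $h\cdot\mathrm{Var}(\{\bTheta_j\}^T K_h(\bX\bY^T-\bSigma)\bTheta_k)\lesssim h\cdot\EE\{K_h^2(Z-z)\}\cdot\mathrm{const}=\cO(1)$ by the change of variable and second-order Taylor expansion of $f_Z$ already used in \eqref{proof:lemma:empirical process w2-5}. Third, the \emph{VC-type entropy}: the index set factors as $[0,1]\times\{(j,k)\in E(z)\}$, the second factor contributing at most $d^2$ functions, while the $z$-continuum is handled by the bounded-variation hypothesis $\|K\|_{\mathrm{TV}}<\infty$ — the map $z\mapsto K_h(\cdot-z)$ has $L_1$-total variation $\lesssim\|K\|_{\mathrm{TV}}/h$ and $z\mapsto\bTheta_j(z),\bSigma(z)$ are Lipschitz by Assumption~\ref{ass:covariance} — so the arguments underlying the covering-number bounds of Lemma~\ref{lemma:variance} (cf. Lemma~\ref{lemma:coveringnumbergzjk}) give $\sup_Q N\{\cF,L_2(Q),\epsilon\|F\|_{Q,2}\}\le(A/\epsilon)^v$ with a fixed index $v$ and $\log A=\cO(\log(d/h))$, uniformly over $\bTheta(\cdot)\in\cU_{s,M}$.

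With these in hand, Theorem~A.1 of \citet{chernozhukov2014anti} supplies, on a suitably enriched probability space, a random variable $W\stackrel{d}{=}\sup_{f\in\cF}|W_P(f)|$ together with a coupling inequality of the form $P\{|T_{00}-W|\ge C\,\delta_n^{1/8}\}\le C'\,\delta_n^{1/8}$, where $\delta_n$ is a product of the squared envelope size $b^2=\cO(\log^2(s)/h)$, a complexity factor $K_n=\cO(\log(d/h))$ (entering to a fixed power $\ge 1$ through the several chained steps — Gaussian approximation and the truncation on $\cA$), and $1/n$. Collecting the $\log s$ contributions from the envelope and the $\log(d/h)$ contributions from $K_n$ collapses $\delta_n$ to the stated order $\log^6(s)\log^4(d/h)/(nh)$, and the hypothesis $\log^6(s)\log^4(d/h)/(nh)=o(1)$ is precisely the condition making the coupling error vanish (and ensuring $K_n$ is the dominant complexity term). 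The main obstacle is the third ingredient: showing that $\cF$ is genuinely VC-type with a constant index and $\log A=\cO(\log(d/h))$, uniformly over the parameter class, which forces the careful exploitation of the total-variation bound on $K$ and the smoothness of $\bTheta(\cdot)$ and $\bSigma(\cdot)$; a secondary, purely bookkeeping difficulty is carrying the $\log s$, $\log(d/h)$ and $1/h$ factors through \citet{chernozhukov2014anti}'s coupling inequality to land on exactly the exponent $1/8$.
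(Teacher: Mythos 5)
Your approach matches the paper's: recognize $T_{00}$ as $\|\GG_n\|_{\cJ}$ for the deterministic centered class $\cJ=\{J_{z,jk}\}$, verify an envelope $b\asymp\log s/\sqrt h$ (via truncation over the $s$-sparse support of $\bTheta_j(z)$, not over all $d$ coordinates), a second-moment parameter $\sigma_J$, and a VC-type entropy bound with $\log A=\cO(\log(d/h))$ obtained by exploiting the bounded variation of $K$ together with the Lipschitz dependence of $\bTheta_j(z)$ and $\bSigma(z)$ on $z$, then invoke Theorem~A.1 of \citet{chernozhukov2014anti} with the balancing choice of $\gamma$. One small inconsistency in your bookkeeping: you claim $\sigma_J^2=\cO(1)$ by an Isserlis-type computation, but the paper actually uses the looser $\sigma_J^2=\cO(\log^2 s)$ coming from Holder's inequality plus the truncation bound $\max_{j\in\cS_j,k\in\cS_k}|X_jY_k|\lesssim\log s$ on the active support; it is precisely this $\log^2 s$, combined with $b\asymp\log s/\sqrt h$, that produces the $\log^6 s$ in the stated rate via $\gamma\asymp(b\,\sigma_J^2\,K_n^2/\sqrt n)^{1/4}$, whereas your tighter $\sigma_J=\cO(1)$ would balance to $\{\log^2 s\,\log^4(d/h)/(nh)\}^{1/8}$ (still implying the lemma a fortiori, but not ``collapsing to $\log^6 s$'' as your narrative suggests).
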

\begin{proof}
The proof is deferred to \ref{proof:lemma:T00andW}.
\end{proof}

\begin{lemma}
\label{lemma:T00BandW}
Assume that $\log^4 (s) \cdot \log^3 (d/h)/(nh) = o(1)$.  Under Assumptions~\ref{ass:marginal}-\ref{ass:covariance}, for sufficiently large $n$, there exists universal constants $C,C''>0$ such that 
\[
P\left[ |T_{00}^B- W| > C\cdot  \left\{ \frac{\log^4 (s) \cdot \log^3 (d/h)}{nh}  \right\}^{1/8 } \; \;\Big|\; \left\{(Z_i,\bX_i,\bY_i)\right\}_{i\in [n]} \right] \le C''\cdot  \left\{ \frac{\log^4 (s) \cdot \log^3 (d/h)}{nh}  \right\}^{1/8},
\]
with probability at least $1-3/n$.
\end{lemma}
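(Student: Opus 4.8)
The plan is to exploit that, conditional on the data $\{(Z_i,\bX_i,\bY_i)\}_{i\in[n]}$, the statistic $T_{00}^B$ in \eqref{proof:theorem:gaussian multiplier bootstrap4} is \emph{exactly} the supremum of the absolute value of a Gaussian process. Writing the centered summand as
\[
\psi_{z,jk}(Z_i,\bX_i,\bY_i)=\sqrt{h}\,\{\bTheta_j(z)\}^T\!\Big[K_h(Z_i-z)\{\bX_i\bY_i^T-\bSigma(z)\}-\big(\EE[K_h(Z-z)\bX\bY^T]-\EE[K_h(Z-z)]\bSigma(z)\big)\Big]\bTheta_k(z),
\]
the map $(z,j,k)\mapsto \tfrac1{\sqrt n}\sum_{i\in[n]}\psi_{z,jk}(Z_i,\bX_i,\bY_i)\,\xi_i$ is linear in the i.i.d.\ multipliers $\xi_i\sim N(0,1)$, hence conditionally Gaussian with conditional covariance $\widehat C_n\big((z,j,k),(z',j',k')\big)=\PP_n(\psi_{z,jk}\psi_{z',j'k'})$. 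The process $W$ built in Lemma~\ref{lemma:T00andW} is, by construction, the centered Gaussian process on the same index set with population covariance $C\big((z,j,k),(z',j',k')\big)=\EE[\psi_{z,jk}\psi_{z',j'k'}]$. So, unlike in Lemma~\ref{lemma:T00andW}, no genuine Gaussian approximation of an empirical process is needed here; it remains only to (i) bound $\|\widehat C_n-C\|_{\max}$ uniformly, and (ii) turn that covariance closeness into a conditional coupling of $T_{00}^B$ with $W$ via a Gaussian-to-Gaussian comparison and anti-concentration.

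For step (i) I would write $\widehat C_n-C=\PP_n(\psi_{z,jk}\psi_{z',j'k'})-\EE[\psi_{z,jk}\psi_{z',j'k'}]$ and bound it, uniformly over $z,z'\in[0,1]$ and all edge pairs, by the Talagrand inequality (Lemma~\ref{lemma:ep1}) applied to the product class $\{\psi_{z,jk}\psi_{z',j'k'}\}$, exactly as in the proof of Lemma~\ref{lemma:variance}. After conditioning on the Gaussian tail event $\cA$ in \eqref{proof:lemma:empirical process g0}, each $\psi_{z,jk}$ has envelope $\cO(h^{-1/2}\log(d/h))$ and variance $\cO(1)$ --- note that $\{\bTheta_j(z)\}^T\bX$ is a scalar Gaussian with $\cO(1)$ variance because $\bTheta(z)\in\cU_{s,M}$, so the bilinear form $\{\bTheta_j(z)\}^T\bX\bY^T\bTheta_k(z)$ concentrates with a fixed sub-exponential parameter and the bound picks up no \emph{polynomial} dependence on $s$ --- while the covering number of $\{\psi_{z,jk}\psi_{z',j'k'}\}$ in $L_2(Q)$ is polynomial in $d/h$ by Lemmas~\ref{lemma:coveringnumberkz}--\ref{lemma:coveringnumbergzjk} together with the derivative-of-convolution identity \eqref{EqA:convolution derivative}. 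This yields $\|\widehat C_n-C\|_{\max}=\cO_P\big(\sqrt{\log^{a}(s)\log^{b}(d/h)/(nh)}\big)$ for small powers $a,b$, on an event of probability at least $1-3/n$.

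For step (ii) I would invoke the Gaussian comparison bound (Theorem A.2 of \citet{chernozhukov2014anti}, as already used for Lemma~\ref{lemma:T00andW}): on the event $\|\widehat C_n-C\|_{\max}\le\Delta_n$, the conditional law of $T_{00}^B$ and the law of $W$ differ in Kolmogorov distance by at most $C\,\Delta_n^{1/3}(\log(d/h))^{2/3}$, where the logarithmic factor is controlled by the same entropy estimates as in step (i); combining this with the anti-concentration inequality for $W$ (Theorem A.1 of \citet{chernozhukov2014anti}) and the quantile (Strassen) coupling then produces, conditionally on the data, a copy of $W$ for which $P\big(|T_{00}^B-W|>\eta\mid\text{data}\big)$ is bounded by a power of $\log^4(s)\log^3(d/h)/(nh)$; balancing $\eta$ against the covariance error and substituting the rate from step (i) gives the stated bound with exponent $1/8$ under $\log^4(s)\log^3(d/h)/(nh)=o(1)$. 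The main obstacle is step (i): controlling the \emph{product} empirical process uniformly over a continuum of pairs $(z,z')$ and over all $d^2\times d^2$ index pairs while keeping the polylogarithmic factors down to $\log^4(s)\log^3(d/h)$ --- this requires a truncation so that the sub-exponential summands behave like bounded ones on $\cA$, a clean entropy bound for the bilinear class (most easily obtained by viewing it as a Lipschitz image in $(z,z')$ of the product of two copies of the class in Lemma~\ref{lemma:coveringnumbergzjk}), and careful bookkeeping of the variance proxy in Talagrand's inequality, which is where the residual $\log$ powers enter. Everything after that is a direct transcription of the CCK comparison machinery already deployed on the non-bootstrap side.
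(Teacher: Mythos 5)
Your plan is correct in spirit, but it amounts to re-deriving from scratch exactly the result that the paper invokes as a black box. The paper's proof is one step: since $T_{00}^B = \sup_z\max_{j,k}\big|n^{-1/2}\sum_i J_{z,jk}\xi_i\big|$ is precisely the multiplier bootstrap statistic for the class $\cJ = \{J_{z,jk}\}$, Theorem~A.2 of \citet{chernozhukov2014anti} (which is the multiplier-bootstrap coupling theorem, not a pure Gaussian comparison bound) already outputs the conditional Strassen coupling $P(|T_{00}^B - W|>\psi_n+\delta\mid \text{data})\le C''\gamma_n(\delta)$ on an event of probability $1-3/n$, with $\psi_n$, $\gamma_n$ explicit functions of the envelope $b=C\log s/\sqrt{h}$, entropy level $K_n=C\log(d/h)$, and variance proxy $\sigma_J=C\log s$, all of which were already computed in the proof of Lemma~\ref{lemma:T00andW}; choosing $\delta = \{\log^4 s\log^3(d/h)/(nh)\}^{1/8}$ balances the terms and gives the stated rate. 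Your steps~(i) and~(ii) --- uniformly control $\|\widehat C_n - C\|_{\max}$ for the pair-class $\{\psi_{z,jk}\psi_{z',j'k'}\}$ by Talagrand, then pass to a Kolmogorov bound via Gaussian comparison, use anti-concentration, and apply quantile coupling --- are exactly the internals of that theorem, so you are proving rather than applying it; this is sound but much heavier, and step~(i) in particular (a uniform bound over the product index set $[0,1]^2\times[d]^4$ with the right polylog powers) is nontrivial bookkeeping that the paper avoids entirely. Two attributions should also be corrected: the paper uses Theorem~A.1 of CCK~2014 (the empirical-process coupling) in Lemma~\ref{lemma:T00andW} and Theorem~A.2 (the bootstrap coupling) here, not A.2 in both places; and Theorem~A.2 is not itself a ``Gaussian comparison bound'' --- the comparison and anti-concentration lemmas you describe are ingredients in its proof, not the theorem statement. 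Your observation that $\{\bTheta_j(z)\}^T\bX$ is a scalar Gaussian of $\cO(1)$ variance (so that the bilinear form is sub-exponential without polynomial $s$-dependence) is a nice shortcut that the paper handles slightly differently (restricting the max to the size-$s$ supports to get a $\log s$ rather than $\log d$ envelope), but it does not change the conclusion.
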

\begin{proof}
The proof is deferred to \ref{proof:lemma:T00BandW}.
\end{proof}

Finally, the following lemma provides an upper bound on the difference between $T^B$ and $T_{00}^B$, conditioned on the data $\left\{(Z_i,\bX_i,\bY_i)\right\}_{i\in [n]}$.

\begin{lemma}
\label{lemma:T00BandTB}
Assume that $ s \cdot \sqrt{h^3 \log^3 (d/h)} + s \cdot \sqrt{\log^4 (d/h)/nh^2} + \sqrt{h^5 \log n} = o(1)$.  Under Assumptions~\ref{ass:marginal}-\ref{ass:covariance}, for sufficiently large $n$, there exists universal constants $C,C''>0$ such that, with probability at least $1-1/d$,  
\[
\small
P\left[ |T^B- T_{00}^B| > C \cdot \sqrt{h^3 \log^3 (d/h)} + s \cdot \sqrt{\frac{\log^4 (d/h)}{nh^2}} + \sqrt{h^5 \log n} \; \;\Big|\; \left\{(Z_i,\bX_i,\bY_i)\right\}_{i\in [n]} \right] \le 2/d + 1/n.
\]

\end{lemma}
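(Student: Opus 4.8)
The plan is to bound $|T^B - T_{00}^B|$ by inserting the uncentered multiplier process $T_0^B$ of \eqref{proof:theorem:gaussian multiplier bootstrap3}, so $|T^B-T_{00}^B|\le|T^B-T_0^B|+|T_0^B-T_{00}^B|$, and to run the entire argument conditionally on $\{(Z_i,\bX_i,\bY_i)\}_{i\in[n]}$, so that $\hat{\bTheta}_j(z)$, $\hat{\bSigma}(z)$, the kernel weights and the $\bX_i,\bY_i$ are frozen and each of $T^B,T_0^B,T_{00}^B$ is a supremum over $(z,j,k)$ of a mean-zero Gaussian process in $\xi_1,\dots,\xi_n$. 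I would work on the intersection of the events furnished by Corollary~\ref{theorem:inverse estimation}, Lemma~\ref{lemma:bias}, Lemma~\ref{lemma:variance} and the Gaussian-tail event $\cA$ of \eqref{proof:lemma:empirical process g0}; this intersection has probability at least $1-1/d$, which is the outer probability in the statement. Write $\Delta_j(z):=\hat{\bTheta}_j(z)-\bTheta_j(z)$, so $\delta_s:=\sup_z\max_j\|\Delta_j(z)\|_1=\cO_P[s\{h^2+\sqrt{\log(d/h)/(nh)}\}]$ by \eqref{Eq:inverse2}, and note that, since $\bSigma(z)\bTheta_k(z)=\mathbf{e}_k$, the $(z,j,k)$-summand of $T_0^B$ equals $K_h(Z_i-z)\{(\bTheta_j(z)^T\bX_i)(\bY_i^T\bTheta_k(z))-\bTheta_{jk}(z)\}$.

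\emph{Step 1 (de-centering).} The processes $T_0^B$ and $T_{00}^B$ differ only in that the $(z,j,k)$-summand of $T_{00}^B$ has its data-mean $\mu_{z,jk}:=\{\bTheta_j(z)\}^T(\EE[K_h(Z-z)\bX\bY^T]-\EE[K_h(Z-z)]\bSigma(z))\bTheta_k(z)$ subtracted. Because $\EE[X_jY_k\mid Z]=\bSigma_{jk}(Z)$, the Taylor-expansion argument of Lemma~\ref{lemma:bias} gives $\|\EE[K_h(Z-z)\bX\bY^T]-\EE[K_h(Z-z)]\bSigma(z)\|_{\max}=\cO(h^2)$ uniformly in $z$, and with $\|\bTheta_j(z)\|_1\le M$ this yields $\sup_z\max_{j,k}|\mu_{z,jk}|=\cO(h^2)$. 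Hence
\[
|T_0^B-T_{00}^B|\le\sqrt{nh}\,\Big(\sup_z\max_{j,k}|\mu_{z,jk}|\Big)\Big|\frac1n\sum_{i\in[n]}\xi_i\Big|=\cO(\sqrt{nh}\,h^2)\Big|\frac1n\sum_{i\in[n]}\xi_i\Big|,
\]
and since $n^{-1}\sum_i\xi_i\sim N(0,1/n)$ we have $|n^{-1}\sum_i\xi_i|\le C\sqrt{\log n/n}$ with conditional probability at least $1-1/n$, giving $|T_0^B-T_{00}^B|\le C\sqrt{h^5\log n}$.

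\emph{Step 2 (replacing $\hat{\bTheta},\hat{\bSigma}$ by the truth).} Put $\hat d_{z,j}:=\{\hat{\bTheta}_j(z)\}^T\hat{\bSigma}_j(z)$, whose population value is $1$; \eqref{Eq:inverse3} gives $\sup_z\max_j|\hat d_{z,j}-1|\le\sup_z\max_j\|\{\hat{\bTheta}_j(z)\}^T\hat{\bSigma}(z)-\mathbf{e}_j\|_\infty=\cO_P\{h^2+\sqrt{\log(d/h)/(nh)}\}=o_P(1)$, so on the good event $1/2\le\hat d_{z,j}\le2$. With $\hat S_{z,jk},S_{z,jk}$ the (conditionally Gaussian) summand-averages of $T^B$ and $T_0^B$, decompose $\hat S/\hat d-S=(\hat S-S)/\hat d+S(1-\hat d)/\hat d$, so $|T^B-T_0^B|\le\mathrm{I}+\mathrm{II}$ with $\mathrm{I}:=2\sqrt{nh}\sup_z\max_{j,k}|\hat S_{z,jk}-S_{z,jk}|$ and $\mathrm{II}:=C\big(\sup_z\max_j|1-\hat d_{z,j}|\big)T_0^B$. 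For $\mathrm{II}$, a conditional Gaussian maximal inequality (Dudley's entropy bound plus Borell--TIS, with the entropy of $z\in[0,1]$ controlled by $\|K\|_{\mathrm{TV}}$ exactly as in Lemmas~\ref{lemma:coveringnumberkz}--\ref{lemma:coveringnumbergzjk} and the $d^2$ pairs giving a $\log d$ factor) together with the uniform variance bound $\sup_z\max_{j,k}\frac hn\sum_i K_h^2(Z_i-z)\{(\bTheta_j^T\bX_i)(\bY_i^T\bTheta_k)-\bTheta_{jk}(z)\}^2=\cO_P(1)$ — obtained from an $\cO(1/h)$ second-moment computation and a Talagrand step on $\cA$ as in Lemmas~\ref{lemma:empirical process w}--\ref{lemma:empirical process g} — shows $T_0^B\le C\sqrt{\log(d/h)}$ with conditional probability at least $1-1/d$; combined with $\sup_z\max_j|1-\hat d_{z,j}|=\cO_P\{h^2+\sqrt{\log(d/h)/(nh)}\}$ this gives $\mathrm{II}\le C\{h^2\sqrt{\log(d/h)}+\log(d/h)/\sqrt{nh}\}$, which is dominated by $\sqrt{h^3\log^3(d/h)}+s\sqrt{\log^4(d/h)/(nh^2)}$ because $h\le\log^2(d/h)$ and $\sqrt h\le s\log(d/h)$ eventually. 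For $\mathrm{I}$, writing $\hat{\bTheta}_j=\bTheta_j+\Delta_j$, $\hat{\bTheta}_k=\bTheta_k+\Delta_k$ and expanding, $\hat S_{z,jk}-S_{z,jk}=\frac1n\sum_i K_h(Z_i-z)\{(\bTheta_j^T\bX_i)(\bY_i^T\Delta_k)+\Delta_j^T(\bX_i\bY_i^T\bTheta_k-\mathbf{e}_k)+\Delta_j^T\bX_i\bY_i^T\Delta_k\}\xi_i$, whose conditional variance is $n^{-2}\sum_i c_{i,z,jk}^2$; each squared term is a quadratic form in $\Delta_j$ or $\Delta_k$, e.g. $\frac1n\sum_i K_h^2(Z_i-z)\{\Delta_j^T(\bX_i\bY_i^T\bTheta_k-\mathbf{e}_k)\}^2=\Delta_j^T\big(\frac1n\sum_i K_h^2(Z_i-z)v_{i,z,k}v_{i,z,k}^T\big)\Delta_j\le\|\Delta_j\|_1^2\big\|\frac1n\sum_i K_h^2(Z_i-z)v_{i,z,k}v_{i,z,k}^T\big\|_{\max}$ with $v_{i,z,k}:=\bX_i\bY_i^T\bTheta_k(z)-\mathbf{e}_k$, and the same empirical-process/Talagrand argument on $\cA$ shows $\sup_z\max_k\big\|\frac1n\sum_i K_h^2(Z_i-z)v_{i,z,k}v_{i,z,k}^T\big\|_{\max}=\cO_P(1/h)$ (the fluctuation is $o(1/h)$ since $\log^2(d/h)/(nh)=o(1)$); analogous bounds on the other two terms give $\sup_z\max_{j,k}\frac1n\sum_i c_{i,z,jk}^2=\cO_P(\delta_s^2/h)$, hence $\sqrt{nh}(\hat S_{z,jk}-S_{z,jk})$ has conditional variance $\cO_P(\delta_s^2)$ uniformly. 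A second application of the conditional Gaussian maximal inequality — again with the $z$-entropy controlled by $\|K\|_{\mathrm{TV}}$ and by the Lipschitz-in-$z$ continuity of $\bTheta(\cdot)$ (Assumption~\ref{ass:covariance}) and of $\hat{\bTheta}(\cdot)$ — gives $\mathrm{I}\le C\sqrt{\log(d/h)}\,\delta_s$ with conditional probability at least $1-1/d$; plugging in $\delta_s$ and using $nh^6=o(1)$ (a consequence of $nh^5=o(1)$ in Theorem~\ref{theorem:gaussian multiplier bootstrap}) to get $sh^2\sqrt{\log(d/h)}\le s\sqrt{\log^4(d/h)/(nh^2)}$, and $h\le\log^2(d/h)$ for the remaining term, yields $\mathrm{I}\le Cs\sqrt{\log^4(d/h)/(nh^2)}$. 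Collecting Steps 1 and 2 gives the stated bound with conditional failure probability at most $2/d+1/n$ (two uses of the maximal inequality, $1/d$ each, and the $n^{-1}\sum\xi_i$ bound, $1/n$).

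\emph{Main obstacle.} The delicate part is term $\mathrm{I}$ of Step 2: making rigorous the uniform-in-$z$ conditional Gaussian maximal inequality and the uniform control $\sup_z\max(\cdot)\frac1n\sum_i K_h^2(Z_i-z)(\cdots)=\cO_P(1/h)$. Both rest on bounding the entropy of the relevant function classes as $z$ ranges over $[0,1]$, which is exactly the role of the total-variation bound on $K$ together with the smoothness of $\bSigma(\cdot)$ (hence of $\bTheta(\cdot)$ and, through the CLIME map, of $\hat{\bTheta}(\cdot)$), combined with a Talagrand-type inequality for the suprema — the toolkit already developed for Lemmas~\ref{lemma:empirical process w}--\ref{lemma:empirical process g}. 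The only genuinely fussy point is the accounting that keeps the powers of $s$ and $\log(d/h)$ inside the claimed bound, which is why the scaling conditions of the lemma take their stated form.
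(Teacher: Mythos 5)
Your decomposition $|T^B-T_{00}^B|\le|T^B-T_0^B|+|T_0^B-T_{00}^B|$ and Step~1 (bounding $|T_0^B-T_{00}^B|$ by $\sqrt{nh}\cdot\cO(h^2)\cdot|n^{-1}\sum_i\xi_i|\le C\sqrt{h^5\log n}$ with conditional probability $1-1/n$) match the paper exactly. In Step~2 your term $\mathrm{II}$ is also fine: pulling out $\sup_z\max_j|1-\hat d_{z,j}|$ via \eqref{Eq:inverse3} and bounding $T_0^B$ by a Dudley--Borell argument on a process involving only the \emph{true} $\bTheta(\cdot)$ (which is smooth) and the kernel is sound, and your resulting $\cO(h^2\sqrt{\log(d/h)}+\log(d/h)/\sqrt{nh})$ is in fact tighter than what the paper records before absorption into the stated rate.

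The gap is in term $\mathrm{I}$. You expand $\hat{\bTheta}_j=\bTheta_j+\Delta_j$, bound the pointwise conditional variance of $\sqrt{nh}(\hat S_{z,jk}-S_{z,jk})$ by $\cO_P(\delta_s^2)$, and then invoke ``a second application of the conditional Gaussian maximal inequality --- again with the $z$-entropy controlled by $\|K\|_{\mathrm{TV}}$ and by the Lipschitz-in-$z$ continuity of $\bTheta(\cdot)$ and of $\hat{\bTheta}(\cdot)$.'' That last clause is where the argument breaks: Dudley's bound needs an entropy estimate for the index set $\{(z,j,k)\}$ in the conditional $L_2$ metric, and the summands $c_{i,z,jk}$ contain $\Delta_j(z)=\hat{\bTheta}_j(z)-\bTheta_j(z)$ and $\hat{\bTheta}_k(z)$. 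The CLIME map $\hat\bSigma(z)\mapsto\hat\bTheta(z)$ in \eqref{Eq:clime} is the solution of a linear program at each $z$ and is neither Lipschitz nor even continuous in $z$, so you cannot control the increments $\|c_{\cdot,z,jk}-c_{\cdot,z',jk}\|_{L_2(\PP_n)}$ by $|z-z'|$ the way you can for the deterministic $\bTheta(\cdot)$ (which is Lipschitz via Assumption~\ref{ass:covariance}). The paper sidesteps this precisely by applying H\"older's inequality \emph{before} taking the supremum over $z$ --- that is, using $\sup_z f(z)g(z)\le(\sup_z f(z))(\sup_z g(z))$ to pull $\sup_z\max_j\|\hat{\bTheta}_j(z)-\bTheta_j(z)\|_1$ out as a single high-probability scalar (Corollary~\ref{theorem:inverse estimation}), leaving a residual Gaussian process, e.g.\ $\sqrt{nh}\sup_z\max_{j,k}|n^{-1}\sum_i K_h(Z_i-z)\{X_{ij}Y_{ik}-\bSigma_{jk}(z)\}\xi_i|$, whose index class involves only $K_h$ and the true $\bSigma(\cdot)$. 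The covering-number lemmas (Lemmas~\ref{lemma:coveringnumberkz}--\ref{lemma:coveringnumbergzjk}) apply directly to that class, giving the $\sqrt{\log^3(d/h)/h}$ bound in \eqref{Eq:proof:lemma:T00BandT0B7}. To repair your argument, factor $\|\Delta_j(z)\|_1$, $\|\Delta_k(z)\|_1$ and $\|\hat{\bTheta}_j(z)^T\hat\bSigma(z)-\mathbf{e}_j\|_\infty$ out of the supremum first (as in \eqref{Eq:proof:lemma:T00BandT0B2}, \eqref{Eq:proof:lemma:T00BandT0B8} and \eqref{i3holderadd}), and only then apply Dudley and Borell to the residual $\hat{\bTheta}$-free process.
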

\begin{proof}
The proof is deferred to \ref{proof:lemma:T00BandTB}.
\end{proof}

With Lemmas~\ref{lemma:T0andT}-\ref{lemma:T00BandTB}, we are now ready to prove Theorem~\ref{theorem:gaussian multiplier bootstrap}.
\subsection{Proof of Theorem~\ref{theorem:gaussian multiplier bootstrap}}
\label{proof:theorem:gaussian multiplier bootstrap}
Recall that for notational convenience, we drop the subscript $E$ from $T_E$ and $T_E^B$ throughout the proof.
In this section, we show that $T$ can be well-approximated by the $(1-\alpha)$-conditional quantile of $T^B$, i.e., $P\{T\ge c(1-\alpha)\}\le \alpha$.
For notational convenience, we let $r=r_1+r_2+r_3+r_4$, where 
\begin{equation*}
\begin{split}
r_1 &= \sqrt{nh^5} + s\cdot \sqrt{nh^9} + \frac{s \cdot \log (d/h)}{\sqrt{nh}} +\cdot  s\cdot h^2\cdot \sqrt{\log (d/h)}\\
r_2 &=   \left\{\frac{\log^{6} s \cdot \log^4 (d/h)}{nh}\right\}^{1/8}\\
r_3 &=   \left\{\frac{\log^{4} s \cdot \log^3 (d/h)}{nh}\right\}^{1/8}\\
r_4 &=   \sqrt{h^3 \log^3 (d/h)} + s \cdot \sqrt{\frac{\log^4 (d/h)}{nh^2}} + \sqrt{h^5 \log n}.
\end{split}
\end{equation*}
These are the scaling that appears in Lemmas~\ref{lemma:T0andT}-\ref{lemma:T00BandTB}.
By Lemmas~\ref{lemma:T0andT} and~\ref{lemma:T00andW}, it can be shown that 
\begin{equation}
\label{Eq:proof of theorem4.7-1}
P(|T-W| \ge 2r_2) \le P(|T-T_{00}| + |T_{00}-W| \ge 2r_2) \le 2r_2,
\end{equation}
since $r_2 \ge r_1$ and $r_2\ge 1/d$. 
With some abuse of notation, throughout the proof, we write $P_{\xi}(T^B \ge t)$ to indicate $P[T^B \ge t \mid \{(Z_i,\bX_i,\bY_i)\}_{i\in[n]}]$.
By Lemmas~\ref{lemma:T00BandW} and~\ref{lemma:T00BandTB},
we have 
\begin{equation}
\label{Eq:proof of theorem4.7-2}
P_{\xi}(|T^B-W| \ge 2r_2) \le P_{\xi}(|T^B-T^B_{00}| + |T^B_{00}-W| \ge2r_2) \le 2r_2,
\end{equation}
since $r_2 \ge r_3$ and $r_2\ge 2/d+1/n$. Define the event 
\[
\cE = \left(  P[  |T_{00}^B-W| > r_2 \mid \{(Z_i,\bX_i,\bY_i)\}_{i\in [n]} \le r_2]    \right),
\]
and note that $P(\cE) \ge 1-2/d-4/n$ by Lemmas~\ref{lemma:T00BandW} and~\ref{lemma:T00BandTB}. Throughout the proof, we condition on the event $\cE$.

By the triangle inequality, we obtain 
\begin{equation}
\label{Eq:proof of theorem4.7-3}
\begin{split}
P\{T\le c(1-\alpha)\} &\ge 1-P\{T-W+W+2r_2 \ge c(1-\alpha) +2 r_2\}\\
&\ge 1- P(|T-W|\ge 2r_2) - P\{W\ge c(1-\alpha) -2r_2\}\\
&\ge P\{|W|\le c(1-\alpha)-2r_2\} - 2r_2,
\end{split}
\end{equation}
where the last inequality follows from (\ref{Eq:proof of theorem4.7-1}).
By a similar argument and by (\ref{Eq:proof of theorem4.7-2}), we have 
\begin{equation}
\label{Eq:proof of theorem4.7-4}
\begin{split}
P\{W \le c(1-\alpha)-2r_2\} &\ge P_{\xi}\{T^B \le c(1-\alpha)-4r_2\} - 2r_2\\
&\ge P_{\xi}\{T^B \le c(1-\alpha)\} - 2r_2 - P_{\xi}\{|T^B-c(1-\alpha)|\le r_2\},
\end{split}
\end{equation}
where the last inequality follows from the fact that $P(X\le t-\epsilon) - P(X\le t) \ge -P(|X-t|\le \epsilon)$ for any $\epsilon>0$.
Thus, combining (\ref{Eq:proof of theorem4.7-3}) and (\ref{Eq:proof of theorem4.7-4}), we obtain 
\begin{equation}
\label{Eq:proof of theorem4.7-5}
P\{T\le c(1-\alpha)\} \ge 1-\alpha - 4r_2 -  P_{\xi}\{|T^B-c(1-\alpha)|\le r_2\}.
\end{equation}
It remains to show that the quantity $ P_{\xi}\{|T^B-c(1-\alpha)|\le r_2\}$ converges to zero as we increase $n$.

By the definition of $T_{00}$ and from (\ref{EqA:jzjk}), we have 
\[
T_{00} = \underset{z\in [0,1]}{\sup}\; \underset{j,k\in [d]}{\max }\;\frac{1}{\sqrt{n}}  \left|\sum_{i\in [n]}J_{z,jk} (Z_i,\bX_i,\bY_i)\right| \qquad  \mathrm{and} \qquad
T_{00}^B = \underset{z\in [0,1]}{\sup}\; \underset{j,k\in [d]}{\max }\;\frac{1}{\sqrt{n}}\left|  \sum_{i\in [n]}J_{z,jk} (Z_i,\bX_i,\bY_i) \xi_i\right|.
\]
Let $\hat{\sigma}^2_{z,jk} = \sum_{i=1}^n J^2_{z,jk} (Z_i,\bX_i,\bY_i) /n$ be the conditional variance, and let $\underline{\sigma} = \inf_{z,jk} \hat{\sigma}_{z,jk}$ and $\bar{\sigma}=\sup_{z,jk} \hat{\sigma}_{z,jk}$.
By Lemma A.1 of~\citet{chernozhukov2014gaussian} and Theorem 3 of~\citet{chernozhukov2013gaussian}, we obtain
\begin{equation}
\label{Eq:proof of theorem4.7-6}
\begin{split}
&P_{\xi}\{|T^B-c(1-\alpha)|\le r_2\}\\
 &\le C\cdot \bar{\sigma}/\underline{\sigma} \cdot r_2\cdot \{  \EE [T^B \mid  \{(Z_i,\bX_i,\bY_i)\}_{i\in [n]} ]
+ \sqrt{1\vee \log (\underline{\sigma}/r_2)}\}\\
&\le C\cdot \bar{\sigma}/\underline{\sigma} \cdot r_2\cdot \{  \EE [T^B_{00} \mid  \{(Z_i,\bX_i,\bY_i)\}_{i\in [n]} ] + \EE [|T^B-T_{00}^B| \mid  \{(Z_i,\bX_i,\bY_i)\}_{i\in [n]} ]
+ \sqrt{1\vee \log (\underline{\sigma}/r_2)}\}.
\end{split}
\end{equation}

We first calculate the quantity $\bar{\sigma}$.  By~(\ref{proof:lemma:T00andW1}), we have 
\begin{equation}
\label{Eq:proof of theorem4.7-7}
\sup_{z\in [0,1]}\max_{j,k\in [d]} \|J_{z,jk}^2 (Z_i,\bX_i,\bY_i)\|_{\infty} \le C \cdot \log^2 s /h.
\end{equation}
Moreover, by (\ref{proof:lemma:T00andW1}), we have 
\begin{equation}
\label{Eq:proof of theorem4.7-8}
\sup_{z\in [0,1]}\max_{j,k\in [d]} \EE[J^4_{z,jk}(Z_i,\bX_i,\bY_i)]  \le  C \cdot \log^4 s /h^2.
\end{equation}
Define the function class  $\cJ'  = \{J^2_{z,jk}(\cdot) \mid z\in[0,1], j,k\in [d]\}$.  By Lemmas~\ref{lemma:coveringnumber2}, \ref{lemma:coveringnumberjzjk1} and \ref{lemma:coveringnumberjzjk2}, we have 
\begin{equation}
\label{Eq:proof of theorem4.7-9}
\underset{Q}{\sup}\; N\{\cJ',L_2(Q),\epsilon\}  \le C \cdot d^2\cdot \left(  
\frac{d^{17/24} \cdot \log^{3/4} d }{h^{11/12}\cdot \epsilon}
\right)^{24}.
\end{equation}
Thus, applying Lemma~\ref{lemma:ep2} with $\sigma^2_P = C \cdot \log^4 s/h^2$ and $\|F\|_{L_2(\PP_n)} \le C\cdot d^2 \cdot (d^{17/24}\cdot \log^{3/4}d / h^{11/12})^{24}$, we have 
\[
\EE \left[\sup_{z\in [0,1]} \max_{j,k\in [d]} \frac{1}{n}  \left|  \sum_{i\in [n]} J_{z,jk}^2(Z_i,\bX_i,\bY_i) - \EE\{J_{z,jk}^2(Z,\bX,\bY)\}  \right|    \right] \le C\cdot \sqrt{\frac{\log^5 (d/h)}{nh^2}}.
\]
By an application of the Markov's inequality, we obtain
\begin{equation}
\small
\label{Eq:proof of theorem4.7-10}
P\left( \sup_{z\in [0,1]} \max_{j,k\in [d]} \left[ \frac{1}{n}\sum_{i\in [n]} 
J^2_{z,jk} (Z_i,\bX_i,\bY_i) - \EE \{J^2_{z,jk}(Z_i,\bX_i,\bY_i)\}\right] \ge C \cdot \left\{ \frac{\log^5 (d/h)}{nh^2}\right\}^{1/4}
   \right) \le  C \cdot \left\{ \frac{\log^5 (d/h)}{nh^2}\right\}^{1/4}.
\end{equation}
Thus, we have with probability at least $1-C \cdot \left\{ \log^5 (d/h)/(nh^2)\right\}^{1/4}$,
\begin{equation}
\small
\label{Eq:proof of theorem4.7-11}
\bar{\sigma}^2 = \sup_{z\in[0,1]} \max_{j,k\in[d]} \frac{1}{n}\sum_{i\in [n]} J^2_{z,jk}(Z_i,\bX_i,\bY_i) \le  \sup_{z\in[0,1]} \max_{j,k\in[d]} \EE \{J^2_{z,jk}(Z_i,\bX_i,\bY_i)\}   + 
C \cdot \left\{ \frac{\log^5 (d/h)}{nh^2}\right\}^{1/4} \le C \cdot \log^2 s,
\end{equation}
where the last inequality follows from (\ref{proof:lemma:T00andW6}) for sufficiently large $n$.
By Lemma \ref{lm:var-lower-bd}, we have
$
\inf_{z,j,k}\EE\{J^2_{z,jk}(Z,\bX,\bY)\} \ge c>0.
$
Therefore, we have
\[
\underline{\sigma}^2 =\inf_{z,j,k} \frac{1}{n}\sum_{i=1}^n J^2_{z,jk}(Z_i,\bX_i,\bY_i) \ge c - \sup_{z,j,k} \frac{1}{n}\sum_{i=1}^n [J^2_{z,jk}(Z_i,\bX_i,\bY_i)  - \EE\{ J^2_{z\mid(j,k)}(Z,\bX,\bY)\}]  \ge c/2>0,
\]
with probability  at least $1-C \cdot \left\{ \log^5 (d/h)/(nh^2)\right\}^{1/4}$.

Next, we calculate the quantity $  \EE [T^B_{00} \mid  \{(Z_i,\bX_i,\bY_i)\}_{i\in [n]} ] $. By Dudley's inequality  (see, e.g., Corollary 2.2.8 in \citealp{van1996weak}) and (\ref{proof:lemma:T00andW7}), we obtain
\begin{equation}
\label{Eq:proof of theorem4.7-12}
\EE[T_{00}^B \mid  \{(Z_i,\bX_i,\bY_i)\}_{i\in [n]}] \le C\cdot \log s\cdot \sqrt{\log (d/h)}.
\end{equation}
Moreover, by Lemma~\ref{lemma:T00BandTB}, we have 
\begin{equation}
\label{Eq:proof of theorem4.7-13}
\EE[|T^B-T_{00}^B| \mid  \{(Z_i,\bX_i,\bY_i)\}_{i\in [n]}] \le C \cdot \sqrt{h^3 \log^3 (d/h)} + s \cdot \sqrt{\frac{\log^4 (d/h)}{nh^2}} + \sqrt{h^5 \log n} \le r_2,
\end{equation}
with probability at least $1-2/d-1/n$.  
Substituting~(\ref{Eq:proof of theorem4.7-11}), (\ref{Eq:proof of theorem4.7-12}), and (\ref{Eq:proof of theorem4.7-13}) into (\ref{Eq:proof of theorem4.7-6}), we obtain
\begin{equation}
\label{Eq:proof of theorem4.7-14}
P_{\xi}\{|T^B-c(1-\alpha)|\le r_2\}\le C\cdot  \left\{ \frac{\log^{22} s \cdot \log^8(d/h)}{nh} \right\}^{1/8}.
\end{equation}
 Thus, substituting (\ref{Eq:proof of theorem4.7-14}) into (\ref{Eq:proof of theorem4.7-5}), we have
\[
P\{T\le c(1-\alpha)\} \ge 1-\alpha -4r_2 -  \frac{\log^{22} s \cdot \log^8(d/h)}{nh} .
\]
By the scaling assumptions, $r_2 = o(1)$ and $ \log^{22} s \cdot \log^8(d/h)/(nh) = o(1)$. Thus, this implies that 
\[
\lim_{n\rightarrow\infty} P\{T\le c(1-\alpha)\} \ge 1-\alpha,
\]
which implies that 
\[
\lim_{n\rightarrow\infty} P\{T\ge c(1-\alpha)\} \le \alpha,
\]
as desired.

\subsection{Proof of Lemma~\ref{lemma:T0andT}}
\label{proof:lemma:T0andT}
In this section, we show that $|T-T_{00}|$ is upper bounded by the quantity  
\[
 C \cdot \left\{\sqrt{nh^5} + s\cdot \sqrt{nh^9} + \frac{s \cdot \log (d/h)}{\sqrt{nh}} +\cdot  s\cdot h^2\cdot \sqrt{\log (d/h)} \right\}
 \]  
with high probability for sufficiently large constant $C>0$.  
By the triangle inequality, we have $|T-T_{00}| \le |T-T_0| + |T_0-T_{00}|$.  Thus, is suffices to obtain upper bounds for the terms $|T-T_{0}|$ and $|T_0-T_{00}|$. \\

\textbf{Upper Bound for $|T-T_0|$:}
Let $\tilde{\bTheta}_k = \left( \hat{\bTheta}_{1k},\ldots,\hat{\bTheta}_{(j-1)k},\bTheta_{jk},\hat{\bTheta}_{(j+1)k},\ldots,\hat{\bTheta}_{dk}   \right)^T \in \RR^d $.
Then, the statistics $T$ can be rewritten as  
\begin{equation}
\label{Eq:proof:lemma:T0andT1}
\begin{split}
T&= \underset{z\in [0,1]}{\sup} \;\underset{(j,k)\in E(z)}{\max} \; 
\sqrt{nh}\cdot  \left|\hat{\bTheta}_{jk}^\mathrm{de} (z) - \bTheta_{jk} (z)\right| \cdot \PP_n (w_z) \\
&= \underset{z\in [0,1]}{\sup} \;\underset{(j,k)\in E(z)}{\max} \; 
\sqrt{nh}\cdot  \left|\hat{\bTheta}_{jk} (z) - \bTheta_{jk} (z)
- \frac{\left\{ \hat{\bTheta}_j(z)\right\}^T  \left\{ \hat{\bSigma}(z) \hat{\bTheta}_k - \mathbf{e}_k  \right\}}{ \left\{\hat{\bTheta}_j(z)\right\}^T \hat{\bSigma}_j (z)}
\right| \cdot \PP_n (w_z) \\
&=  \underset{z\in [0,1]}{\sup} \;\underset{(j,k)\in E(z)}{\max} \; 
 \sqrt{nh}\cdot  \left| \frac{\left\{ \hat{\bTheta}_j(z)\right\}^T  \left\{ \hat{\bSigma}(z) \tilde{\bTheta}_k - \mathbf{e}_k  \right\}}{ \left\{\hat{\bTheta}_j(z)\right\}^T \hat{\bSigma}_j (z)}
\right| \cdot \PP_n (w_z).
\end{split}
\end{equation}
To obtain an upper bound on the difference between $T$ and $T_0$, we make use of the following inequality:
\begin{equation}
\label{Eq:proof:lemma:T0andT2}
\left| \frac{x}{1+\delta} - y \right| \le 2\cdot y\cdot |\delta| + 2\cdot |x-y| \qquad \qquad \mathrm{for\;any\;} |\delta|\le \frac{1}{2}.
\end{equation}
Recall  from (\ref{proof:theorem:gaussian multiplier bootstrap1}) that 
\[
T_0 = \underset{z\in [0,1]}{\sup} \; \underset{(j,k)\in E(z)}{\max} \; \sqrt{nh}\cdot \left| \sum_{i\in [n]} \left\{ \bTheta_j (z) \right\}^T
 K_h (Z_i-z) \left\{ \bX_i \bY_i^T  - \bSigma (z) \right\} \bTheta_k (z)/n\right|.
\]
 Applying (\ref{Eq:proof:lemma:T0andT2}) with $x=\{ \hat{\bTheta}_j(z)\}^T\{\hat{\bSigma}(z) \tilde{\bTheta}_k - \mathbf{e}_k  \}  $, $\delta= \{\hat{\bTheta}_j(z)\}^T \hat{\bSigma}_j (z)-1$, and $y=\{ \bTheta_j (z) \}^T
\{ \hat{\bSigma} (z) - \bSigma (z) \} \bTheta_k (z)$, and by the triangle inequality, we have 
\begin{equation}
\label{Eq:proof:lemma:T0andT3}
\small
\begin{split}
&|T-T_0|\\ &\le  \underset{z\in [0,1]}{\sup} \;\underset{(j,k)\in E(z)}{\max} \; 
\sqrt{nh} \cdot \Bigg|     \frac{\left\{ \hat{\bTheta}_j(z)\right\}^T  \left\{ \hat{\bSigma}(z) \tilde{\bTheta}_k - \mathbf{e}_k  \right\} \cdot \PP_n (w_z)}{ \left\{\hat{\bTheta}_j(z)\right\}^T \hat{\bSigma}_j (z)} - \frac{1}{n} \sum_{i\in [n]} \left\{ \bTheta_j (z) \right\}^T
 K_h (Z_i-z) \left\{ \bX_i \bY_i^T  - \bSigma (z) \right\} \bTheta_k (z) \Bigg|\\
&\le  \underset{z\in [0,1]}{\sup} \;\underset{(j,k)\in E(z)}{\max} \; 
\sqrt{nh} \cdot \Bigg|    \frac{\left\{ \hat{\bTheta}_j(z)\right\}^T  \left\{ \hat{\bSigma}(z) \tilde{\bTheta}_k - \mathbf{e}_k  \right\}}{ \left\{\hat{\bTheta}_j(z)\right\}^T \hat{\bSigma}_j (z)} - \left\{ \bTheta_j (z) \right\}^T
 \left\{ \hat{\bSigma}(z)   - \bSigma (z) \right\} \bTheta_k (z) \Bigg|  \cdot \left|\PP_n (w_z)\right|\\
 &\le \underbrace{2 \underset{z\in [0,1]}{\sup} \;\underset{(j,k)\in E(z)}{\max}\; 
\sqrt{nh} \cdot  \left[  \left\{ \bTheta_j (z) \right\}^T
 \left\{ \hat{\bSigma} (z)  - \bSigma (z) \right\} \bTheta_k (z) \cdot  \left| \left\{\hat{\bTheta}_j(z)\right\}^T \hat{\bSigma}_j (z)-1 \right|  \right] \cdot \left|\PP_n (w_z)\right| }_{I_1}\\
 &+\underbrace{2 \underset{z\in [0,1]}{\sup} \;\underset{(j,k)\in E(z)}{\max} \; 
\sqrt{nh} \cdot  \left[\left\{ \hat{\bTheta}_j(z)\right\}^T\left\{\hat{\bSigma}(z) \tilde{\bTheta}_k - \mathbf{e}_k  \right\} - 
 \left\{ \bTheta_j (z) \right\}^T
\left\{ \hat{\bSigma}(z)  - \bSigma (z) \right\} \bTheta_k (z)
\right] \cdot \left|\PP_n (w_z)\right| }_{I_2}.
\end{split}
\end{equation}
It remains to obtain upper bounds for $I_1$ and $I_2$ in (\ref{Eq:proof:lemma:T0andT3}).\\

\textbf{Upper bound for $I_1$:} By Corollary~\ref{theorem:inverse estimation}, we have  
\begin{equation}
\label{Eq:proof:lemma:T0andT4}
\underset{z\in [0,1]}{\sup} \;\underset{j\in [d]}{\max} \; \left| \left\{\hat{\bTheta}_j(z)\right\}^T \hat{\bSigma}_j (z)-1 \right|  \le 
C \cdot \left[ h^2 + \sqrt{\frac{\log(d/h)}{nh}}\right].
\end{equation}
Moreover, by Lemmas~\ref{lemma:empirical process w} and~\ref{lemma:bias}, we have
\begin{equation}
\label{Eq:proof:lemma:T0andT5}
\begin{split}
\underset{z\in [0,1]}{\sup} \; \left|\PP_n (w_z)\right| &\le  \left|\EE \left\{\PP_n (w_z) \right\}\right| + C\cdot \sqrt{\frac{\log (d/h)}{nh}}= \bar{f}_Z + \cO\left\{h^2+\sqrt{\frac{\log (d/h)}{nh}}  \right\},
\end{split}
\end{equation}
with probability at least $1-1/d$.
Thus, by Holder's inequality, we have 
\begin{equation}
\label{Eq:proof:lemma:T0andT6}
\small
\begin{split}
I_1 &\le 2 \underset{z\in [0,1]}{\sup} \;\underset{(j,k)\in E(z)}{\max} \; 
\sqrt{nh} \cdot  \left| \left\{\hat{\bTheta}_j(z)\right\}^T \hat{\bSigma}_j (z)-1 \right|  \cdot \left|\PP_n (w_z)\right|\cdot \left| 
\left\{\bTheta_j(z)\right\}^T  \left\{ \hat{\bSigma}(z)-\bSigma(z) \right\} \bTheta_k (z)
\right|\\
&\le 2 \underset{z\in [0,1]}{\sup} \;\underset{(j,k)\in E(z)}{\max} \; 
\sqrt{nh} \cdot  \left| \left\{\hat{\bTheta}_j(z)\right\}^T \hat{\bSigma}_j (z)-1 \right|  \cdot \left|\PP_n (w_z)\right|\cdot 
\|\bTheta_j(z)\|_{1}^2 \cdot  \| \hat{\bSigma}(z)-\bSigma(z) \|_{\max} 
\\
&\le 2 \cdot M^2 \cdot \sqrt{nh} \cdot C\cdot \left\{ h^2+\sqrt{\frac{\log (d/h)}{nh}}\right\}\cdot \left[ \bar{f}_Z + \cO \left\{ h^2+\sqrt{\frac{\log (d/h)}{nh}}\right\} \right]\cdot \left\{ h^2+\sqrt{\frac{\log (d/h)}{nh}} \right\}\\
&\le C \cdot \sqrt{nh} \cdot \left\{  h^2+\sqrt{\frac{\log (d/h)}{nh}} \right\}^2,
\end{split}
\end{equation}
with probability greater than $1-4/d$, where the third inequality holds by Theorem~\ref{theorem:estimation error}, (\ref{Eq:proof:lemma:T0andT4}), and (\ref{Eq:proof:lemma:T0andT5}).  \\

\textbf{Upper bound for $I_2$:}
To obtain an upper bound for $I_2$, we first decompose the quantity $\sqrt{nh} \cdot \{ \hat{\bTheta}_j(z)\}^T\{ \hat{\bSigma}(z) \tilde{\bTheta}_k - \mathbf{e}_k\}$ 
into the following
\begin{equation*}
\begin{split}
&\sqrt{nh} \cdot \left\{ \hat{\bTheta}_j(z)\right\}^T  \left\{ \hat{\bSigma}(z) \tilde{\bTheta}_k - \mathbf{e}_k  \right\} \\
=& \underbrace{\sqrt{nh} \cdot \left\{\hat{\bTheta}_j(z)\right\}^T \hat{\bSigma}(z) \left\{ \tilde{\bTheta}_k (z) - \bTheta_k (z) \right\}}_{I_{21}}+
\underbrace{\sqrt{nh} \cdot \left\{\hat{\bTheta}_j(z)\right\}^T \left\{\hat{\bSigma}(z)  \ - \bSigma(z) \right\}\bTheta_k (z)}_{I_{22}}.
 \end{split}
 \end{equation*}
Next, we show that $I_{21}$ converges to zero and that the difference between $I_{22}$ and the term $\sqrt{nh}\cdot \{\bTheta_j (z)\}^T \{ \hat{\bSigma}(z) -\bSigma(z) \} \bTheta_k (z)$ is small.\\

 \textbf{Upper bound for $I_{21}$:} By Holder's inequality and Corollary~\ref{theorem:inverse estimation}, we have 
\begin{equation}
\label{Eq:proof:lemma:T0andT7}
\begin{split}
  |I_{21} |
 &\le  \underset{z\in [0,1]}{\sup} \;\underset{(j,k)\in E(z)}{\max} \; 
\sqrt{nh} \cdot \left\|\left\{\hat{\bTheta}_j(z)\right\}^T \hat{\bSigma}_{-j} (z)  \right\|_{\infty} \cdot \left\|   \hat{\bTheta}_k(z) - \bTheta_{k}(z)  \right\|_1\\
&\le C\cdot \sqrt{nh} \cdot s \cdot \left\{ h^2 + \sqrt{\frac{\log(d/h)}{nh}}\right\}^2 \\ 
&\le C \cdot \left\{ s\cdot \sqrt{nh^9}  + \frac{s \cdot \log (d/h)}{\sqrt{nh}} +  s\cdot h^2\cdot \sqrt{\log (d/h)}\right\},
\end{split}
\end{equation}
with probability at least $1-1/d$. \\

\textbf{Decomposition of  $I_{22}$:} By adding and subtracting terms, we have 
\begin{equation}
\label{Eq:proof:lemma:T0andT8}
I_{22} = \underbrace{\sqrt{nh} \cdot \left\{  \hat{\bTheta}_{j}(z) - \bTheta_j (z)  \right\}^T \left\{ \hat{\bSigma}(z) - \bSigma(z)   \right\} \bTheta_k (z) }_{I_{221}}+\underbrace{\sqrt{nh} \cdot \left\{\bTheta_j (z)  \right\}^T \left\{ \hat{\bSigma}(z) - \bSigma (z)  \right\} \bTheta_k (z) }_{I_{222}}.
\end{equation}
Similar to (\ref{Eq:proof:lemma:T0andT7}), we have
\begin{equation}
\label{Eq:proof:lemma:T0andT9}
\begin{split}
|I_{221}| &\le 
 \underset{z\in [0,1]}{\sup} \;\underset{(j,k)\in E(z)}{\max} \;  \sqrt{nh} \cdot 
\left\|  \hat{\bTheta}_j (z) - \bTheta_j (z) \right\|_1\cdot  \left\| \hat{\bSigma}(z) - \bSigma (z)   \right\|_{\max} \cdot \|\bTheta_k (z)\|_1\\
&\le C\cdot \sqrt{nh} \cdot M \cdot s \cdot \left\{h^2 + \sqrt{\frac{\log(d/h)}{nh}} \right\}^2\\
&\le C \cdot \left\{ s\cdot \sqrt{nh^9}  + \frac{s \cdot \log (d/h)}{\sqrt{nh}} +  s\cdot h^2\cdot \sqrt{\log (d/h)}\right\},
\end{split} 
\end{equation}
where the second inequality holds by Holder's inequality, Corollary~\ref{theorem:inverse estimation}, and the fact that $\bTheta (z) \in \cU_{s,M}$.

Combining the results (\ref{Eq:proof:lemma:T0andT7})-(\ref{Eq:proof:lemma:T0andT9}), we have 
\begin{equation}
\label{Eq:proof:lemma:T0andT10}
\begin{split}
I_2 &= 2 \underset{z\in [0,1]}{\sup} \;\underset{(j,k)\in E(z)}{\max} \; 
\sqrt{nh} \cdot  \left[\left\{ \hat{\bTheta}_j(z)\right\}^T\left\{\hat{\bSigma}(z) \tilde{\bTheta}_k - \mathbf{e}_k  \right\} - 
 \left\{ \bTheta_j (z) \right\}^T
\left\{ \hat{\bSigma}(z)  - \bSigma (z) \right\} \bTheta_k (z)
\right] \cdot \left|\PP_n (w_z)\right|\\
&\le 2 \cdot  \underset{z\in [0,1]}{\sup} \; |\PP_n (w_z)|\cdot \left[ I_{21}+I_{221}\right]\\
&\le 2 \cdot \left[ \bar{f}_Z + \cO\left\{ h^2 +\sqrt{\frac{\log (d/h)}{nh}} \right\} \right] \cdot \left( I_{21}+I_{221}\right)\\
&\le C \cdot \left\{ s\cdot \sqrt{nh^9}  + \frac{s \cdot \log (d/h)}{\sqrt{nh}} +  s\cdot h^2\cdot \sqrt{\log (d/h)}\right\},
\end{split}
\end{equation}
where the  third inequality follows from (\ref{Eq:proof:lemma:T0andT5}).

Combining the upper bounds for $I_1$ in (\ref{Eq:proof:lemma:T0andT6}) and $I_2$ in (\ref{Eq:proof:lemma:T0andT10}), we have 
\begin{equation}
\label{Eq:proof:lemma:T0andT11}
|T-T_0| \le C \cdot \left\{ s\cdot \sqrt{nh^9}  + \frac{s \cdot \log (d/h)}{\sqrt{nh}} +  s\cdot h^2\cdot \sqrt{\log (d/h)}\right\},
\end{equation}
with probability at least $1-1/d$.\\

\textbf{Upper bound for $|T_0-T_{00}|$:}
Recall from  (\ref{proof:theorem:gaussian multiplier bootstrap2}) the 
definition of  $T_{00}$
\begin{equation*}
\begin{split}
T_{00}&= \underset{z\in [0,1]}{\sup} \; \underset{(j,k)\in E(z)}{\max} \;
\sqrt{nh} \cdot \bigg|   \sum_{i\in [n]}  \left\{ \bTheta_j (z) \right\}^T
 K_h (Z_i-z) \left\{ \bX_i \bY_i^T  - \bSigma (z) \right\} \bTheta_k (z)/n\\
 &\qquad \qquad \qquad \qquad -    \left\{ \bTheta_j (z) \right\}^T \bigg[ 
 \EE\{K_h (Z-z)\bX \bY^T\} - \EE\{K_h(Z-z)\} \bSigma (z)     \bigg] \bTheta_k (z) /n\bigg|;
\end{split}
\end{equation*}
Using the triangle inequality $| |x| - |y|| \le |x-y|$, we obtain
\begin{equation}
\label{proof:lemma:T0andT001}
\begin{split}
|T_0 - T_{00}| &\le  \sqrt{nh}\cdot \underset{z\in [0,1]}{\sup} \; \underset{(j,k)\in E(z)}{\max} \;  \left| \left\{\bTheta_j (z)\right\}^T\Big[ \EE\{K_h(Z-z) \bX \bY^T\}  - \EE\{K_h(Z-z)\} \bSigma (z)\Big]  \bTheta_k (z) \right|  \\
&\le \sqrt{nh} \cdot   \underset{z\in [0,1]}{\sup} \; \underset{(j,k)\in E(z)}{\max} \;  \|\bTheta_j (z) \|_1\cdot \|\bTheta_k (z) \|_1\cdot   \left|     \EE\{K_h(Z-z) X_j Y_k\}  - \EE\{K_h(Z-z)\}\cdot  \bSigma_{jk} (z)\right|\\
&\le \sqrt{nh} \cdot M^2 \cdot    \underset{z\in [0,1]}{\sup} \; \underset{(j,k) \in E(z)}{\max} \;    \left|     \EE\{K_h(Z-z) X_j Y_k\}  - \EE\{K_h(Z-z)\}\cdot  \bSigma_{jk} (z)\right|\\
&=\sqrt{nh} \cdot M^2 \cdot \left|   f_Z(z) \cdot \bSigma_{jk}(z) + \cO(h^2)   - f_Z(z) \cdot \bSigma_{jk} (z) + \bSigma_{jk} (z) \cdot \cO(h^2)\right|\\
&\le M^2 \cdot M_{\sigma} \cdot  \sqrt{nh^5},
\end{split}
\end{equation}
where the second inequality follows from an application of Holder's inequality, the third inequality follows from the fact that $\bTheta(z) \in \cU_{s,M}$, the first equality follows by an application of Lemma~\ref{lemma:bias}, and the last inequality follows from Assumption~\ref{ass:covariance} and that $h^2 = o(1)$.

Thus, combining  (\ref{Eq:proof:lemma:T0andT11}) and (\ref{proof:lemma:T0andT001}), there exists a constant $C>0$ such that 
\[
|T-T_{00}| \le C \cdot \left\{\sqrt{nh^5} + s\cdot \sqrt{nh^9} + \frac{s \cdot \log (d/h)}{\sqrt{nh}} +\cdot  s\cdot h^2\cdot \sqrt{\log (d/h)} \right\},
\]
with probability at least $1-1/d$.

\subsection{Proof of Lemma~\ref{lemma:T00andW}}
\label{proof:lemma:T00andW}
Recall from (\ref{proof:theorem:gaussian multiplier bootstrap2})
the definition 
\begin{equation*}
\begin{split}
T_{00}&= \underset{z\in [0,1]}{\sup} \; \underset{(j,k)\in E(z)}{\max} \;
\sqrt{nh} \cdot \bigg|   \sum_{i\in [n]}  \left\{ \bTheta_j (z) \right\}^T
 K_h (Z_i-z) \left\{ \bX_i \bY_i^T  - \bSigma (z) \right\} \bTheta_k (z)/n\\
 &\qquad \qquad \qquad \qquad -    \left\{ \bTheta_j (z) \right\}^T \bigg[ 
\EE\{K_h (Z-z)\bX \bY^T\} - \EE\{K_h(Z-z)\} \bSigma (z)    \bigg] \bTheta_k (z) /n\bigg|.
\end{split}
\end{equation*}
Recall from (\ref{EqA:jzjk}) that $J_{z,jk} (Z_i,\bX_i,\bY_i) = J_{z,jk}^{(1)} (Z_i,\bX_{i},\bY_i)  -J_{z,jk}^{(2)} (Z_i)$, where $ J_{z,jk}^{(1)} (Z_i,\bX_{i},\bY_i)$ and  $J_{z,jk}^{(2)} (Z_i)$ are as defined in (\ref{EqA:jzjk1}) and (\ref{EqA:jzjk2}), respectively.  
Let $\cJ = \left\{ J_{z,jk} \mid z\in [0,1],\; j,k\in [d]\right\}$.
Then the intermediate empirical average $T_{00}$ can be written as  
\[
T_{00} = \underset{z\in [0,1]}{\sup}\; \underset{(j,k)\in E(z)}{\max }\;\left|\frac{1}{\sqrt{n}}  \sum_{i\in [n]}J_{z,jk} (Z_i,\bX_i,\bY_i)\right|.
\]

In this section, we show that there exists a Gaussian process $W$ such that 
\[
|T_{00}-W| \le C\cdot \left\{ \frac{\log^6 s \cdot \log^4 (d/h)}{nh}\right\}^{1/8}
\]
with high probability.
To this end, we apply Theorem~A.1 in \citet{chernozhukov2014anti}, which involves the following quantities
\begin{itemize}
\item upper bound for $\underset{z\in [0,1]}{\sup}\; \underset{j,k\in [d]}{\max}  \;\|J_{z,jk} (Z_i,\bX_i,\bY_i)\|_{\infty}$;
\item  upper bound for $\underset{z\in [0,1]}{\sup}\; \underset{j,k\in [d]}{\max}\; \EE \left\{J_{z,jk}^2 (Z,\bX,\bY)\right\}$;
\item  covering number for the function class $\cJ$. 
\end{itemize}
Let $\cS_j(z)$ and $\cS_k(z)$ to be the support of $\bTheta_{j}(z)$ and $\bTheta_k(z)$, respectively.  Note that the cardinality for both sets are less than $s$. We now obtain the above quantities.\\

\textbf{Upper bound for $\underset{z\in[0,1]}{\sup}\;\underset{j,k\in [d]}{\max}  \;\|J_{z,jk} (Z_i,\bX_i,\bY_i)\|_{\infty}$:} 
We have with probability at least $1-1/(2s)$, 
\begin{equation}
\label{proof:lemma:T00andW1}
\begin{split}
&\underset{z\in[0,1]}{\sup}\;\underset{j,k\in [d]}{\max}  \;\|J_{z,jk} (Z_i,\bX_i,\bY_i)\|_{\infty} \\
&\le \sqrt{h} \cdot \underset{z\in[0,1]}{\sup}\;\underset{j,k\in[d]}{\max}  \; \|\bTheta_j (z)\|_1 \cdot \|\bTheta_k (z)\|_1\cdot \left( \underset{j\in \cS_j(z),k\in \cS_k(z)}{\max} \; \|q_{z,jk}\|_{\infty} +   M_{\sigma}\cdot \|k_z\|_{\infty}\right)\\
&\le \sqrt{h} \cdot M^2 \cdot \left\{  \frac{2}{h} \cdot M_X^2 \cdot \|K\|_{\infty} \cdot \log (2s) +  M_{\sigma} \cdot \frac{2}{h} \cdot \|K\|_{\infty}   \right\}\\
&\le \frac{4}{\sqrt{h}} \cdot M^2 \cdot M_X^2 \cdot M_{\sigma} \cdot \|K\|_{\infty} \cdot \log (2s)\\
&= C_1 \cdot \frac{\log s}{\sqrt{h}},
\end{split}
\end{equation}
where the first inequality follows by Holder's inequality and the definition of $q_{z,jk}$ and $k_z$ and  the second inequality follows from (\ref{proof:lemma:empirical process w2}) and (\ref{proof:lemma:empirical process g2}). Note that since we are only taking max over the set $\cS_j(z)$ and $\cS_k(z)$, instead of a $\log d$ factor from (\ref{proof:lemma:empirical process g2}), we obtain a $\log (2s)$ factor.\\

\textbf{Upper bound for $\underset{z\in [0,1]}{\sup}\; \underset{j,k\in [d]}{\max}\; \EE \{J_{z,jk}^2 (Z,\bX,\bY)\}$:}   
By an application of the inequality $(x-y)^2 \le 2x^2 + 2y^2$, we have 
\begin{equation*}
\begin{split}
\underset{z\in [0,1]}{\sup}\; \underset{j,k\in[d]}{\max}\; \EE \left\{J_{z,jk}^2 (Z,\bX,\bY)\right\} &= \underset{z\in [0,1]}{\sup}\; \underset{j,k\in [d]}{\max}\; \EE \left[\left\{ J_{z,jk}^{(1)} (Z,\bX,\bY)-J_{z,jk}^{(2)}  (Z)\right\}^{2}\right]\\
&\le \underbrace{2 \underset{z\in [0,1]}{\sup}\; \underset{j,k\in[d]}{\max}\; \EE \left[\left\{J_{z,jk}^{(1)} (Z,\bX,\bY)\right\}^2  \right]}_{I_1} + \underbrace{2 \underset{z\in [0,1]}{\sup}\; \underset{j,k\in[d]}{\max}\; \EE \left[\left\{J_{z,jk}^{(2)} (Z)\right\}^2  \right]}_{I_2}. 
\end{split}
\end{equation*}
	
	To obtain an upper bound for $I_1$, we need an upper bound for $\underset{z\in [0,1]}{\sup}\; \underset{j,k\in[d]}{\max}\; \EE \{\underset{j\in \cS_j(z),k\in \cS_k(z)}{\max} \;q_{z,jk}^2\} $. 
	Recall from (\ref{EqA:gjk}) the definition of $g_{z,jk} (Z_i,X_{ij},Y_{ik}) = K_h (Z_i-z) X_{ij}Y_{ik}$ and that $q_{z,jk} (Z_i,X_{ij},Y_{ik}) = g_{z,jk}(Z_i,X_{ij},Y_{ik}) - \EE\{g_{z,jk}(Z,X_{j},Y_{k})\}$.
Thus, we have 
\begin{equation}
\label{proof:lemma:T00andW2}
\begin{split}
\underset{z\in [0,1]}{\sup}\; \underset{j,k\in[d]}{\max}\; \EE \left\{ \underset{j\in \cS_j(z),k\in \cS_k(z)}{\max} \;q_{z,jk}^2\right\} &=\underset{z\in [0,1]}{\sup}\; \underset{j,k\in[d]}{\max}\; \EE \left[ \underset{j\in \cS_j(z),k\in \cS_k(z)\in [d]}{\max} \;\left\{g_{z,jk}- \EE(g_{z,jk})\right\}^2\right] \\
&\le 2 \underset{z\in [0,1]}{\sup}\; \underset{j,k\in[d]}{\max}\; \EE \left\{\underset{j\in \cS_j(z),k\in \cS_k(z)\in [d]}{\max}\;g^2_{z,jk}    \right\} + 2 \underset{z\in [0,1]}{\sup}\; \underset{j,k\in[d]}{\max}\; \EE^2(g_{z,jk}),
\end{split}
\end{equation}
where we apply the fact that $(x-y)^2 \le 2x^2 + 2y^2$ to obtain the last inequality.  
By Lemma~\ref{lemma:bias}, we have 
$2 \underset{z\in [0,1]}{\sup}\; \underset{j,k\in[d]}{\max}\; \EE^2(g_{z,jk}) \le 2 \left\{ \bar{f}_Z \cdot M_{\sigma} + \cO(h^2)\right\}^2$.  
Moreover, we have 
\begin{equation*}
\begin{split}
2 \underset{z\in [0,1]}{\sup}\; \underset{j,k\in[d]}{\max}\; \EE \left\{ \underset{j\in \cS_j(z),k\in \cS_k(z)\in [d]}{\max} \;g^2_{z,jk}    \right\} &= 
2 \underset{z\in [0,1]}{\sup}\; \underset{j,k\in[d]}{\max}\; \EE \left\{ \underset{j\in \cS_j(z),k\in \cS_k(z)\in [d]}{\max} \;K^2_h(Z-z) X_j^2 Y_k^2    \right\}\\
&\le 2\cdot  M_X^4 \cdot \log^2 (2s) \underset{z\in [0,1]}{\sup}\; \underset{j,k\in[d]}{\max}\; \EE \left\{ K^2_h(Z-z)  \right\}\\
&\le 2 \cdot M^4_X \cdot \log^2 (2s)\cdot \left\{  \frac{1}{h} \cdot\bar{f}_Z\cdot \|K\|_2^2 + \cO(1) + \cO(h^2) \right\}\\
&\le 3\cdot \bar{f}_Z \cdot \|K\|_2^2 \cdot M_X^4 \cdot \frac{\log^2 (2s)}{h},
\end{split}
\end{equation*}
with probability at least $1-1/(2s)$, where the second inequality follows from an application of Lemma~\ref{lemma:bias}.

 Thus, by Holder's inequality, we have 
\begin{equation}
\label{proof:lemma:T00andW3}
 \begin{split}
 I_1 &\le 2 \cdot h\cdot \underset{z\in [0,1]}{\sup}\; \underset{j,k\in[d]}{\max}\;
 \EE \left[        \left\{ \|\bTheta_j(z)\|_{1}\cdot \|\bTheta_{k}(z)\|_1\cdot \underset{j\in\cS_j(z),k\in \cS_k(z)}{\max} \;|q_{z,jk}|    \right\}^2\right]\\
 &\le 2 \cdot h \cdot M^4 \cdot \underset{z\in [0,1]}{\sup}\; \underset{j,k\in[d]}{\max}\;
\EE \left\{   \underset{j\in \cS_j(z),k\in \cS_k(z)}{\max}\; q^2_{z,jk}    \right\}\\
&\le 2 \cdot h \cdot M^4 \cdot \left[ 3 \cdot \bar{f}_Z \cdot \|K\|_2^2 \cdot M_X^4 \cdot \frac{\log^2 (2s)}{h} + 2 \left\{\bar{f}_Z \cdot M_{\sigma} + \cO(h^2) \right\}^2   \right]\\
&\le 8 \cdot M^4 \cdot \bar{f}_Z \cdot M_X^4 \cdot \|K\|_2^2 \cdot \log^2 (2s),
 \end{split}
 \end{equation}
where the second inequality holds by the fact that $\bTheta(z) \in \cU_{s,M}$.

Similarly, to obtain an upper bound for $I_2$, we use the fact from (\ref{proof:lemma:empirical process w3}) that 
 \begin{equation}
 \label{proof:lemma:T00andW4}
\underset{z\in[0,1]}{\sup}\; \EE\left\{k_z^2\right\}  \le \frac{3}{h} \cdot \bar{f}_Z\cdot  \|K\|_2^2.
 \end{equation}
By Holder's inequality, we have
\begin{equation}
\label{proof:lemma:T00andW5}
 \begin{split}
 I_2 &\le 2 \cdot h\cdot \underset{z\in [0,1]}{\sup}\; \underset{j,k\in[d]}{\max}\;
 \EE \left[        \left\{ \|\bTheta_j(z)\|_{1}\cdot \|\bTheta_{k}(z)\|_1\cdot \underset{(j,k)\in E(z)}{\max} \;|\bSigma_{jk}(z)| \cdot |k_z|    \right\}^2\right]\\
 &\le 2 \cdot h \cdot M^4 \cdot M_{\sigma}^2   \cdot \underset{z\in [0,1]}{\sup}\;
\EE \left(   k^2_{z}    \right)\\
&\le 6\cdot M_{\sigma}^2 \cdot M^4 \cdot \bar{f}_Z \cdot \| K\|_2^2,
 \end{split}
 \end{equation}
where the second inequality holds by Assumption~\ref{ass:covariance} and by the fact that $\bTheta (z)\in \cU_{s,M}$, and the last inequality holds by (\ref{proof:lemma:T00andW4}).

Combining the upper bounds for $I_1$ (\ref{proof:lemma:T00andW3}) and $I_2$ (\ref{proof:lemma:T00andW5}), we have 
\begin{equation}
\label{proof:lemma:T00andW6}
\underset{z\in [0,1]}{\sup}\; \underset{j,k\in [d]}{\max}\; \EE \left\{J_{z,jk}^2 (Z,\bX,\bY)\right\}
\le 8 \cdot M^4 \cdot \bar{f}_Z \cdot \|K\|_2^2 \cdot \left\{M_{\sigma}^2+   M_X^4 \cdot \log^2 (2s) \right\}\\
\le C \cdot \log^2 s = \sigma^2_J,
\end{equation}
for sufficiently large $C>0$.\\

\textbf{Covering number of the function class $\cJ$:} First, we note that the function class $\cJ$ is generated from the addition of two function classes 
\[
\cJ^{(1)}_{jk} = \left\{ J_{z,jk}^{(1)} \mid z\in [0,1]  \right\} \qquad \mathrm{and}\qquad \cJ^{(2)}_{jk} = \left\{ J_{z,jk}^{(2)} \mid z\in [0,1]  \right\}.
\]
Thus, to obtain the covering number of $\cJ$, we first obtain the covering number for the function classes $\cJ^{(1)}_{jk}$ and $\cJ^{(2)}_{jk}$.  Then, we apply Lemma~\ref{lemma:coveringnumber2} to obtain the covering number of the function class $\cJ$.  
From Lemma~\ref{lemma:coveringnumberjzjk1}, we have with probability at least $1-1/d$, 
\[
N\{\cJ^{(1)}_{jk},L_2(Q),\epsilon\} \le C\cdot \left( \frac{d^{5/4} \cdot \log^{3/2} d }{\sqrt{h} \cdot \epsilon}   \right)^6.
\]
Moreover, from Lemma~\ref{lemma:coveringnumberjzjk2}, we have 
\[
N\{\cJ^{(2)}_{jk},L_2(Q),\epsilon\} \le C \cdot \left(\frac{d^{1/6}}{h^{4/3}\cdot \epsilon} \right)^6.
\]
Applying Lemma~\ref{lemma:coveringnumber2} with $a_1 = d^{5/4} \cdot \log^{3/2} d/h^{1/2}$, $v_1=6$, $a_2 = d^{1/6}/h^{4/3}$, and $v_2=6$,  we have 
\begin{equation}
\label{proof:lemma:T00andW7}
N\{\cJ,L_2(Q),\epsilon\} \le C\cdot d^2\cdot \left( \frac{d^{17/24} \cdot \log^{3/4} d }{h^{11/12} \cdot \epsilon}   \right)^{12},
\end{equation}
where we multiply $d^2$ on the right hand side since the function class $\cJ$ is taken over all $j,k\in [d]$.\\

\textbf{Application of Theorem A.1 in \citet{chernozhukov2014anti}:} Applying Theorem A.1 in \citet{chernozhukov2014anti} with $a= d^{65/24}\cdot \log^{7/4}d/h^{17/12}  $, $b= C\cdot \log s /\sqrt{h}$, $\sigma_J =C\cdot  \log s$, and 
\[
K_n = A \cdot \left\{\log n \vee \log (ab/\sigma_J) \right\} = C\cdot  \log(d/h) ,
\]
for sufficiently large constant $A,C>0$, there exists a random process $W$ such that for any $\gamma \in (0,1)$, 
\[
P\left[|T_{00}-W| \ge C\cdot \left\{ \frac{b K_n}{(\gamma n)^{1/2}}
+   \frac{(b\sigma_J)^{1/2} K_n^{3/4}}{\gamma^{1/2}n^{1/4}} + \frac{b^{1/3} \sigma_J^{2/3}K_n^{2/3}}{\gamma^{1/3}n^{1/6}} \right\}   \right] \le C'\cdot \left( \gamma + \frac{\log n}{n}\right)
\]
for some absolute constant $C'$.
Picking $\gamma = \left\{\log^{6} s \cdot \log^{4}(d/h)/(nh)\right\}^{1/8}$, we have 
\[
P\left[|T_{00}-W| \ge C\cdot   \left\{\frac{\log^{6} s \cdot \log^4 (d/h)}{nh}\right\}^{1/8} \right]  \le C'\cdot  \left\{\frac{\log^{6} s \cdot \log^4 (d/h)}{nh}\right\}^{1/8},
\]
as desired.

\subsection{Proof of Lemma~\ref{lemma:T00BandW}}
\label{proof:lemma:T00BandW}
Recall from the proof of Lemma~\ref{lemma:T00andW} that \[
T_{00} = \underset{z\in [0,1]}{\sup}\; \underset{(j,k)\in E(z)}{\max}\; \left|\frac{1}{\sqrt{n}} \sum_{i\in [n]}J_{z,jk} (Z_i,\bX_i,\bY_i)\right|.
\]
We note that 
\[
T_{00}^B  =\underset{z\in [0,1]}{\sup}\; \underset{(j,k)\in E(z)}{\max}\; \left|\frac{1}{\sqrt{n}} \sum_{i\in [n]}J_{z,jk} (Z_i,\bX_i,\bY_i)\cdot \xi_i\right|,
\]
 where $\xi_i\stackrel{\mathrm{i.i.d.}}{\sim}  N(0,1)$.
To show that the term $|W-T_{00}^B|$ can be controlled, we apply Theorem A.2 in \citet{chernozhukov2014anti}.  

Let 
\[
\psi_n =\sqrt{\frac{\sigma_J^2  K_n}{n}} + \left( \frac{b^2 \sigma_J^2 K_n^3}{n} \right)^{1/4} \qquad \mathrm{and} \qquad \gamma_n (\delta) = \frac{1}{\delta} \left( \frac{b^2 \sigma_J^2 K_n^3}{n} \right)^{1/4} + \frac{1}{n}  ,
\]
as defined in Theorem A.2 in \citet{chernozhukov2014anti}. From the proof of Lemma~\ref{lemma:T00andW}, we have $b= C\cdot \log s/\sqrt{h}$, $K_n = C\cdot \log(d/h)$, and $\sigma_J= C \cdot \log s$. 
Since $b^2 K_n = C \cdot\log^2 s \cdot \log (d/h)/h \le n\cdot \log^2 s$ for sufficiently large $n$, there exists a constant $C''>0$ such that 
\[
P\left[ |T_{00}^B- W| > \psi_n + \delta \; \Big|\;\{ (Z_i,\bX_i,\bY_i)\}_{i\in [n]}  \right] \le C'' \cdot \gamma_n (\delta),
\]
with probability at least $1-3/n$.  Choosing $\delta = \left\{ \log^4 (s) \cdot \log^3 (d/h)/(nh)  \right\}^{1/8}$, we have 
\[
P\left[ |T_{00}^B- W| > C\cdot  \left\{ \frac{\log^4 (s) \cdot \log^3 (d/h)}{nh}  \right\}^{1/8 } \; \;\Big|\; \{(Z_i,\bX_i,\bY_i)\}_{i\in[n]} \right] \le C''\cdot  \left\{ \frac{\log^4 (s) \cdot \log^3 (d/h)}{nh}  \right\}^{1/8},
\]
with probability at least $1-3/n$.

\subsection{Proof of Lemma~\ref{lemma:T00BandTB}}
\label{proof:lemma:T00BandTB}
In this section, we show that $|T^B-T^B_{00}|$ is upper bounded by the quantity 
\[
C\cdot \left\{ s \cdot \sqrt{h^3 \log^3 (d/h)} + s \cdot \sqrt{\frac{\log^4 (d/h)}{nh^2}} + \sqrt{h^5 \log n}\right\}
\]
with high probability for sufficiently large constant $C>0$.  
Throughout the proof of this lemma, we conditioned on the data $\{(Z_i,\bX_i,\bY_i)\}_{i\in [n]}$.  
By the triangle inequality, we have $|T^B - T_{00}^B|  \le |T^B - T_{0}^B|  + |T_0^B - T_{00}^B|  $.  Thus, it suffices to obtain upper bounds for the terms $|T^B - T_{0}^B|$ and  $|T_0^B - T_{00}^B|$.  \\

\textbf{Upper bound for $|T^B-T_{0}^B|$:} 
Recall from (\ref{proof:theorem:gaussian multiplier bootstrap TB}) and  (\ref{proof:theorem:gaussian multiplier bootstrap3})  that 
\begin{equation*}
T^B =  \underset{z\in [0,1]}{\sup} \; \underset{(j,k) \in E(z) }{\max} \;\sqrt{nh}\cdot \left| \frac{ \sum_{i\in [n]} \left\{\hat{\bTheta}_j (z)\right\}^T K_h (Z_i-z) \left\{ \bX_i \bY_i^T\hat{\bTheta}_k (z)  - \mathbf{e}_k\right\} \xi_i/n}{ \left\{ \hat{\bTheta}_j(z)\right\}^T \hat{\bSigma}_j(z)}\right|,
\end{equation*}
and that 
\begin{equation*}
T_0^B = \underset{z\in [0,1]}{\sup} \; \underset{(j,k)\in E(z)}{\max} \;
\sqrt{nh} \cdot \left| \sum_{i\in [n]} \left[ \left\{ \bTheta_j (z) \right\}^T
 K_h (Z_i-z) \left\{ \bX_i \bY_i^T  - \bSigma (z) \right\} \bTheta_k (z) \right]\xi _i/n\right|,
\end{equation*}
respectively.
Using the triangle inequality, we have 
\begin{equation}
\label{Eq:proof:lemma:T00BandT0B1}
\small
\begin{split}
|T^B-T_0^B| &\le  \sqrt{nh} \cdot  \Bigg|  \underset{z\in [0,1]}{\sup} \; \underset{(j,k)\in E(z)}{\max} \;
 \Bigg[ \frac{1}{n}\sum_{i\in [n]}  \left\{\hat{\bTheta}_j (z)\right\}^T K_h (Z_i-z) \left\{ \bX_i \bY_i^T\hat{\bTheta}_k (z)  - \mathbf{e}_k\right\}/ \left\{\hat{\bTheta}_j(z)\right\}^T \hat{\bSigma}_j(z)  \\
&\qquad\quad \qquad \qquad \qquad \qquad- \frac{1}{n} \sum_{i\in [n]}  \left\{ \bTheta_j (z) \right\}^T
 K_h (Z_i-z) \left\{ \bX_i \bY_i^T  - \bSigma (z) \right\} \bTheta_k (z) \Bigg]\xi _i\Bigg|\\
 &\le 2\underbrace{ \sqrt{nh} \cdot     \left|   \underset{z\in [0,1]}{\sup} \; \underset{(j,k)\in E(z)}{\max} \;\frac{1}{n} \sum_{i\in [n]} \left\{ \hat{\bTheta}_j (z)-\bTheta_j (z) \right\}^T K_h (Z_i-z) \left\{\bX_i\bY_i^T -\bSigma (z)  \right\} \bTheta_k (z) \xi_i     \right|}_{I_1} \\
&+  2\underbrace{ \sqrt{nh}  \cdot  \left|  \underset{z\in [0,1]}{\sup} \; \underset{(j,k)\in E(z)}{\max} \; \frac{1}{n} \sum_{i\in [n]} \left\{\bTheta_j (z) \right\}^T K_h (Z_i-z) \bX_i\bY_i^T  \left( \hat{\bTheta}_k (z)-   \bTheta_k (z)\right) \xi_i     \right|}_{I_2}\\
&+  2\underbrace{ \sqrt{nh}  \cdot  \left|  \underset{z\in [0,1]}{\sup} \; \underset{(j,k)\in E(z)}{\max} \; \frac{1}{n} \sum_{i\in [n]}  \left\{ \bTheta_j (z) \right\}^T
 K_h (Z_i-z) \left\{ \bX_i \bY_i^T  - \bSigma (z) \right\} \bTheta_k (z) \xi_i     \right| \cdot \left|\left\{\hat{\bTheta}_j(z) \right\}^T \hat{\bSigma}_j(z)-1\right|}_{I_3}, 
\end{split}
\end{equation}
where the  second inequality holds by another application of the triangle inequality and inequality in \eqref{Eq:proof:lemma:T0andT2}.  We now obtain upper bounds for $I_1$, $I_2$, and $I_3$.\\

\textbf{Upper bound for $I_1$:} 
By an application of Holder's inequality, we have 
\begin{equation}
\label{Eq:proof:lemma:T00BandT0B2}
\small
\begin{split}
I_1 &\le \underset{z\in[0,1]}{\sup}\;  \underset{j,k\in[d]}{\max} \;
 \left\| \hat{\bTheta}_j(z) - \bTheta_j (z)  \right\|_1\cdot \|\bTheta_k (z) \|_1 
\cdot \sqrt{nh}\cdot  \left| \underset{z\in [0,1]}{\sup} \; \underset{j,k \in[d]}{\max} \; \frac{1}{n}  \sum_{i\in[n]} \left\{ K_h (Z_i-z)X_{ij}Y_{ik}- K_h(Z_i-z)\bSigma_{jk}(z)   \right\}\xi_i   \right|\\
&\le M \cdot C\cdot s\cdot \left\{ h^2 + \sqrt{\frac{\log (d/h)}{nh}}\right\}\cdot \sqrt{nh}\cdot   \left| \underset{z\in [0,1]}{\sup} \; \underset{j,k \in [d]}{\max} \;\frac{1}{n}  \sum_{i\in[n]} \left\{ K_h (Z_i-z)X_{ij}Y_{ik}- K_h(Z_i-z)\bSigma_{jk}(z)   \right\}\xi_i   \right|,
\end{split}
\end{equation}
where the last inequality follows from the fact that $\bTheta(z)\in \cU_{s,M}$ and by an application of Corollary~\ref{theorem:inverse estimation}.  For notational convenience, we use the notation as defined in (\ref{EqA:Wzjk})
\begin{equation}
\label{Eq:proof:lemma:T00BandT0B3}
W_{z,jk} (Z_i,X_{ij},Y_{ik}) = \sqrt{h} \cdot \left\{K_h (Z_i-z)X_{ij}Y_{ik}- K_h(Z_i-z)\bSigma_{jk}(z)   \right\}.
\end{equation}
Then, we have 
\begin{equation*}
\begin{split}
 \sqrt{\frac{h}{n}}  \sum_{i\in[n]} \left\{ K_h (Z_i-z)X_{ij}Y_{ik}- K_h(Z_i-z)\bSigma_{jk}(z)   \right\}\xi_i = \frac{1}{\sqrt{n}} \sum_{i\in[n]} W_{z,jk}  (Z_i,X_{ij},Y_{ik}) \cdot \xi_i.
 \end{split}
\end{equation*}
We note that conditioned on the data $\{(Z_i,\bX_i,\bY_i)\}_{i\in[n]}$, the above expression is a Gaussian process.
It remains to bound the supreme of 
the Gaussian process 
\[
\frac{1}{\sqrt{n}} \sum_{i\in [n]} W_{z,jk}  (Z_i,X_{ij},Y_{ik}) \cdot \xi_i
\sim N\left\{0,\frac{1}{n}\sum_{i\in[n]} W_{z,jk}^2 (Z_i,X_{ij},Y_{ik})\right\}
\]
in probability.

To this end, we apply the Dudley's inequality (see, e.g., Corollary 2.2.8 in \citealp{van1996weak}) and the Borell's inequality (see, e.g., Proposition A.2.1 in \citealp{van1996weak}), which involves  the following quantities:  

\begin{itemize}
\item upper bound on the conditional variance  $\sum_{i\in[n]}W_{z,jk}^2 (Z_i,X_{ij},Y_{ik})/n;$
\item the covering number of the function class 
\[
\cW = \left\{ W_{z,jk} (\cdot) \mid z\in [0,1],\; j,k\in [d]  \right\}
\]
under the $L_2$ norm on the empirical measure.
\end{itemize}

\textbf{Upper bound for the conditional variance $\sum_{i=1}^n W_{z,jk}^2 (Z_i,X_{ij},Y_{ik})/n:$
}
By the definition of $W_{z,jk} (Z_i,X_{ij},Y_{ik})$ in (\ref{Eq:proof:lemma:T00BandT0B3}), we have 
\begin{equation}
\label{Eq:proof:lemma:T00BandT0B4}
\begin{split}
\frac{1}{n}\sum_{i=1}^n W_{z,jk}^2 (Z_i,X_{ij},Y_{ik}) 
&= \frac{h}{n} \cdot \sum_{i\in [n]} \left\{ K_h(Z_i-z) X_{ij}Y_{ik} - K_h(Z_i-z) \bSigma_{jk} (z)  \right\}^2\\
&\le h \cdot \underset{i\in[n]}{\max} \; \left\{ K_h(Z_i-z) X_{ij}Y_{ik} - K_h(Z_i-z) \bSigma_{jk} (z)  \right\}^2\\
&\le 2h \cdot \underset{i\in[n]}{\max} \; \left\{K_h^2(Z_i-z) X_{ij}^2 Y_{ik}^2+ K_h^2 (Z_i-z)\bSigma^2_{jk}(z)   \right\}\\
&\le 2h\cdot \left(\frac{1}{h^2} \cdot \|K\|_{\infty}^2 \cdot M_X^4 \cdot\log^2 d  + \frac{1}{h^2} \cdot \|K\|_{\infty}^2 \cdot  M_{\sigma}^2  \right)\\
&\le C\cdot \frac{\log^2 d}{h},
\end{split}
\end{equation}
with probability at least $1-1/d$.  Note that the second inequality holds by the fact that $(x-y)^2 \le 2x^2 + 2y^2$, and the third inequality holds by \eqref{prop:kernel} and Assumption~\ref{ass:covariance}, and the fact that $\max (X_{ij},Y_{ij}) \le M_X \cdot \sqrt{\log d}$ with probability at least $1-1/d$.\\

\textbf{Covering number of the function class $\cW$:} To obtain the covering number of the function class $\cW$ under the $L_2$ norm on the empirical measure, it suffices to obtain the covering number $\underset{Q}{\sup} \; N\{\cW,L_2(Q),\epsilon\} $.
First, we note that $W_{z,jk}= \sqrt{h} \cdot \left\{ g_{z,jk} - w_z \cdot \bSigma_{jk} (z)\right\}$.  
From Lemma~\ref{lemma:coveringnumberkz}, we have  $\cK_1 = \{w_z(\cdot) \mid z\in [0,1]\}$ and that 
\[
\underset{Q}{\sup} \; \cN\{\cK_1,L_2(Q),\epsilon\} \le \left( \frac{2\cdot C_K\cdot \|K\|_{\mathrm{TV}}}{h\epsilon}\right)^4.
\]
Also, From Lemma~\ref{lemma:coveringnumbergzjk}, we have  $\cG_{1,jk} = \{g_{z,jk} (\cdot) \mid z\in [0,1] \}$ and that 
\[
\underset{Q}{\sup} \;  N\{\cG_{1,jk}, L_2(Q),   \epsilon\} \le   \left( \frac{2\cdot M_X^2 \cdot \log d\cdot  C_K \cdot \|K\|_{\mathrm{TV}} }{h\epsilon}\right)^4.
\]
Moreover, by Assumption~\ref{ass:covariance}, $\bSigma_{jk}(z)$ is $M_{\sigma}$-Lipschitz. 
Thus, applying Lemmas~\ref{lemma:coveringnumber1} and \ref{lemma:coveringnumber2}, we obtain
\begin{equation}
\label{Eq:proof:lemma:T00BandT0B5}
\underset{Q}{\sup}\; N\{\cW,L_2 (Q),\epsilon\} \le  2^{22} \cdot M_{\sigma}  \cdot M_{X}^8\cdot C_K^8 \cdot \|K\|_{\mathrm{TV}}^8 \cdot \|K\|_{\infty}^5 \cdot d^2\cdot \left( \frac{\log^{4/9} d}{h^{17/18} \epsilon}\right)^{9} = C\cdot d^2 \cdot  \left( \frac{\log^{4/9}d }{h^{17/18} \epsilon}\right)^{9},
\end{equation}
where the term $d^2$ appear on the right hand side because the function class $\cW$ is over  $j,k\in[d]$.\\

\textbf{Applying Dudley's inequality and Borell's inequality:}
Applying Dudley's inequality (see Corollary 2.2.8 in \citealp{van1996weak}) with (\ref{Eq:proof:lemma:T00BandT0B4}) and (\ref{Eq:proof:lemma:T00BandT0B5}), we have 
\[
\EE \left\{\underset{z\in [0,1]}{\sup}\; \underset{j,k\in[d] }{\max}\;    \frac{1}{\sqrt{n}} \sum_{i\in [n]} W_{z,jk}  (Z_i,X_{ij},Y_{ik}) \cdot \xi_i\right\} \le  C \cdot \int_{0}^{C\cdot \sqrt{ \frac{\log^2 d}{h}}}
\sqrt{ \log \left(  \frac{d^{2/9}\cdot \log^{4/9}d}{h^{17/18} \epsilon}  \right)      }   d\epsilon.  
\]
Applying (\ref{EqA:covering number dudley}) with $b_1 = C\cdot\sqrt{ \log^2d/h}$ and $b_2 = d^{2/9} \cdot \log^{4/9} d/h^{17/18}$, we have 
\begin{equation}
\label{Eq:proof:lemma:T00BandT0B6}
\EE \left\{\underset{z\in [0,1]}{\sup}\; \underset{j,k\in [d]}{\max}\;    \frac{1}{\sqrt{n}} \sum_{i\in [n]} W_{z,jk}  (Z_i,X_{ij},Y_{ik}) \cdot \xi_i\right\} \le  C \cdot \sqrt{\frac{\log^3 (d/h)}{h}},
\end{equation}
for some sufficiently large $C>0$.

By Borell's inequality (see Proposition A.2.1 in \citealp{van1996weak}), for $\lambda>0$, we have 
\begin{equation*}
\small
\begin{split}
&P\left[ \left| \underset{z\in [0,1]}{\sup}\; \underset{j,k\in E(z)}{\max}\;    \frac{1}{\sqrt{n}} \sum_{i\in [n]} W_{z,jk}  (Z_i,X_{ij},Y_{ik}) \cdot \xi_i    
\right|  \ge C\cdot \sqrt{\frac{\log^3(d/h)}{h}} +\lambda    \;\Bigg|\; \{(Z_i,\bX_i,\bY_{i})\}_{i\in [n]}\right] \\
&\le 2\cdot \exp \left(  -\frac{\lambda^2}{2 \sigma_X^2} \right),
\end{split}
\end{equation*}
where $\sigma^2_X$ is the upper bound on the conditional variance.  
Picking $\lambda = C \cdot \sqrt{\frac{\log^3 (d/h)}{h}}$, we have 
\begin{equation}
\label{Eq:proof:lemma:T00BandT0B7}
P\left[ \left| \underset{z\in [0,1]}{\sup}\; \underset{j,k\in [d]}{\max}\;    \frac{1}{\sqrt{n}} \sum_{i\in [n]} W_{z,jk}  (Z_i,X_{ij},Y_{ik}) \cdot \xi_i    
\right|  \ge C\cdot \sqrt{\frac{\log^3(d/h)}{h}}    \;\Bigg|\; \{(Z_i,\bX_i,\bY_i)\}_{i\in [n]}\right]\le \frac{1}{d}.
\end{equation}
Thus, substituting (\ref{Eq:proof:lemma:T00BandT0B7}) into (\ref{Eq:proof:lemma:T00BandT0B2}), we have 
\begin{equation}
\label{Eq:proof:lemma:T00BandT0B7-5}
\begin{split}
I_1 &\le C\cdot M \cdot s \cdot \left\{ h^2 + \sqrt{\frac{\log (d/h)}{nh}}\right\} \cdot \sqrt{\frac{\log^3(d/h)}{h}}\\
&\le C\cdot s \cdot \sqrt{h^3 \log^3 (d/h)} + C \cdot s \cdot \sqrt{\frac{\log^4 (d/h)}{nh^2}},
\end{split}
\end{equation}
with probability $1-1/d$.\\

\textbf{Upper bound for $I_2$:} 
By an application of Holder's inequality, we have 
\begin{equation}
\label{Eq:proof:lemma:T00BandT0B8}
\small
\begin{split}
I_2 &\le\sqrt{nh}\cdot   \underset{z\in[0,1]}{\sup} \; \underset{j,k\in[d]}{\max} \;
 \left\| \hat{\bTheta}_j(z) - \bTheta_j (z)  \right\|_1\cdot \left\|\hat{\bTheta}_k (z) \right\|_1 
\cdot  \left| \underset{z\in [0,1]}{\sup} \; \underset{j,k \in [d]}{\max} \; \frac{1}{n}  \sum_{i\in[n]} \left\{ K_h (Z_i-z)X_{ij}Y_{ik}  \right\}\xi_i   \right|\\
&\le  \underset{z\in [0,1]}{\sup} \;  \underset{j,k \in [d]}{\max} \;  \left[ \left\| \hat{\bTheta}_k(z) - \bTheta_k (z)  \right\|_1 + \|\bTheta_k (z)\|_1  \right] \cdot C\cdot s\cdot \left\{ h^2 + \sqrt{\frac{\log (d/h)}{nh}}\right\} \\
& \quad \times  \sqrt{nh}\cdot  \left|\underset{z\in [0,1]}{\sup} \;  \underset{j,k \in [d]}{\max} \; \frac{1}{n}  \sum_{i\in[n]} \left\{ K_h (Z_i-z)X_{ij}Y_{ik}  \right\}\xi_i   \right|\\
&\le C \cdot M \cdot s\cdot \left\{ h^2 + \sqrt{\frac{\log (d/h)}{nh}}\right\} \cdot \sqrt{nh}\cdot  \left| \underset{z\in [0,1]}{\sup} \; \underset{j,k \in [d]}{\max} \;\frac{1}{n}  \sum_{i\in[n]} \left\{ K_h (Z_i-z)X_{ij}Y_{ik}  \right\}\xi_i   \right|,
\end{split}
\end{equation}
where the second inequality holds by triangle inequality and Corollary~\ref{theorem:inverse estimation}, and the last inequality holds by another application of Corollary~\ref{theorem:inverse estimation} and the assumption that $h^2 + \sqrt{\log(d/h)/(nh)}= o(1)$.

Recall the definition of $g_{z,jk}(Z_i,X_{ij},Y_{ik}) = K_h (Z_i-z)X_{ij}Y_{ik} $.  Conditioned on the data $\{(Z_i,\bX_i,\bY_i)   \}_{i\in [n]}$, we note that 
\[
\sqrt{\frac{h}{n}}  \sum_{i\in[n]} \left\{ K_h (Z_i-z)X_{ij}Y_{ik}  \right\}\xi_i  = \frac{1}{\sqrt{n}} \sum_{i\in [n]} \sqrt{h} \cdot g_{z,jk}(Z_i,X_{ij},Y_{ik})\cdot  \xi_i \sim N\left\{0, \frac{h}{n} \sum_{i\in [n]} g_{z,jk}^2 (Z_i,X_{ij},Y_{ik}) \right\}.
\]  
Similar to the upper bound for $I_1$, we apply Dudley's inequality and Borell's inequality to bound the supreme of the Gaussian process in the last expression.  

To this end, 
we need to obtain an upper bound for the conditional covariance. By (\ref{proof:lemma:empirical process g2}), we have 
\begin{equation}
\label{Eq:proof:lemma:T00BandT0B9}
\frac{h}{n}\sum_{i\in [n]} g^2_{z,jk}(Z_i,X_{ij},Y_{ik}) \le \frac{1}{h}
 \cdot M_X^4 \cdot \|K\|_{\infty}^4 \cdot \log^2 d,
 \end{equation}
with probability at least $1-1/d$.
In addition, by an application of Lemma~\ref{lemma:coveringnumbergzjk},  the covering number for the class of function $ \{\sqrt{h} \cdot g_{z,jk} (\cdot) \; | \; z\in [0,1],\; j,k\in [d] \}$ is 
\begin{equation}
\label{Eq:proof:lemma:T00BandT0B10}
\underset{Q}{\sup} \;  N\left[\left\{\sqrt{h} \cdot g_{z,jk} (\cdot) \; | \; z\in [0,1],\; j,k\in [d] \right\}, L_2(Q),  \epsilon\right] \le  d^2\cdot  \left( \frac{2\cdot M_X^2 \cdot \log d\cdot  C_K \cdot \|K\|_{\mathrm{TV}} }{\sqrt{h}\epsilon}\right)^4.
\end{equation}

By an application of Dudley's inequality, we have 
\[
\EE \left\{\underset{z\in [0,1]}{\sup}\; \underset{j,k\in [d]}{\max}\;    \frac{1}{\sqrt{n}} \sum_{i\in [n]} \sqrt{h} \cdot g_{z,jk}  (Z_i,X_{ij},Y_{ik}) \cdot \xi_i\right\} \le  C \cdot \int_{0}^{\sqrt{M_X^4 \cdot \|K\|_{\infty}^4 \cdot \frac{\log^2 d}{h}}}
\sqrt{ \log \left(  \frac{d^{1/2}\cdot \log d}{h^{1/2} \epsilon}  \right)      }   d\epsilon.  
\]
Applying (\ref{EqA:covering number dudley}) with $b_1 = \sqrt{M_X^4 \cdot \|K\|_{\infty}^4 \cdot \log^2d/h}$ and $b_2 = d^{1/2} \cdot \log d/h^{1/2} $, we have 
\begin{equation}
\label{Eq:proof:lemma:T00BandT0B10}
\EE \left\{\underset{z\in [0,1]}{\sup}\; \underset{j,k\in [d]}{\max}\;    \frac{1}{\sqrt{n}} \sum_{i\in [n]} \sqrt{h} \cdot g_{z,jk}  (Z_i,X_{ij},Y_{ik}) \cdot \xi_i\right\} \le  C \cdot \sqrt{\frac{\log^3 (d/h)}{h}}.
\end{equation}

By Borell's inequality (see Proposition A.2.1 in \citealp{van1996weak}), we have 
\begin{equation*}
\small
\begin{split}
&P\left[ \left| \underset{z\in [0,1]}{\sup}\; \underset{j,k\in [d]}{\max}\;    \frac{1}{\sqrt{n}} \sum_{i\in [n]} \sqrt{h}\cdot g_{z,jk}  (Z_i,X_{ij},Y_{ik}) \cdot \xi_i    
\right|  \ge C\cdot \sqrt{\frac{\log^3(d/h)}{h}} +\lambda    \;\Bigg|\; \{(Z_i,\bX_i,\bY_i)\}_{i\in [n]}\right] \\
&\le 2\cdot \exp \left(  -\frac{\lambda^2}{2 \sigma_X^2} \right).
\end{split}
\end{equation*}
Picking $\lambda = C \cdot \sqrt{\frac{\log^3 (d/h)}{h}}$, we have 
\begin{equation}
\label{Eq:proof:lemma:T00BandT0B11}
P\left[ \left| \underset{z\in [0,1]}{\sup}\; \underset{j,k\in [d]}{\max}\;    \frac{1}{\sqrt{n}} \sum_{i\in [n]}\sqrt{h}\cdot g_{z,jk}  (Z_i,X_{ij},Y_{ik}) \cdot \xi_i    
\right|  \ge C\cdot \sqrt{\frac{\log^3(d/h)}{h}}    \;\Bigg|\; \{(Z_i,\bX_i,\bY_i)\}_{i\in [n]}\right]\le \frac{1}{d}. 
\end{equation}

Thus, by (\ref{Eq:proof:lemma:T00BandT0B8}) and (\ref{Eq:proof:lemma:T00BandT0B11}), we have 
\begin{equation}
\label{Eq:proof:lemma:T00BandT0B12}
\begin{split}
I_2 &\le C \cdot M \cdot s \cdot \left\{ h^2 + \sqrt{\frac{\log (d/h)}{nh}}\right\}\cdot \sqrt{\frac{\log^3 (d/h)}{h}}\\
&\le C\cdot s \cdot \sqrt{h^3 \log^3 (d/h)} + C \cdot s \cdot \sqrt{\frac{\log^4 (d/h)}{nh^2}},
\end{split}
\end{equation}
with probability at least $1-1/d$.

\textbf{Upper bound for $I_3$:} By an application of Holder's inequality, we have
\begin{equation}
\small
\label{i3holderadd}
\begin{split}
I_3 &\le   \underset{z\in[0,1]}{\sup} \; \underset{j\in[d]}{\max} \;
\left\|   \left\{\hat{\bTheta}_j(z)\right\}^T \hat{\bSigma}(z)-\mathbf{e}_j       \right\|_{\infty} \left\|\hat{\bTheta}_k (z) \right\|_1^2 
\cdot \sqrt{nh}\cdot  \left| \underset{z\in [0,1]}{\sup} \; \underset{j,k \in[d]}{\max} \; \frac{1}{n}  \sum_{i\in[n]} \left\{ K_h (Z_i-z)X_{ij}Y_{ik}- K_h(Z_i-z)\bSigma_{jk}(z)   \right\}\xi_i   \right|\\\\
&\le M^3 \cdot C \cdot s \cdot \left\{h^2 + \sqrt{\frac{\log(d/h)}{nh}}\right\}  \sqrt{nh}\cdot  \left| \underset{z\in [0,1]}{\sup} \; \underset{j,k \in[d]}{\max} \; \frac{1}{n}  \sum_{i\in[n]} \left\{ K_h (Z_i-z)X_{ij}Y_{ik}- K_h(Z_i-z)\bSigma_{jk}(z)   \right\}\xi_i   \right|\\\\
&\le C\cdot M^3 \cdot s \cdot \left\{ h^2 +\sqrt{\frac{\log(d/h)}{nh}}\right\}\cdot \sqrt{\frac{\log^3 (d/h)}{h}}\\
&\le C\cdot s \cdot \sqrt{h^3 \log^3 (d/h)} + C\cdot s\cdot \sqrt{\frac{\log^4 (d/h)}{nh^2}},
\end{split}
\end{equation}
where the second inequality holds by the fact that $\bTheta(z)\in \cU_{s,M}$ and by an application of Corollary~\ref{theorem:inverse estimation}, and the third inequality holds by \eqref{Eq:proof:lemma:T00BandT0B7}.

Thus, combining (\ref{Eq:proof:lemma:T00BandT0B7-5}), (\ref{Eq:proof:lemma:T00BandT0B12}), and \eqref{i3holderadd}, we have 
\begin{equation}
\label{Eq:proof:lemma:T00BandT0B12-5}
|T^B-T_0^B| \le C\cdot s \cdot \sqrt{h^3 \log^3 (d/h)} + C \cdot s \cdot \sqrt{\frac{\log^4 (d/h)}{nh^2}}
\end{equation}
with probability at least $1-3/d$.\\

\textbf{Upper bound for $|T_0^B-T_{00}^B|$:} 
Recall from  (\ref{proof:theorem:gaussian multiplier bootstrap4}) that 
\begin{equation*}
\begin{split}
T_{00}^B &=   \underset{z\in [0,1]}{\sup} \; \underset{(j,k)\in E(z)}{\max} \;
\sqrt{nh} \cdot \bigg|   \sum_{i\in [n]}  \Big(\left\{ \bTheta_j (z) \right\}^T
 K_h (Z_i-z) \left\{ \bX_i \bY_i^T  - \bSigma (z) \right\} \bTheta_k (z)/n\\
 &\qquad \qquad \qquad \qquad -    \left\{ \bTheta_j (z) \right\}^T \Big[ 
 \EE\{K_h (Z-z)\bX \bY^T\} - \EE\{K_h(Z-z)\} \bSigma (z)  \Big] \bTheta_k (z)  \Big) \cdot \xi_i/n\bigg|.
\end{split}
\end{equation*}
By the triangle inequality, we have 
\begin{equation}
\label{Eq:T0BandT00B1}
\small
\begin{split}
|T_0^B-T_{00}^B| &\le \sqrt{nh} \cdot \underset{z\in [0,1]}{\sup} \; \underset{(j,k)\in E(z)}{\max}\; 
\left| \frac{1}{n}\sum_{i\in [n]} \left( \left\{ \bTheta_j(z)\right\}^T \left[  
\EE\left\{K_h(Z_i-z)\bX_i\bY_i^T\right\} - \EE\{K_h(Z_i-z)\} \bSigma(z)
 \right] \bTheta_k(z)         \right) \cdot \xi_i\right|\\
 &\le  \sqrt{nh} \cdot \underset{z\in [0,1]}{\sup} \; \underset{(j,k)\in E(z)}{\max}\; \left| \left\{\bTheta_j (z)\right\}^T \left[     
\EE\left\{K_h(Z-z)\bX\bY^T\right\} - \EE\{K_h(Z-z)\} \bSigma(z)
    \right] \bTheta_k (z)\right| \cdot \left| \frac{1}{n}\sum_{i\in [n]} \xi_i  \right|\\
    &\le \sqrt{nh} \cdot M^2 \cdot C \cdot h^2 \cdot \left| \frac{1}{n}\sum_{i\in [n]} \xi_i  \right|,
\end{split}
\end{equation}
where the last inequality holds by applying Holder's inequality and Lemma~\ref{lemma:bias}.
Since $\xi_i \stackrel{\mathrm{i.i.d.}}{\sim} N(0,1)$, by the Gaussian tail inequality, we have
\[
P\left(\left| \frac{1}{n}\sum_{i\in [n]} \xi_i\right| > \sqrt{\frac{2\log n}{n}}    \right) \le  \frac{1}{n}. 
\]
Thus, substituting the above expression into (\ref{Eq:T0BandT00B1}), we obtain 
\begin{equation}
\label{Eq:T0BandT00B2}
|T_0^B-T_{00}^B| \le \sqrt{nh} \cdot M^2 \cdot C\cdot h^2 \cdot \sqrt{\frac{2\log n}{n}}  \le C \cdot \sqrt{h^5 \log n},
\end{equation}
with probability at least $1-1/n$.\\

\textbf{Combining the upper bounds:} Combining the upper bounds (\ref{Eq:proof:lemma:T00BandT0B12-5}) and (\ref{Eq:T0BandT00B2}), and applying the union bound, we have 
\begin{equation*}
\small
\begin{split}
&P\left[\left| T^B-T_{00}^B\right| \ge C\cdot s \cdot \sqrt{h^3 \log^3 (d/h)} + C \cdot s \cdot \sqrt{\frac{\log^4 (d/h)}{nh^2}}  + C \cdot \sqrt{h^5 \log n}      \;\Bigg|\; \{(Z_i,\bX_i,\bY_i)\}_{i\in [n]} \right]\\
&\le P\left[\left| T^B-T_{0}^B\right| + \left|T_0^B-T_{00}^B \right| \ge C\cdot s \cdot \sqrt{h^3 \log^3 (d/h)} + C \cdot s \cdot \sqrt{\frac{\log^4 (d/h)}{nh^2}}  + C \cdot \sqrt{h^5 \log n}      \;\Bigg|\; \{(Z_i,\bX_i,\bY_i)\}_{i\in [n]} \right]\\
&\le P\left[\left| T^B-T_{0}^B\right| \ge C\cdot s \cdot \sqrt{h^3 \log^3 (d/h)} + C \cdot s \cdot \sqrt{\frac{\log^4 (d/h)}{nh^2}}     \;\Bigg|\; \{(Z_i,\bX_i,\bY_i)\}_{i\in [n]} \right]\\
&\quad+ P\left[\left|T_0^B-T_{00}^B \right| \ge C \cdot \sqrt{h^5 \log n}      \;\Bigg|\; \{(Z_i,\bX_i,\bY_i)\}_{i\in [n]} \right]\\
&\le 2/d + 1/n,
\end{split}
\end{equation*}
as desired.

\subsection{Lower Bound of the Variance}

We aim to show that the variance of $J_{z,jk}$ defined in \eqref{EqA:jzjk} is bounded from below. 

\begin{lemma}\label{lm:var-lower-bd}
  Under the same conditions of Theorem~\ref{theorem:gaussian multiplier bootstrap}, there exists a constant $c>0$ such that 
  $\inf_z \min_{j,k} \Var(J_{z,jk}) \ge c>0$.
\end{lemma}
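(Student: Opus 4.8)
The plan is to use the fact that $J_{z,jk}(Z_i,\bX_i,\bY_i)$ is centred --- both $J_{z,jk}^{(1)}$ and $J_{z,jk}^{(2)}$ have mean zero by \eqref{EqA:jzjk1}--\eqref{EqA:jzjk2} --- so that $\Var(J_{z,jk}) = \EE[J_{z,jk}^2]$, and then to split this second moment into a kernel factor and a quadratic-form factor, each bounded below uniformly in $z,j,k$. Writing $a_i = \{\bTheta_j(z)\}^T\bX_i$, $b_i = \{\bTheta_k(z)\}^T\bY_i$, and $A_i = K_h(Z_i-z)\{a_i b_i - \{\bTheta_j(z)\}^T\bSigma(z)\bTheta_k(z)\}$, one checks $J_{z,jk}(Z_i,\bX_i,\bY_i) = \sqrt{h}\,(A_i - \EE[A_i])$, hence $\Var(J_{z,jk}) = h\EE[A_i^2] - h(\EE[A_i])^2$. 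The subtracted term is negligible: $\EE[A_i] = \{\bTheta_j(z)\}^T[\EE\{K_h(Z-z)\bX\bY^T\} - \EE\{K_h(Z-z)\}\bSigma(z)]\bTheta_k(z)$, and by Lemma~\ref{lemma:bias} every entry of the bracket is $\cO(h^2)$, so (using $\|\bTheta_j(z)\|_1\vee\|\bTheta_k(z)\|_1\le M$ from $\bTheta(z)\in\cU_{s,M}$ together with Holder's inequality) $\EE[A_i] = \cO(h^2)$ and $h(\EE[A_i])^2 = \cO(h^5) = o(1)$, uniformly in $z,j,k$. So it remains to lower-bound $h\EE[A_i^2]$.

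Conditioning on $Z_i$ and dropping the nonnegative squared-bias contribution, $\EE[A_i^2] \ge \EE[K_h^2(Z_i-z)\,\Var(a_i b_i\mid Z_i)]$. Under \eqref{eq:two subjects}, given $Z_i$ the pair $(\bX_i,\bY_i)$ is mean-zero jointly Gaussian, so $(a_i,b_i)$ is bivariate mean-zero Gaussian, and Isserlis' identity gives $\Var(a_i b_i\mid Z_i) = \sigma_a^2(Z_i)\sigma_b^2(Z_i) + \sigma_{ab}^2(Z_i) \ge \sigma_a^2(Z_i)\sigma_b^2(Z_i)$, where $\sigma_a^2(Z_i) = \{\bTheta_j(z)\}^T\{\bSigma(Z_i)+\Lb_X(Z_i)\}\bTheta_j(z)$ and similarly for $\sigma_b^2$. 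Since $\Lb_X(Z_i)\succeq 0$ and $\bTheta(Z_i)\in\cU_{s,M}$ forces $\lambda_{\min}\{\bSigma(Z_i)\} = \|\bTheta(Z_i)\|_2^{-1}\ge \rho^{-1}$, the Rayleigh quotient bound gives $\sigma_a^2(Z_i) \ge \rho^{-1}\|\bTheta_j(z)\|_2^2 \ge \rho^{-1}\bTheta_{jj}(z)^2$; and $\bTheta_{jj}(z) \ge \bSigma_{jj}(z)^{-1} \ge M_\sigma^{-1}$, the first inequality because $\bSigma(z)\succ0$ (so $\bTheta_{jj}(z)\ge\bSigma_{jj}(z)^{-1}$ by the block-inverse formula) and the second by Assumption~\ref{ass:covariance}. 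Hence $\sigma_a^2(Z_i),\sigma_b^2(Z_i)\ge (\rho M_\sigma^2)^{-1}$, so $\Var(a_i b_i\mid Z_i) \ge \kappa_0 := (\rho M_\sigma^2)^{-2}>0$ uniformly in $z,j,k,Z_i$.

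It follows that $h\EE[A_i^2]\ge \kappa_0\, h\,\EE[K_h^2(Z_i-z)] = \kappa_0\int K^2(v)f_Z(z+vh)\,dv$ after the substitution $v = (Z_i-z)/h$. By Assumption~\ref{ass:marginal}, $f_Z\ge\underline{f}_Z>0$ on $[0,1]$, and because $K$ is symmetric with support in $[-1,1]$, for every $z\in[0,1]$ and $h\le 1/2$ the set $\{v\in[-1,1]: z+vh\in[0,1]\}$ contains either $[0,1]$ or $[-1,0]$, on which $\int K^2 = \tfrac12\|K\|_2^2$; thus $\int K^2(v)f_Z(z+vh)\,dv \ge \tfrac12\underline{f}_Z\|K\|_2^2$ for all $z\in[0,1]$. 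Combining, $\Var(J_{z,jk}) \ge \tfrac12\kappa_0\underline{f}_Z\|K\|_2^2 - \cO(h^5)$, which is at least $c := \tfrac14\kappa_0\underline{f}_Z\|K\|_2^2 > 0$ once $h$ is small enough, uniformly over $z\in[0,1]$ and $j,k\in[d]$. The only step needing genuine care is this last uniform lower bound on the kernel factor near the endpoints of $[0,1]$, where the usual boundary effect of kernel smoothing appears; the symmetry and compact support of $K$ are exactly what control it, while everything else reduces to bookkeeping with Gaussian moments and the constraints defining $\cU_{s,M}$.
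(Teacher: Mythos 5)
Your proof is correct, and it takes a genuinely different route from the paper's. The paper expands $\Var(J_{z,jk}) = \Var(J^{(1)}_{z,jk}) + \Var(J^{(2)}_{z,jk}) - 2\Cov(J^{(1)}_{z,jk},J^{(2)}_{z,jk})$, computes the leading-in-$h$ term of each piece via Isserlis' theorem and Taylor expansion, and after cancellation arrives at $\{\bTheta_{jj}(z)\bTheta_{kk}(z) + \bTheta_{jk}^2(z)\}f_Z(z)\int K^2(u)\,du + \cO(h)$, which it then bounds from below. You instead write $J_{z,jk}=\sqrt{h}(A-\EE A)$ with $A = K_h(Z-z)\{ab - \bTheta_j(z)^T\bSigma(z)\bTheta_k(z)\}$, show $h(\EE A)^2 = \cO(h^5)$ is negligible via Lemma~\ref{lemma:bias}, and lower bound $h\EE[A^2]$ by conditioning on $Z$, discarding the nonnegative squared conditional bias, and applying Isserlis to get $\Var(ab\mid Z)\ge\sigma_a^2\sigma_b^2$. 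This needs only one-sided estimates and avoids the cancellation bookkeeping altogether. Two places where your version is cleaner: (i) the dimension-free bound $\bTheta_{jj}(z) \ge \bSigma_{jj}(z)^{-1}\ge M_\sigma^{-1}$ via the Schur-complement identity is explicit and unambiguous, whereas the paper's closing step rests on the claim that ``$\rho$ is smaller than the minimum eigenvalue of $\bSigma(z)$,'' which sits awkwardly with the definition of $\rho$ in $\cU_{s,M}$ as an upper bound on $\|\bTheta\|_2$ (that definition yields $\lambda_{\min}\{\bSigma(z)\}\ge\rho^{-1}$, not $\ge\rho$); and (ii) you handle the kernel boundary effect near the endpoints of $[0,1]$ explicitly, using the symmetry and compact support of $K$ to retain at least half of $\|K\|_2^2$, a point the paper does not address. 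Incidentally, the paper's Isserlis computation treats $\EE[\bX\bX^T\mid Z]$ as if it equalled $\bSigma(Z)$, silently dropping the $\Lb_X(Z),\Lb_Y(Z)$ contributions; that is valid only as a one-sided inequality (in the favorable direction, since both matrices are positive semidefinite), whereas you drop those terms deliberately and flag the step as one-sided.
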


\begin{proof}
In this proof, we will apply Isserlis' theorem \citep{isserlis1918formula}.  Given $\Tb \sim N(0, \bSigma)$, Isserlis' theorem implies that  for any vectors $\ub, \vb \in \RR^d$,
\begin{align}
 \nonumber \EE\{(\ub^T \Tb \Tb^T \vb)^2\}  &= \EE\{(\ub^T \Tb)^2\}\EE\{(\vb^T\Tb)^2\} + 2\{\EE(\ub^T \Tb \vb^T\Tb)\}^2 \\
&= (\ub^T \bSigma \ub)(\vb^T \bSigma \vb) + 2(\ub^T \bSigma \vb)^2 \label{eq:low-iss}
\end{align}
 According to the definition of $J_{z,jk}$ in \eqref{EqA:jzjk}, it can be decomposed into $J_{z,jk} (Z_i,\bX_i,\bY_i) = J_{z,jk}^{(1)} (Z,\bX_i,\bY_i)  -J_{z,jk}^{(2)} (Z_i)$. 
 Recall that 
 \[
 J_{z,jk}^{(1)} (Z_i,\bX_{i},\bY_{i}) = \sqrt{h} \cdot \left\{\bTheta_j (z)\right\}^T  \cdot \left[ K_h (Z_i-z) \bX_{i}\bY_{i}^T -   \EE\left\{K_h (Z-z) \bX\bY^T  \right\}    \right] \cdot \bTheta_k(z),
\]
and
\[
J_{z,jk}^{(2)} (Z_i) = \sqrt{h} \cdot \left\{\bTheta_j (z)\right\}^T  \cdot \left[ K_h (Z_i-z) -   \EE\left\{K_h (Z-z)  \right\}    \right] \cdot \bSigma (z)\cdot \bTheta_k(z).
\]
 We will calculate $\Var\{J_{z,jk}^{(2)} (Z)\}$, $\Var\{J_{z,jk}^{(1)} (Z,\bX,\bY)\}$, and $\Cov\{J_{z,jk}^{(1)} (Z,\bX,\bY), J_{z,jk}^{(2)} (Z)\}$ separately.
 
We first calculate $\Var\{J_{z,jk}^{(2)} (Z)\}$. Following a similar method as the proof of Lemma \ref{lemma:bias}, we have
$
   \EE\{K_h(Z-z)\} = f_Z(z) + \cO(h^2)$ and  $\EE\{K^2_h(Z-z)\} =  h^{-1}f_Z(z) \int K^2(u) du + \cO(1).
$ 
This implies that 
\begin{equation}\label{eq:Var-J2}
   \Var\{J_{z,jk}^{(2)} (Z)\} = \bTheta^2_{jk}(z) \cdot f_Z(z) \int K^2(u) du + \cO(h).
\end{equation}

Next, we proceed to calculate the variance of $J_{z,jk}^{(1)} (Z)$.
By a change of variable and Taylor's expansion, we obtain
\begin{equation}
\label{fix1}
\begin{split}
&\bTheta_j(z)^T  \EE\big\{K_h(Z-z) \bSigma(Z)\}  \bTheta_k(z)\\
&=\bTheta_j(z)^T \left\{ \int K(u) \bSigma(z+uh) f_Z(z+uh) du\right\}  \bTheta_k(z)\\
&=\bTheta_j(z)^T \left[ \int K(u) \{\bSigma(z)+ uh \dot{\bSigma}(z) +u^2h^2 \ddot{\bSigma}(z')\} \{f_Z(z) + uh \dot{f}_Z(z) + u^2h^2 \ddot{f}_Z(z)\} du\right]  \bTheta_k(z).
\end{split}
\end{equation}
Note that each term in the integrant that involves $\int u K(u)du$ is equal to zero since $\int u K(u)du=0$ by assumption.
For terms with $\bSigma(z)$, we have 
\begin{equation*}
\begin{split}
&\bTheta_j (z)^T \bSigma(z) \bTheta_k(z)  \int K(u) \{f_Z(z)+uh \dot{f}_Z(z)+u^2h^2 \ddot{f}_Z (z)\} du \\
&=\bTheta_{jk}(z)\{f_Z(z)  + \cO(h^2)\}.
\end{split}
\end{equation*}
For terms that involve $\dot{\bSigma}(z)$ and $\ddot{\bSigma}(z')$, we have 
\[
\bTheta_j(z)^T \dot{\bSigma}(z) \bTheta_k(z) \le M_\sigma \|\bTheta_j(z)\|_2 \|\bTheta_k(z)\|_2 \le \rho^2 M_\sigma= \cO(1),
\]
since the maximum eigenvalue of $\bTheta (z)$ is bounded by $\rho$ by assumption.
Thus, combining the above into \eqref{fix1}, we have 
\begin{equation}
\label{fix2}
\begin{split}
\bTheta_j(z)^T  \EE\big\{K_h(Z-z) \bSigma(Z)\}  \bTheta_k(z)= \bTheta_{jk}(z) f_Z(z)+\cO(h^2).
\end{split}
\end{equation}

Next, we bound the second moment.  By the Isserlis' theorem in \eqref{eq:low-iss}, and by taking the conditional expectation, we have
\begin{equation}
\begin{split}
\label{eq:J1-exp2}
& \EE\big[  K^2_h(Z-z) \{\bTheta_j(z)^T\bX \bY^T\bTheta_k(z)\}^2 \big] \\
 &= \EE\big(K^2_h(Z-z) [ \{\bTheta_j(z)^T \bSigma(Z)  \bTheta_j(z)\}\{\bTheta_k(z)^T \bSigma(Z) \bTheta_k(z)\} + 2\{\bTheta_j(z)^T \bSigma(Z)  \bTheta_k(z)\}^2]\big).
\end{split}
\end{equation}
Following a similar argument as in \eqref{fix2}, we can derive
\begin{equation}
\label{fit3}
 \EE\big[  K^2_h(Z-z) \{\bTheta_j(z)^T\bX \bY^T\bTheta_k(z)\}^2 \big] =  \{\bTheta_{jj}(z)\bTheta_{kk}(z)  + 2\bTheta_{jk}^2(z) \} f_Z(z) h^{-1}\int K^2(u) du  + \cO(1)
\end{equation}
Thus, we have
\begin{equation}\label{eq:Var-J1}
 \Var\left\{J_{z,jk}^{(1)} (Z)\right\} =  \{\bTheta_{jj}(z)\bTheta_{kk}(z)  + 2\bTheta_{jk}^2(z) \} f_Z(z) \int K^2(u) du + \cO(h).
\end{equation}

Now we begin to bound the $\Cov\{J_{z,jk}^{(1)} (Z), J_{z,jk}^{(2)} (Z)\}$. By using a similar argument as \eqref{fix2}, we have
\begin{align}
\label{eq:J1-exp3}
 \EE\big[\bTheta_{jk}(z)  K^2_h(Z-z) \{\bTheta_j(z)^T\bX \bY^T\bTheta_k(z)\} \big]=\bTheta_{jk}^2 (z) \cdot h^{-1}f_Z(z) \int K^2(u) du+ \cO(1),
\end{align}

Combining with \eqref{eq:J1-exp3} and \eqref{fix2}, and using the covariance formula, we have that 
\begin{equation}\label{eq:J1-J2-cov}
\Cov\big\{J_{z,jk}^{(1)} (Z), J_{z,jk}^{(2)} (Z)\big\} = \bTheta^2_{jk}(z) f_Z(z) \int K^2(u) du+ \cO(h).
\end{equation}
Using \eqref{eq:Var-J2}, \eqref{eq:Var-J1} and \eqref{eq:J1-J2-cov}, we have
\begin{align*}
   \Var\{J_{z,jk} (Z)\} &= \Var(J_{z,jk}^{(1)} (Z)) + \Var\{J_{z,jk}^{(2)} (Z)\} - 2\Cov\big\{J_{z,jk}^{(1)} (Z), J_{z,jk}^{(2)} (Z)\big\}\\
   &  = \{\bTheta_{jj}(z)\bTheta_{kk}(z)  + \bTheta_{jk}^2(z) \} f_Z(z) \int K^2(u) du + \cO(h) \ge \rho^2\underline{f}_Z,
\end{align*}
where the last inequality is because $\rho$ is smaller than the
minimum eigenvalue of $\bSigma(z)$ for any $z \in [0,1]$ and $\inf_{z\in [0,1]} \; f_Z (z)\ge \underline{f}_Z>0$ by Assumption \ref{ass:marginal}. Since the lower bound above is uniformly true over $z, j, k$, the lemma is proven.

\end{proof}

\section{Proof of Theorem~\ref{theorem:algorithm2}}
\label{appendix:proof of algorithm2}
In this section, we show that the proposed procedure in Algorithm~\ref{al:stepdown} 
is able to control the type I error below a pre-specified level  $\alpha$. 
We first define some notation that will be used throughout the proof of Theorem~\ref{theorem:algorithm}.
Let $E^*(z)$ be the true edge set at $Z=z$. That is, $E^*(z)$ is the set of edges induced by the true inverse covariance matrix  $\bTheta (z)$.  
Recall from Definition~\ref{def:critical} that the critical edge set  is defined as 
\begin{equation}
\label{appendix:proof of algorithm crit}
  \cC\{E(z), \cP\} = \{e \mid e \not\in E(z), \;   \mathrm{there\; exists\;} E'(z) \supseteq E(z) \mathrm{ \;such\; that\; } E' (z) \in \mathscr{P} \text{ and }   E'(z) \backslash \{e\}  \notin \mathscr{P} \},
\end{equation}
where $\mathscr{P}= \{      E\subseteq V\times V \mid \cP(G)=1\}$ is the class of edge sets satisfying the graph property $\cP$.

Suppose that Algorithm~\ref{al:stepdown} rejects the null hypothesis at the $T$th iteration.  That is, there exists  $z_0 \in [0,1]$ such that $E_T (z_0) \in \mathscr{P}$ but $E_{T-1} (z_0) \notin \mathscr{P}$.
To prove Theorem~\ref{theorem:algorithm}, we state the following two lemmas on the properties of critical edge set.

\begin{lemma}
\label{lemma:algorithm1}
Let $E_T(z_0) \in \mathscr{P}$ for some $z_0 \in[0,1]$. 
Then, at least one rejected edge  in $E_T(z_0)$ is in the critical edge set $\cC\{E^* (z_0),\cP\}$.
\end{lemma}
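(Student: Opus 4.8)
\textbf{Proof proposal for Lemma~\ref{lemma:algorithm1}.}

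The plan is to argue by contradiction, exploiting the monotonicity of $\cP$ together with the definition of the critical edge set. Suppose, for contradiction, that $E_T(z_0) \in \mathscr{P}$ but no rejected edge in $E_T(z_0)$ belongs to $\cC\{E^*(z_0),\cP\}$. Recall that $E_T(z_0)$ is built up iteratively as $E_T(z_0) = E^*(z_0)_{\mathrm{rej}} \cup \{\text{edges accepted into the running set}\}$; more precisely, every edge in $E_T(z_0) \setminus E^*(z_0)$ was added because it was rejected at some iteration. So the ``new'' edges of $E_T(z_0)$ relative to the true edge set $E^*(z_0)$ are exactly the rejected edges lying outside $E^*(z_0)$. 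Write $R = E_T(z_0) \setminus E^*(z_0)$ for this set of rejected non-true edges, so $E_T(z_0) = E^*(z_0) \cup R$.

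The key step is to show that under the contradiction hypothesis, $E^*(z_0) \in \mathscr{P}$, which contradicts $E_{T-1}(z_0)\notin\mathscr{P}$ — or more directly contradicts the fact that $E^*(z_0)$ is the true edge set and the null holds along the chain of iterations (i.e. $E^*(z_0)\subseteq E_{t}(z_0)$ only through rejected edges, and $E^*(z_0)$ itself does not satisfy $\cP$ under $H_0$). To get $E^*(z_0)\in\mathscr{P}$ from $E_T(z_0) = E^*(z_0)\cup R \in \mathscr{P}$: peel off the edges of $R$ one at a time. If removing some $e \in R$ from a superset $E'\supseteq E^*(z_0)$ with $E'\in\mathscr{P}$ destroys the property, i.e. $E'\setminus\{e\}\notin\mathscr{P}$, then by Definition~\ref{def:critical} $e\in\cC\{E^*(z_0),\cP\}$ (taking the witnessing set to be $E'$), since $e\notin E^*(z_0)$. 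But $e\in R$ is a rejected edge, so this says some rejected edge lies in $\cC\{E^*(z_0),\cP\}$, contradicting our assumption. Hence removing any single edge of $R$ from $E_T(z_0)$ keeps us in $\mathscr{P}$; iterating over all of $R$ (a finite set) yields $E^*(z_0)\in\mathscr{P}$. This is the desired contradiction.

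The step I expect to be the main obstacle is making the ``peel off one at a time'' argument fully rigorous: one must be careful that at each stage the intermediate set $E_T(z_0)\setminus R'$ (for $R'\subseteq R$) still contains $E^*(z_0)$ and is still a valid witnessing superset in the sense of Definition~\ref{def:critical}. The cleanest way is an induction on $|R'|$: the inductive hypothesis is $E^*(z_0)\cup(R\setminus R')\in\mathscr{P}$, and at the inductive step we pick $e\in R\setminus R'$, note $E^*(z_0)\cup(R\setminus R')\supseteq E^*(z_0)$ and $e\notin E^*(z_0)$, so if $[E^*(z_0)\cup(R\setminus R')]\setminus\{e\}\notin\mathscr{P}$ then $e\in\cC\{E^*(z_0),\cP\}$, contradicting that $e$ is rejected; therefore $[E^*(z_0)\cup(R\setminus R')]\setminus\{e\}\in\mathscr{P}$, advancing the induction. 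A minor subtlety worth stating explicitly is why $E^*(z_0)\notin\mathscr{P}$ to begin with — this is exactly the hypothesis $\cP\{G(z_0)\}=0$ for the true graph, valid because we are arguing the contrapositive of type I error control under $H_0$ (or, in the two-lemma structure, this is where the companion lemma and the null assumption feed in). I would keep the write-up short, since once the induction is set up the rest is bookkeeping.
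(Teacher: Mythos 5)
Your approach is essentially the same as the paper's: both traverse the chain of edge sets from $E^*(z_0)$ to $E^*(z_0)\cup R$, where $R=E_T(z_0)\setminus E^*(z_0)$, and locate a critical edge where the graph property first flips. The paper builds up from $E^*(z_0)$ by adding the edges of $R$ one at a time and takes the first edge $e_{k_0}$ at which the property turns on; your proof-by-contradiction peels edges of $R$ off one at a time, which is the same chain traversed in the opposite direction and yields the same witnessing triple for Definition~\ref{def:critical}. You are also correct that $E^*(z_0)\notin\mathscr{P}$ is an implicit null hypothesis needed for the lemma to make sense; the paper uses this fact explicitly in the same place.

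There is, however, one genuine slip to flag. The asserted identity $E_T(z_0)=E^*(z_0)\cup R$ is false in general: Algorithm~\ref{al:stepdown2} starts from $E_0(z_0)=\emptyset$ and only ever adds rejected edges, and nothing forces every true edge of $E^*(z_0)$ to be rejected, so $E^*(z_0)\not\subseteq E_T(z_0)$ in general; all one can say is $E_T(z_0)=(E_T(z_0)\cap E^*(z_0))\cup R$. Your induction on $E^*(z_0)\cup(R\setminus R')$ is nevertheless the right object (it always contains $E^*(z_0)$, as Definition~\ref{def:critical} requires of the witness $E'$), but the base case $R'=\emptyset$ reads $E^*(z_0)\cup R\in\mathscr{P}$, and that does not follow directly from $E_T(z_0)\in\mathscr{P}$ once the false identity is removed. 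You need the extra monotonicity step $E^*(z_0)\cup R=E^*(z_0)\cup E_T(z_0)\supseteq E_T(z_0)$, hence $E^*(z_0)\cup R\in\mathscr{P}$. The paper makes exactly this move by introducing $F=E_T(z_0)\cup E^*(z_0)$ and invoking monotonicity to get $F\in\mathscr{P}$. With that observation inserted, your induction closes and the argument is correct.
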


\begin{lemma}
\label{lemma:algorithm2}
Let $\bar{e} \in \cC\{E^*(z_0),\cP\}$ be the first rejected edge in the critical edge set $\cC\{E^*(z_0),\cP\}$. 
Suppose that $\bar{e}$ is rejected at the $l$th step of Algorithm~\ref{al:stepdown}.
  Then, $ \cC\{E^*(z), \cP\}\subseteq \cC \{E_{l-1}(z),\cP\} $ for all $z\in[0,1]$.
\end{lemma}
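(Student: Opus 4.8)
The plan is to reduce Lemma~\ref{lemma:algorithm2} to one combinatorial fact about critical edge sets of a monotone property, and then prove that fact directly from the definitions. \textbf{First, unpack what the index $l$ means.} Because $\bar e$ is the \emph{first} rejected edge that belongs to the critical edge set of the true graph at its own location --- with $z_0$ simply naming a location witnessing this at step $l$ --- no edge rejected at a step $t\le l-1$ can belong to $\cC\{E^*(z),\cP\}$ for its location $z$. Consequently $\cR(z)\cap\cC\{E^*(z),\cP\}=\emptyset$ at every step before the $l$th and for every $z$, so that
\[
E_{l-1}(z)\cap\cC\{E^*(z),\cP\}=\emptyset \qquad \text{for all } z\in[0,1].
\]
(I am reading ``the first rejected edge in the critical edge set'' globally over $z$, which is the object produced immediately after Lemma~\ref{lemma:algorithm1} in the proof of Theorem~\ref{theorem:algorithm2}.) It then suffices to prove the set-theoretic statement, to be applied with $E=E^*(z)$ and $A=E_{l-1}(z)$: \emph{if $\cP$ is monotone and $A\cap\cC(E,\cP)=\emptyset$, then $\cC(E,\cP)\subseteq\cC(A,\cP)$}. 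This contains the elementary anti-monotonicity $B\subseteq A\Rightarrow\cC(A,\cP)\subseteq\cC(B,\cP)$ as the special case $A\subseteq E$ (there $A$ is automatically disjoint from $\cC(E,\cP)$, since critical edges lie outside $E$).

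\textbf{Next, the combinatorial core.} Fix $e\in\cC(E,\cP)$ and a witnessing set $E'\supseteq E$ with $E'\in\mathscr{P}$ and $E'\setminus\{e\}\notin\mathscr{P}$; note $e\in E'$ (otherwise $E'\setminus\{e\}=E'\in\mathscr{P}$) and $e\notin A$ (since $A$ avoids $\cC(E,\cP)$). Put $\tilde E := E'\cup A$. Then $\tilde E\supseteq A$ and $\tilde E\supseteq E'\in\mathscr{P}$, so $\tilde E\in\mathscr{P}$ by monotonicity, and $\tilde E\setminus\{e\}=(E'\setminus\{e\})\cup A$ because $e\notin A$. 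The crux is to show $\tilde E\setminus\{e\}\notin\mathscr{P}$, after which $\tilde E$ witnesses $e\in\cC(A,\cP)$. Suppose to the contrary $(E'\setminus\{e\})\cup A\in\mathscr{P}$. Since $E'\setminus\{e\}\notin\mathscr{P}$, adding the edges of $A\setminus(E'\setminus\{e\})$ one at a time flips membership in $\mathscr{P}$, so at the first such flip there is a set $F$ with $E'\setminus\{e\}\subseteq F\subseteq(E'\setminus\{e\})\cup A$, an edge $e_1\in A\setminus F$, $F\notin\mathscr{P}$, and $F\cup\{e_1\}\in\mathscr{P}$. Since $E\subseteq E'\setminus\{e\}\subseteq F$ and $e_1\notin F$, we get $e_1\notin E$; and $F\cup\{e_1\}$ witnesses $e_1\in\cC(E,\cP)$ --- it contains $E$, lies in $\mathscr{P}$, and removing $e_1$ yields $F\notin\mathscr{P}$. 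This contradicts $e_1\in A$ and $A\cap\cC(E,\cP)=\emptyset$, so $\tilde E\setminus\{e\}\notin\mathscr{P}$, which finishes the combinatorial statement and hence the lemma.

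\textbf{Where the difficulty lies.} The delicate step is the contradiction inside the core argument: starting only from ``$A$ misses the critical set of $E$'', one must locate inside $A$ an edge that is itself critical for $E$; this needs the monotone ``flip'' argument together with the fact that $E$ survives the removal of $e$ (so $E\subseteq F$ throughout). The one other point that needs care is bookkeeping rather than mathematics --- arguing, from the way $\bar e$ and $l$ are selected after Lemma~\ref{lemma:algorithm1}, that $E_{l-1}(z)$ avoids $\cC\{E^*(z),\cP\}$ simultaneously for \emph{all} $z\in[0,1]$ and not merely for $z=z_0$.
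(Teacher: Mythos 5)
Your proof is correct and follows essentially the same route as the paper's: for an arbitrary $e'\in\cC\{E^*(z),\cP\}$ you take a witness $E'$, pass to $E'\cup E_{l-1}(z)$, and establish $\{E'\cup E_{l-1}(z)\}\setminus\{e'\}\notin\mathscr{P}$ by a ``first-flip'' contradiction that manufactures an edge of $E_{l-1}(z)$ lying in $\cC\{E^*(z),\cP\}$ --- exactly the chain the paper builds (with your $F,\,e_1$ playing the role of its $(E'\setminus\{e'\})\cup\{e_1',\dots,e_{k_0-1}'\},\,e'_{k_0}$). The one genuine improvement is that you justify the quantifier: the paper's proof reduces to $z_0$ with an unproved ``it suffices'' and records the disjointness $E_{l-1}(z_0)\cap\cC\{E^*(z_0),\cP\}=\emptyset$ only at $z_0$, whereas the quantile comparison $c\{1-\alpha,\cC(E_{l-1},\cP)\}\ge c\{1-\alpha,\cC(E^*,\cP)\}$ used downstream needs the inclusion for \emph{every} $z$; your global reading of ``first rejected edge'' supplies the uniform disjointness and closes that gap.
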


The proofs of Lemmas~\ref{lemma:algorithm1} and~\ref{lemma:algorithm2} are deferred to Sections~\ref{proof:lemma:algorithm1} and~\ref{proof:lemma:algorithm2}, respectively.  We now provide the proof of Theorem~\ref{theorem:algorithm}.

\subsection{Proof of Theorem~\ref{theorem:algorithm}}
Suppose that Algorithm~\ref{al:stepdown} rejects the null hypothesis at the $T$th iteration.
That is, $E_{T} (z_0)\in \mathscr{P}$ and $E_{T-1} (z_0) \notin \mathscr{P}$. 
By Lemma~\ref{lemma:algorithm1}, there is at least one edge in $E_T(z_0)$ that is also in the critical edge set $\cC\{E^*(z_0),\cP\}$.  We denote the first rejected edge in the critical edge set as $\bar{e}$, i.e., $\bar{e} \in 
\cC\{E^*(z_0),\cP\}$ and suppose that $\bar{e}$ is rejected at the $l$th iteration of Algorithm~\ref{al:stepdown}.  We note that $l$ is not necessarily $T$.
Thus, we have 
\begin{equation*}
\begin{split}
 \underset{z\in [0,1]}{\sup} \;\underset{e\in \cC\{E^*(z),\cP\}}{\max} \; 
\sqrt{nh}\cdot  \hat{\bTheta}_{e}^\mathrm{de} (z) \cdot \sum_{i \in [n]} K_h(Z_i-z)/n  &\ge\sqrt{nh}\cdot  \hat{\bTheta}_{\bar{e}}^\mathrm{de} (z_0) \cdot \sum_{i \in [n]} K_h(Z_i-z_0)/n\\
&\ge c\{1-\alpha , \cC(E_{l-1},\cP)\}\\
&\ge c\{1-\alpha , \cC(E^*,\cP)\},
\end{split}
\end{equation*}
where the first inequality follows by Lemma~\ref{lemma:algorithm1}, the second inequality follows from the $l$th step of Algorithm~\ref{al:stepdown}, and the last inequality follows directly from Lemma~\ref{lemma:algorithm2}.

Under the null hypothesis, $\bTheta_{e}(z) = 0$ for any $e\in \cC\{E^*(z) ,\cP\}$.  By Theorem~\ref{theorem:gaussian multiplier bootstrap}, we have 
\begin{equation*}
\begin{split}
&\underset{n\rightarrow \infty}{\lim} \; \underset{\bTheta (\cdot) \in \cG_0}{\sup}\; P_{\bTheta (\cdot)} (\psi_{\alpha}=1) \\
&\le 
\underset{n\rightarrow \infty}{\lim} \; \underset{\bTheta (\cdot) \in  \cG_0  }{\sup}
\; P \left[ \underset{z\in [0,1]}{\sup}\;  \underset{e\in \cC\{E^*(z),\cP\}}{\max} \; \sqrt{nh} \cdot |\hat{\bTheta}^\mathrm{de}_{jk} (z)|\cdot \sum_{i\in [n]}K_h(Z_i-z)/n \ge c\{1-\alpha,\cC(E^*,\cP)  \}    \right] \\
&\le \alpha,
\end{split}
\end{equation*}
as desired.

\subsection{Proof of Lemma~\ref{lemma:algorithm1}}
\label{proof:lemma:algorithm1}

To prove Lemma~\ref{lemma:algorithm1}, it suffices to show that the intersection between the two sets $E_T(z_0)$ and $\cC\{E^* (z_0),\cP\}$ is not an empty set, i.e., $E_T(z_0) \cap \cC\{E^* (z_0),\cP\} \ne \emptyset$.  
To this end, we let  $F= E_T(z_0)\cup E^*(z_0)$ and let  $E_T(z_0) \setminus E^*(z_0) = \{e_1,e_2,\ldots,e_k\}$.  
We note that the set $E_T(z_0) \setminus E^*(z_0) $ is not an empty set since $E_T(z_0) \in \mathscr{P}$ but $ E^*(z_0) \notin \mathscr{P}$.

Using the fact that $\cP$ is monotone and that $E_T(z_0) \in \mathscr{P}$, we have $F \in \mathscr{P}$ since adding additional edges to $E_T(z_0)$ does not change the graph property of $E_T(z_0)$.
Then, we have
\[
E^*(z_0) \subseteq E^*(z_0) \cup \{e_1\} \subseteq E^*(z_0) \cup \{e_1,e_2\} \subseteq \cdots  \subseteq E^*(z_0) \cup \{e_1,\ldots,e_k\}   = F.
\]
Since $E^*(z_0) \notin \mathscr{P}$ and $F\in \mathscr{P}$, there must exists an edge set $\{e_1,\ldots,e_{k_0}\}$ for $k_0 \le k$ that changes the graph property of $E^*(z_0)$ from $E^*(z_0)\notin \mathscr{P}$ to $E^*(z_0) \cup \{e_1,\ldots,e_{k_0}\} \in \mathscr{P}$.

Thus, there must exists at least an edge $\bar{e} \in \{e_1,\ldots,e_{k_0}\}$ such that $\bar{e} \in \cC\{E^*(z_0),\cP\}$
since adding the set of edges $\{e_1,\ldots,e_{k_0}\}$ changes the graph property of $E^*(z_0)$.  Also, $\bar{e} \in E_T(z_0)$ by construction.  Thus, we conclude that $E_T(z_0) \cap \cC\{E^* (z_0),\cP\} \ne \emptyset$.

\subsection{Proof of Lemma~\ref{lemma:algorithm2}}
\label{proof:lemma:algorithm2}
Let $\bar{e} \in \cC\{E^*(z_0),\cP\}$ be the first rejected edge 
in the critical edge set $\cC\{E^*(z_0),\cP\}$ for some $z_0\in [0,1]$.  
Suppose that $\bar{e}$ is rejected at the $l$th step of  Algorithm~\ref{al:stepdown}.  
We want to show that $\cC\{E^*(z),\cP\} \subseteq \cC\{E_{l-1} (z),\cP\}$
for all $z\in [0,1]$.
It suffices to show that $\cC\{E^*(z_0),\cP\} \subseteq \cC\{E_{l-1} (z_0),\cP\}$.
In other words, we want to prove that for any $e' \in \cC\{E^*(z_0),\cP\}$, $e' \in \cC\{E_{l-1} (z_0),\cP\}$.
We first note the following fact
\begin{equation}
\label{Eq:proof:lemma:algorithm21}
E_{l-1}(z_0) \cap \cC\{E^*(z_0),\cP\} = \emptyset \qquad \mathrm{and} \qquad 
E_{l-1} (z_0) \notin \mathscr{P}.
\end{equation}

By the definition of the critical edge set (\ref{appendix:proof of algorithm crit}), we construct a set $E'$ such that $E^*(z_0)\supseteq E'$, $E' \in \mathscr{P}$, and $E' \setminus \{e'\} \notin \mathscr{P}$, for any $e' \in \cC\{E^*(z_0),\cP\}$. 
By the definition of monotone property, we have $E' \cup E_{l-1}(z_0) \in \mathscr{P}$.  Since $\cC\{E' \cup E_{l-1}(z_0),\cP\} \subseteq \cC\{ E_{l-1}(z_0),\cP\} $, to show that $e' \in  \cC\{E_{l-1}(z_0),\cP\}$, it is equivalent to showing $e' \in \cC\{E'\cup E_{l-1}(z_0),\cP\}$.
That is, we want to show 
\[
\{E' \cup E_{l-1}(z_0)\}\setminus \{e'\} \notin \mathscr{P}.
\]
This is equivalent to showing 
\begin{equation}
\label{Eq:proof:lemma:algorithm22}
\{E' \setminus e'\} \cup \{E_{l-1}(z_0)\setminus E'\} \notin \mathscr{P}.
\end{equation}
There are two cases: (1) $E_{l-1} (z_0)\setminus E'  = \emptyset$ and (2) $E_{l-1} (z_0)\setminus E'  \ne \emptyset$.  For the first case, (\ref{Eq:proof:lemma:algorithm22}) is true by the construction of $E'$.
For the second case, we prove by contradiction.

Suppose that $(E' \setminus e') \cup \{E_{l-1}(z_0)\setminus E'\} \in \mathscr{P}$.
Let $E_{l-1}(z_0)\setminus E' = \{e'_1,\ldots,'_k\}$.  By the definition of monotone property, we have 
\[
E' \setminus \{e'\} \subseteq (E' \setminus \{e'\} ) \cup \{e_1'\}
\subseteq \cdots 
\subseteq (E' \setminus \{e'\} ) \cup \{e_1',e_2',\ldots,e_{k}'\} 
=(E' \setminus \{e'\}) \cup (E_{l-1}(z_0)\setminus E').
\]
Since $E' \setminus \{e'\} \notin \mathscr{P}$  by construction, and that  
$(E' \setminus \{e'\}) \cup (E_{l-1}(z_0)\setminus E') \in \mathscr{P}$, there must exists an edge set $\{e_1,\ldots,e_{k_0}\}$ for $k_0 \le k$ that changes the graph property of $E' \setminus \{e'\}\notin \mathscr{P}$ 
to $(E' \setminus \{e'\}) \cup \{e_1',\ldots,e_{k_0}'\} \in \mathscr{P}$.

Since $e'_{k_0} \in E_{l-1}(z_0) \setminus E'$ and that $E^*(z_0) \subseteq E'$ by construction, we have $e'_{k_0} \notin E^*(z_0)$.  Thus, $e'_{k_0} \in \cC\{E^*(z_0),\cP\}$.  This contradicts the fact that 
\[
E_{l-1}(z_0) \cap \cC\{E^*(z_0),\cP\} = \emptyset.
\]

\section{Proof of Theorem \ref{theorem:algorithm-power2}}
\label{proof:theorem:algorithm-power2}
By the definition in \eqref{eq:G1}, if $\bTheta(\cdot) \in  \cG_1(\theta; \cP)$, there exists an edge set $E'_0$ and $z_0 \in [0,1]$ satisfying 
\begin{equation}\label{eq:power-e0}
  E_0' \subseteq E\{\bTheta(z_0)\}, \cP(E_0') = 1 \text{ and } \min_{e \in E_0'} |\bTheta_e(z_0)| > C \sqrt{\log (d/h)/nh},
\end{equation}
and we will determine the magnitude tf constant $C$ later. We aim to show that  $\cP\{E_0' \cap \cC(\emptyset, \cP)\} = \cP(E_0') = 1$. First, there exists a subgraph $E_0'' \subset E_0'$ such that $\cP(E_0'' ) =  \cP(E_0')  = 1$ and for any $\tilde E \subset E_0''$,  $\cP(\tilde E ) =0$.  We can construct such $E_0'' $   by  deleting edges from $E_0'$ until it is impossible to further deleting any edge such that the property $\cP$ is still true. By Definition \ref{def:critical}, $E_0'' \subseteq \cC(\emptyset, \cP)$ and therefore $E_0'' \subseteq E_0' \cap \cC(\emptyset, \cP)$.  By monotone property, we have  $\cP\{E_0' \cap \cC(\emptyset, \cP)\} = \cP(E_0') =1$ since  $\cP(E''_0 ) = \cP(E'_0 ) = 1$.
Consider the following event
  \[
     \cE_1 = \Big[\min_{e \in E_0' \cap \cC(\emptyset, \cP)} \sqrt{nh}|\hat \bTheta^\mathrm{de}_e(z_0)| \cdot \sum_{i\in [n]} K_h(Z_i-z_0)/n > c\{ 1- \alpha, \cC(\emptyset, \cP)\} \Big].
  \]
According to Algorithm \ref{al:stepdown}, the rejected set in the first iteration at $z_0$ is
\[
   E_1(z_0) = \Big[e \in \cC(\emptyset, \cP) : \sqrt{nh} |\hat{\bTheta}^\mathrm{de}_e(z_0)|  \cdot \sum_{i\in [n]} K_h(Z_i-z_0)/n> c\{1-\alpha, \cC(\emptyset, \cP)\}  \Big].
\]
Under the event  $\cE_1$, we have $E_0' \cap \cC(\emptyset, \cP) \subseteq E_1(z_0)$ and since $\cP\{E_0' \cap \cC(\emptyset, \cP)\} = \cP(E_0')$, we have $\cP\{E_1(z_0)\} = \cP(E_0') = 1$. Therefore,
\begin{equation}\label{eq:power-e1}
    \PP(\psi_{\alpha} = 1) \ge \PP(\cE_1).
\end{equation}
It suffices to bound $\PP(\cE_1)$ then. We consider two events
\begin{align*}
 \cE_2 &= \Big[\min_{e \in E_0'} \sqrt{nh} |\bTheta_e(z_0)| \cdot \sum_{i\in [n]} K_h(Z_i-z_0)/n > 2 c\{1-\alpha, \cC(\emptyset, \cP)\} \Big]; \\
  \cE_3 &= \Big[\max_{e \in V \times V} \sqrt{nh} |\hat\bTheta^\mathrm{de}_e(z_0) - \bTheta_e(z_0)|\cdot \sum_{i\in [n]}  K_h(Z_i-z_0)/n \le c\{1-\alpha, \cC(\emptyset, \cP)\} \Big].
\end{align*}
We have
$
   \PP(\cE_1)  \ge \PP(\cE_2 \cup \cE_3).
$
By Lemmas \ref{lemma:bias} and \ref{lemma:variance}, we have
\begin{equation}
\textstyle  \PP\Big\{\sup_z\Big|\sum_{i\in [n]} K_h(Z_i-z)/n - f_Z(z)\Big| >    \sqrt{\log (d/h)/nh}\Big\} < 3/d.
\end{equation} 
Combining with \eqref{Eq:inverse1} in Corollary \ref{theorem:inverse estimation}, we have with probability at least $1-6/d$,
\[
\sup_z \max_{e \in V \times V} \sqrt{nh} |\hat\bTheta^\mathrm{de}_e(z) - \bTheta_e(z)|\cdot \sum_{i\in [n]}  K_h(Z_i-z)/n  \le C \sqrt{\log (d/h)/nh}\cdot \sqrt{nh}.
\]
For any fixed $\alpha \in (0,1)$ and sufficiently large $d, n$, as $\cC(\emptyset, \cP) \subseteq V \times V$, we have
\[
c\{1-\alpha, \cC(\emptyset, \cP)\} \le c(1-\alpha, V\times V) \le C \sqrt{\log (d/h)/nh}\cdot \sqrt{nh}.
\]
Thus $\PP(\cE_3) > 1-6/d$. Similarly, we also have $\PP(\cE_2) > 1-3/d$. By \eqref{eq:power-e1}, we have
\[
\PP(\psi_{\alpha} = 1) \ge \PP(\cE_1) \ge \PP(\cE_2 \cup \cE_3) \ge 1- 9/d.
\]
Therefore, we complete the proof of the theorem.

\section{Technical Lemmas on Covering Number}
\label{appendix:covering number}
In this section, we present some technical lemmas on the covering number of some function classes.  
Lemma~\ref{lemma:coveringnumber3} provides an upper bound on the covering number for the class of function of bounded variation.  Lemma~\ref{lemma:coveringnumber1} provides an upper bound on the covering number of a class of Lipschitz function.  Lemma~\ref{lemma:coveringnumber2}
provides an upper bound on the covering numbers for function classes generated from the product and addition of two function classes.  

\begin{lemma}
\label{lemma:coveringnumber3}
(Lemma 3 in \citealp{gine2009exponential})
Let $K:\RR\rightarrow\RR$ be a function of bounded variation.
Define the function class $\cF_h = \left[  K\left\{(t-\cdot)/h)\right\}\mid t\in \RR  \right]$.
Then, there exists $C_K < \infty$ independent of $h$ and of $K$ such that for all $0<\epsilon<1$,
\[
\underset{Q}{\sup}\; N\{\cF_h, L_2 (Q),\epsilon\} \le \left(\frac{2\cdot C_K\cdot \|K\|_{\mathrm{TV}}}{\epsilon}\right)^4,
\]
where $\|K\|_{\mathrm{TV}} $ is the total variation norm of the function $K$.
\end{lemma}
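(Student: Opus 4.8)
The statement is a standard covering-number bound for rescaled translates of a fixed bounded-variation function, and the plan is to reconstruct the argument behind it (as in \citealp{gine2009exponential}) in three stages: a scaling reduction, a reduction to a VC-subgraph class via a monotone decomposition of $K$, and the VC covering bound combined over two classes.

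First I would eliminate the dependence on $h$. For any Borel probability measure $Q$ on $\RR$, let $Q_h$ denote the pushforward of $Q$ under $x\mapsto x/h$; then for $t,t'\in\RR$ and $s=t/h$, $s'=t'/h$,
$$\int \big|K\{(t-x)/h\}-K\{(t'-x)/h\}\big|^2\,dQ(x)=\int |K(s-u)-K(s'-u)|^2\,dQ_h(u),$$
so $N\{\cF_h,L_2(Q),\epsilon\}=N\{\cF_1,L_2(Q_h),\epsilon\}$ with $\cF_1=\{K(t-\cdot):t\in\RR\}$. Since $Q\mapsto Q_h$ is a bijection of the set of probability measures on $\RR$, taking the supremum over $Q$ on both sides yields $\sup_Q N\{\cF_h,L_2(Q),\epsilon\}=\sup_Q N\{\cF_1,L_2(Q),\epsilon\}$, which is already independent of $h$ (and of any scaling built into $K$). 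It thus suffices to bound $\sup_Q N\{\cF_1,L_2(Q),\epsilon\}$.

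Next I would split $K$ into monotone pieces. Using the Jordan decomposition, write $K=K^{+}-K^{-}$ with $K^{+},K^{-}$ nondecreasing, nonnegative, vanishing at $-\infty$, and $\|K^{+}\|_\infty\vee\|K^{-}\|_\infty\le\|K\|_{\mathrm{TV}}$ (valid for the compactly supported smooth kernels considered here; in general only constants change). For a nondecreasing function $g$, the subgraphs $\{(x,y):y<g(t-x)\}$ indexed by $t\in\RR$ form a nested family, and a chain of sets shatters no two-point set, so $\cG^{\pm}=\{K^{\pm}(t-\cdot):t\in\RR\}$ is a VC-subgraph class of index at most $2$ with the constant envelope $\|K^{\pm}\|_\infty\le\|K\|_{\mathrm{TV}}$. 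Applying the standard covering-number bound for VC-subgraph classes (see, e.g., Theorem~2.6.7 in \citealp{van1996weak}) gives, for a universal constant $C$ and every probability measure $Q$,
$$N\{\cG^{\pm},L_2(Q),\epsilon\}\le C\Big(\frac{\|K\|_{\mathrm{TV}}}{\epsilon}\Big)^{2}.$$

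Finally I would combine the two classes. Each $f\in\cF_1$ equals $g^{+}-g^{-}$ with the \emph{same} parameter $t$, so if $\cN^{\pm}$ is an $(\epsilon/2)$-net of $\cG^{\pm}$ in $L_2(Q)$, then $\{n^{+}-n^{-}:n^{\pm}\in\cN^{\pm}\}$ is an $\epsilon$-net of $\cF_1$; hence
$$\sup_Q N\{\cF_1,L_2(Q),\epsilon\}\le \sup_Q\Big[N\{\cG^{+},L_2(Q),\epsilon/2\}\,N\{\cG^{-},L_2(Q),\epsilon/2\}\Big]\le \Big(\frac{2C_K\|K\|_{\mathrm{TV}}}{\epsilon}\Big)^{4},$$
with $C_K$ a universal constant absorbing $C$ and the factor $2$, which is the claim. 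The main obstacle is the middle stage: choosing the monotone decomposition correctly and verifying cleanly that translates of a monotone function form a VC-subgraph class of index exactly $2$ with constant envelope, and then tracking all constants so that the final exponent is exactly $4$; the scaling reduction and the union step are routine bookkeeping.
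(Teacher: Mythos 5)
The paper does not actually prove this lemma; it simply cites Lemma~3 of Gin\'e and Nickl (2009) and uses the bound as a black box. Your reconstruction is correct and is essentially the standard route behind that result (and its precursors in Nolan--Pollard and Gin\'e--Guillou): the pushforward argument cleanly removes the bandwidth; the Jordan decomposition $K=K^{+}-K^{-}$ reduces to two monotone pieces; translates of a monotone function have nested subgraphs, hence form a VC-subgraph class of index $2$, which via Theorem~2.6.7 of van der Vaart and Wellner (1996) yields an $L_2$ covering bound with exponent $2$ for each piece; and the $\epsilon/2$-net product gives the exponent $4$ with the factor $2$ in the base, with $C_K$ absorbing the universal VC constant. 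Two small points worth tightening if you write this up formally: for a general bounded-variation $K$ the Jordan decomposition is $K = K(-\infty) + K^{+} - K^{-}$, but the constant shift is harmless since it does not change $L_2(Q)$ distances between translates; and in the final display $\sup_Q$ of a product is bounded by the product of uniform bounds rather than by interchanging $\sup$ with the product, which is what you implicitly use and is the correct reading.
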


\begin{lemma}
\label{lemma:coveringnumber1}
Let $f(l)$ be a Lipschitz function defined on $[0,1]$ such that $|f(l)- f(l')| \le L_f \cdot |l-l'|$ for any $l,l'\in [0,1]$. We define the constant function class $\cF = \{g_l := f(l) \mid l\in [0,1] \} $.
For any probability measure $Q$, the covering number of the function class $\cF$  satisfies 
\[
N\{\cF ,L_2(Q),\epsilon\} \le \frac{L_f}{\epsilon},
\]
where $\epsilon \in (0,1)$.
\end{lemma}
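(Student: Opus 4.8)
The statement to prove is Lemma~\ref{lemma:coveringnumber1}: for a Lipschitz function $f$ on $[0,1]$ with constant $L_f$, the constant function class $\cF = \{g_l := f(l) \mid l \in [0,1]\}$ has covering number $N\{\cF, L_2(Q), \epsilon\} \le L_f/\epsilon$ for any probability measure $Q$.

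The plan is to reduce the $L_2(Q)$-covering of $\cF$ to a covering of a bounded subinterval of $\RR$ by absolute value, then count points. The key observation is that each element $g_l$ is a \emph{constant} function (taking the value $f(l)$ everywhere), so for any probability measure $Q$ we have $\|g_l - g_{l'}\|_{L_2(Q)} = |f(l) - f(l')|$, since $\int (f(l) - f(l'))^2 \, dQ = (f(l)-f(l'))^2$ as $Q$ is a probability measure. Thus the $L_2(Q)$ metric on $\cF$ coincides exactly with the pullback under $l \mapsto f(l)$ of the Euclidean metric on the image $f([0,1])$, and this is independent of $Q$.

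First I would note that by the Lipschitz bound, the image $f([0,1])$ is contained in an interval of length at most $L_f$ (indeed $|f(l) - f(l')| \le L_f |l - l'| \le L_f$). Then I would construct an explicit $\epsilon$-net: take a grid $\{l_1, \dots, l_N\}$ of points in $[0,1]$ with spacing $\epsilon/L_f$, so that $N \le \lceil L_f/\epsilon \rceil$, or more carefully choose $N = \lceil L_f / \epsilon\rceil$ points so that every $l \in [0,1]$ is within $\epsilon/L_f$ of some $l_m$. By the Lipschitz property, $\|g_l - g_{l_m}\|_{L_2(Q)} = |f(l) - f(l_m)| \le L_f |l - l_m| \le \epsilon$, so $\{g_{l_1}, \dots, g_{l_N}\}$ is an $\epsilon$-net for $\cF$. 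This gives $N\{\cF, L_2(Q), \epsilon\} \le \lceil L_f/\epsilon \rceil$, and with the mild convention used throughout the covering-number literature (or simply absorbing the ceiling, valid when $L_f/\epsilon \ge 1$; if $L_f/\epsilon < 1$ a single point suffices) this is bounded by $L_f/\epsilon$ as stated.

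There is essentially no hard part here — the lemma is elementary. The only point requiring a word of care is the rounding: strictly $N$ could be $\lceil L_f/\epsilon\rceil$ rather than $L_f/\epsilon$, but since the bound is only invoked inside logarithms in Dudley-type entropy integrals (e.g. via \eqref{EqA:covering number dudley}), this discrepancy is immaterial, and one can also simply state the result as $N\{\cF, L_2(Q),\epsilon\} \le 1 \vee (L_f/\epsilon)$ if one wishes to be fully rigorous. I would present the proof in three short sentences: (i) constancy gives $\|g_l - g_{l'}\|_{L_2(Q)} = |f(l)-f(l')|$; (ii) Lipschitzness turns an $(\epsilon/L_f)$-net of $[0,1]$ into an $\epsilon$-net of $\cF$; (iii) $[0,1]$ admits an $(\epsilon/L_f)$-net of cardinality at most $L_f/\epsilon$ (up to rounding).
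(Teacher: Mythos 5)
Your proof is correct and follows essentially the same route as the paper's: construct a grid of cardinality roughly $L_f/\epsilon$ in $[0,1]$ with spacing $\epsilon/L_f$, and use the Lipschitz bound (together with the fact that each $g_l$ is a constant so $\|g_l - g_{l'}\|_{L_2(Q)} = |f(l)-f(l')|$ for any $Q$) to show the corresponding constant functions form an $\epsilon$-net. You are slightly more careful than the paper in flagging the ceiling/rounding issue and in spelling out the $Q$-independence, but the argument is the same.
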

\begin{proof}
Let $\cN = \left\{\frac{i\epsilon}{L_f} \; | \;i=1,\ldots,\frac{L_f}{\epsilon}\right\}$. By definition of $\cN$, for any $l\in [0,1]$, there exists an $l'\in \cN$ such that $|l-l'|\le \epsilon /L_f$.
Thus, we have 
\[
|f(l) - f(l')| \le L_f \cdot |l-l'| 	\le \epsilon.
\]
This implies that $\{g_l \; | \; l\in \cN\}$ is an $\epsilon$-cover of the function class $\cF$. To complete the proof, we note that the cardinality of the set  $|\cN| \le L_f/\epsilon$.  
\end{proof}

\begin{lemma}
\label{lemma:coveringnumber2}
Let $\cF_1$ and $\cF_2$ be two function classes satisfying 
\[
N\{\cF_1 ,L_2(Q),a_1 \epsilon\} \le C_1 \epsilon^{-v_1} \quad \mathrm{and}\quad
N\{\cF_2 ,L_2(Q),a_2 \epsilon\} \le C_2 \epsilon^{-v_2}
\]
for some $C_1,C_2,a_1,a_2,v_1,v_2 >0$ and any $0<\epsilon<1$.  
Define $\|F_\ell\|_\infty = \underset{f\in \cF_\ell}{\sup} \; \|f\|_{\infty}$ for $\ell=1,2$ and $U = \|\cF_1\|_\infty \vee  \|\cF_2\|_\infty$.
For the function classes $\cF_{\times} = \{ f_1 f_2 \;|\; f_1\in \cF_1, \;f_2\in \cF_2\}$ and $\cF_{+} = \{f_1+f_2 \;|\; f_1\in \cF_1, \;f_2 \in \cF_2 \} $, we have for any $\epsilon \in (0,1)$, 
\[
N\{\cF_{\times} ,L_2(Q), \epsilon\} \le C_1\cdot C_2\cdot  \left(\frac{2a_1 U}{\epsilon} \right)^{v_1} \cdot \left(\frac{2a_2 U}{\epsilon} \right)^{v_2}
\]  
and 
\[
N\{\cF_{+} ,L_2(Q), \epsilon\} \le C_1\cdot C_2\cdot \left(\frac{2a_1 }{\epsilon} \right)^{v_1}\cdot \left(\frac{2a_2 }{\epsilon} \right)^{v_2}.
\]  
\end{lemma}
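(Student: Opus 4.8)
The plan is to build a cover of $\cF_{+}$ (respectively $\cF_{\times}$) out of all pairwise sums (respectively products) of the elements of an $L_2(Q)$-cover of $\cF_1$ and an $L_2(Q)$-cover of $\cF_2$, and to track how the covering radius propagates through the triangle inequality. Throughout I would fix a probability measure $Q$ and $\epsilon\in(0,1)$, and work with covers whose centers lie in the respective function classes; replacing an arbitrary cover by one with centers in the class costs at most a universal constant factor in cardinality and a factor $2$ in radius, which is harmless here and is needed so that the sup-norm bound $U$ applies to the centers in the product estimate.

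For $\cF_{+}$: let $\{g_1^{(1)},\ldots,g_1^{(N_1)}\}\subseteq\cF_1$ be an $L_2(Q)$-cover of $\cF_1$ at radius $\epsilon/2$ and $\{g_2^{(1)},\ldots,g_2^{(N_2)}\}\subseteq\cF_2$ one of $\cF_2$ at radius $\epsilon/2$. Given $f_1+f_2\in\cF_{+}$, choose indices $i,j$ with $\|f_1-g_1^{(i)}\|_{L_2(Q)}\le\epsilon/2$ and $\|f_2-g_2^{(j)}\|_{L_2(Q)}\le\epsilon/2$; then $\|(f_1+f_2)-(g_1^{(i)}+g_2^{(j)})\|_{L_2(Q)}\le\epsilon$ by the triangle inequality. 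Hence $\{g_1^{(i)}+g_2^{(j)}\}$ is an $\epsilon$-cover of $\cF_{+}$ of cardinality at most $N_1N_2$, so $N\{\cF_{+},L_2(Q),\epsilon\}\le N\{\cF_1,L_2(Q),\epsilon/2\}\cdot N\{\cF_2,L_2(Q),\epsilon/2\}$. Applying the hypothesized bound on $\cF_1$ with the rescaled radius $\epsilon/(2a_1)$ (so that $a_1\cdot\epsilon/(2a_1)=\epsilon/2$) gives $N\{\cF_1,L_2(Q),\epsilon/2\}\le C_1(2a_1/\epsilon)^{v_1}$, and likewise $N\{\cF_2,L_2(Q),\epsilon/2\}\le C_2(2a_2/\epsilon)^{v_2}$, which is the claimed bound for $\cF_{+}$.

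For $\cF_{\times}$ the only additional ingredient is the product decomposition $f_1f_2-g_1^{(i)}g_2^{(j)}=(f_1-g_1^{(i)})f_2+g_1^{(i)}(f_2-g_2^{(j)})$. Taking $L_2(Q)$-norms and using $\|hg\|_{L_2(Q)}\le\|h\|_\infty\|g\|_{L_2(Q)}$ together with $\|f_2\|_\infty\le U$ and $\|g_1^{(i)}\|_\infty\le U$ (here is where the centers being inside $\cF_1$ matters), I get $\|f_1f_2-g_1^{(i)}g_2^{(j)}\|_{L_2(Q)}\le U\bigl(\|f_1-g_1^{(i)}\|_{L_2(Q)}+\|f_2-g_2^{(j)}\|_{L_2(Q)}\bigr)$. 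Choosing both covers at radius $\epsilon/(2U)$ therefore produces an $\epsilon$-cover of $\cF_{\times}$, whence $N\{\cF_{\times},L_2(Q),\epsilon\}\le N\{\cF_1,L_2(Q),\epsilon/(2U)\}\cdot N\{\cF_2,L_2(Q),\epsilon/(2U)\}$, and substituting the rescaled radii $\epsilon/(2a_1U)$ and $\epsilon/(2a_2U)$ into the hypotheses gives the stated bound. The argument is essentially routine; the only bookkeeping point worth flagging is that the rescaled radii fed into the polynomial bounds should lie in $(0,1)$ for the hypotheses to apply verbatim, which is the case in every invocation of this lemma in the paper since there $a_1,a_2$ and $U$ are large, and since covering numbers are nonincreasing in the radius nothing is lost otherwise. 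I do not anticipate any genuine obstacle.
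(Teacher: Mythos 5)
Your proof is correct and is the standard argument; the paper states Lemma~\ref{lemma:coveringnumber2} without a proof, so there is nothing to compare against. Both parts follow the canonical route: take minimal covers of $\cF_1$ and $\cF_2$, form all pairwise sums or products of the centers, and propagate the radius through the triangle inequality (using the elementary pointwise bound $\|hg\|_{L_2(Q)}\le\|h\|_\infty\|g\|_{L_2(Q)}$ to handle the product case). The choice of cover radius $\epsilon/2$ (resp.\ $\epsilon/(2U)$) and the substitution of the rescaled argument into the polynomial hypothesis are exactly what produce the factors $2a_\ell/\epsilon$ and $2a_\ell U/\epsilon$ in the conclusion.

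One small overstatement: you say the passage from an arbitrary cover to one with centers inside the class is ``harmless.'' It is harmless for $\cF_{+}$, where you never invoke the sup-norm of a center, and in fact you do not need proper covers there at all. For $\cF_{\times}$ it is needed (so that $\|g_1^{(i)}\|_\infty\le U$), and if the hypothesis is read as a bound on \emph{external} covers, the standard conversion to a proper cover at the same radius goes via $N_{\mathrm{proper}}(\cF,\delta)\le N(\cF,\delta/2)$, which would leave you with an extra $2^{v_1+v_2}$ in the final constant beyond what the lemma states. This mismatch is inconsequential for every downstream application in the paper (the lemma is always absorbed into generic constants), but as written your claim that the cost is ``at most a universal constant factor in cardinality'' is not quite right for the stated form: the constant depends on $v_1,v_2$, and the clean way to state the lemma is either with proper covers in the hypothesis or with the slightly larger constant in the conclusion. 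Flagging this is worthwhile if you ever need the lemma with sharp constants; otherwise the argument is complete.
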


\begin{lemma}
\label{lemma:coveringnumberkz}
Let $w_z (u) = K_h (u-z)$.  We define the function classes
\[
\cK_1 = \left\{ w_z (\cdot) \; | \; z\in [0,1]    \right\} \qquad \mathrm{and}\qquad 
\cK_2 = \left[ \EE\{w_z (Z)\} \;|\; z\in [0,1]       \right].
\]
 Given Assumptions~\ref{ass:marginal}-\ref{ass:covariance}, we have 
for any $\epsilon \in (0,1)$,
\[
\underset{Q}{\sup}\;N\{\cK_1,L_2(Q),\epsilon\} \le \left(   \frac{2 \cdot C_K \cdot \|K\|_{\mathrm{TV}}}{h\epsilon}    \right)^{4} 
\]
and 
 \[
\underset{Q}{\sup}\;N\{\cK_2,L_2(Q),\epsilon\} \le \frac{2}{h\epsilon} \cdot \|K\|_{\mathrm{TV}} \cdot \bar{f}_Z.
\]
Moreover, let $k_z (u) = w_z(u) - \EE\{w_z (Z)\}$ and let $\cK = \{k_z (\cdot) \mid  z\in [0,1] \}$. We have 
\[
\underset{Q}{\sup}\;N\{\cK,L_2(Q),\epsilon\} \le \left(\frac{4 \cdot \|K\|_{\mathrm{TV}} \cdot C_K^{4/5} \cdot \bar{f}_Z^{1/5}}{h\epsilon}\right)^{5}.
\]
\end{lemma}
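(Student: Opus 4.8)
```latex
\textbf{Proof proposal for Lemma~\ref{lemma:coveringnumberkz}.}
The plan is to handle the three function classes in turn, building each bound from the preceding one. First I would bound the covering number of $\cK_1 = \{w_z(\cdot) \mid z\in[0,1]\}$. Since $w_z(u) = K_h(u-z) = h^{-1}K\{(u-z)/h\}$, this is exactly (up to the fixed multiplicative constant $1/h$ and the rescaling of the shift parameter) the class $\cF_h$ appearing in Lemma~\ref{lemma:coveringnumber3}. Rescaling $\epsilon \mapsto h\epsilon$ to absorb the $1/h$ prefactor, Lemma~\ref{lemma:coveringnumber3} immediately yields $\sup_Q N\{\cK_1, L_2(Q),\epsilon\} \le \{2 C_K \|K\|_{\mathrm{TV}}/(h\epsilon)\}^4$, as claimed. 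This step is essentially a direct invocation of the cited result.

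Next, for $\cK_2 = [\EE\{w_z(Z)\} \mid z\in[0,1]]$, I note that this is a \emph{constant} function class parametrized by $z$: each element is the single number $\EE\{w_z(Z)\} = (K_h * f_Z)(z)$ (a convolution, using the notation of \eqref{EqA:convolution}). By \eqref{EqA:convolution derivative}, its derivative in $z$ is $(\dot{K}_h * f_Z)(z)$, whose absolute value is bounded by $\|f_Z\|_\infty \int |\dot K_h| = \bar f_Z \cdot h^{-1}\|K\|_{\mathrm{TV}}$, using Assumption~\ref{ass:marginal} and a change of variables in the total-variation integral. Hence $z \mapsto \EE\{w_z(Z)\}$ is Lipschitz on $[0,1]$ with constant $L = h^{-1}\|K\|_{\mathrm{TV}}\bar f_Z$, and applying Lemma~\ref{lemma:coveringnumber1} with a factor-$2$ slack gives $\sup_Q N\{\cK_2,L_2(Q),\epsilon\} \le 2\|K\|_{\mathrm{TV}}\bar f_Z/(h\epsilon)$.

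Finally, for $\cK = \{k_z \mid z\in[0,1]\}$ with $k_z = w_z - \EE\{w_z(Z)\}$, I would write $\cK$ as a difference (equivalently, a sum after negation) of the classes $\cK_1$ and $\cK_2$ and invoke the additive part of Lemma~\ref{lemma:coveringnumber2}. To apply it I put the two bounds above into the form $N\{\cK_1,L_2(Q),a_1\epsilon\}\le C_1\epsilon^{-v_1}$ and $N\{\cK_2,L_2(Q),a_2\epsilon\}\le C_2\epsilon^{-v_2}$: namely $C_1 = (2C_K\|K\|_{\mathrm{TV}}/h)^4$, $a_1 = 2C_K\|K\|_{\mathrm{TV}}/h$, $v_1 = 4$, and $C_2 = 2\|K\|_{\mathrm{TV}}\bar f_Z/h$, $a_2 = 2\|K\|_{\mathrm{TV}}\bar f_Z/h$, $v_2 = 1$. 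Lemma~\ref{lemma:coveringnumber2} then gives $N\{\cK,L_2(Q),\epsilon\}\le C_1 C_2 (2a_1/\epsilon)^{v_1}(2a_2/\epsilon)^{v_2}$, which is a bound of the form $(\text{const}/(h\epsilon))^5$. The remaining work is purely arithmetic bookkeeping: collecting the constants $C_K$, $\|K\|_{\mathrm{TV}}$, $\bar f_Z$ and the powers of $h$ and $\epsilon$ to match the stated form $\{4\|K\|_{\mathrm{TV}}C_K^{4/5}\bar f_Z^{1/5}/(h\epsilon)\}^5$, which has total exponent $5 = v_1 + v_2$. The only mildly delicate point — and the main obstacle, if any — is verifying that the constant repackaging genuinely fits under the clean symmetric form claimed (in particular the $C_K^{4/5}$ and $\bar f_Z^{1/5}$ exponents, which come precisely from distributing the $4$th- and $1$st-power factors across a common fifth power); one should check that the crude constant $4$ in front is large enough to dominate, possibly enlarging $C_K$ as needed, since $C_K$ is only asserted to be some finite constant.
```
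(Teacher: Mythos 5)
Your proposal is correct and follows the paper's proof essentially verbatim: invoke Lemma~\ref{lemma:coveringnumber3} for $\cK_1$ after absorbing the $1/h$ prefactor, show $\EE\{w_z(Z)\}$ is Lipschitz via the convolution-derivative identity plus Young's inequality and apply Lemma~\ref{lemma:coveringnumber1} for $\cK_2$, then combine via the additive part of Lemma~\ref{lemma:coveringnumber2}. One small remark on the last step: the paper normalizes to $a_1 = a_2 = h^{-1}$, $C_1 = (2C_K\|K\|_{\mathrm{TV}})^4$, $C_2 = 2\bar f_Z\|K\|_{\mathrm{TV}}$, under which the constants collapse exactly to $\bigl(4\|K\|_{\mathrm{TV}} C_K^{4/5}\bar f_Z^{1/5}/(h\epsilon)\bigr)^5$ with no slack needed, whereas your pairing of $a_1 = 2C_K\|K\|_{\mathrm{TV}}/h$ with $C_1 = (2C_K\|K\|_{\mathrm{TV}}/h)^4$ double-counts that factor and would yield a looser (though still valid) constant — you flagged this possibility yourself, but the paper's normalization avoids the issue entirely.
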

\begin{proof}
The covering number for the function class $\cK_1$ is obtained by an application of Lemma~\ref{lemma:coveringnumber3}.  To obtain the covering number for $\cK_2$, we show that the constant function $\EE\{w_z (Z)\}$ is Lipschitz.  The covering number is obtained by applying Lemma~\ref{lemma:coveringnumber1}.  Finally, we note that the function class $\cK$ is generated from the addition of the two function classes $\cK_1$ and $\cK_2$.  The covering number can be obtained by an application of Lemma~\ref{lemma:coveringnumber2}. The details are deferred to \ref{proof:lemma:coveringnumberkz}. 
\end{proof}
\begin{lemma}
\label{lemma:coveringnumbergzjk}
Let $g_{z,jk} (u,X_{ij},Y_{ik}) = K_h (u-z) X_{ij}Y_{ik}$. We define the function classes
\[
\cG_{1,jk} = \left\{ g_{z,jk} (\cdot) \mid z\in [0,1]  \right\}
\qquad
\mathrm{and}
\qquad
\cG_{2,jk}= \left[ \EE\{g_{z,jk} (Z,X_j,Y_k)\} \mid z\in [0,1]  \right].
\]
 Given Assumptions~\ref{ass:marginal}-\ref{ass:covariance},  for all $\epsilon \in (0,1)$,
  \[
\underset{Q}{\sup}\; N\{\cG_{1,jk}, L_2(Q),   \epsilon\} \le   \left( \frac{2\cdot M_X^2 \cdot \log d\cdot  C_K \cdot \|K\|_{\mathrm{TV}} }{h\epsilon}\right)^4
\]
and 
 \[
\underset{Q}{\sup}\; N\{\cG_{2,jk},L_2(Q),\epsilon\} \le \frac{2}{h \epsilon}\cdot  \|K\|_{\mathrm{TV}} \cdot \bar{f}_Z\cdot M_\sigma,
\]
with probability at least $1-1/d$. 
Moreover, let $q_{z,jk} (u,X_{ij},Y_{ik}) = g_{z,jk}(u,X_{ij},Y_{ik}) - \EE\{g_{z,jk}(Z,X_{j},Y_{k})\}$ and let $\cG_{jk} = \left\{ q_{z,jk}(\cdot) \mid z\in [0,1] \right\}$. 
We have
\[
\underset{Q}{\sup}\; N\{\cG_{jk},L_2(Q),\epsilon\} \le \left(\frac{4 \cdot \|K\|_{\mathrm{TV}} \cdot C_K^{4/5} \cdot \bar{f}_Z^{1/5}\cdot M_\sigma^{1/5}  \cdot M^{8/5}_X\cdot  \log^{4/5} d}{h\epsilon}\right)^{5}.
\]
with probability at least $1-1/d$.
\end{lemma}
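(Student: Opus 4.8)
The plan is to follow the same three-step template used for Lemma~\ref{lemma:coveringnumberkz}: first bound the covering number of the class $\cG_{1,jk}$ generated by $g_{z,jk}$ via the bounded-variation covering estimate, then bound that of the constant class $\cG_{2,jk}$ by showing the map $z\mapsto\EE\{g_{z,jk}(Z,X_j,Y_k)\}$ is Lipschitz, and finally obtain the bound for $\cG_{jk}$ by writing $q_{z,jk}$ as a difference of an element of $\cG_{1,jk}$ and an element of $\cG_{2,jk}$ and invoking the addition rule of Lemma~\ref{lemma:coveringnumber2}, after which the product of the two bounds is rewritten as a single fifth power.

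For $\cG_{1,jk}$, I would condition on the truncation event $\cA$ of \eqref{proof:lemma:empirical process g0}, which holds with probability at least $1-1/d$ for $M_X$ large since $X_{ij},Y_{ij}\mid Z$ are Gaussian. On $\cA$, write $g_{z,jk}(u,x,y)=K_h(u-z)\cdot xy$ and note that the multiplier $xy$ does not depend on $z$ and is bounded by $M_X^2\log d$; hence for any probability measure $Q$ supported on the observed data and any $z,z'\in[0,1]$,
\[
\|g_{z,jk}-g_{z',jk}\|_{L_2(Q)}\le M_X^2\log d\cdot\|K_h(\cdot-z)-K_h(\cdot-z')\|_{L_2(Q')},
\]
where $Q'$ is the marginal of the first coordinate. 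Thus any $\delta$-cover of $\{K_h(\cdot-z):z\in[0,1]\}$ induces an $(M_X^2\log d)\delta$-cover of $\cG_{1,jk}$. Since $K$ has bounded variation and $K_h(\cdot-z)=h^{-1}K((\cdot-z)/h)$, Lemma~\ref{lemma:coveringnumber3} together with the rescaling by $h$ gives $\sup_Q N\{\{K_h(\cdot-z):z\},L_2(Q),\delta\}\le(2C_K\|K\|_{\mathrm{TV}}/(h\delta))^4$; taking $\delta=\epsilon/(M_X^2\log d)$ yields the stated bound on $N\{\cG_{1,jk},L_2(Q),\epsilon\}$.

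For $\cG_{2,jk}$, I would use the identity from the proof of Lemma~\ref{lemma:bias}, $\EE\{g_{z,jk}(Z,X_j,Y_k)\}=\EE\{K_h(Z-z)\bSigma_{jk}(Z)\}$, which by symmetry of $K$ equals the convolution $(K_h*[\bSigma_{jk}f_Z])(z)$. By \eqref{EqA:convolution derivative} its derivative in $z$ is $(\dot K_h*[\bSigma_{jk}f_Z])(z)$, whose absolute value is at most $\|\bSigma_{jk}f_Z\|_\infty\cdot\int|\dot K_h|\le M_\sigma\bar f_Z\cdot h^{-1}\|K\|_{\mathrm{TV}}$ by Assumptions~\ref{ass:marginal}--\ref{ass:covariance} and the change of variable $\int|\dot K_h|=h^{-1}\|K\|_{\mathrm{TV}}$; hence the map is Lipschitz with constant at most $M_\sigma\bar f_Z\|K\|_{\mathrm{TV}}/h$, and Lemma~\ref{lemma:coveringnumber1} gives the bound on $N\{\cG_{2,jk},L_2(Q),\epsilon\}$. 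Finally, since $q_{z,jk}=g_{z,jk}-\EE\{g_{z,jk}(Z,X_j,Y_k)\}$, the class $\cG_{jk}$ lies in the sum class generated by $\cG_{1,jk}$ and $-\cG_{2,jk}$; applying Lemma~\ref{lemma:coveringnumber2} with $(a_1,C_1,v_1)=(2C_K\|K\|_{\mathrm{TV}}M_X^2\log d/h,\,1,\,4)$ and $(a_2,C_2,v_2)=(2\|K\|_{\mathrm{TV}}\bar f_Z M_\sigma/h,\,1,\,1)$ gives $N\{\cG_{jk},L_2(Q),\epsilon\}\le(2a_1/\epsilon)^4(2a_2/\epsilon)$, and collecting constants this equals $\big(4C_K^{4/5}\bar f_Z^{1/5}M_\sigma^{1/5}M_X^{8/5}\|K\|_{\mathrm{TV}}\log^{4/5}d/(h\epsilon)\big)^5$, the claimed bound; everything after conditioning on $\cA$ is deterministic, so the probability $1-1/d$ is inherited.

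The main obstacle is the treatment of the random multiplier $X_{ij}Y_{ik}$ in $g_{z,jk}$: because it is sample-path dependent, the argument must be run on the Gaussian-truncation event $\cA$, and one must check that the resulting $\log d$ factor (from $\max|X_{ij}|\le M_X\sqrt{\log d}$) propagates correctly into the covering-number constant and that the statement is interpreted for measures $Q$ supported on the truncated data. The Lipschitz estimate via the convolution identity and the constant-bookkeeping needed to recast the product bound in fifth-power form are routine.
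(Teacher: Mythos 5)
Your proposal is correct and follows essentially the same route as the paper's proof: condition on the truncation event $\cA$ to bound the random multiplier $X_{ij}Y_{ik}$ by $M_X^2\log d$, apply the bounded-variation covering lemma (Lemma~\ref{lemma:coveringnumber3}) to the rescaled kernel class for $\cG_{1,jk}$, establish Lipschitz continuity of $z\mapsto(K_h*[\bSigma_{jk}f_Z])(z)$ via the convolution-derivative identity and Young's inequality for $\cG_{2,jk}$, and combine with the addition rule of Lemma~\ref{lemma:coveringnumber2}. The only cosmetic difference is your choice of parameterization in the final application of Lemma~\ref{lemma:coveringnumber2} (putting $C_1=C_2=1$ and absorbing all constants into $a_1,a_2$ rather than the paper's split), and a slightly more explicit treatment of how the truncated multiplier transfers a cover of the kernel class to a cover of $\cG_{1,jk}$; the resulting constants agree since $4^5=2^{10}$.
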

\begin{proof}
The proof uses the same set of argument as in the proof of Lemma~\ref{lemma:coveringnumberkz}. 
The probability statement comes from the fact that we upper bound the random variable $X_j$ by $M_X \cdot \sqrt{\log d}$ for some constant $M_X>0$.
The details are deferred to \ref{proof:lemma:coveringnumbergzjk}.
\end{proof}

\begin{lemma}
\label{lemma:coveringnumberjzjk1}
Let $J_{z,jk}^{(1)} (u,\bX_{i},\bY_{i}) = \sqrt{h} \cdot \left\{\bTheta_j (z)\right\}^T  \cdot \left[ K_h (u-z) \bX_{i}\bY_{i}^T -   \EE\left\{K_h (Z-z) \bX\bY^T  \right\}    \right] \cdot \bTheta_k(z)$ and let 
$\cJ^{(1)}_{jk} =\{ J_{z,jk}^{(1)} \;|\; z\in [0,1]  \}$.
 Given Assumptions~\ref{ass:marginal}-\ref{ass:covariance},  for all $\epsilon\in (0,1)$
\[
\underset{Q}{\sup}\; N\{\cJ^{(1)}_{jk},L_2(Q),\epsilon\} \le C\cdot \left( \frac{d^{5/4} \cdot \log^{3/2} d }{\sqrt{h} \cdot \epsilon}   \right)^6,
\]
with probability at least $1-1/d$, where $C>0$ is a generic constant that does not depend on $d$, $h$, and $n$.
\end{lemma}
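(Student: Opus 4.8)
The class $\cJ^{(1)}_{jk}$ is a one‑parameter family indexed by $z\in[0,1]$, so in principle a Dudley/Lipschitz argument would suffice; the obstruction is that $z\mapsto J^{(1)}_{z,jk}$ is \emph{not} Lipschitz, since the $z$‑derivative of the kernel factor $K_h(\,\cdot\,-z)$ is of order $h^{-2}$. The plan is therefore to isolate the kernel's (bounded‑variation but non‑Lipschitz) dependence on $z$ from the smooth dependence on $z$ coming from $\bTheta(z)$. Since $\bTheta_j(z)^T A\,\bTheta_k(z)$ is a scalar, I would write $J^{(1)}_{z,jk}(u,\bX,\bY)=\sqrt h\,q^{jk}_z(u,\bX,\bY)$, where $q^{jk}_z$ is the centering (with respect to the law of $(Z,\bX,\bY)$) of the product $g^{jk}_z:=w_z\cdot\phi^j_z\cdot\psi^k_z$, with $w_z(u)=K_h(u-z)$, $\phi^j_z(\bX)=\bX^T\bTheta_j(z)$ and $\psi^k_z(\bY)=\bY^T\bTheta_k(z)$; indeed $\EE\{g^{jk}_z(Z,\bX,\bY)\}=\bTheta_j(z)^T\EE\{K_h(Z-z)\bX\bY^T\}\bTheta_k(z)$, so the subtracted term in $J^{(1)}_{z,jk}$ is exactly $\sqrt h\,\EE g^{jk}_z$. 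Because centering a class with uniformly bounded envelope only doubles the covering radius (the centered differences are controlled by $\|g-g'\|_{L_2(Q)}+\|g-g'\|_{L_2(P)}$, and the explicit nets below are $\sup_Q$‑uniform), it is enough to bound $\sup_Q N\{\{g^{jk}_z\},L_2(Q),\epsilon\}$ and then replace $\epsilon$ by $\epsilon/\sqrt h$; this last step is what puts $\sqrt h$ in the denominator of the stated bound.

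First I would bound the three factor classes. For $\cW=\{w_z:z\in[0,1]\}$, Lemma~\ref{lemma:coveringnumberkz} (which rests on the bounded‑variation estimate of Lemma~\ref{lemma:coveringnumber3}) gives $\sup_Q N\{\cW,L_2(Q),\epsilon\}\le (2C_K\|K\|_{\mathrm{TV}}/(h\epsilon))^4$, with no dependence on $d$. For the $\bTheta$‑dependent factors I would work on the event $\cA$ of (\ref{proof:lemma:empirical process g0}), which holds with probability at least $1-1/d$ by a Gaussian maximal inequality; on $\cA$ one has $\|\phi^j_z\|_\infty\le\|\bX\|_\infty\|\bTheta_j(z)\|_1\le M_X\sqrt{\log d}\cdot M$, and $z\mapsto\phi^j_z$ is Lipschitz with constant at most $M_X\sqrt{\log d}\cdot\sup_z\|\dot\bTheta_j(z)\|_1$. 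Differentiating $\bTheta(z)=\{\bSigma(z)\}^{-1}$ gives $\dot\bTheta(z)=-\bTheta(z)\dot\bSigma(z)\bTheta(z)$, whose entries obey $|\dot\bTheta_{ab}(z)|\le\|\bTheta_a(z)\|_1\,\|\dot\bSigma(z)\|_{\max}\,\|\bTheta_b(z)\|_1\le M^2M_\sigma$ by Assumption~\ref{ass:covariance} and $\bTheta(z)\in\cU_{s,M}$; hence $\sup_z\|\dot\bTheta_j(z)\|_1\le Cd$, and Lemma~\ref{lemma:coveringnumber1} yields $\sup_Q N\{\{\phi^j_z\},L_2(Q),\epsilon\}\le Cd\sqrt{\log d}/\epsilon$, and identically for $\{\psi^k_z\}$.

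Then I would recombine with the product bound of Lemma~\ref{lemma:coveringnumber2}: first combine $\{\phi^j_z\}$ and $\{\psi^k_z\}$ into $\{\phi^j_z\psi^k_z\}$, whose covering number is of order $(d\log d/\epsilon)^2$ and whose envelope is $\cO(\log d)$, free of $h$; then multiply by $\cW$, being careful in the product bound that the $\cO(h^{-1})$ envelope of the kernel is paired with the $h$‑free part of the $\{\phi^j_z\psi^k_z\}$ estimate (and the kernel's $\cO(h^{-1})$ entropy constant only with the $h$‑free envelope $\cO(\log d)$), so that no spurious power of $h^{-1}$ is generated. This produces an order‑$(4+2)=6$ class with $\sup_Q N\{\cdot,L_2(Q),\epsilon\}\le C\,(d^{1/3}\log d/(h\epsilon))^6$. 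Centering (free, as above) and then the rescaling $\epsilon\mapsto\epsilon/\sqrt h$ give $\sup_Q N\{\cJ^{(1)}_{jk},L_2(Q),\epsilon\}\le C\,(d^{1/3}\log d/(\sqrt h\,\epsilon))^6$, which is stronger than, and hence implies, the asserted $C\,(d^{5/4}\log^{3/2}d/(\sqrt h\,\epsilon))^6$; all of this holds on $\cA$, i.e.\ with probability at least $1-1/d$.

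\textbf{Main obstacle.} The crux is the opening remark: one cannot bound the covering number by a naive Lipschitz argument, because $\|\partial_z K_h(\,\cdot\,-z)\|_\infty$ is of order $h^{-2}$, so treating the kernel as Lipschitz would cost an extra $h^{-1}$ for every dimension of the index and destroy the $h^{-1/2}$ scaling. Keeping the kernel factor separate and invoking the bounded‑variation estimate (Lemma~\ref{lemma:coveringnumber3}), together with careful bookkeeping of which envelope multiplies which entropy constant in Lemma~\ref{lemma:coveringnumber2}, is what saves this. A secondary but genuinely necessary point is the Lipschitz control of $z\mapsto\bTheta_j(z)$ in the $\ell_1$ norm via $\dot\bTheta=-\bTheta\dot\bSigma\bTheta$; this is where the dimension factor enters, and together with the conditioning on the Gaussian‑maximum event $\cA$ — which supplies both the $\log d$ factors and the stated $1-1/d$ probability — it accounts for the $d$‑ and $\log d$‑dependence of the final bound.
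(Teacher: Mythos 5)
Your argument is correct and takes a genuinely different route from the paper's. The paper lifts $\cJ^{(1)}_{jk}$ into the two-parameter class $\cJ^{(1')}_{jk}=\{\,\{\bTheta_j(z)\}^T\bPhi^{(1)}_\omega(\cdot)\bTheta_k(z):\omega,z\in[0,1]\}$, applies the triangle inequality to split the $L_2(Q)$ increment into three pieces (two from moving $\bTheta_\cdot(z_1)\to\bTheta_\cdot(z_2)$, one from moving $\omega_1\to\omega_2$), controls the $\bTheta$ pieces by a crude $\ell_1\to\ell_2\to\text{max}$ chain giving a $d^{3/2}$ Lipschitz constant, and builds an explicit product net in $(\omega,z)$. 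You instead factor $J^{(1)}_{z,jk}=\sqrt{h}\,(w_z\phi^j_z\psi^k_z-\EE w_z\phi^j_z\psi^k_z)$ into three one-parameter families, cover each (BV estimate for $w_z$, $L_\infty$-Lipschitz in $z$ for $\phi^j_z,\psi^k_z$ via $\dot{\bTheta}=-\bTheta\dot{\bSigma}\bTheta$, giving $\|\dot{\bTheta}_j\|_1\le dM^2M_\sigma$ — tighter than the paper's $d^{3/2}$), and recombine by a product covering bound. Both routes rest on Lemma~\ref{lemma:coveringnumber3}, the Gaussian-max event $\cA$, and a Lipschitz estimate for $z\mapsto\bTheta(z)$, but they organize the bookkeeping differently; yours ends with $(d^{1/3}\log d/(\sqrt h\,\epsilon))^6$, which indeed dominates the stated $(d^{5/4}\log^{3/2}d/(\sqrt h\,\epsilon))^6$. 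Three small points worth flagging. (i) Your product step needs the asymmetric version of the envelope pairing (each factor's entropy scale against the \emph{other} factor's envelope); the paper's Lemma~\ref{lemma:coveringnumber2} as stated uses the common $U=\|\cF_1\|_\infty\vee\|\cF_2\|_\infty$ and would give an extra $h^{-2/3}$ per exponent — you are right that the refined version is needed, and it follows immediately from the triangle inequality $\|f_1f_2-f_1'f_2'\|\le\|F_2\|_\infty\|f_1-f_1'\|+\|F_1\|_\infty\|f_2-f_2'\|$, but you should state it as a separate observation rather than as an application of Lemma~\ref{lemma:coveringnumber2}. (ii) The centering argument as you sketch it bounds $\|q_{z_1}-q_{z_2}\|_{L_2(Q)}\le\|g_{z_1}-g_{z_2}\|_{L_2(Q)}+\|g_{z_1}-g_{z_2}\|_{L_1(P)}$, but the Lipschitz control of $\{\phi^j_z\}$ was established on $\cA$, and $P$ is not supported on $\cA$; the fix is easy — either bound $\|\phi^j_{z_1}-\phi^j_{z_2}\|_{L_2(P)}$ directly from the second moments (no $\log d$ even appears), or treat $z\mapsto\EE_Pg^{jk}_z$ as a separate Lipschitz one-parameter class exactly as the paper does in Lemma~\ref{lemma:coveringnumbergzjk} for $\cG_{2,jk}$ — but it should be said. (iii) Strictly, Lemma~\ref{lemma:coveringnumber1} is stated for classes of \emph{constant} functions, whereas $\{\phi^j_z\}$ are functions of $\bX$; the same one-line proof works once one replaces ``$|f(l)-f(l')|$'' by ``$\sup_\bX|\phi^j_{z_1}(\bX)-\phi^j_{z_2}(\bX)|$'', so this is cosmetic.
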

\begin{proof}
The proof is deferred to \ref{proof:lemma:coveringnumberjzjk1}.
\end{proof}
\begin{lemma}
\label{lemma:coveringnumberjzjk2}
Let $J_{z,jk}^{(2)} (u) = \sqrt{h} \cdot \left\{\bTheta_j (z)\right\}^T  \cdot \left[ K_h (u-z)  -   \EE\left\{K_h (Z-z)   \right\}    \right]\cdot  \bSigma (z)\cdot \bTheta_k(z) $ and let 
$
\cJ^{(2)}_{jk} =\{ J_{z,jk}^{(2)} \mid z\in [0,1] \}.
$
 Given Assumptions~\ref{ass:marginal}-\ref{ass:covariance},  for all probability measures $Q$ on $\RR$ and all $0<\epsilon<1$, 
\[
N\{\cJ^{(2)}_{jk},L_2(Q),\epsilon\} \le C \cdot \left(\frac{d^{1/6}}{h^{4/3}\cdot \epsilon} \right)^6,
\]
where $C>0$ is a generic constant that does not depend on $d$, $h$, and $n$.
\end{lemma}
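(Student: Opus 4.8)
The plan is to exploit the structure of $J_{z,jk}^{(2)}$ as a scalar multiple of the centered kernel function $k_z(\cdot) = K_h(\cdot - z) - \EE\{K_h(Z-z)\}$ studied in Lemma~\ref{lemma:coveringnumberkz}, together with the fact that the multiplicative coefficient depends on $z$ only through the smooth quantities $\bSigma(z)$, $\bTheta_j(z)$, and $\bTheta_k(z)$, all of which are bounded. First I would write
\[
J_{z,jk}^{(2)}(u) = \sqrt{h}\cdot c_{jk}(z)\cdot k_z(u), \qquad c_{jk}(z) := \left\{\bTheta_j(z)\right\}^T \bSigma(z)\, \bTheta_k(z),
\]
so that $\cJ^{(2)}_{jk}$ is the product of the constant function class $\cC_{jk} := \{\sqrt{h}\,c_{jk}(z)\mid z\in[0,1]\}$ and the function class $\cK := \{k_z(\cdot)\mid z\in[0,1]\}$. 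Then I would invoke the product bound in Lemma~\ref{lemma:coveringnumber2}: it suffices to control (i) the covering number of $\cC_{jk}$, (ii) the covering number of $\sqrt{h}\,\cK$, and (iii) the sup-norm envelopes $U$ of the two classes.

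For (i), the key observation is that $c_{jk}(z)$ is Lipschitz in $z$: by Assumptions~\ref{ass:marginal}--\ref{ass:covariance}, $\bSigma(z)$ has bounded entries and bounded first derivative, and $\bTheta(z) \in \cU_{s,M}$ has bounded $\ell_1$ norms (and, via $\|\bTheta\|_2 \le \rho$, a controlled operator norm), so $|c_{jk}(z) - c_{jk}(z')| \le L\cdot|z-z'|$ for a constant $L$ depending only on $M_\sigma$, $M$, $\rho$. Applying Lemma~\ref{lemma:coveringnumber1} gives $N\{\cC_{jk}, L_2(Q), \epsilon\} \le \sqrt{h}\,L/\epsilon$, i.e.\ the form $C_1\epsilon^{-1}$ with $a_1 = \sqrt{h}\,L$ and $v_1 = 1$. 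For (ii), Lemma~\ref{lemma:coveringnumberkz} already supplies $N\{\cK, L_2(Q),\epsilon\} \le (4\|K\|_{\mathrm{TV}} C_K^{4/5}\bar f_Z^{1/5}/(h\epsilon))^5$; rescaling by $\sqrt{h}$ just multiplies the radius, giving $a_2 = \sqrt{h}\cdot 4\|K\|_{\mathrm{TV}} C_K^{4/5}\bar f_Z^{1/5}/h = 4\|K\|_{\mathrm{TV}} C_K^{4/5}\bar f_Z^{1/5}/\sqrt{h}$ and $v_2 = 5$. For (iii), $\|k_z\|_\infty \le 2\|K\|_\infty/h$ by \eqref{proof:lemma:empirical process w2}, hence $\|\sqrt h\,k_z\|_\infty \le 2\|K\|_\infty/\sqrt h$, and $\sqrt h\,|c_{jk}(z)|$ is bounded by a constant; so $U \le C/\sqrt h$. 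Plugging $a_1, a_2, v_1, v_2, U$ into the product formula of Lemma~\ref{lemma:coveringnumber2} yields
\[
N\{\cJ^{(2)}_{jk}, L_2(Q),\epsilon\} \le C_1 C_2 \left(\frac{2a_1 U}{\epsilon}\right)^{1}\left(\frac{2a_2 U}{\epsilon}\right)^{5} \le C\cdot \left(\frac{d^{1/6}}{h^{4/3}\epsilon}\right)^6,
\]
where the powers of $h$ collect as $h^{1/2}\cdot h^{-1/2}$ from $a_1 U$ times $h^{-1/2}\cdot h^{-1/2}$ from each of the five $a_2 U$ factors, and the stated $d^{1/6}$ factor is absorbed into the generic constant $C$ (there is no genuine $d$-dependence here since $J^{(2)}_{jk}$ does not involve $\bX,\bY$; the $d^{1/6}$ is slack that makes the bound cleanly comparable to Lemma~\ref{lemma:coveringnumberjzjk1}).

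The main obstacle I anticipate is bookkeeping on the exponents of $h$ and the constants when combining the two classes — in particular making sure that the rescaling by $\sqrt h$ is applied consistently to radii but not to covering-number counts, and that the envelope $U$ is tracked correctly through the product bound (Lemma~\ref{lemma:coveringnumber2} introduces an extra factor of $U$ per class). None of this is conceptually hard; the only substantive analytic input is the Lipschitz estimate on $c_{jk}(z)$, which follows routinely from smoothness of $\bSigma(\cdot)$ and the parameter-space constraints in $\cU_{s,M}$, so I would state that estimate carefully and then let Lemmas~\ref{lemma:coveringnumber1}--\ref{lemma:coveringnumber3} and~\ref{lemma:coveringnumberkz} do the rest.
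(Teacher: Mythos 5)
Your decomposition $J_{z,jk}^{(2)}(u) = \sqrt{h}\,c_{jk}(z)\,k_z(u)$ with $c_{jk}(z) = \{\bTheta_j(z)\}^T\bSigma(z)\bTheta_k(z) = \bTheta_{jk}(z)$ is exactly the paper's, and your $d$-free Lipschitz estimate for $\bTheta_{jk}(z)$ (via $|\bTheta_j(z_1)^T\{\bSigma(z_2)-\bSigma(z_1)\}\bTheta_k(z_2)| \le \|\bTheta_j\|_1\|\bSigma(z_2)-\bSigma(z_1)\|_{\max}\|\bTheta_k\|_1 \le M^2 M_\sigma|z_1-z_2|$) is actually cleaner than the paper's operator-norm bound $\rho^2 d M_\sigma|z_1-z_2|$, and correctly explains why the $d^{1/6}$ in the lemma statement is slack.

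However, the $\sqrt{h}$-accounting you explicitly flagged as the risk is where you slip. You first declare $\cC_{jk} = \{\sqrt{h}\,c_{jk}(z)\}$ and $\cK = \{k_z\}$ (so the $\sqrt{h}$ lives in $\cC_{jk}$, giving $a_1 = \sqrt{h}L$), but then in (ii) and (iii) you also rescale $\cK$ by $\sqrt{h}$, taking $a_2 \propto 1/\sqrt{h}$ and $U = \|\sqrt{h}\,k_z\|_\infty \lesssim 1/\sqrt{h}$. That places the $\sqrt{h}$ in both factors, which corresponds to the class $\{h\,c_{jk}(z)k_z(u)\}$, not $\cJ^{(2)}_{jk}$. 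With your numbers the product bound collapses to $\sim L/(h^5\epsilon^6)$, which happens to lie below the target $Cd/(h^8\epsilon^6)$, so your conclusion is not overturned — but the exponent is wrong and the derivation would not survive a careful reading. If you keep the $\sqrt{h}$ inside $\cC_{jk}$ and use the correct envelope $U = 2\|K\|_\infty/h$ for $\cK$, Lemma~\ref{lemma:coveringnumber2} gives $\sim L/(h^{10.5}\epsilon^6)$, which does \emph{not} imply the lemma's bound when $d < h^{-5/2}$; absorbing the $\sqrt{h}$ into $\cK$ instead (so $a_1 = L$, $a_2 \propto 1/\sqrt{h}$, $U \propto 1/\sqrt{h}$) gives $\sim L/(h^{5.5}\epsilon^6)$, which does. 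The paper's cleanest route sidesteps the ambiguity entirely: apply Lemma~\ref{lemma:coveringnumber2} to the unscaled product $\varTheta_{jk}\cdot\cK$ with $U = 2\|K\|_\infty/h$, obtaining $N(\varTheta_{jk}\cdot\cK, \epsilon') \lesssim L/(h^{11}\epsilon'^6)$, then use $N(\cJ^{(2)}_{jk},\epsilon) = N(\varTheta_{jk}\cdot\cK,\epsilon/\sqrt{h})$ to land on $L/(h^{8}\epsilon^6)$. You should redo the last display with the $\sqrt{h}$ assigned to exactly one factor (or handled by a final rescaling of $\epsilon$) and the envelope computed for that choice consistently; with your $d$-free $L$ the result still implies the stated bound.
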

\begin{proof}
We first note that  $\cJ^{(2)}_{jk}$ is a function class generated from the product of two function classes $\cK$ as in Lemma~\ref{lemma:coveringnumberkz} and $\varTheta_{jk} = \{\bTheta_{jk}(z) \;|\;z\in [0,1]\}$. To obtain the covering number of $\varTheta_{jk}$, we show that the constant function $\bTheta_{jk}(z)$ is Lipschitz and apply Lemma~\ref{lemma:coveringnumber1}.  We then apply Lemma~\ref{lemma:coveringnumber2} to obtain the covering number of $\cJ^{(2)}_{jk}$. The details are deferred to \ref{proof:lemma:coveringnumberjzjk2}.
\end{proof}

\subsection{Proof of Lemma~\ref{lemma:coveringnumberkz}}
\label{proof:lemma:coveringnumberkz}

Let $w_z (u) = K_h (u-z)$ and that $k_z (u) = w_z(u) - \EE\{w_z (Z)\}$. We first obtain the covering number for the function classes
\[
\cK_1 = \{ w_z(\cdot )\mid  z\in [0,1]\}
\qquad \mathrm{and} \qquad  
\cK_2 = [ \EE\{w_z (Z)\}\mid z\in [0,1]].
\]
Then, we apply Lemma~\ref{lemma:coveringnumber2} to obtain the covering number of the function class  
\[
\cK = \{k_z (\cdot) \mid  z\in [0,1] \}.
\]

\textbf{Covering number for $\cK_1$:}
By an application of Lemma~\ref{lemma:coveringnumber3}, the covering number for $\cK_1$ is 
\begin{equation}
\label{proof:lemma:coveringnumberkz1}
\underset{Q}{\sup}\;N\{\cK_1, L_2(Q), \epsilon\} \le \left( \frac{2\cdot C_K \cdot \|K\|_{\mathrm{TV}} }{h\epsilon}\right)^4.
\end{equation}

\textbf{Covering number for $\cK_2$:}
First, note that $\EE\{w_z(Z)\} = \int  K_h (z-Z) f_Z(Z)dZ   = ( K_h * f_Z) (z) $ is a function of $z$ generated by the convolution $(K_h * f_Z)(z)$.
By the property of the derivative of a convolution as in (\ref{EqA:convolution derivative}),  we have 
\begin{equation}
\label{proof:lemma:coveringnumberkz2}
\begin{split}
\underset{z_0\in [0,1]}{\sup} \;\left|  \frac{\partial }{\partial z}  \EE \{w_z (Z)\} \Big|_{z=z_0}   \right|  &=\underset{z_0\in [0,1]}{\sup}  \;  \left| \dot{K}_h * f_Z (z_0)    \right|
= \left\|  (\dot{K}_h * f_Z)(z) \right\|_{\infty}
\le \left\|  \dot{K}_h  \right\|_1  \cdot \|f_Z\|_{\infty},
\end{split}
\end{equation}
where the last expression is obtained by an application of Young's inequality. 
The expression in (\ref{proof:lemma:coveringnumberkz2}) depends on the quantity $ \| \dot{K}_h  \|_1$, which is equal to the following expression
\begin{equation}
\label{proof:lemma:coveringnumberkz3}
 \left\|  \dot{K}_h  \right\|_1 = \int \frac{1}{h^2}   \left| \dot{K}\left(\frac{Z-z}{h}\right) \right| dZ = \frac{1}{h} \int \left|\dot{K}(u)\right|du = \frac{1}{h} \cdot \|K\|_{\mathrm{TV}},
\end{equation}
where the second inequality holds by a change of variable, and $\|K\|_{\mathrm{TV}}$ is the total variation of the function $K(\cdot)$. 
Substituting (\ref{proof:lemma:coveringnumberkz3}) into (\ref{proof:lemma:coveringnumberkz2}) and by Assumption~\ref{ass:marginal}, we have 
\begin{equation}
\label{proof:lemma:coveringnumberkz4}
\underset{z_0\in [0,1]}{\sup} \;\left|  \frac{\partial }{\partial z}  \EE \{w_z (Z)\} \Big|_{z=z_0}   \right|\le   \frac{1}{h}\cdot  \|K\|_{\mathrm{TV}} \cdot \bar{f}_Z.
\end{equation}

Thus, for any $z_1,z_2 \in [0,1]$, we have 
\[
\left|   \EE \{w_{z_1}(Z)\}-   \EE \{w_{z_2}(Z)\}    \right| \le \frac{1}{h}\cdot  \|K\|_{\mathrm{TV}} \cdot \bar{f}_Z \cdot |z_1-z_2|,
\]
implying that $\EE\{w_z(Z)\}$ is a Lipschitz continuous function with Lipschitz constant $h^{-1}\cdot \|K\|_{\mathrm{TV}}\cdot \bar{f}_Z$.
By Lemma~\ref{lemma:coveringnumber1}, an upper bound for the covering number of $\cK_2$ is
\begin{equation}
\label{proof:lemma:empirical process w10}
\underset{Q}{\sup}\; N\{\cK_2,L_2(Q),\epsilon\} \le \frac{2}{h \epsilon} \cdot \|K\|_{\mathrm{TV}} \cdot \bar{f}_Z.
\end{equation}

\textbf{Covering number of the function class $\cK$:}
The  function class $\cK$ can be written as 
\begin{equation*}
\cK = \{f_1 - f_2 \mid  f_1 \in \cK_1, \;f_2 \in \cK_2 \}.
\end{equation*}
By an application of Lemma~\ref{lemma:coveringnumber2} with $C_1 = (2\cdot C_K\cdot \|K\|_{\mathrm{TV}})^4$, $C_2 = 2 \cdot \bar{f}_Z \cdot \|K\|_{\mathrm{TV}}$, $a_1 = a_2 = h^{-1}$, $v_1 =4$, and $v_2=1$, along with (\ref{proof:lemma:coveringnumberkz1}) and (\ref{proof:lemma:empirical process w10}), we obtain 
\begin{equation*}
\underset{Q}{\sup} \;N\{\cK,L_2(Q),\epsilon\} \le \left(\frac{4 \cdot \|K\|_{\mathrm{TV}} \cdot C_K^{4/5} \cdot \bar{f}_Z^{1/5}}{h\epsilon}\right)^{5}.
\end{equation*}

\subsection{Proof of Lemma~\ref{lemma:coveringnumbergzjk}}
\label{proof:lemma:coveringnumbergzjk}

Throughout the proof, we condition on the event 
\begin{equation}
\label{Eq:proof:lemma:coveringnumbergzjk1}
\cA = \left\{\underset{i\in [n]}{\max} \; \underset{j\in [d]}{\max} \; \max(|X_{ij}|,|Y_{ij}|) \le M_X \cdot \sqrt{\log d}  \right\}.
\end{equation}
Since $\bX$ and $\bY$ conditioned on $Z$ are Gaussian random variables, the event $\cA$ occurs with probability at least $1-1/d$ for sufficiently large constant $M_X>0$.  

Recall that $g_{z,jk} (u,X_{ij},Y_{ik}) = K_h (u-z) X_{ij}Y_{ik}$ and that $q_{z,jk} (u,X_{ij},Y_{ik}) = K_h (u-z) X_{ij}Y_{ik} - \EE\{ K_h (Z-z) X_{j}Y_{k} \} $. We first obtain the covering number of the function classes
\[
\cG_{1,jk} = \{ g_{z,jk} (\cdot) \mid z\in [0,1]\}
\]
and 
\[
\cG_{2,jk} = [ \EE\{g_{z,jk}(Z,X_{j},Y_{k})\}\mid z\in [0,1]].
\]
Then, we apply Lemma~\ref{lemma:coveringnumber2} to obtain the covering number of the function class 
\[
\cG_{jk}= \{q_{z,jk}(\cdot) \mid z\in [0,1],\; j,k\in [d]\}.
\]

\textbf{Covering number for $\cG_{1,jk}$:}
Conditioned on the event $\cA$ in (\ref{Eq:proof:lemma:coveringnumbergzjk1}), we have 
\[
g_{z,jk} (u, X_{ij},Y_{ik})  = K_h (u-z) X_{ij} Y_{ik} \le M^2_X\cdot  \log d \cdot K_h (u-z).
\]
By an application of Lemma~\ref{lemma:coveringnumber3}, the covering number for $\cG_{1,jk}$ is 
\begin{equation}
\label{Eq:proof:lemma:coveringnumbergzjk2}
\underset{Q}{\sup}\; N\{\cG_{1,jk}, L_2(Q),   \epsilon\} \le   \left( \frac{2\cdot M_X^2 \cdot \log d\cdot  C_K \cdot \|K\|_{\mathrm{TV}} }{h\epsilon}\right)^4.
\end{equation}

\textbf{Covering number for $\cG_{2,jk}$:}
We now obtain the covering number for $\cG_{2,jk}$ by showing that the function $\EE\{g_{z,jk} (Z,X_{j},Y_{k})\}$ is Lipschitz.  First, note that
\[
\EE\{g_{z,jk} (Z,X_{j},Y_{k})\}  =  \EE\{K_h (Z-z) \cdot  \bSigma_{jk}(Z)\}  = 
\int K_h (z-Z)\cdot  \varphi_{jk} (Z) dZ = (K_h * \varphi_{jk} )(z),
\]
where $\varphi_{jk} (Z) = f_Z(Z) \cdot \bSigma_{jk} (Z)$ and $K_h*\varphi_{jk}$ is the convolution between $K_h$ and $\varphi_{jk}$.
Similar to (\ref{proof:lemma:coveringnumberkz2})-(\ref{proof:lemma:coveringnumberkz4}), we have 
\begin{equation}
\label{Eq:proof:lemma:coveringnumbergzjk3}
\begin{split}
\underset{z_0\in [0,1]}{\sup} \; \underset{j,k\in [d]}{\max} \; \left|  \frac{\partial }{\partial z}  \EE\{g_{z,jk} (Z,X_{j},Y_{k})\}    \Big|_{z=z_0}   \right|  &=\underset{z_0\in [0,1]}{\sup}   \; \underset{j,k\in [d]}{\max} \;  \left|( \dot{K}_h * \varphi_{jk} )(z_0)    \right|\\
&= \underset{j,k\in [d]}{\max} \; \left\|  (\dot{K}_h * \varphi_{jk})(z) \right\|_{\infty}\\
&\le \left\|  \dot{K}_h  \right\|_1  \cdot \underset{j,k\in [d]}{\max} \; \|\varphi_{jk}\|_{\infty}\\
&\le \frac{1}{h} \cdot \|K\|_{\mathrm{TV}} \cdot \bar{f}_Z \cdot M_{\sigma},
\end{split}
\end{equation}
where the first inequality is obtained by an application of Young's inequality, and the last expression is obtained by (\ref{proof:lemma:coveringnumberkz3}) and Assumptions~\ref{ass:marginal}-\ref{ass:covariance}.

Equation~\ref{Eq:proof:lemma:coveringnumbergzjk3} implies that for any $z_1,z_2 \in [0,1]$, 
\[
\left|   \EE \{g_{z_1,jk} (Z,X_{j},Y_{k})\} -   \EE \{g_{z_2,jk} (Z,X_{j},Y_{k})\}    \right| \le \frac{1}{h}\cdot  \|K\|_{\mathrm{TV}} \cdot \bar{f}_Z \cdot M_\sigma \cdot |z_1-z_2|,
\]
implying that $\EE\{g_{z,jk} (Z,X_{j},Y_{k})\}$ is a Lipschitz continuous function with Lipschitz constant $h^{-1}\cdot \|K\|_{\mathrm{TV}}\cdot \bar{f}_Z \cdot M_\sigma$.
By an application of Lemma~\ref{lemma:coveringnumber1}, we have 
\begin{equation}
\label{Eq:proof:lemma:coveringnumbergzjk4}
\underset{Q}{\sup}\;N\{\cG_{2,jk},L_2(Q),\epsilon\} \le \frac{2}{h \epsilon}\cdot  \|K\|_{\mathrm{TV}} \cdot \bar{f}_Z\cdot M_\sigma.
\end{equation}

\textbf{Covering number of the function class $\cG_{jk}$:}
The function class $\cG_{jk}$ can be written as 
\[
\cG_{jk} = \{f_{1,jk} - f_{2,jk} \mid f_{1,jk} \in \cG_{1,jk}, \;f_{2,jk} \in \cG_{2,jk}, \; j,k\in [d] \}.
\]
By an application of Lemma~\ref{lemma:coveringnumber2} with $C_1 = (2\cdot C_K\cdot \|K\|_{\mathrm{TV}}\cdot M^2_X)^4$, $C_2 = 2\cdot \bar{f}_Z \cdot \|K\|_{\mathrm{TV}}\cdot M_\sigma$, $a_1= h^{-1 }\cdot \log d$,  $a_2=h^{-1}$, $v_1 =4$, and $v_2=1$, along with (\ref{Eq:proof:lemma:coveringnumbergzjk2}) and (\ref{Eq:proof:lemma:coveringnumbergzjk4}), we obtain 
\begin{equation}
\label{Eq:proof:lemma:coveringnumbergzjk5}
\underset{Q}{\sup}\; N\{\cG_{jk},L_2(Q),\epsilon\} \le  \left(\frac{4 \cdot \|K\|_{\mathrm{TV}} \cdot C_K^{4/5} \cdot \bar{f}_Z^{1/5}\cdot M_\sigma^{1/5}  \cdot M^{8/5}_X\cdot  \log^{4/5} d}{h\epsilon}\right)^{5},
\end{equation}
as desired.

\subsection{Proof of Lemma~\ref{lemma:coveringnumberjzjk1}}

\label{proof:lemma:coveringnumberjzjk1}
Similar to the proof of Lemma~\ref{lemma:coveringnumbergzjk},  we condition on the event 
\begin{equation*}
\cA = \left\{\underset{i\in [n]}{\max} \; \underset{j\in [d]}{\max} \; \max(|X_{ij}|,|Y_{ij}|) \le M_X \cdot \sqrt{\log d}  \right\}.
\end{equation*}
The event $\cA$ holds with probability at least $1-1/d$.

Recall that $J_{z,jk}^{(1)} (u,\bX_{i},\bY_{i}) = \sqrt{h} \cdot \left\{\bTheta_j (z)\right\}^T  \cdot \left[ K_h (u-z) \bX_{i}\bY_{i}^T -   \EE\left\{K_h (Z-z) \bX\bY^T  \right\}    \right] \cdot \bTheta_k(z)$ and let 
$\cJ^{(1)}_{jk} = \left\{ J_{z,jk}^{(1)} \;|\; z\in [0,1]  \right\}$.  
To obtain the covering number of the function class $\cJ_{jk}^{(1)}$, we consider bounding the covering number of a larger class of function.  
To this end, we define  $\bPhi_{\omega}^{(1)} (u,\bX_i,\bY_i) = \sqrt{h}\cdot \left[ K_h (u-z) \bX_{i}\bY_{i}^T -   \EE\left\{K_h (Z-z) \bX\bY^T  \right\}\right]     $ to be a $d \times d$ matrix.  
We denote the $(j,k)$th element of $\Phi_{\omega}^{(1)} (u,\bX_i,\bY_i)$ as $\Phi_{\omega,jk}^{(1)} (u,X_{ij},Y_{ik}) = \sqrt{h} \cdot q_{\omega,jk} (u,X_{ij},Y_{ik})$, where $q_{\omega,jk} (u,X_{ij},Y_{ik})= K_h(u-\omega) X_{ij}Y_{ik}- \EE\{K_h(Z-\omega) X_{j}Y_{k}\}$.  
We aim to obtain an $\epsilon$-cover $\cN^{(1')}$ for the following function class
\[
\cJ_{jk}^{(1')} = \left[ \{\bTheta_j (z)\}^T \bPhi_{\omega}^{(1)} (\cdot) \bTheta_k (z)        \; | \; \omega,z\in [0,1] \right].
\]
In other words, we show that for any $(\omega_1,z_1) \in [0,1]^2$, there exists $(\omega_2,z_2) \in \cN^{(1')}$  such that 
\[
\left\|\{\bTheta_j (z)\}^T \bPhi_{\omega}^{(1)} (u, \bX_i,\bY_i) \bTheta_k (z)  -\{\bTheta_j (z')\}^T \bPhi_{\omega'}^{(1)} (u, \bX_i,\bY_i) \bTheta_k (z')                 \right\|_{L_2(Q)} \le \epsilon.
\]

Given any $j,k\in[d]$, $\omega,\omega',z,z'\in [0,1]$, by the triangle inequality, we have 
\begin{equation}
\small
\label{Eq:proof:lemma:coveringnumberjzjk10}
\begin{split}
&\left\|\{\bTheta_j (z_1)\}^T \bPhi_{\omega_1}^{(1)} (u, \bX_i,\bY_i) \bTheta_k (z_1)  -\{\bTheta_j (z_2)\}^T \bPhi_{\omega_2}^{(1)} (u, \bX_i,\bY_i) \bTheta_k (z_2)                 \right\|_{L_2(Q)}\\
&\le \underbrace{\left\| \left\{\bTheta_j(z_1)-\bTheta_j (z_2)\right\}^T \bPhi_{\omega_1}^{(1) }(u, \bX_i,\bY_i)    
\bTheta_k (z_1)     \right\|_{L_2(Q)}}_{I_1} + \underbrace{\left\| \left\{\bTheta_j (z_2)\right\}^T \left\{ \bPhi_{\omega_1}^{(1)}(u,\bX_i,\bY_i)-\bPhi_{\omega_2}^{(1)}(u,\bX_i,\bY_i)  \right\} \bTheta_k (z_1)  \right\|_{L_2(Q)}}_{I_2}\\
& \quad+ \underbrace{\left\| \left\{\bTheta_j(z_2)\right\}^T \bPhi_{\omega_2}^{(1) }(u, \bX_i,\bY_i)    
 \left\{\bTheta_k (z_1)-\bTheta_k (z_2)\right\}    \right\|_{L_2(Q)}}_{I_3}.
\end{split}
\end{equation}
We now obtain the upper bounds for $I_1,I_2$, and $I_3$.

\textbf{Upper bound for $I_1$ and $I_3$:} First, we note that by Holder's inequality, we have 
\[
I_1 \le  \left\|\bTheta_j (z_1) - \bTheta_j (z_2)  \right\|_1 \cdot \underset{j,k\in [d]}{\max}\; \left\|\Phi_{\omega_1,jk}^{(1)} (u,X_{ij},Y_{ik})\right\|_{L_2(Q)} \cdot \|\bTheta_k (z_1)\|_1.
\]
Since $\bTheta (z) \in \cU (s,M,\rho)$, we have 
\begin{equation}
\label{Eq:proof:lemma:coveringnumberjzjk11}
\underset{z\in [0,1]}{\sup} \;\underset{j\in [d]}{\max}\; \|\bTheta_j (z)\|_1 \le M.
\end{equation}
Moreover, for any $z_1,z_2 \in [0,1]$, we have  
\begin{equation}
\label{Eq:proof:lemma:coveringnumberjzjk12}
\begin{split}
\underset{j\in [d]}{\sup}\; \|\bTheta_j (z_1)-\bTheta_j (z_2) \|_1 &\le \sqrt{d} \cdot \|\bTheta (z_1)-\bTheta (z_2)\|_2\\
&\le \sqrt{d} \cdot \|\bTheta (z_1)\|_2 \cdot \| \Ib_{d} - \bSigma (z_1) \bTheta (z_2)       \|_2\\
&\le \sqrt{d} \cdot \|\bTheta (z_1)\|_2 \cdot \|\bTheta (z_2) \|_2 \cdot \|\bSigma (z_1) - \bSigma (z_2)\|_2 \\
&\le \sqrt{d} \cdot \rho^2 \cdot d \cdot \|\bSigma (z_1) - \bSigma (z_2)\|_{\max}\\
&\le d^{3/2} \cdot \rho^2 \cdot M_{\sigma} \cdot |z_1 - z_2|,
\end{split}
\end{equation}
where the second to the last inequality follows from the fact that $\bTheta (z) \in \cU (s,M,\rho)$ and the last inequality follows from Assumption~\ref{ass:covariance}. 
Finally, from (\ref{proof:lemma:empirical process g2}) and the definition of $\Phi_{\omega_1,jk}^{(1)} (\cdot) = \sqrt{h} \cdot q_{\omega_1,jk}(\cdot)$, we have 
\begin{equation}
\label{Eq:proof:lemma:coveringnumberjzjk13}
\underset{j,k\in [d]}{\max}\; \left\|\Phi_{\omega_1,jk}^{(1)} (u,X_{ij},Y_{ik})\right\|_{L_2(Q)} \le \frac{2}{\sqrt{h}} \cdot M_X^2 \cdot \|K\|_{\infty}\cdot \log d.
\end{equation}
Combining (\ref{Eq:proof:lemma:coveringnumberjzjk11})-(\ref{Eq:proof:lemma:coveringnumberjzjk13}), we have 
\begin{equation}
\label{Eq:proof:lemma:coveringnumberjzjk14}
I_1 \le d^{3/2}\cdot \log d \cdot \rho^2 \cdot M_\sigma \cdot M \cdot M_X^2 \cdot \|K\|_{\infty} \cdot \frac{2}{\sqrt{h}}\cdot |z_1-z_2|.
\end{equation}
We note that  $I_3$ can be upper bounded the same way as $I_1$. 

\textbf{Upper bound for $I_2$:} Recall from (\ref{Eq:proof:lemma:coveringnumberjzjk10}) that 
\begin{equation*}
\begin{split}
I_2 &= \left\| \left\{\bTheta_j (z_2)\right\}^T \left\{ \bPhi_{\omega_1}^{(1)}(u,\bX_i,\bY_i)-\bPhi_{\omega_2}^{(1)}(u,\bX_i,\bY_i)  \right\} \bTheta_k (z_1)  \right\|_{L_2(Q)}\\
&\le \|\bTheta_k (z_1)\| \cdot \|\bTheta_j (z_2)\|_1\cdot \underset{j,k\in [d]}{\max}
\; \left\|\sqrt{h} \cdot   \{ q_{\omega_1,jk} (u,X_{ij},Y_{ik})-q_{\omega_2,jk} (u,X_{ij},Y_{ik})\} \right\|_{L_2(Q)},
\end{split}
\end{equation*}
where the inequality holds by Holder's inequality and the definition of $\bPhi_{\omega}^{(1) } (u,\bX_i,\bY_i)$.
Let 
$
\varPhi_{jk}^{(1)} = \left\{\sqrt{h} \cdot q_{\omega,jk} (\cdot) \; |\; \omega\in [0,1]\right\}
$
and recall from Lemma~\ref{lemma:coveringnumbergzjk} that we constructed an $\epsilon$-cover $\cN^{(1'')} \subset [0,1]$ for the function class $\varPhi_{jk}^{(1)}$ with cardinality $\left|\cN^{(1'')}\right| = \left(\frac{4 \cdot \|K\|_{\mathrm{TV}} \cdot C_K^{4/5} \cdot \bar{f}_Z^{1/5}\cdot M_\sigma^{1/5}  \cdot M^{8/5}_X\cdot  \log^{4/5} d}{\sqrt{h} \cdot \epsilon}\right)^{5}$. 
Since the construction of the $\epsilon$-cover in Lemma~\ref{lemma:coveringnumbergzjk} is independent of the indices $j$ and $k$, we have that for any $j,k\in [d]$ and $\omega_1 \in [0,1]$, there exists a $\omega_2 \in \cN^{(1'')}$ such that 
\begin{equation}
\label{Eq:proof:lemma:coveringnumberjzjk15}
\underset{j,k\in [d]}{\max}
\; \left\|\sqrt{h} \cdot   \{q_{\omega_1,jk} (u,X_{ij},Y_{ik})-q_{\omega_2,jk} (u,X_{ij},Y_{ik})\} \right\|_{L_2(Q)}\le \epsilon.
\end{equation}
Thus, by (\ref{Eq:proof:lemma:coveringnumberjzjk11}) and (\ref{Eq:proof:lemma:coveringnumberjzjk15}), we have
\begin{equation}
\label{Eq:proof:lemma:coveringnumberjzjk16}
I_2 \le M^2 \cdot \epsilon.
\end{equation}

\textbf{Covering number of the function class $\cJ_{jk}^{(1)}$:} Since $\cJ_{jk}^{(1)}\subset \cJ_{jk}^{(1')}$, the covering number of $\cJ_{jk}^{(1)}$ is upper bounded by the covering number of $\cJ_{jk}^{(1')}$.
It suffices to construct an $\epsilon$-cover of the function class $\cJ_{jk}^{(1')}$. In the following, we show that $\cN^{(1')} = \cN^{(1'')} \times \left\{ i \cdot \epsilon\cdot \sqrt{h} \; | \; i=1,\ldots,\frac{1}{\epsilon \cdot \sqrt{h}}  \right\}$ is an $\epsilon$-cover of $\cJ_{jk}^{(1')}$.  
For any $(\omega_1,z_1)\in [0,1]^2$, there exists $(\omega_2,z_2) \in \cN^{(1')}$ such that (\ref{Eq:proof:lemma:coveringnumberjzjk15}) holds and that $|z_1-z_2| \le \sqrt{h} \cdot \epsilon$. Thus, combining (\ref{Eq:proof:lemma:coveringnumberjzjk14}) and (\ref{Eq:proof:lemma:coveringnumberjzjk16}), we have 
\begin{equation}
\begin{split}
&\left\|\{\bTheta_j (z_1)\}^T \bPhi_{\omega_1}^{(1)} (u, \bX_i,\bY_i) \bTheta_k (z_1)  -\{\bTheta_j (z_2)\}^T \bPhi_{\omega_2}^{(1)} (u, \bX_i,\bY_i) \bTheta_k (z_2)                 \right\|_{L_2(Q)}\\
&\le 2 \cdot d^{3/2}\cdot \log d \cdot \rho^2 \cdot M_{\sigma}\cdot M \cdot M_X^2 \cdot \|K\|_{\infty} \cdot \frac{2}{\sqrt{h}} \cdot |z_1-z_2| + M^2 \cdot \epsilon\\
&\le 4  \cdot d^{3/2}\cdot \log d \cdot \rho^2 \cdot M_{\sigma}\cdot M \cdot M_X^2 \cdot \|K\|_{\infty} \cdot \frac{2}{\sqrt{h}} \cdot \epsilon + M^2 \cdot \epsilon\\
&\le C \cdot d^{3/2}\cdot  \log d \cdot \epsilon,
\end{split}
\end{equation}
where $C>0$ is a generic constant that does not depend on $d$, $h$, and $n$.  

Thus, we have
\[
N\{\cJ_{jk}^{(1')}, L_2(Q),C \cdot d^{3/2}\cdot \log d \cdot \epsilon\}\le \left| \cN^{(1')}  \right| \le \left(\frac{4 \cdot \|K\|_{\mathrm{TV}} \cdot C_K^{4/5} \cdot \bar{f}_Z^{1/5}\cdot M_\sigma^{1/5}  \cdot M^{8/5}_X\cdot  \log^{4/5} d}{\sqrt{h} \cdot \epsilon}\right)^{5}  \cdot \frac{1}{\sqrt{h} \cdot \epsilon}.
\]
Since $\cJ_{jk}^{(1)} \subset \cJ_{jk}^{(1')}$, the above expression implies that 
\begin{equation}
\begin{split}
N\{\cJ_{jk}^{(1)}, L_2(Q),\epsilon\}\le N\{\cJ_{jk}^{(1')}, L_2(Q),\epsilon\} \le C\cdot \left( \frac{d^{5/4} \cdot \log^{3/2} d }{\sqrt{h} \cdot \epsilon}   \right)^6,
\end{split}
\end{equation}
as desired.

\subsection{Proof of Lemma~\ref{lemma:coveringnumberjzjk2}}
\label{proof:lemma:coveringnumberjzjk2}

First, we note that
\begin{equation*}
\begin{split}
J_{z,jk}^{(2)} (u) &= \sqrt{h} \cdot \left\{\bTheta_j (z)\right\}^T  \cdot \left[ K_h (u-z) -   \EE\left\{K_h (Z-z)  \right\}    \right] \cdot \bSigma (z)\cdot \bTheta_k(z)\\
&= \sqrt{h} \cdot k_z (u) \cdot \bTheta_{jk}(z),
\end{split}
\end{equation*}
where $k_z (u) = K_h (u-z) - \EE\{K_h (Z-z)\}$.  Let $\cJ^{(2)}_{jk} = \left\{ J_{z,jk}^{(2)} \;|\; z\in [0,1]  \right\}$.  
Furthermore, recall that 
$\cK = \left\{ k_z(\cdot)  \; | \;z\in [0,1] \right\}$ and let 
$\varTheta_{jk}  = \left\{   \bTheta_{jk}(z)     \;|\; z\in [0,1]\right\}$.
The function class $\cJ^{(2)}_{jk}$ can be written as
\[
\cJ^{(2)}_{jk} = \{ \sqrt{h} \cdot f_{1} \cdot f_{2,jk} \; | \; f_1\in \cK, \; f_{2,jk} \in \varOmega_{jk}   \}.
\]
It suffices to obtain the covering number for $\cK$ and $\varOmega_{jk}$, and apply Lemma~\ref{lemma:coveringnumber2}.

\textbf{Covering number of the function class $\cK$:} By Lemma~\ref{lemma:coveringnumberkz}, we have 
\begin{equation}
\label{Eq:proof:lemma:coveringnumberjzjk21}
N\{\cK,L_2(Q),\epsilon\} \le \left(\frac{4 \cdot \|K\|_{\mathrm{TV}} \cdot C_K^{4/5} \cdot \bar{f}_Z^{1/5}}{h\epsilon}\right)^{5}.
\end{equation}

\textbf{Covering number of the function class $\varTheta_{jk}$:} We show that $\bTheta_{jk} (z)$ is Lipschitz, and apply Lemma~\ref{lemma:coveringnumber1} to obtain the covering number for $\varTheta_{jk}$.
Similar to (\ref{Eq:proof:lemma:coveringnumberjzjk12}), for any $z_1,z_2\in [0,1]$, we have
\begin{equation*}
\begin{split}
\|\bTheta (z_1)-\bTheta (z_2)\|_{\max}&\le \|\bTheta (z_1)\|_2 \cdot \| \bTheta(z_2) \cdot \{\bSigma (z_1)-\bSigma (z_2)\}\|_2\\
&\le \|\bTheta (z_1)\|_2 \cdot \|\bTheta (z_2)\|_2 \cdot \|\bSigma(z_1)-\bSigma (z_2)\|_{2}\\
&\le \rho^2 \cdot d \cdot \|\bSigma(z_1)-\bSigma (z_2)\|_{\max}\\
&\le \rho^2 \cdot d \cdot M_{\sigma} \cdot |z_1-z_2|,
\end{split}
\end{equation*}
where the last inequality follows from Assumption~\ref{ass:covariance}.
Since $ \bTheta_{jk}(z)$ is $\rho^2 \cdot d \cdot M_{\sigma}$-Lipschitz, by Lemma~\ref{lemma:coveringnumber1}, we have 
\begin{equation}
\label{Eq:proof:lemma:coveringnumberjzjk22}
N\{\varTheta_{jk},L_2(Q),\epsilon\} \le \frac{ M_{\sigma}\cdot  \rho^2 \cdot d}{\epsilon}.
\end{equation}

\textbf{Covering number of the function class $\cJ_{jk}^{(2)}$:} We now apply Lemma~\ref{lemma:coveringnumber2} to obtain the covering number of $\cJ_{jk}^{(2)}$.  
Applying Lemma~\ref{lemma:coveringnumber2} with $a_1 = d$, $v_1=1$, $C_1 = M_{\sigma} \cdot \rho^2$, $a_2 = h^{-1}$, $v_2 = 5$, $C_2 = \left(4 \cdot \|K\|_{\mathrm{TV}}\cdot C_K^{4/5} \cdot \bar{f}_Z^{1/5} \right)^5$, and $U= \frac{2}{h}\cdot \|K\|_\infty$, along with (\ref{Eq:proof:lemma:coveringnumberjzjk21}) and (\ref{Eq:proof:lemma:coveringnumberjzjk22}), we have 
\[
N\{\cJ_{jk}^{(2)}, L_2(Q) , \sqrt{h} \cdot \epsilon \} \le C \cdot \left(\frac{d^{1/6}}{h^{11/6}\cdot \epsilon} \right)^6,
\]
where $C>0$ is a generic constant that does not depend on $n,d$, and $h$. This implies that
\[
N\{\cJ_{jk}^{(2)}, L_2(Q) , \epsilon \} \le C \cdot \left(\frac{d^{1/6}}{h^{4/3}\cdot \epsilon} \right)^6,
\]
as desired.

\section{Technical Lemmas on Empirical Process}
\label{appendix:empirical process}
In this section, we present some existing tools on empirical process. 
  The following lemma states that the supreme of any empirical process is concentrated near its mean.  It follows directly from Theorem 2.3 in \citet{bousquet2002bennett}.
\begin{lemma}
\label{lemma:ep1}
(Theorem A.1 in \citealp{van2008high})
Let $X_1,\ldots,X_n$ be independent random variables and let $\cF$
be a function class such that there exists $\eta$ and $\tau^2$ satisfying 
\[
\underset{f\in \cF}{\sup} \; \|f\|_{\infty} \le \eta \qquad \mathrm{and}\qquad 
\underset{f\in \cF}{\sup}\; \frac{1}{n} \sum_{i\in n} \mathrm{Var} \{f(X_{i})\}\le \tau^2.
\]
Define
\[
Y= \underset{f\in \cF}{\sup} \; \left| \frac{1}{n} \sum_{i\in [n]} \left[ f(X_{i}) -\EE\{f(X_{i})\}\right]  \right|.
\]
Then, for any $t>0$, 
\[
P \left[ Y \ge \EE(Y) + t \sqrt{2\left\{ \tau^2 +2 \eta \EE(Y)\right\}} + 2t^2 \eta/3 \right]\le \exp \left(-n t^2 \right).
\]
\end{lemma}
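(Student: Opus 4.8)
The statement is the Bousquet form of Talagrand's concentration inequality for the supremum of an empirical process, and the natural route is the entropy method (the ``Herbst argument''), following \citet{bousquet2002bennett}. First I would reduce the two-sided supremum to a one-sided one: since $\sup_{f\in\cF}|a_f| = \sup_{g\in\cF\cup(-\cF)} a_g$ and the hypotheses $\sup\|f\|_\infty\le\eta$ and $\sup n^{-1}\sum\mathrm{Var}\{f(X_i)\}\le\tau^2$ are invariant under $f\mapsto -f$, it suffices to bound $Z := \sup_{g}\sum_{i\in[n]}\{g(X_i)-\EE g(X_i)\}$ for the enlarged (still centered, still bounded) class, after which $Y = Z/n$ and $\EE(Y) = \EE(Z)/n$. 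Passing to $Z$ turns the problem into controlling the log-moment generating function $\psi(\lambda) := \log\EE\exp\{\lambda(Z-\EE Z)\}$ for $\lambda>0$.

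The core of the argument is a tensorization step. By the sub-additivity of entropy, for $\phi(z)=e^{\lambda z}$,
\[
\mathrm{Ent}\{\phi(Z)\} \ \le\ \sum_{i\in[n]} \EE\big[\mathrm{Ent}^{(i)}\{\phi(Z)\}\big],
\]
where $\mathrm{Ent}^{(i)}$ is the entropy in the variable $X_i$ with the others frozen. For each $i$ I would introduce the leave-one-out variable $Z_i := \sup_g \sum_{j\ne i}\{g(X_j)-\EE g(X_j)\}$, which is independent of $X_i$ and satisfies $0 \le Z - Z_i \le \sup_g\{g(X_i)-\EE g(X_i)\}$. Using the variational characterization of entropy together with this bounded-difference structure, each $\mathrm{Ent}^{(i)}\{\phi(Z)\}$ is bounded by a term of the form $\EE[\phi(Z)\,\chi(\lambda(Z-Z_i))]$ with $\chi(u)=e^{-u}+u-1\ge 0$; summing over $i$, and using that at the maximizing $g$ one has $\sum_i(Z-Z_i)$ controlled by $Z$ while $\sum_i\EE[(Z-Z_i)^2\mid\text{rest}]$ is controlled by $n\tau^2$, one is led to a Poisson-type differential inequality for $\psi$, i.e. $\psi(\lambda)\le v\eta^{-2}\{e^{\lambda\eta}-\lambda\eta-1\}$ with the ``weak variance'' $v = n\tau^2 + 2\eta\,\EE Z = n(\tau^2 + 2\eta\,\EE Y)$.

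The remaining step is the Cramér--Chernoff inversion: $P(Z-\EE Z \ge u) \le \exp\{-\sup_{\lambda>0}(\lambda u - \psi(\lambda))\}$, and computing the Legendre transform of the Poisson cumulant above gives, for every $s\ge 0$,
\[
P\big(Z \ge \EE Z + \sqrt{2 s v} + s\eta/3\big) \ \le\ e^{-s}.
\]
Taking $s = nt^2$, substituting $v = n(\tau^2+2\eta\,\EE Y)$, $Z = nY$, $\EE Z = n\EE Y$, and dividing through by $n$ yields $P\{Y \ge \EE Y + t\sqrt{2(\tau^2+2\eta\EE Y)} + t^2\eta/3\} \le e^{-nt^2}$, which is even slightly sharper than the stated bound (the factor $2$ in $2t^2\eta/3$ is harmless slack). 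I expect the genuine obstacle to be the middle paragraph: extracting the \emph{weak} variance $\tau^2 + 2\eta\,\EE Y$ — rather than a cruder variance proxy — from the conditional-entropy estimates, which needs careful bookkeeping with the leave-one-out variables and the precise elementary inequalities for $\chi(u)=e^{-u}+u-1$; the tensorization identity and the Chernoff inversion are then routine.
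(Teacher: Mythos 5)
The paper does not prove this lemma: it invokes it as a known result (Theorem~A.1 of van de Geer, following Theorem~2.3 of Bousquet 2002), so there is no ``paper proof'' against which to compare your route step by step. That said, your sketch correctly identifies the standard entropy-method proof of Bousquet's inequality, and the bookkeeping in the two steps you actually carry out is sound: enlarging the class to $\cF\cup(-\cF)$ reduces the two-sided supremum to a one-sided one while preserving $\eta$ and $\tau^2$; and the Chern\'eff inversion with $s=nt^2$, $v=n\tau^2+2\eta\,\EE Z=n(\tau^2+2\eta\,\EE Y)$, $Z=nY$ does reproduce the stated bound, in fact with the sharper $t^2\eta/3$ in place of $2t^2\eta/3$.

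The genuine gap is the one you flag yourself. The middle paragraph --- tensorizing entropy, controlling each $\mathrm{Ent}^{(i)}\{e^{\lambda Z}\}$ via leave-one-out variables and $\chi(u)=e^{-u}+u-1$, and assembling the self-bounding identity that produces the Poisson cumulant bound $\psi(\lambda)\le v\eta^{-2}(e^{\lambda\eta}-\lambda\eta-1)$ with the \emph{weak} variance $v=n\tau^2+2\eta\,\EE Z$ rather than a cruder variance proxy --- is where the entire theorem lives, and you describe what would need to be shown rather than show it. Extracting the $2\eta\,\EE Z$ term requires Bousquet's specific decoupling of the maximizer across coordinates together with Ledoux's one-coordinate modified log-Sobolev inequality (Lemma~2.5 and the proof of Theorem~2.1 in Bousquet 2002), and this does not follow from elementary bookkeeping alone. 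As written, the proposal is a roadmap to Bousquet's proof, not a proof; if the appendix is meant to remain self-contained you would need to reproduce that argument in full, whereas the paper's choice --- simply citing the result --- is both standard and adequate here.
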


The above inequality involves evaluating the expectation of the supreme of the empirical process. 
The following lemma follows directly from Theorem 3.12 in \citet{koltchinskii2011oracle}.
It provides an upper bound on the expectation of the supreme of the empirical process  
as a function of its covering number.

\begin{lemma}
\label{lemma:ep2}
(Lemma F.1 in \citealp{lu2015post})
Assume that the functions in $\cF$ defined on $\cX$ are uniformly bounded by a constant $U$ and $F(\cdot)$ is the envelope of $\cF$ such that $|f(x)|\le F(x)$ for all $x\in \cX$ and $f\in \cF$.  Let 
$\sigma^2_P  = \underset{f\in \cF}{\sup} \; \EE(f^2)$.  
Let $X_1,\ldots,X_n$ be $\mathrm{i.i.d.}$ copies of the random variables $X$.  We denote the empirical measure as $\PP_n = \frac{1}{n} \sum_{i\in [n]} \delta_{X_{i}}$.
If for some $A,V>0$ and for all $\epsilon >0$ and $n\ge 1$, the covering entropy satisfies 
\[
N\{\cF,L_2(\PP_n),\epsilon\} \le \left(\frac{A \|F\|_{L_2(\PP_n)}}{\epsilon} \right)^V,
\]
 then for any $\mathrm{i.i.d.}$ sub-gaussian mean zero random variables $\xi_1,\ldots,\xi_n$, there exists a universal constant $C$ such that
 \[
 \EE\left\{  \underset{f\in \cF}{\sup} \; \frac{1}{n} \left| \sum_{i\in [n]} \xi_i f(X_i)\right|   \right\} \le C \left\{ \sqrt{\frac{V}{n}} \sigma_P \sqrt{\log \left( \frac{A\|F\|_{L_2(\PP)}}{\sigma_P}\right)} +\frac{VU}{n}  \log \left( \frac{A\|F\|_{L_2(\PP)}}{\sigma_P}\right)  \right\}.
 \]
 Furthermore, we have 
 \[
 \EE\left\{  \underset{f\in \cF}{\sup} \; \frac{1}{n} \left| \sum_{i\in [n]} [ f(X_i) - \EE\{f(X_i)\} ]  \right| \right\} \le C \left\{ \sqrt{\frac{V}{n}} \sigma_P \sqrt{\log \left( \frac{A\|F\|_{L_2(\PP)}}{\sigma_P}\right)} +\frac{VU}{n}  \log \left( \frac{A\|F\|_{L_2(\PP)}}{\sigma_P}\right)  \right\}.
 \] 
\end{lemma}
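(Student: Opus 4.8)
The statement is the standard ``local'' maximal inequality for multiplier empirical processes under a polynomial uniform entropy bound; it is exactly Lemma F.1 of \citet{lu2015post} and is, in turn, a specialization of Theorem 3.12 in \citet{koltchinskii2011oracle}. The plan is to prove the first (multiplier) bound by a conditional chaining argument and then obtain the ``furthermore'' (centered process) bound by symmetrization, using that Rademacher variables are sub-Gaussian.

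First I would condition on $X_1,\dots,X_n$ and study the process $f\mapsto Z_f := \frac1n\sum_{i\in[n]}\xi_i f(X_i)$ indexed by $f\in\cF$. Since the $\xi_i$ are i.i.d.\ mean-zero sub-Gaussian, the increment $Z_f-Z_g=\frac1n\sum_i\xi_i\{f(X_i)-g(X_i)\}$ has $\psi_2$-norm of order $\frac1n(\sum_i\{f(X_i)-g(X_i)\}^2)^{1/2}=\frac1{\sqrt n}\|f-g\|_{L_2(\PP_n)}$, so $(Z_f)_{f\in\cF}$ is a sub-Gaussian process with respect to $n^{-1/2}\|\cdot\|_{L_2(\PP_n)}$. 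Dudley's entropy bound then gives $\EE_\xi\sup_f|Z_f|\le \EE_\xi|Z_{f_0}| + C n^{-1/2}\int_0^{D_n}\sqrt{\log N\{\cF,L_2(\PP_n),\epsilon\}}\,d\epsilon$, where $D_n=\sup_f\|f\|_{L_2(\PP_n)}\le \|F\|_{L_2(\PP_n)}$. Plugging in the hypothesized bound $N\{\cF,L_2(\PP_n),\epsilon\}\le (A\|F\|_{L_2(\PP_n)}/\epsilon)^V$ and using inequality \eqref{EqA:covering number dudley}, the entropy integral truncated at a level $t$ is controlled by $C\sqrt V\,t\sqrt{1+\log(A\|F\|_{L_2(\PP_n)}/t)}$. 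The high-resolution ($\epsilon\to 0$) end of the chaining, together with a Bernstein/$\psi_1$ correction for products of a sub-Gaussian $\xi_i$ with a bounded $f$, produces the additive $VU/n$ term; the base-point term $\EE_\xi|Z_{f_0}|$ is of order $n^{-1/2}\sigma_P$ for a suitable $f_0$.

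The key refinement — and the main obstacle — is Step~3: replacing the \emph{empirical} diameter $D_n$ by the \emph{population} variance proxy $\sigma_P$ in the leading term, while keeping the remainder at the claimed $VU/n\cdot\log(\cdot)$ order. I would localize the entropy integral at a level $\bar\sigma_n:=\sup_f\|f\|_{L_2(\PP_n)}$ (the part from $0$ to $\bar\sigma_n$ is already $\lesssim \sqrt V\,\bar\sigma_n\sqrt{1+\log(A\|F\|_{L_2(\PP_n)}/\bar\sigma_n)}$ by the previous step), and then bound $\EE\bar\sigma_n^2$ by a self-bounding argument: apply the crude (un-localized) version of the maximal inequality to the class $\{f^2:f\in\cF\}$, whose envelope is $F^2$ and whose uniform entropy is again polynomial (with only a universal change of the constant $A$), to get $\EE\sup_f\PP_n f^2\le \sigma_P^2 + C\{\,n^{-1/2}\sqrt V\,U\sigma_P\sqrt{\log(\cdot)}+ n^{-1}VU^2\log(\cdot)\,\}$. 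Under the boundedness $U$ this yields $\EE\bar\sigma_n^2\lesssim \sigma_P^2$ up to the lower-order terms already present in the target. Feeding this back, taking expectation over $X$, pulling $\EE_X$ through the concave map $t\mapsto t\sqrt{1+\log(c/t)}$ by Jensen, and converting $\|F\|_{L_2(\PP_n)}$ to $\|F\|_{L_2(\PP)}$ via $\EE\|F\|_{L_2(\PP_n)}^2=\|F\|_{L_2(\PP)}^2$ and monotonicity of the logarithm, gives the first displayed bound.

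Finally, for the ``furthermore'' statement I would invoke the standard symmetrization inequality $\EE\sup_f|\PP_n f-\EE f(X_1)|\le 2\,\EE\sup_f\big|\frac1n\sum_i\varepsilon_i f(X_i)\big|$ with $\varepsilon_i$ i.i.d.\ Rademacher; since Rademacher variables are $1$-sub-Gaussian, the bound just proved applies verbatim with $\xi_i=\varepsilon_i$, completing the argument. I expect the chaining and symmetrization steps to be routine given the cited references; the substantive work is the localization/self-bounding step that introduces $\sigma_P$, which is precisely the content carried out in \citet{lu2015post} and \citet{koltchinskii2011oracle} and which I would cite rather than reprove in full detail.
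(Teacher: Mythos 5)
This lemma is not proved in the paper at all: it is imported verbatim as Lemma F.1 of \citet{lu2015post}, which the authors note follows from Theorem 3.12 of \citet{koltchinskii2011oracle}. Your sketch therefore does strictly more than the paper, and what you outline is the standard argument behind the cited result: conditional sub-Gaussian chaining with respect to $n^{-1/2}\|\cdot\|_{L_2(\PP_n)}$, Dudley's bound with the polynomial uniform entropy, localization of the empirical diameter at $\sigma_P$ via a self-bounding inequality for $\sup_f \PP_n f^2$ over the squared class, Jensen to pass from $\|F\|_{L_2(\PP_n)}$ to $\|F\|_{L_2(\PP)}$, and symmetrization for the centered version. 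I see no wrong step; the route is the same one taken in the references the paper points to.

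Two places where your sketch is loosest and where a full write-up would need care. First, the additive $VU n^{-1}\log(\cdot)$ term does not come from a ``$\psi_1$ correction at the high-resolution end of the chaining'' per se; in the Koltchinskii/Gin\'e--Guillou argument it emerges when you substitute the bound on $\EE\sup_f\PP_n f^2$ back into the entropy integral and solve the resulting implicit inequality in $\EE\sup_f|Z_f|$ --- the boundedness by $U$ enters through the envelope of the squared class ($|f^2-g^2|\le 2U|f-g|$), not through a separate Bernstein step. Second, the self-bounding step is mildly circular as written (you bound the squared-class supremum by the very multiplier inequality you are proving); the standard fix is either to first establish the crude bound with $\|F\|_{L_2(\PP)}$ in place of $\sigma_P$ and then iterate once, or to invoke the Hoffmann--J{\o}rgensen inequality. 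Since you explicitly defer these details to \citet{lu2015post} and \citet{koltchinskii2011oracle}, which is exactly what the paper itself does, the proposal is acceptable as a justification of the quoted lemma.
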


\bibliography{reference}
\end{document}